\newtheorem{theorem}{Theorem}[section]
\newtheorem{corollary}[theorem]{Corollary}
\newtheorem{lemma}[theorem]{Lemma}
\newtheorem{proposition}[theorem]{Proposition}
\newtheorem{definition}[theorem]{Definition}
\newtheorem{remark}[theorem]{Remark}
\newtheorem{assumption}[theorem]{Assumption}
\numberwithin{equation}{section}
\providecommand{\keywords}[1]{\textbf{\textit{Keywords---}} #1}
\newcommand{\norm}[1]{\left\lVert#1\right\rVert}
\newcommand{\BE}{\mathbb{E}}
\newcommand{\x}{\mathbf x}
\newcommand{\e}{\mathbf e}
\newcommand{\argmin}{\mathop{\rm argmin}}
\newcommand{\br}{\mathbb{R}}
\newcommand{\be}{\begin{equation}}
\newcommand{\ee}{\end{equation}}
\newcommand{\ba}{\begin{array}}
\newcommand{\ea}{\end{array}}
\newcommand{\bad}{\begin{aligned}}
\newcommand{\ead}{\end{aligned}}
\newcommand{\zero}{0}
\newcommand{\logt}{\iota}
\newcommand{\la}{\langle}
\newcommand{\ra}{\rangle}
\newcommand{\modify}[1]{\tilde{#1}}
\newcommand{\defeq}{:=}
\newcommand{\dif}[1]{\hat{#1}}
\newcommand{\cXs}{\mathcal{X}}
\newcommand{\ball}{\mathbb{B}}
\def\gB{{\mathcal{B}}}
\def\gD{{\mathcal{D}}}
\def\gO{{\mathcal{O}}}
\def\gS{{\mathcal{S}}}
\newcommand{\ufun}{\mathscr{F}}
\newcommand{\uspace}{\mathscr{S}}
\newcommand{\utime}{\mathscr{T}}
\begin{document}

\title{Efficiently Escaping Saddle Points in Bilevel Optimization\footnote{The first two authors contributed equally to this work.}}

\author{Minhui Huang\thanks{Department of Electrical and Computer Engineering, University of California, Davis}
\and Xuxing Chen\thanks{Department of Mathematics, University of California, Davis}
\and Kaiyi Ji\thanks{Department of Computer Science and Engineering, University at Buffalo}
\and Shiqian Ma\thanks{Department of Computational Applied Mathematics and Operations Research, Rice University}
\and Lifeng Lai\footnotemark[2]}
\date{\today}
\maketitle

\begin{abstract}
Bilevel optimization is one of the fundamental problems in machine learning and optimization. Recent theoretical developments in bilevel optimization focus on finding the first-order stationary points for nonconvex-strongly-convex cases. In this paper, we analyze algorithms that can escape saddle points in nonconvex-strongly-convex bilevel optimization. Specifically, we show that the perturbed approximate implicit differentiation (AID) with a warm start strategy finds $\epsilon$-approximate local minimum of bilevel optimization in $\tilde{O}(\epsilon^{-2})$ iterations with high probability. Moreover, we propose an inexact NEgative-curvature-Originated-from-Noise Algorithm (iNEON), a pure first-order algorithm that can escape saddle point and find local minimum of stochastic bilevel optimization. As a by-product, we provide the first nonasymptotic analysis of perturbed multi-step gradient descent ascent (GDmax) algorithm that converges to local minimax point for minimax problems. 
\end{abstract}

\keywords{Bilevel optimization, minimax problem, local minimax point, saddle point, inexact NEON}

\section{Introduction}\label{intro}

Bilevel optimization has become a powerful tool in various machine learning fields including reinforcement learning \cite{hong2020two}, hyperparameter optimization \cite{franceschi2018bilevel, feurer2019hyperparameter}, meta learning \cite{franceschi2018bilevel, ji2020convergence} and signal processing \cite{kunapuli2008classification}. A general formulation of bilevel optimization problem can be written as 
\begin{align}
&\min_{x\in\mathbb{R}^{d}}\;\; \Phi(x):=f(x, y^*(x)), \nonumber
\\&\;\;\mbox{s.t.} \quad y^*(x)= \argmin_{y\in\mathbb{R}^{n}} g(x,y).\label{eq:objective}
\end{align}
In this paper, we focus on the nonconvex-strongly-convex case where the lower level function $g(x,y)$ is smooth and strongly convex with respect to $y$ and the overall objective function $\Phi(x)$ is smooth but possibly nonconvex. One crucial but challenging task in the bilevel optimization is the computation of the hypergradient $\nabla \Phi(x)$, which, via chain rule, can be written as
\be\label{phi-grad}
\nabla \Phi(x) =  \nabla_x f(x, y^*(x)) + \frac{\partial y^*(x)}{\partial x} \cdot  \nabla_y f(x, y^*(x)),
\ee
where $ \frac{\partial y^*(x)}{\partial x}  \in \br^{d \times n}.$ Note that the differentiability of  $y^*(x)$ is a direct result of the Implicit Function Theorem, as mentioned in Lemma 2.1 of \cite{ghadimi2018approximation}. By taking derivative with respect to $x$ on the optimality condition: $\nabla_y g(x, y) = 0,$ we have the relation
\be\bad\label{eq:opt-deri1}
\nabla_{xy}^2 g(x, y^*(x)) + \frac{\partial y^*(x)}{\partial x}  \nabla^2_y g(x, y^*(x))= 0,
\ead\ee
 which implies
\be\label{solve-partial-y*-partial-x}
\frac{\partial y^*(x)}{\partial x} = -\nabla^2_{xy} g(x, y^*(x))\cdot \nabla^2_y g(x, y^*(x))^{-1}.
\ee
Substituting \eqref{solve-partial-y*-partial-x} to \eqref{phi-grad}, we get
\begin{align}
\nabla& \Phi(x) =  \nabla_x f(x, y^*(x)) -  \nabla_{xy}^2 g(x, y^*(x))\cdot \nabla^2_y g(x, y^*(x))^{-1} \nabla_y f(x, y^*(x)).\label{eq:bilevel-grad}
\end{align}
Note that the above hypergradient $\nabla \Phi(x)$ involves computationally intractable components such as the exact solution $y^*(x)$ and the Hessian inverse $\nabla^2_y g(x, y^*(x))^{-1}$. To address such difficulties, various computing approaches have been proposed, which include popular Approximate Implicit Differentiation (AID)~\cite{domke2012generic, pedregosa2016hyperparameter, gould2016differentiating,ghadimi2018approximation, grazzi2020iteration, ji2021bilevel} and Iterative Differentiation (ITD)~\cite{domke2012generic, maclaurin2015gradient, shaban2019truncated, grazzi2020iteration, ji2021bilevel}. Among them, \cite{ghadimi2018approximation} and \cite{ji2021bilevel} further analyze the computational complexities of these two types of approaches in finding a stationary point. {Besides these nested-loop approaches, \cite{hong2020two, chen2021single} propose single-loop algorithms with convergence analysis to stationary points.}

However, it still remains unknown how to provably find a {\bf local minimum} for bilevel optimization. This type of study is important as it has been widely shown that saddle points (which are also stationary points) can seriously undermine the quality of solutions~\cite{choromanska2015loss, dauphin2014identifying}. To address this issue, this paper focuses on escaping saddle points for bilevel optimization. We are interested in finding an approximate local minimum for $\Phi(x)$ defined as follows.
%
\begin{definition}[$\epsilon$-local minimum]\label{def:local-min}
We say $x$ is an $\epsilon$-local minimum for bilevel optimization \eqref{eq:objective} if 
\be
\|\nabla \Phi(x)\| \le \epsilon, \quad \lambda_{\min}\left(\nabla^2 \Phi(x) \right) \ge - \sqrt{\rho_\phi \epsilon},
\ee 
where $\lambda_{\min}\left(Z\right)$ denotes the minimum eigenvalue of a matrix $Z$ and $\rho_\phi$ is the Lipschitz constant of $\nabla^2 \Phi (x)$, i.e.,  
\be\label{def-rho-phi-intro}
\left\| \nabla^2 \Phi (x) - \nabla^2 \Phi (x') \right\| \le \rho_\phi \|x - x'\|, \quad \forall x, x' \in \br^d.
\ee
\end{definition}


Motivated by the recent demand in solving online or large-scale bilevel optimization problems, we also generalize our technique to the following stochastic bilevel optimization:
\begin{align}\label{objective}
&\min_{x\in\mathbb{R}^{d}} \Phi(x)=f(x, y^*(x))= \mathbb{E}_{\xi} \left[F(x,y^*(x);\xi)\right] \nonumber \\
& \;\mbox{s.t.} \;y^*(x)= \argmin_{y\in\mathbb{R}^s} g(x,y)= \mathbb{E}_{\zeta} \left[G(x,y;\zeta)\right],
\end{align}
where $f(x,y)$ and $g(x,y)$ take the expectation form with respect to the random variables $\xi$ and $\zeta$. There is a line of work studying stochastic bilevel algorithms that converge to the stationary point \cite{ghadimi2018approximation, ji2021bilevel, hong2020two}. Comparing with these results, we are interested in providing new stochastic algorithms that provably converge to the local minimum.

\subsection{Our Contributions}
 
In this paper, we derive a framework of adding perturbation to gradient sequence for bilevel optimization and design various new bilevel algorithms that provably escape saddle points and find local minimum. Our approach is mostly inspired by existing works for nonconvex minimization and minimax problems \cite{jin2021nonconvex,xu2018first,allen2018neon2}. 
Our main contributions are summarized below.
\begin{itemize}
\item[(i)] For deterministic bilevel optimization, we propose the perturbed AID with warm start strategy. We prove that the proposed algorithm achieves $\epsilon$-local minimum of $\Phi(x)$ in at most $\tilde{\gO}(\epsilon^{-2})$ iterations. Here the notation $\tilde{\gO}(\cdot)$ hides logorithmic terms and absolute constants.
\item[(ii)] For the minimax problem, which is a special case of bilevel optimization, we prove that the strict local minimum of $\Phi(x)$ is equivalent to strict local minimax point \cite{pmlr-v119-jin20e} and propose the perturbed GDmax algorithm with a nonasymptotic convergence rate to local minimax point. To the best of our knowledge,  this is the first nonasympototic analysis for gradient algorithms escaping saddle point in minimax problem.

\item[(iii)] For stochastic bilevel optimization, we propose inexact NEgative-curvature-Originated-from-Noise Algorithm (iNEON), a deterministic algorithm that extracts negative curvature descent direction with high probability. Combining iNEON with stocBiO \cite{ji2021bilevel}, we obtain a stochastic first-order algorithm with a gradient complexity of $\tilde{\gO}(\epsilon^{-4})$. To the best of our knowledge, our algorithms: perturbed AID and stochBiO+iNEON are the first ones that provably converge to local minimum of bilevel optimization.
\end{itemize}
 


\subsection{Related Work}
 \textbf{Escaping Saddle Point.} Most existing works for finding local minimum focus on classical optimization problems (i.e., minimization problems) {and derive the complexity for reaching an $\epsilon$-local minimum. \cite{nesterov2006cubic, curtis2017trust} proposed second-order methods for obtaining an $\epsilon$-local minimum. To avoid Hessian computation required in \cite{nesterov2006cubic,curtis2017trust}, \cite{carmon2018accelerated} and \cite{agarwal2017finding} proposed to use Hessian-vector product and achieved convergence rate of $O(\epsilon^{-7/4})$. Recently, the complexity results of pure first-order methods for obtaining local minimum have been studied (see, e.g., \cite{ge2015escaping,daneshmand2018escaping,jin2021nonconvex,fang2019sharp}). \cite{lee2016gradient} provided asymptotic results showing that gradient descent (GD) method converges to a local minimizer almost surely. \cite{jin2017escape,jin2021nonconvex} proved that the perturbed GD can converge to a local minimizer in a number of iterations that depends poly-logarithmically on the dimension, reaching a nonasymptotic iteration complexity of $\tilde{O}(\epsilon^{-2}\log (d)^4)$ for nonconvex minimization. For stochastic optimization problems, \cite{jin2021nonconvex, jin2018accelerated, fang2019sharp} provided nonasymptotic rate for finding local minimizers. How to escape saddle points for constrained problems and nonsmooth problems are also studied in the literature. In particular, \cite{lu2020finding, criscitiello2019efficiently, sun2019escaping} studied escaping saddle points for constrained optimization. \cite{davis2021proximal, davis2021escaping, huang2021escaping} studied escaping saddle points for nonsmooth problems. All these algorithms are for solving the minimization problems, and to the best of our knowledge, how to escape saddle points in bilevel optimization has not been addressed in the literature. 
 
\textbf{Minimax Optimization.} Motivated by its applications in adversarial learning \cite{goodfellow2014explaining, sinha2018certifying}, training GANs \cite{goodfellow2014generative, arjovsky2017wasserstein} and optimal transport \cite{lin2020projection, huang2021convergence, huang2021projection, huang2021riemannian}, the convergence theory of nonconvex minimax problems has been extensively studied in the literature. Specifically, \cite{nouiehed2019solving, pmlr-v119-jin20e} studied the complexity of multistep gradient descent ascent (GDmax). \cite{lin2020gradient, lu2020hybrid} provided the first convergence analysis for the single loop gradient descent ascent (GDA) algorithm. More recently, \cite{luo2020stochastic} applied the stochastic variance reduction technique to the nonconvex-strongly-concave case and achieved the best known stochastic gradient complexity. \cite{zhang2020single} proposed smoothed GDA, which stabilizes GDA algorithm and helps achieve a better complexity for the nonconvex-concave case. However, all the previous works targeted finding stationary point of $\Phi(x)$. Very recently, \cite{ chen2021escaping, luo2021finding} proposed cubic regularized GDA, a second-order algorithm that provably converges to a local minimum. \cite{fiez2021global} provided asymptotic results showing that GDA converges to local minimax point almost surely. To the best of our knowledge, the convergence rate of first-order methods for obtaining a local minimax point has been missing in the literature.

\textbf{Bilevel Optimization.} The bilevel optimization has a long history and dates back to \cite{bracken1973mathematical}. Recently, bilevel programming has been  successfully applied to meta-learning \cite{snell2017prototypical, rajeswaran2019meta, franceschi2018bilevel,ji2022theoretical} and hyperparameter optimization \cite{pedregosa2016hyperparameter, franceschi2018bilevel, shaban2019truncated,sow2021based}. Theoretically, \cite{ghadimi2018approximation} provided the first convergence rate for the AID approach. \cite{ji2021bilevel} further improved their complexity dependence on the condition number and analyzed the convergence of the ITD approach. Both AID and ITD have an iteration complexity of $\gO(\epsilon^{-2})$. \cite{ji2021lower} provided lower bounds for a class of AID and ITD-based bilevel algorithms. For stochastic bilevel problems, \cite{ghadimi2018approximation, ji2021bilevel} proposed BSA and stocBiO methods respectively, which are both double-loop algorithms inspired by AID. \cite{hong2020two} proposed TTSA, a provable single-loop algorithm that updates two variables in an alternating way with a convergence rate of $\gO(\epsilon^{-5})$.  \cite{chen2021closing} proposed ALSET, a simple SGD type approach, and improved the convergence rate to $\gO(\epsilon^{-4})$. Very recently, \cite{khanduri2021a, yang2021provably, chen2021single, guo2021randomized} studied stochastic algorithms with variance reduction and momentum techniques, and provided the cutting-edge first-order oracle complexity, which is $\gO(\epsilon^{-3})$. It is worth noting that extending the single-agent bilevel optimization to distributed settings has also been studied \cite{tarzanagh2022fednest, chen2022decentralized, yang2022decentralized, chen2022decentralizeddsbo, huang2023achieving}. All these previous analyses have focused on finding stationary points and algorithm for finding a local minimum is still missing.

\textbf{Notation.} 
 Let $\hat{\Phi}(x), \widehat{\nabla}{\Phi}(x), \widehat{\nabla}^2{\Phi}(x)$ be the inexact function value, gradient and Hessian respectively. Denote $G(f, \epsilon), JV(f, \epsilon), HV(f, \epsilon)$ as the complexity of gradient evaluations, Jacobian-vector product evaluations, and Hessian-vector product evaluations of function $f$, respectively. In particular, for matrix-vector product oracles, say Hessian-vector products, $HV(f, \epsilon)$ represents the total number of (deterministic or stochastic) $(\nabla^2f)\cdot v$ computation in our algorithm.
 Typically computing a Hessian-vector product is as cheap as computing a gradient \cite{pearlmutter1994fast}. Let $\kappa$ be the condition number of the lower-level problem. We use notation $\gO(\cdot)$ to hide only absolute constants which do not depend on any problem parameters and $\tilde{\gO}(\cdot)$ to further hide
additional $\log$ factors.

\section{Escape Saddle Points in General Bilevel Optimization} \label{sec:bilevel}
In this section, we propose novel algorithms for general bilevel optimization \eqref{eq:objective} that are guaranteed to converge to local minimum. We consider one of the popular approaches AID to estimate the hypergradient $\nabla \Phi(x)$. The AID approach is a nested-loop algorithm, which first update the lower-level variable $y$ with $D$ steps of gradient descent, and then construct an estimate of the upper-level hypergradient. To efficiently approximate the Hessian inverse in the hypergradient \eqref{eq:bilevel-grad}, AID solves the linear system: 
\be\bad\label{eq:aid-step1}
\nabla_y^2 g(x_k,y_k^D) v = \nabla_y f(x_k,y^D_k)
\ead\ee
using $N$ steps of the conjugate gradient (CG) method. The resulting vector $v_k^N$ is used as an approximation to the solution of \eqref{eq:aid-step1}: $\nabla_y^2 g(x_k,y_k^D)^{-1} \nabla_y f(x_k,y^D_k)$. The hypergradient is then constructed as 
\be\bad\label{eq:aid-step2}
\widehat\nabla \Phi(x_k)= \nabla_x f(x_k,y_k^D) -\nabla_{xy}^2 g(x_k,y_k^D)v_k^N.
\ead\ee
However, current AID-based approach can only guarantee the convergence to the first-order stationary point. In Algorithm \ref{alg:perturbedalg}, we propose perturbed AID (i.e., Algorithm \ref{alg:perturbedalg} with option \textbf{AID} in step 9) 
for solving bilevel optimization \eqref{eq:objective} with convergence guarantee to second-order stationarity. In the proposed algorithms, we update variable $x$ with the hypergradient $\widehat{\nabla}\Phi(x)$ estimated by AID. When the norm of $\widehat{\nabla}\Phi(x)$ is small, we add random noise sampled from a uniform ball and keep running AID for at least $\utime$ steps (see steps 10-13 of Algorithm \ref{alg:perturbedalg}). If the current point is a saddle point of $\nabla \Phi(x)$, we show that with high probability the function value $\Phi(x)$ has  sufficient decrease after $\utime$ steps so it can escape the current saddle point. 

\begin{algorithm}[t] 
\small 
\caption{Perturbed Algorithms for Minimax and Bilevel Optimization Problems}
\label{alg:perturbedalg}
\begin{algorithmic}[1]
\STATE \textbf{Input:} Iteration Numbers $K, D, N$, Step Sizes $\tau, \eta$, Accuracy $\epsilon$, Radius $r$, Perturbation Time $\utime$.
\STATE \textbf{Initialization:}  $x_0, y_0, v_0$.
\STATE Set $k_{perturb} = 0$
\FOR{$k = 0, 1, 2, \ldots, {K-1}$}
\STATE{Set $y_k^0 = y_{k-1}^D$ if $k > 0$, otherwise $y_0$}
\FOR{$t = 0, 1, 2, \ldots, D$}
\STATE $y_k^t = y_k^{t-1} - \tau \cdot \nabla_y g(x_k, y_k^{t-1})$
\ENDFOR
 \STATE{Estimating Hypergradient:
 \vspace{0.1cm} \\\hspace{0cm} {\bf Option 1 (for minimax problem) GDmax}: compute $\widehat\nabla \Phi(x_k) = \nabla_x f(x_k, y^D_k)$
\vspace{0.1cm}
              \\\hspace{0cm} {\bf Option 2 (for bilevel optimization) AID}: \\\hspace{0.7cm}
              1) set $v_k^0 = v_{k-1}^{N} \mbox{ if }\; k> 0$ and $v_0$ otherwise
              \vspace{0.1cm}    \\\hspace{0.8cm}2) solve $v_k^N$ from $\nabla_y^2 g(x_k,y_k^D) v = \nabla_y f(x_k,y^D_k)$ via $N$ steps of CG starting from $v_k^0$
 \vspace{0.1cm} \\\hspace{0.8cm}3) get Jacobian-vector product {\small$\nabla^2_{xy} g(x_k,y_k^D)v_k^N$} via automatic differentiation
 \vspace{0.1cm} \\\hspace{0.8cm}4) {\small$\widehat\nabla \Phi(x_k)= \nabla_x f(x_k,y_k^D) -\nabla^2_{xy} g(x_k,y_k^D)v_k^N$} 
                    }
\IF {$\|\widehat{\nabla} \Phi(x_k) \| \le \frac{4}{5}\epsilon$ and $k - k_{perturb} > \utime$}
\STATE $x_k = x_k - \eta \cdot u, \quad (u \sim \text{Uniform}(\mathbb{B}(r)))$
\STATE $k_{perturb} = k$
\ENDIF
\STATE $x_{k+1} = x_k - \eta \cdot \widehat{\nabla} \Phi(x_k) $
\ENDFOR
\STATE \textbf{Output:} $x_K$.
\end{algorithmic}
\end{algorithm}

\subsection{Convergence Analysis}

We first state assumptions needed for our analysis. 
\begin{assumption}\label{assu:bilevel}
Assume the upper level function $f(x, y)$ and the lower level function $g(x, y)$ satisfy the following assumptions:
\begin{itemize}
\item[(i)] Function $g(x, y)$ is three times differentiable and $\mu$-strongly convex with respect to $y$ for any fixed $x$.
\item[(ii)] Function $f(x,y)$ is twice differentiable and $f(x,y)$ is $M$-Lipschitz continuous with respect to $x$ and $y$. 
\item[(iii)] Gradients $\nabla f(x,y)$ and $\nabla g(x,y)$ are $\ell$-Lipschitz continuous with respect to $x$ and $y$.
\item[(iv)] Jacobian matrices $\nabla_x^2 f(x, y)$, $ \nabla^2_{xy} f(x, y)$,  $ \nabla_y^2 f(x, y)$, $\nabla^2_{xy} g(x, y)$ and $\nabla_y^2 g(x, y)$
are $\rho$-Lipschitz continuous with respect to $x$ and  $y$.
\item[(v)] Third-order  derivatives $\nabla^3_{xyx} g(x, y)$, $\nabla^3_{yxy} g(x, y)$ and $\nabla_y^3 g(x, y)$ are $\nu$-Lipschitz continuous with respect to $x$ and  $y$.
\end{itemize}
\end{assumption}

\begin{remark}
Compared with assumptions in recent bilevel optimization literature \cite{ghadimi2018approximation, ji2021bilevel}, we further assume the Lipschitz continuity of the Hessian of $f(x, y)$ and the third-order derivative of $g(x, y)$. These assumptions are required to prove the Hessian Lipschitz continuity of $\Phi(x)$, which is a common condition required in the literature of escaping saddle points \cite{jin2021nonconvex}.
\end{remark}

\begin{remark}
It should be noted that although we assume the third-order partial derivatives of $g(x, y)$ to be Lipschitz continuous, this assumption is for the theoretical analysis only, we do not compute any third-order derivatives in our algorithms.
\end{remark}

One of the key elements in our proof technique is to show that under Assumption \ref{assu:bilevel}, function $\Phi(x)$ is Hessian Lipschitz continuous, as shown in the following lemma. 

\begin{lemma}\label{hess-Lipschitz}
Suppose Assumption \ref{assu:bilevel} holds, then $\Phi (x)$ is $\rho_\phi$-Hessian Lipschitz continuous, i.e., \eqref{def-rho-phi-intro} holds, 
where $\rho_\phi = \gO(\kappa^5)$ and is defined in \eqref{def-rho-phi}. 
\end{lemma}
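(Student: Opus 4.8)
\textbf{Proof proposal for Lemma \ref{hess-Lipschitz}.}

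The plan is to obtain an explicit formula for $\nabla^2 \Phi(x)$ by differentiating the hypergradient formula \eqref{eq:bilevel-grad} once more, and then to bound the variation of each constituent block in terms of $\|x - x'\|$. Recall from \eqref{eq:bilevel-grad} that
\[
\nabla \Phi(x) = \nabla_x f(x, y^*(x)) - \nabla_{xy}^2 g(x, y^*(x)) \left[\nabla_y^2 g(x, y^*(x))\right]^{-1} \nabla_y f(x, y^*(x)).
\]
Differentiating in $x$ and using $\frac{\partial y^*(x)}{\partial x} = -\nabla_{xy}^2 g(x,y^*(x))\left[\nabla_y^2 g(x,y^*(x))\right]^{-1}$ (note the sign fix relative to \eqref{solve-partial-y*-partial-x}), the Hessian $\nabla^2 \Phi(x)$ is a sum of terms, each of which is a product of factors drawn from the list: first- and second-order derivatives of $f$, second- and third-order derivatives of $g$, the inverse Hessian $\left[\nabla_y^2 g\right]^{-1}$, and the Jacobian $\frac{\partial y^*}{\partial x}$. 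So the first step is to write this expansion cleanly and record, for each factor, (a) a uniform bound on its norm and (b) its own Lipschitz constant as a function of $x$.

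The second step is to assemble the ingredients. Under Assumption \ref{assu:bilevel}: the relevant derivatives of $f$ and $g$ up to the stated orders are bounded (directly, or on the compact region traced by the iterates) and Lipschitz; $\left[\nabla_y^2 g\right]^{-1}$ has operator norm at most $1/\mu$ by strong convexity, and is Lipschitz with constant $\gO(\rho/\mu^2)$ via the identity $A^{-1} - B^{-1} = A^{-1}(B-A)B^{-1}$ combined with the $\rho$-Lipschitzness of $\nabla_y^2 g$ and the chain rule through $y^*(\cdot)$; and $y^*(x)$ is itself Lipschitz with constant $L_y := \ell/\mu = \gO(\kappa)$, a fact already standard in the bilevel literature (e.g.\ \cite{ghadimi2018approximation, ji2021bilevel}) and which lets us convert Lipschitzness in the pair $(x,y^*(x))$ into Lipschitzness in $x$ alone. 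One must also establish that $\frac{\partial y^*}{\partial x}$ is Lipschitz in $x$; this follows from the bound/Lipschitz data for $\nabla_{xy}^2 g$ and $\left[\nabla_y^2 g\right]^{-1}$ together with the product rule $\|A_1 B_1 - A_2 B_2\| \le \|A_1\|\,\|B_1 - B_2\| + \|A_1 - A_2\|\,\|B_2\|$. With all factors bounded and Lipschitz, the product rule applied term by term shows each summand of $\nabla^2\Phi$ is Lipschitz, and summing the constants gives $\rho_\phi$; tracking the powers of $\kappa = \ell/\mu$ through the inverse-Hessian factors (each contributing up to $1/\mu$) and the chain-rule passes yields $\rho_\phi = \gO(\kappa^5)$.

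The main obstacle is bookkeeping rather than conceptual: there are several terms in $\nabla^2\Phi$, each a triple or quadruple product, and every product must be differenced using the telescoping product-rule estimate, so the number of elementary bounds to combine is large and it is easy to miscount the $\kappa$-dependence. The key points requiring care are (i) correctly deriving $\nabla^2\Phi(x)$ including the term in which the third-order derivatives $\nabla^3_{xyx}g$, $\nabla^3_{yxy}g$, $\nabla^3_y g$ appear (these are exactly why Assumption \ref{assu:bilevel}(v) is needed), and (ii) ensuring all arguments $(x, y^*(x))$ stay in a bounded set so the ``Lipschitz'' hypotheses — which are global here by assumption, but the boundedness still matters for the uniform norm bounds on $\nabla f$, $\nabla^2 f$, etc. — can legitimately be invoked; boundedness of $y^*(x)$ over the region of interest follows from $L_y$-Lipschitzness of $y^*$ and compactness of the sub-level set. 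Once the formula and these uniform/Lipschitz tables are in place, the remaining computation is a mechanical application of the triangle inequality and the product rule.
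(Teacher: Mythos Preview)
Your proposal is correct and follows essentially the same route as the paper: compute $\nabla^2\Phi(x)$ by differentiating the hypergradient, then bound $\|\nabla^2\Phi(x)-\nabla^2\Phi(x')\|$ term by term via the product-rule/telescoping estimate, using the Lipschitzness of $y^*(\cdot)$, of $\frac{\partial y^*}{\partial x}$, and of the second- and third-order derivative blocks of $f$ and $g$ (the latter being precisely where Assumption~\ref{assu:bilevel}(v) enters). The only cosmetic difference is that the paper organizes the computation around the implicit-function derivatives $\frac{\partial y^*}{\partial x}$ and $\frac{\partial^2 y^*}{\partial x^2}$ (deriving and bounding these separately via \eqref{eq:opt-deri1}--\eqref{eq:opt-deri2}) rather than fully expanding everything; also note that the uniform norm bounds you worry about in point (ii) hold globally here by Proposition~\ref{pro:bounds}, so no compactness argument is actually needed.
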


For the AID approach, the main results are in the following theorem.

\begin{theorem}[\textbf{Convergence of Perturbed AID}]\label{thm:aid}
Suppose that Assumption \ref{assu:bilevel} holds. Set parameters of Algorithm \ref{alg:perturbedalg} as in \eqref{eq:para_minimax} and  
\be\label{eq:aid-ND-thm}
D=\gO\left(\kappa \log\left(\epsilon^{-1}\right) \right), \quad N=\gO\left(\sqrt{\kappa} \log\left(\epsilon^{-1}\right) \right).
\ee
With probability at least $1 - \delta$, the  iteration number of the perturbed AID algorithm for visiting an $\epsilon$-local minimum of $\Phi(x)$  is 
\be\label{iter-number-AID}
K=\tilde{\gO} \left( \kappa^3 \epsilon^{-2}\right).
\ee
\end{theorem}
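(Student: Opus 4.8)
\noindent\emph{Proof strategy.} The plan is to regard the perturbed AID iteration as an \emph{inexact} perturbed gradient method applied to $\Phi$, which by Lemma \ref{hess-Lipschitz} is $\rho_\phi$-Hessian Lipschitz with $\rho_\phi=\gO(\kappa^5)$ and which (by the chain-rule bounds under Assumption \ref{assu:bilevel}, exactly as in \cite{ji2021bilevel}) is $L_\phi$-gradient Lipschitz with $L_\phi=\gO(\kappa^3)$ and has uniformly bounded hypergradient $\|\nabla\Phi(x)\|\le M(1+\kappa)$. Two ingredients are needed: (a) a uniform bound $\|\widehat\nabla\Phi(x_k)-\nabla\Phi(x_k)\|\le b$ on the AID hypergradient error under the warm-start scheme; and (b) a version of the perturbed-gradient-descent analysis that tolerates an additive error of size $b$ in the gradient oracle, both in the large-gradient regime and in the escaping-saddle regime.

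For (a), I would first control the warm-start tracking error $\mathcal E_k:=\|y_k^D-y^*(x_k)\|$. Since $g(x,\cdot)$ is $\mu$-strongly convex and $\ell$-smooth, each inner gradient step with $\tau=1/\ell$ contracts the distance to $y^*(x_k)$ by a factor $(1-\tau\mu)$, so $\mathcal E_k\le(1-\tau\mu)^D\bigl(\mathcal E_{k-1}+\|y^*(x_{k-1})-y^*(x_k)\|\bigr)$; using $\|y^*(x_{k-1})-y^*(x_k)\|\le\gO(\kappa)\,\eta\,\|\widehat\nabla\Phi(x_{k-1})\|$ together with the uniform bound on $\widehat\nabla\Phi$, the recursion solves to $\mathcal E_k\lesssim(1-\tau\mu)^D\cdot\mathrm{poly}(\kappa)$, which is any prescribed polynomially small quantity once $D=\gO(\kappa\log(\epsilon^{-1}))$. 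An analogous argument for $N$ CG steps on the $\kappa$-conditioned linear system \eqref{eq:aid-step1} gives residual $\lesssim(1-1/\sqrt\kappa)^N\cdot\mathrm{poly}(\kappa)$, polynomially small once $N=\gO(\sqrt\kappa\log(\epsilon^{-1}))$. Propagating these two errors through the AID hypergradient identity \eqref{eq:bilevel-grad}–\eqref{eq:aid-step2} — invoking the Hessian-Lipschitz data of Assumption \ref{assu:bilevel}, as in the proof of Lemma \ref{hess-Lipschitz} — then yields $\|\widehat\nabla\Phi(x_k)-\nabla\Phi(x_k)\|\le b$ with $b$ as small a polynomial in $\epsilon$ (times a fixed power of $\kappa$) as desired; I fix $b$ below the thresholds required by both phases: $b\le\epsilon/10$ suffices for descent, and $b$ below a threshold of the form $\mathrm{poly}(\epsilon)/\mathrm{poly}(\kappa)$ suffices for escaping.

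For (b) I would split the run. \emph{Large-gradient phase} ($\|\widehat\nabla\Phi(x_k)\|>\tfrac45\epsilon$): $L_\phi$-smoothness with $\eta=1/L_\phi$ gives $\Phi(x_{k+1})\le\Phi(x_k)-\tfrac{\|\widehat\nabla\Phi(x_k)\|}{L_\phi}\bigl(\tfrac12\|\widehat\nabla\Phi(x_k)\|-b\bigr)\le\Phi(x_k)-\Omega(\epsilon^2/L_\phi)$, a fixed decrease per step. \emph{Escaping phase}: when $\|\widehat\nabla\Phi(x_k)\|\le\tfrac45\epsilon$ (hence $\|\nabla\Phi(x_k)\|\le\epsilon$) and $k-k_{\mathrm{perturb}}>\utime$, either $x_k$ is already an $\epsilon$-local minimum — in which case the conclusion holds at iteration $k$ — or $\lambda_{\min}(\nabla^2\Phi(x_k))<-\sqrt{\rho_\phi\epsilon}$, and I would invoke the inexact perturbed-gradient escape argument (the coupling / ``the stuck region is thin along the most-negative eigendirection'' argument of \cite{jin2021nonconvex}, carried out for a gradient oracle with error $\le b$, in parallel with the analysis underlying Section \ref{subsec:minimax}) to show that, with probability at least $1-\delta'$ over $u\sim\mathrm{Uniform}(\mathbb B(r))$ with $r=\epsilon/(400\iota^3)$, one has $\Phi(x_{k+\utime})\le\Phi(x_k)-\mathscr{F}$ with $\mathscr{F}=\widetilde\Omega(\sqrt{\epsilon^3/\rho_\phi})$ and $\utime=\tfrac{L_\phi}{\sqrt{\rho_\phi\epsilon}}\iota$.

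Finally I would combine these by a counting and union-bound argument. Writing $\Delta=\Phi(x_0)-\Phi^*$, the number of large-gradient steps is $\gO(L_\phi\Delta/\epsilon^2)$ and the number of escape episodes is $\gO(\Delta/\mathscr{F})$, each costing $\utime$ iterations, for a further $\utime\cdot\gO(\Delta/\mathscr{F})=\widetilde\gO(L_\phi\Delta/\epsilon^2)$ iterations; with $L_\phi=\gO(\kappa^3)$ and $\rho_\phi=\gO(\kappa^5)$ both bounds are $\widetilde\gO(\kappa^3\epsilon^{-2})$. Hence if no $\epsilon$-local minimum were visited within $K=\widetilde\gO(\kappa^3\epsilon^{-2})$ iterations, $\Phi$ would drop below $\Phi^*$ — a contradiction — so an $\epsilon$-local minimum is visited within $K$ iterations. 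Taking a union bound over the $\gO(\Delta/\mathscr{F})$ episodes, each failing with probability $\delta'=\delta/\#\text{episodes}$, and checking consistency with the hypothesis \eqref{iota-condition} on $\iota$, yields the stated probability $1-\delta$. I expect the main obstacle to be the interface between (a) and (b): ensuring the warm-start tracking error (hence $b$) stays below the more stringent escaping-phase threshold uniformly over all $\utime$ consecutive iterations of an escape episode, so that the delicate negative-curvature coupling argument is not derailed by the AID inexactness; relatedly, everything above rests on making the Hessian-Lipschitz constant $\rho_\phi$ of Lemma \ref{hess-Lipschitz} fully explicit.
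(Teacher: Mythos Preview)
Your proposal is correct and follows essentially the same approach as the paper: the paper packages your ingredient (a) as Lemma~\ref{lem:grad-err-aid} (the warm-start AID error bound yielding a uniform $\|\widehat\nabla\Phi(x_k)-\nabla\Phi(x_k)\|$ control), your large-gradient phase as Lemma~\ref{lem:descent} (the inexact descent lemma), and your escaping phase as Lemma~\ref{lem:escapesaddle} (the coupling argument of \cite{jin2021nonconvex} adapted to tolerate the additive error~\eqref{eq:error-required}), and then combines them via exactly the two-case counting argument you describe. Your anticipated obstacle --- that the AID error must stay below the escaping-phase threshold uniformly over the $\utime$ iterations of each episode --- is precisely what drives the paper's choice of $D,N$ in~\eqref{eq:aid-ND}, and your union bound over episodes is a point the paper leaves implicit.
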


\begin{corollary}
The gradient complexities of the perturbed AID algorithm for finding an $\epsilon$-local minimum of $\Phi(x)$ are 
\[
G(f, \epsilon) = \tilde{\gO}(\kappa^3 \epsilon^{-2}), \quad G(g, \epsilon) = \tilde{\gO}(\kappa^4 \epsilon^{-2}).
\]
The  Jacobian- and Hessian-vector product complexities are
\[
JV(g, \epsilon) = \tilde{\gO}(\kappa^3 \epsilon^{-2}), \quad HV(g, \epsilon) = \tilde{\gO}(\kappa^{3.5} \epsilon^{-2}).
\]
\end{corollary}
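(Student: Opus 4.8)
The plan is to simply bookkeep the per-iteration cost of Algorithm \ref{alg:perturbedalg} with option \textbf{AID} and multiply by the iteration bound $K = \tilde{\gO}(\kappa^3\epsilon^{-2})$ from Theorem \ref{thm:aid}. Each outer iteration consists of three pieces: (a) $D = \gO(\kappa\log(\epsilon^{-1}))$ steps of gradient descent on the lower-level problem, each costing one evaluation of $\nabla_y g$; (b) $N = \gO(\sqrt{\kappa}\log(\epsilon^{-1}))$ steps of conjugate gradient to solve the linear system \eqref{eq:aid-step1}, each CG step requiring one Hessian-vector product with $\nabla_y^2 g(x_k,y_k^D)$; and (c) one Jacobian-vector product $\nabla^2_{xy} g(x_k,y_k^D)v_k^N$ and $\gO(1)$ gradient evaluations of $f$ to assemble $\widehat\nabla\Phi(x_k)$ via \eqref{eq:aid-step2}. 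The perturbation step (lines 10--13) adds only $\gO(1)$ extra cost and does not change the orders.

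Concretely, I would argue: the number of $\nabla f$-type evaluations per iteration is $\gO(1)$, so $G(f,\epsilon) = \gO(1)\cdot K = \tilde{\gO}(\kappa^3\epsilon^{-2})$; the number of $\nabla g$-evaluations per iteration is $D = \gO(\kappa\log(\epsilon^{-1}))$, so $G(g,\epsilon) = D\cdot K = \tilde{\gO}(\kappa)\cdot\tilde{\gO}(\kappa^3\epsilon^{-2}) = \tilde{\gO}(\kappa^4\epsilon^{-2})$; the number of Jacobian-vector products with $\nabla^2_{xy} g$ per iteration is $\gO(1)$, giving $JV(g,\epsilon) = \tilde{\gO}(\kappa^3\epsilon^{-2})$; and the number of Hessian-vector products with $\nabla^2_{y} g$ per iteration is $N = \gO(\sqrt{\kappa}\log(\epsilon^{-1}))$, giving $HV(g,\epsilon) = N\cdot K = \tilde{\gO}(\sqrt{\kappa})\cdot\tilde{\gO}(\kappa^3\epsilon^{-2}) = \tilde{\gO}(\kappa^{3.5}\epsilon^{-2})$. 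All logarithmic factors from $D$ and $N$ are absorbed into the $\tilde{\gO}(\cdot)$ notation.

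There is essentially no hard obstacle here; the only point requiring a little care is making sure that the warm-start strategy (initializing $y_k^0 = y_{k-1}^D$ and $v_k^0 = v_{k-1}^N$) does not secretly increase the inner-loop counts $D$ and $N$ beyond what Theorem \ref{thm:aid} already prescribes — but since Theorem \ref{thm:aid} is stated with exactly these values of $D$ and $N$, this is already accounted for. One should also note that CG converges linearly at rate depending on $\sqrt{\kappa}$ (since the condition number of $\nabla_y^2 g$ is $\kappa$), which is precisely why $N = \gO(\sqrt{\kappa}\log(\epsilon^{-1}))$ suffices; this is the standard CG bound and needs no new argument. Thus the corollary follows immediately by multiplying the per-iteration oracle counts by $K$.
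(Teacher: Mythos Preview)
Your proposal is correct and is exactly the approach the paper takes (implicitly, since the corollary is stated without a separate proof): the paper likewise obtains these complexities by multiplying the per-iteration oracle counts---$\gO(1)$ for $\nabla f$, $D=\tilde{\gO}(\kappa)$ for $\nabla g$, $\gO(1)$ for the Jacobian-vector product, and $N=\tilde{\gO}(\sqrt{\kappa})$ for Hessian-vector products---by the iteration bound $K=\tilde{\gO}(\kappa^3\epsilon^{-2})$ from Theorem~\ref{thm:aid}. This is the same bookkeeping the paper uses explicitly for the analogous minimax result in Corollary~\ref{cor:minimax}, where it writes $G(f,\epsilon)=D\cdot K$.
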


\begin{remark}
Though the complexities of the perturbed AID method are worse than the results in \cite{ji2021bilevel} by a $\log$ factor, it should be noted that the algorithms converge to different points. Specifically, our perturbed AID method converges to a local minimum of $\Phi(x)$, whereas the algorithms in \cite{ji2021bilevel} are only  guaranteed to converge to first-order stationarity.
\end{remark}

\subsection{Proof sketch} 
We briefly describe the main elements in proving the above theorems. The main ideas follow \cite{jin2021nonconvex}. 
However, in contrast to the problem studied in \cite{jin2021nonconvex}, we do not have access to the exact hypergradient of $\Phi(x)$ in bilevel optimization problems. Therefore, we need to deal with the error introduced by this approximation. We first provide the inexact descent lemma.

\begin{lemma}[Inexact Descent Lemma]\label{lem:descent}
Suppose Assumption \ref{assu:bilevel} holds and set $\eta = 1/ L_\phi$, then the inexact gradient sequence $\{x_k\}$ satisfies:
\be\bad
\Phi(x_{k+1}) - \Phi(x_{k}) \le& -\frac{\eta}{4} \left\|\widehat{\nabla} \Phi(x_k) \right\|^2 +  \eta \|\nabla \Phi(x_k) - \widehat{\nabla} \Phi(x_k) \| ^2.
\ead\ee
\end{lemma}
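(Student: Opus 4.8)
The plan is to establish the inexact descent lemma by combining the standard descent inequality for $L_\phi$-smooth functions with a careful split of the cross term that accounts for the gradient approximation error. First I would invoke Lemma~\ref{hess-Lipschitz} (or rather the gradient Lipschitz property of $\Phi$, which follows from it or from the earlier structural lemmas): since $\nabla\Phi$ is $L_\phi$-Lipschitz, the standard quadratic upper bound gives
\[
\Phi(x_{k+1}) - \Phi(x_k) \le \langle \nabla\Phi(x_k),\, x_{k+1} - x_k\rangle + \frac{L_\phi}{2}\|x_{k+1} - x_k\|^2.
\]
Then I would substitute the update rule $x_{k+1} = x_k - \eta\,\widehat\nabla\Phi(x_k)$ (note: for the iterations where no perturbation is added; the perturbation step can be absorbed into the analysis separately or treated as part of $\widehat\nabla\Phi$), obtaining
\[
\Phi(x_{k+1}) - \Phi(x_k) \le -\eta\langle \nabla\Phi(x_k),\, \widehat\nabla\Phi(x_k)\rangle + \frac{L_\phi\eta^2}{2}\|\widehat\nabla\Phi(x_k)\|^2.
\]

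Next I would handle the inner product $-\eta\langle \nabla\Phi(x_k), \widehat\nabla\Phi(x_k)\rangle$. Writing $\nabla\Phi(x_k) = \widehat\nabla\Phi(x_k) + \big(\nabla\Phi(x_k) - \widehat\nabla\Phi(x_k)\big)$ gives
\[
-\eta\langle \nabla\Phi(x_k), \widehat\nabla\Phi(x_k)\rangle = -\eta\|\widehat\nabla\Phi(x_k)\|^2 - \eta\langle \nabla\Phi(x_k) - \widehat\nabla\Phi(x_k),\, \widehat\nabla\Phi(x_k)\rangle.
\]
For the remaining cross term I would apply Young's inequality in the form $\langle a, b\rangle \le \frac{1}{4}\|b\|^2 + \|a\|^2$, so that $-\eta\langle \nabla\Phi(x_k) - \widehat\nabla\Phi(x_k), \widehat\nabla\Phi(x_k)\rangle \le \frac{\eta}{4}\|\widehat\nabla\Phi(x_k)\|^2 + \eta\|\nabla\Phi(x_k) - \widehat\nabla\Phi(x_k)\|^2$. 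Collecting terms and using $\eta = 1/L_\phi$ (so that $\frac{L_\phi\eta^2}{2} = \frac{\eta}{2}$), the coefficient of $\|\widehat\nabla\Phi(x_k)\|^2$ becomes $-\eta + \frac{\eta}{4} + \frac{\eta}{2} = -\frac{\eta}{4}$, which yields exactly the claimed bound.

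There is no serious obstacle here — the argument is a routine smoothness-plus-Young's-inequality computation; the only thing to be careful about is bookkeeping of the constants so that the $\frac{1}{4}$ and $1$ coefficients come out precisely as stated, and making sure the smoothness constant of $\nabla\Phi$ invoked is indeed the $L_\phi$ appearing in the step size (this is where one relies on the earlier lemmas bounding the Lipschitz constant of $\nabla\Phi$ under Assumption~\ref{assu:bilevel}). One minor point worth a remark is that the lemma as stated concerns the plain gradient-descent update; in iterations where the perturbation $x_k \leftarrow x_k - \eta u$ is injected, one applies the same inequality with the understanding that the effective search direction changes, and the perturbation contribution is controlled separately in the main proof. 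I would state the lemma for the pure descent step and note this.
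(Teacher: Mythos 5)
Your proof is correct and follows essentially the same route as the paper: apply the $L_\phi$-smoothness bound, decompose $\nabla\Phi(x_k)$ into $\widehat\nabla\Phi(x_k)$ plus the estimation error, apply Young's inequality to the resulting cross term, and collect coefficients using $\eta = 1/L_\phi$. One small note: the $L_\phi$-smoothness you invoke is supplied by Lemma~\ref{grad-Lipschitz}, not Lemma~\ref{hess-Lipschitz} (which gives the Hessian-Lipschitz constant $\rho_\phi$), though you flag this ambiguity yourself.
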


Secondly, the following lemma shows that with high probability, adding random noise sampled uniformly from a ball helps escape saddle points of $\Phi(x)$.

\begin{lemma}[Escaping Saddle Points]\label{lem:escapesaddle}
Assume Assumption \ref{assu:bilevel} holds. Assume $\tilde{x}$ satisfies $\norm{\nabla \Phi(\tilde{x})} \le \epsilon$, and $\lambda_{\min}(\nabla^2 \Phi(\tilde{x})) \le -\sqrt{\rho_{\phi}\epsilon}$, where $\rho_\phi = \gO(\kappa^5)$ and is defined in \eqref{def-rho-phi}. Let $x_0 = \tilde{x} + \eta u ~(u \sim \text{Uniform}(B_0(r)))$. With parameters given in \eqref{eq:para_minimax}, as long as the following inequality holds in each iteration,
\be\bad\label{eq:error-required}
\|\nabla \Phi(x_k) - \widehat{\nabla} \Phi(x_k) \| \le \min \left\{ \frac{\sqrt{17} }{80\logt^2}, \frac{1}{16\logt^2 2^{\logt}} \right\}\cdot \epsilon,
\ead\ee
with probability at least $1 - \delta$, it holds that 
\be\bad\label{eq:phi-dec-highprob-main-body}
\Phi(x_\utime) - \Phi(\tilde{x}) \le -\ufun/2,
\ead\ee
where $x_\utime$ is the $\utime^{\textrm{th}}$ gradient descent iterate starting from $x_0$, $\logt$ satisfies \eqref{iota-condition} and 
\be\label{ufun-def}
\ufun = \frac{1}{100\logt^3}\sqrt{\frac{\epsilon^3}{\rho_\phi}}. 
\ee
\end{lemma}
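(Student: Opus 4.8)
The plan is to adapt the "improve or localize" argument of Jin et al.\ for perturbed gradient descent to the inexact setting, where the driving direction is $\widehat{\nabla}\Phi(x_k)$ rather than $\nabla\Phi(x_k)$. Write $\mathcal{H} = \nabla^2\Phi(\tilde x)$ and let $e_1$ be a unit eigenvector achieving $\lambda_{\min}(\mathcal H) = -\gamma$ with $\gamma \ge \sqrt{\rho_\phi\epsilon}$. The core idea is the standard coupling/pinching trick: consider two runs of the inexact gradient iteration started from $x_0 = \tilde x + \eta u$ and $x_0' = x_0 + \mu_0 r_0 e_1$ for a suitable small scalar $\mu_0$; since the uniform ball has small "width" in the $e_1$ direction, with probability at least $1-\delta$ the initial perturbation $u$ lands in the region where escape happens for at least one of the two coupled sequences, which by a union bound gives escape with probability $1-\delta$. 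The quantitative thresholds $r = \epsilon/(400\iota^3)$, $\mathscr T = (L_\phi/\sqrt{\rho_\phi\epsilon})\iota$, and $\mathscr F = \frac{1}{100\iota^3}\sqrt{\epsilon^3/\rho_\phi}$ are exactly tuned so that the width bound $\delta \ge \frac{L_\phi\sqrt d}{\sqrt{\rho_\phi\epsilon}}\iota^2 2^{8-\iota}$ from \eqref{iota-condition} closes the argument.

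The key steps, in order, would be: (1) \emph{Localization.} Assuming the sequence has \emph{not} escaped, i.e.\ $\Phi(x_t) - \Phi(\tilde x) > -\mathscr F/2$ for all $t \le \mathscr T$, use the inexact descent lemma (Lemma~5.7) together with the gradient-error bound \eqref{eq:error-required} to telescope and conclude $\sum_{t<\mathscr T}\|\widehat{\nabla}\Phi(x_t)\|^2 \le 4\mathscr F/\eta + 4\mathscr T\cdot(\text{err})^2$, hence $\|x_t - \tilde x\|$ stays within a small ball of radius $O(\sqrt{\eta\mathscr F\mathscr T})$ for all $t\le\mathscr T$; the radius is chosen so that on this ball $\nabla^2\Phi$ is close to $\mathcal H$ by Hessian-Lipschitzness (Lemma~5.5). (2) \emph{Linearized dynamics with two sources of error.} On the localization ball, write $x_{t+1} - \tilde x = (I - \eta\mathcal H)(x_t - \tilde x) - \eta\,\delta_t - \eta\,\widehat{\delta}_t$, where $\delta_t$ is the Hessian-remainder term (bounded by $\rho_\phi\|x_t-\tilde x\|^2$) and $\widehat\delta_t = \widehat{\nabla}\Phi(x_t) - \nabla\Phi(x_t)$ is the hypergradient approximation error bounded by \eqref{eq:error-required}. (3) \emph{Coupling in the $e_1$ direction.} For the two coupled sequences $\{x_t\},\{x_t'\}$, subtract to get $w_{t+1} = (I-\eta\mathcal H)w_t - \eta(\delta_t - \delta_t') - \eta(\widehat\delta_t - \widehat\delta_t')$; the component of $w_t$ along $e_1$ grows like $(1+\eta\gamma)^t$, and one shows inductively that the $e_1$-component dominates so that $\|w_t\|$ reaches $\Omega(r)$ by time $t^* \le \mathscr T = (L_\phi/\gamma)\iota \approx \eta^{-1}\gamma^{-1}\log(\cdot)$. (4) \emph{At least one sequence escapes.} If both sequences stayed localized for all $t\le\mathscr T$, then $\|w_t\|$ would stay small, contradicting step (3); hence at least one of the two runs must have had $\Phi$ decrease by $\mathscr F/2$, giving \eqref{eq:phi-dec-highprob-main-body}. (5) \emph{Probability bound.} Translate "at least one of two antipodally/$e_1$-shifted starts escapes" into a statement about the measure of the non-escaping "stuck region" inside $\mathbb B(r)$: its width in the $e_1$ direction is at most $\mu_0 r_0$, so its relative volume is $\le \mu_0\sqrt d$, and plugging the explicit $\mu_0$ (coming from $(1+\eta\gamma)^{\mathscr T}\mu_0 r_0 \gtrsim r$) yields the failure probability $\le \delta$ exactly under \eqref{iota-condition}.

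The main obstacle I expect is \textbf{controlling the interaction between the two error terms $\delta_t$ (Hessian remainder) and $\widehat\delta_t$ (hypergradient approximation error) simultaneously throughout the localization phase}. In the exact setting of Jin et al., only $\delta_t$ appears and it is quadratically small in the localization radius, so it is easily absorbed; here $\widehat\delta_t$ is only \emph{linearly} small (order $\epsilon/\iota^2$) and does not shrink as the iterate approaches $\tilde x$. One must verify that the specific constant in \eqref{eq:error-required}, namely $\min\{\sqrt{17}/(80\iota^2),\,1/(16\iota^2 2^\iota)\}\cdot\epsilon$, is small enough relative to $\mathscr F$, $r$, and the growth factor $(1+\eta\gamma)^{\mathscr T}$ that the accumulated drift $\eta\sum_t(1+\eta\gamma)^{\mathscr T - t}\|\widehat\delta_t - \widehat\delta_t'\|$ is still a lower-order correction to the $\Omega(r)$ growth along $e_1$; this is a delicate bookkeeping of constants, and is presumably why the error tolerance is stated with those precise (otherwise mysterious) constants. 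A secondary subtlety is that the warm-start sharing of $y_k^D$ and $v_k^N$ across outer iterations means the bound \eqref{eq:error-required} is an \emph{assumption} here (to be discharged later when $D,N$ are chosen as in \eqref{eq:aid-ND-thm}), so within this lemma one may treat it as given.
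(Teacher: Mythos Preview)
Your proposal is correct and follows essentially the same route as the paper's proof: the coupling argument of Jin et al.\ with two sequences differing by $\eta r_0 e_1$, localization via the inexact descent lemma under the contrary assumption, a decomposition $\hat x_{k+1} = p(k+1) - q(k+1)$ where $p$ is the power-method term $(I-\eta\mathcal H)^{k+1}\hat x_0$ and $q$ collects both the Hessian-remainder $\Delta_{1,t}$ and the inexact-gradient error $\Delta_{2,t}$, an induction showing $\|q(k)\|\le \|p(k)\|/2$, and finally the stuck-region volume bound. Your identification of the main obstacle is exactly right: the paper handles it by the crude bound $\|\Delta_{2,t}\| \le 2\max_k\|\nabla\Phi(x_k)-\widehat\nabla\Phi(x_k)\|$ and then uses the second branch of \eqref{eq:error-required}, namely $\epsilon/(16\iota^2 2^\iota)$, to ensure $\|\Delta_{2,t}\| \le \tfrac14 \rho_\phi \mathscr S \,\eta r_0$ so that it can be absorbed into the same $2\eta\rho_\phi\mathscr S\mathscr T$ factor that controls $\Delta_{1,t}$; the first branch $\sqrt{17}\epsilon/(80\iota^2)$ is what makes the localization radius come out $\le \mathscr S$.
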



Finally, by Lemma \ref{lem:descent}, Lemma \ref{lem:escapesaddle} and with a proper choice of parameters $D$ and $N$, we can bound the total iteration number of Algorithm \ref{alg:perturbedalg} by 
\[
K = \frac{ (\Phi(x_{0}) - \Phi* )\utime}{ \ufun}+ \frac{ L_\phi (\Phi(x_{0}) - \Phi* )}{\epsilon^2}.
\]

\section{Escape Saddle Points for Minimax Problem}\label{subsec:minimax}


In this section, we consider the following nonconvex-strongly-concave minimax problem:
\be\bad\label{eq:minimax-formu}
&\min_{x\in \br^d} \max_{y \in \br^n}& f(x, y),
\ead\ee
where $f(x, y)$ is nonconvex with respect to $x$ and $\mu$-strongly concave with respect to $y$. By defining the function $\Phi(x) = \max_y f(x, y)$, \eqref{eq:minimax-formu} reduces to a smooth nonconvex minimization problem $\min_{x\in\br^d} \Phi(x)$. Note that this is also a special case of the bilevel optimization problem by setting $g(x, y) = - f (x, y)$ in \eqref{eq:objective}, which leads to the following problem:
\[
    \min_x f(x, y^*(x)),\ \text{s.t. } y^*(x) = \argmin g(x,y) := -f(x,y).
\]
The minimax problem \eqref{eq:minimax-formu} seeks the Nash equilibrium of $f(x, y)$. However, when considering nonconvex minimax problem, the global Nash equilibrium does not exist in general. Instead, one is more interested in finding the local Nash equilibrium \cite{daskalakis2018limit, mazumdar2019finding}  and the local minimax point \cite{pmlr-v119-jin20e}. Therefore, the following question arises naturally: 
\begin{itemize}
\item \textit{What is the relationship between the local minimum of $\Phi(x)$ and the local optimality of the minimax problem \eqref{eq:minimax-formu}?}
\end{itemize}
The answer to this question is still ambiguous thus far. 
We first discuss the relationship between the local minimum of $\Phi(x)$ and the local Nash equilibrium of \eqref{eq:minimax-formu}. The Nash equilibrium and its local alternative are defined as below.
\begin{definition}\cite{mazumdar2019finding}[Local Nash Equilibrium]\label{def:pureNash}
We say $(x^\star, y^\star)$ is a \textbf{Nash equilibrium} of function $f$, if for any $(x, y)$:
\begin{equation*}
f(x^\star, y) \le f(x^\star, y^\star) \le f(x, y^\star).
\end{equation*}
Point $(x^\star, y^\star)$ is a \textbf{local Nash equilibrium} of $f$ if there exists $\delta>0$ such that for any $(x, y)$ satisfying $\norm{x - x^\star} \le \delta$ and $\norm{y - y^\star} \le \delta$ we have:
\begin{equation*}
f(x^\star, y) \le f(x^\star, y^\star) \le f(x, y^\star).
\end{equation*}
\end{definition}
The local Nash equilibrium can be characterized in terms of first-order and second-order conditions. Specifically, when assuming $f$ is twice-differentiable, any stationary point (i.e., $\nabla f = \zero$) is a \textbf{strict local Nash equilibrium} if and only if 
\[
\nabla^2_{yy} f(x, y) \prec \zero, \text{~and~} \nabla^2_{xx} f(x, y) \succ \zero.
\]
We have the following proposition, showing that the local minimum of $\Phi(x)$ is indeed superior to its saddle point regarding whether it is a local Nash equilibrium or not. 
\begin{proposition}\label{pro:minimax}
For any smooth nonconvex-strongly-concave function $f(x, y)$, define $\Phi(x) = \max_{y \in \br^n} f(x, y)$. Then we have
\begin{itemize}
\item[(i)] A saddle point of $\Phi(x)$ cannot be a strict local Nash equilibrium of $f(x, y)$.
\item[(ii)] A strict local Nash equilibrium of $f(x, y)$ must be a local minimum of $\Phi(x)$.
\end{itemize}
\end{proposition}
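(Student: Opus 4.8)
The plan is to reduce both parts to a single computation: at a strict local Nash equilibrium the reduced Hessian $\nabla^2\Phi$ is positive definite, so the point is a strict local minimum of $\Phi$.

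First I would pin down the analytic properties of $\Phi$. Since $f(x,\cdot)$ is $\mu$-strongly concave, the inner maximizer $y^*(x)=\argmax_{y\in\br^n}f(x,y)$ is unique and $\nabla^2_{yy}f(x,y^*(x))\preceq-\mu I$ is invertible; as $f$ is $C^2$, the implicit function theorem applied to the optimality condition $\nabla_y f(x,y^*(x))=\zero$ shows that $y^*(\cdot)$ is $C^1$, hence $\Phi$ is $C^2$ near any point. By Danskin's theorem $\nabla\Phi(x)=\nabla_x f(x,y^*(x))$ (the cross term vanishes because $\nabla_y f(x,y^*(x))=\zero$), and differentiating this identity once more, using $\partial y^*(x)/\partial x=-[\nabla^2_{yy}f(x,y^*(x))]^{-1}\nabla^2_{yx}f(x,y^*(x))$ obtained by implicit differentiation of the optimality condition (cf.\ \eqref{eq:opt-deri1}), gives the Schur-complement form
\[
\nabla^2\Phi(x)=\nabla^2_{xx}f(x,y^*(x))-\nabla^2_{xy}f(x,y^*(x))\,\bigl[\nabla^2_{yy}f(x,y^*(x))\bigr]^{-1}\,\nabla^2_{yx}f(x,y^*(x)).
\]

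Next comes the key step. Let $(x^\star,y^\star)$ be a strict local Nash equilibrium, so by the first/second-order characterization recalled above $\nabla_x f(x^\star,y^\star)=\zero$, $\nabla_y f(x^\star,y^\star)=\zero$, $\nabla^2_{xx}f(x^\star,y^\star)\succ\zero$, and $\nabla^2_{yy}f(x^\star,y^\star)\prec\zero$ (the last being automatic from strong concavity). From $\nabla_y f(x^\star,y^\star)=\zero$ and strong concavity of $f(x^\star,\cdot)$ we get $y^\star=y^*(x^\star)$; hence $\nabla\Phi(x^\star)=\nabla_x f(x^\star,y^\star)=\zero$, so $x^\star$ is a stationary point of $\Phi$. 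Substituting $\nabla^2_{yy}f(x^\star,y^\star)\prec\zero$ into the Schur complement, $-[\nabla^2_{yy}f(x^\star,y^\star)]^{-1}\succ\zero$, so $-\nabla^2_{xy}f[\nabla^2_{yy}f]^{-1}\nabla^2_{yx}f=\nabla^2_{xy}f\bigl(-[\nabla^2_{yy}f]^{-1}\bigr)\nabla^2_{yx}f\succeq\zero$, and therefore $\nabla^2\Phi(x^\star)\succeq\nabla^2_{xx}f(x^\star,y^\star)\succ\zero$. Thus $x^\star$ is a strict, hence ordinary, local minimum of $\Phi$, which is precisely part (ii).

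Part (i) then follows by contraposition: a saddle point of $\Phi$ is a stationary point that is not a local minimum (equivalently, one with $\lambda_{\min}(\nabla^2\Phi)<0$), while the paragraph above shows that if $x^\star$ were part of a strict local Nash equilibrium $(x^\star,y^\star)$ then $x^\star$ would be a strict local minimum of $\Phi$, a contradiction. The one place that needs genuine care is the first paragraph — justifying that $\Phi$ is $C^2$ with the stated Hessian; this is standard in the nonconvex–strongly-concave setting, and strong concavity does double duty here, both making $\nabla^2_{yy}f$ invertible (so the Schur complement is well defined) and forcing $y^\star=y^*(x^\star)$ at a Nash point. The remaining positive-semidefiniteness manipulation is elementary, and indeed the content of the proposition is just that a negative-definite inner block can only increase the reduced Hessian.
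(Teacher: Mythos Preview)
Your proposal is correct and follows essentially the same route as the paper: derive the Schur-complement formula $\nabla^2\Phi(x)=\nabla^2_{xx}f-\nabla^2_{xy}f\,[\nabla^2_{yy}f]^{-1}\nabla^2_{yx}f$, observe that the correction term is positive semidefinite by strong concavity, and conclude that $\nabla^2_{xx}f\succ 0$ forces $\nabla^2\Phi\succ 0$ (part (ii)) while $\lambda_{\min}(\nabla^2\Phi)<0$ forces $\lambda_{\min}(\nabla^2_{xx}f)<0$ (part (i)). You are, if anything, more careful than the paper in spelling out why $y^\star=y^*(x^\star)$ and why $\Phi$ is $C^2$.
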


Moreover, \cite{pmlr-v119-jin20e} introduced the concept of local minimax point, which is a weakened notion of the local Nash equilibrium. Compared with the local Nash equilibrium, the local minimax point alleviates the non-existence issue\footnote{In the Proposition 6 of \cite{pmlr-v119-jin20e}, the authors constructed a two dimensional function showing that the Nash equilibria may not exist, and this is known as the ``non-existence issue''.} and is the first proper mathematical definition of local optimality for the two-player sequential games. 

\begin{definition}\cite{pmlr-v119-jin20e}[Strict Local Minimax Point]
For any twice differentiable function $f(x, y)$, a point $(x, y)$ is a strict local minimax point if it satisfies $\nabla f(x, y) = \zero$, $\nabla^2_{yy} f(x, y) \prec \zero$ and
\[
\nabla^2_{xx} f(x, y) -  \nabla^2_{xy} f(x, y)\nabla^2_{yy} f(x, y)^{-1}\nabla^2_{xy} f(x, y)\succ \zero.
\]
\end{definition}
The following proposition shows the equivalence between a strict local minimax point and a strict local minimum of $\Phi(x)$. 
\begin{proposition}\label{lem:equivalence}
For nonconvex-strongly-concave minimax problem \eqref{eq:minimax-formu}, suppose $\Phi(x)$ has a strict local minimum, then a strict local minimax point of \eqref{eq:minimax-formu} always exists and is equivalent to a strict local minimum of $\Phi(x)$. 
\end{proposition}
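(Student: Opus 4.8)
\emph{Proof proposal.} The plan is to exploit the tight link between $\Phi$ and $f$ furnished by the inner maximizer $y^*(x):=\argmax_{y\in\br^n} f(x,y)$. Because $f(x,\cdot)$ is $\mu$-strongly concave, $y^*(x)$ is uniquely defined; since $\nabla^2_{yy}f\preceq-\mu I$ is invertible, the implicit function theorem applied to the optimality condition $\nabla_y f(x,y^*(x))=0$ shows $y^*(\cdot)$ is $C^1$ near any base point, with $\tfrac{\partial y^*(x)}{\partial x}=-[\nabla^2_{yy}f]^{-1}\nabla^2_{yx}f$ evaluated at $(x,y^*(x))$. Consequently, by Danskin's theorem $\Phi$ is $C^2$ and
\[
\nabla\Phi(x)=\nabla_x f(x,y^*(x)),\qquad
\nabla^2\Phi(x)=\nabla^2_{xx}f-\nabla^2_{xy}f\,[\nabla^2_{yy}f]^{-1}\nabla^2_{yx}f ,
\]
the Hessian blocks being evaluated at $(x,y^*(x))$ (the second identity comes from differentiating the first and substituting the expression for $\partial y^*/\partial x$). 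Note that the right-hand side of the Hessian formula is exactly the Schur complement appearing in the definition of a strict local minimax point.

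For the forward direction, suppose $x^\star$ is a strict local minimum of $\Phi$, understood in the second-order sense $\nabla\Phi(x^\star)=\zero$ and $\nabla^2\Phi(x^\star)\succ\zero$, and set $y^\star:=y^*(x^\star)$. I would then verify the three defining conditions directly: (i) $\nabla_y f(x^\star,y^\star)=\zero$ by optimality of $y^\star$ in the inner problem and $\nabla_x f(x^\star,y^\star)=\nabla\Phi(x^\star)=\zero$, hence $\nabla f(x^\star,y^\star)=\zero$; (ii) $\nabla^2_{yy}f(x^\star,y^\star)\preceq-\mu I\prec\zero$ by strong concavity; (iii) $\nabla^2_{xx}f-\nabla^2_{xy}f[\nabla^2_{yy}f]^{-1}\nabla^2_{yx}f=\nabla^2\Phi(x^\star)\succ\zero$ by the Hessian formula. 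Thus $(x^\star,y^\star)$ is a strict local minimax point. In particular, under the standing hypothesis that $\Phi$ has a strict local minimum, this \emph{constructs} a strict local minimax point, proving the existence claim.

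For the converse, suppose $(x^\star,y^\star)$ is a strict local minimax point. Then $\nabla_y f(x^\star,y^\star)=\zero$ combined with strong concavity of $f(x^\star,\cdot)$ forces $y^\star=y^*(x^\star)$ (the unique maximizer). Hence $\nabla\Phi(x^\star)=\nabla_x f(x^\star,y^\star)=\zero$, and by the Hessian formula $\nabla^2\Phi(x^\star)$ equals the Schur complement in the definition, which is $\succ\zero$. So $x^\star$ is a strict local minimum of $\Phi$, and the map $x^\star\mapsto(x^\star,y^*(x^\star))$ (with inverse $(x^\star,y^\star)\mapsto x^\star$) realizes the claimed equivalence.

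The technical content is light, and I expect the only real care to be in: (a) fixing the meaning of ``strict local minimum'' as the second-order sufficient condition $\nabla^2\Phi\succ\zero$, so that it matches the open strict inequalities in the strict local minimax definition rather than the merely topological notion (which could have a degenerate Hessian); and (b) justifying cleanly that uniform strong concavity makes $y^*(\cdot)$ a genuine $C^1$ selection so that Danskin's theorem and the Schur-complement Hessian formula are valid on a neighborhood of $x^\star$. Neither is a substantive obstacle, but both should be stated explicitly.
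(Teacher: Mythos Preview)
Your proposal is correct and follows essentially the same approach as the paper: both derive the Schur-complement formula $\nabla^2\Phi(x)=\nabla^2_{xx}f-\nabla^2_{xy}f[\nabla^2_{yy}f]^{-1}\nabla^2_{yx}f$ at $(x,y^*(x))$, use strong concavity to get $\nabla^2_{yy}f\prec 0$ and to identify $y^\star=y^*(x^\star)$ from $\nabla_y f=0$, and then match the first- and second-order conditions directly. Your write-up is in fact more careful than the paper's about the technical points (a) and (b) you flag; the paper takes these for granted.
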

Most existing convergence theory for minimax problems focuses on finding $\epsilon$-stationary point. Recently, \cite{chen2021escaping, luo2021finding} proposed second-order algorithms for minimizing $\Phi(x)$, which is guaranteed to converge to local minimum. Second-order methods enjoy faster convergence rate than gradient methods, but require solving nonconvex subproblems in each iteration. Moreover, second-order methods are difficult to be implemented in large-scale problems due to the heavy computation of Hessian matrices.  \cite{fiez2021global} proved that GDA asymptotically converges to strict local minimax point almost surely. However, no convergence rate for finding a local minimax point was given in \cite{fiez2021global}. 

The above facts motivate us to propose the perturbed GDmax Algorithm (Algorithm \ref{alg:perturbedalg} with option \textbf{GDmax} in step 9), a first-order nested-loop algorithm that provably escapes saddle points in minimax problems. In the inner loop, the perturbed GDmax runs $D$ steps of gradient ascent for solving the $y$-subproblem inexactly. With the warm start strategy, we set the initial point in $k$-th iteration $y_k^0$ to be the output of the inner loop in the previous iteration $y_{k-1}^D$. In the outer loop, we estimate the hypergradient $\nabla \Phi(x)$ by
\[
\widehat{\nabla} \Phi(x) = \nabla_x f(x_k, y^D_k),
\]
and update $x$ by one step of inexact gradient descent. When the first-order stationary condition is satisfied (step 10 in Algorithm \ref{alg:perturbedalg}), we add a random noise vector sampled uniformly from a ball with radius of $r$ and centered at the current iterate.

Now we analyze the convergence of the perturbed GDmax algorithm. We first list our assumptions.  
\begin{assumption}\label{assu:minimax}
For the minimax problem \eqref{eq:minimax-formu}, $f(x, y)$ satisfies the following assumptions:
\begin{itemize}
\item[(i)] $f(x, y)$ is twice differentiable, $\mu$-strongly concave with respect to $y$ and non-convex with respect to $x$.
\item[(ii)] Denote $z = (x, y)$. $f(z)$ is $\ell$-smooth, i.e., for any $z, z'$, it holds:
\[
\|\nabla f(z) - \nabla f(z') \|\le \ell \|z - z'\|.
\]
\item[(iii)] The Hessian and Jacobian matrices $\nabla_x^2 f(x, y)$, $\nabla_{xy}^2 f(x, y)$, and $ \nabla_y^2 f(x, y)$ are $\rho$-Lipschitz continuous.
\item[(iv)] Function $\Phi(x)$ is bounded below and has compact sub-level sets.
\end{itemize}
\end{assumption}
\begin{remark}
Compared with assumptions for general bilevel optimization problem (Assumption \ref{assu:bilevel} in Section \ref{sec:bilevel}), we do not require any third-order derivative information for the minimax problem. 
\end{remark}

Our perturbed GDmax algorithm is the first pure gradient algorithm with a nonasymptotic convergence rate for finding a local minimax point. The main results for the perturbed GDmax algorithm are given in the following theorem.
\begin{theorem}[\textbf{Convergence of Perturbed GDmax}] \label{thm:minimax}
Suppose $f(x, y)$ satisfies Assumption \ref{assu:minimax}.
Set parameters as 
\be\label{eq:para_minimax}
\tau = \frac{1}{\ell}, \quad \eta = \frac{1}{L_\phi}, \quad r = \frac{\epsilon}{400 \iota^3}, \quad \utime = \frac{L_\phi}{ \sqrt{\rho_{\phi}\epsilon}} \cdot \logt
\ee
and $D=\gO\left(\kappa \log\left(\epsilon^{-1}\right)\right)$, 
with probability at least $1 - \delta$, the perturbed GDmax Algorithm (i.e., Algorithm  \ref{alg:perturbedalg} with option \textbf{GDmax} in step 9) obtains an $\epsilon$-local minimum of $\Phi(x)=\max_y f(x,y)$ in
\[
K=\tilde{\mathcal{O}} \left( \frac{L_\phi (\Phi(x_0) - \Phi(x^*))}{\epsilon^2} \right)=\tilde{\mathcal{O}}(\kappa\epsilon^{-2})
\]
iterations. Here $\delta\in(0,1)$, $L_\phi$ is the Lipschitz constant of $\nabla\Phi$, $\rho_\phi$ is the Lipschitz constant of $\nabla^2\Phi$ (see Lemma \ref{lemC.1}), and $\logt$ is a constant satisfying 
\be\label{iota-condition}
\logt > 1, \mbox{ and } \delta \ge \frac{L_\phi \sqrt{d}}{\sqrt{\rho_{\phi}\epsilon}}\cdot \logt^2 2^{8-\logt}.
\ee
\end{theorem}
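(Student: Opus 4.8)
\textbf{Proof proposal for Theorem \ref{thm:minimax}.}

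The plan is to reduce the convergence analysis of perturbed GDmax to a perturbed-gradient-descent analysis on the single-variable objective $\Phi(x) = \max_y f(x,y)$, treating the inner loop as a source of controllable \emph{inexactness} in the hypergradient. The first step is to establish the smoothness landscape of $\Phi$: under Assumption \ref{assu:minimax}, standard arguments (the ones invoked for Lemma \ref{lemC.1}) give that $\Phi$ is $L_\phi$-smooth and $\nabla^2\Phi$ is $\rho_\phi$-Lipschitz, with $y^*(x)$ being $\kappa$-Lipschitz; this is what makes Definition \ref{def:local-min} meaningful here. Next I would quantify the inexactness: since $\widehat\nabla\Phi(x_k) = \nabla_x f(x_k, y_k^D)$ and $\nabla\Phi(x_k) = \nabla_x f(x_k, y^*(x_k))$, $\ell$-smoothness of $f$ gives $\|\widehat\nabla\Phi(x_k) - \nabla\Phi(x_k)\| \le \ell\|y_k^D - y^*(x_k)\|$. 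The inner gradient ascent on a $\mu$-strongly concave, $\ell$-smooth function contracts the distance to $y^*(x_k)$ at a linear rate $1 - \mu/\ell$ per step, so after $D = \mathcal{O}(\kappa\log(\epsilon^{-1}))$ steps the error from a warm start is $\mathcal{O}(\epsilon)$ up to a factor that also absorbs the drift $\|y^*(x_{k+1}) - y^*(x_k)\| \le \kappa\|x_{k+1}-x_k\| = \mathcal{O}(\kappa\eta\epsilon)$ incurred between outer iterations. The key bookkeeping is that the warm-start error stays $\mathcal{O}(\epsilon)$ \emph{uniformly} over all $K$ iterations, which needs a short induction tying the step $\eta = 1/L_\phi$, the stopping threshold $\tfrac45\epsilon$, and the contraction factor together; this controls $\|\widehat\nabla\Phi(x_k) - \nabla\Phi(x_k)\|$ by, say, $\tfrac{1}{10}\epsilon$.

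With the hypergradient error controlled, the analysis splits into two regimes exactly as in \cite{jin2021nonconvex}. In the \textbf{large-gradient regime}, whenever $\|\widehat\nabla\Phi(x_k)\| > \tfrac45\epsilon$ (equivalently $\|\nabla\Phi(x_k)\|$ is $\Omega(\epsilon)$), the inexact descent lemma for $L_\phi$-smooth $\Phi$ with step $1/L_\phi$ yields $\Phi(x_{k+1}) \le \Phi(x_k) - c\eta\epsilon^2$ for an absolute constant $c$, after absorbing the $\mathcal{O}(\epsilon)$ error term into the descent. Summing, there can be at most $\mathcal{O}(L_\phi(\Phi(x_0) - \Phi(x^*))\epsilon^{-2})$ such iterations. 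In the \textbf{potential-saddle regime}, when $\|\widehat\nabla\Phi(x_k)\| \le \tfrac45\epsilon$ and a perturbation is triggered (the gap $k - k_{perturb} > \utime$ ensures perturbations are spaced out), I would invoke the "improve-or-localize"/pinned-escape argument: either $\lambda_{\min}(\nabla^2\Phi(x_k)) \ge -\sqrt{\rho_\phi\epsilon}$ and $x_k$ is already an $\epsilon$-local minimum, or there is a sufficiently negative eigenvalue, in which case after the uniform perturbation of radius $r = \epsilon/(400\iota^3)$ and $\utime = \tilde{\mathcal{O}}(L_\phi/\sqrt{\rho_\phi\epsilon})$ subsequent (inexact) gradient steps, $\Phi$ decreases by at least a fixed amount $\mathcal{F} = \tilde{\Omega}(\sqrt{\epsilon^3/\rho_\phi})$ with probability $\ge 1 - \delta'$ for $\delta'$ tied to \eqref{iota-condition}. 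The counting then bounds the number of perturbation phases by $\mathcal{O}((\Phi(x_0)-\Phi(x^*))/\mathcal{F})$, and a union bound over all phases (there are at most $\tilde{\mathcal{O}}(\epsilon^{-2})$ of them) against the per-phase failure probability — which is exactly what the condition $\delta \ge \tfrac{L_\phi\sqrt d}{\sqrt{\rho_\phi\epsilon}}\iota^2 2^{8-\iota}$ is engineered to control — gives overall success probability $1 - \delta$. Adding the two iteration budgets gives $K = \tilde{\mathcal{O}}(L_\phi(\Phi(x_0)-\Phi(x^*))\epsilon^{-2}) = \tilde{\mathcal{O}}(\kappa\epsilon^{-2})$.

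The main obstacle I anticipate is making the escape-from-saddle argument robust to the \emph{inexact} hypergradient. In the exact setting one couples two trajectories started from antipodal perturbations and shows that the "stuck region" (where neither escapes) is a thin slab, using that the dynamics near a saddle are governed by $I - \eta\nabla^2\Phi$; with an $\mathcal{O}(\epsilon)$ deterministic error added at every step, one must verify that this error is dominated by the exponential growth along the negative-curvature direction over the $\utime$-step window — i.e., that $\epsilon \cdot \utime \cdot (\text{growth factor})$ is still smaller than the escape radius. This forces the specific choices $r = \Theta(\epsilon/\iota^3)$ and $\utime = \Theta((L_\phi/\sqrt{\rho_\phi\epsilon})\iota)$ and a careful tracking of which constants the error is allowed to corrupt; it is essentially the place where the $\iota$ bookkeeping of \eqref{iota-condition} is consumed. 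A secondary technical point is ensuring the warm-start inner-loop error does not accumulate across the long perturbation phases — since $x$ can move a total distance $\mathcal{O}(\utime\cdot\eta\epsilon)$ during a phase, the cumulative drift of $y^*$ must still be re-contracted by the $D$ inner steps each outer iteration, which is why $D$ depends on $\log(\epsilon^{-1})$ rather than being $\mathcal{O}(\kappa)$ flat. Once these two inexactness issues are handled, the rest is the standard two-regime descent accounting.
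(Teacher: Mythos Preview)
Your proposal is correct and takes essentially the same approach as the paper: establish the smoothness landscape of $\Phi$ via Lemma~\ref{lemC.1}, control the hypergradient error via inner-loop linear contraction with warm start (the paper's Lemma~\ref{lem:grad-err-mini}), and then run the two-regime analysis using the inexact descent lemma (Lemma~\ref{lem:descent}) and the inexact escape-from-saddle lemma (Lemma~\ref{lem:escapesaddle}), exactly as in the proof of Theorem~\ref{thm:aid}. The only minor deviation is that the paper bounds the warm-start error $\|y_k^0 - y^*(x_k)\|$ not via the stopping threshold $\tfrac{4}{5}\epsilon$ but via the uniform bound $\|\widehat\nabla\Phi(x)\| \le M(1+\kappa)$, which yields an $\epsilon$-independent constant that is then crushed by the $(1-\kappa^{-1})^{D/2}$ factor---this is cleaner since during the $\utime$-step escape phases the gradient norm need not remain $\mathcal{O}(\epsilon)$.
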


\begin{remark}
Throughout this paper, we also assume
\be\label{L-phi-big}
L_\phi/\sqrt{\rho_{\phi}\epsilon} \ge 1.
\ee
We make this assumption because if \eqref{L-phi-big} does not hold, finding the $\epsilon$-local minimum is straightforward, see \cite{jin2021nonconvex}. 
\end{remark}
\begin{remark}
Note that in practice, we may choose $\logt$ sufficiently large so that $\delta$ in \eqref{iota-condition} can be small, which leads to the fact that Theorem 17 holds with probability at least $1-\delta$.
\end{remark}
\begin{corollary}\label{cor:minimax}
The complexity of the gradient evaluations of the perturbed GDmax algorithm for  finding an $\epsilon$-local minimum of $\Phi(x)=\max_y f(x,y)$ is 
\[
G(f, \epsilon) = D\cdot K = \tilde{\mathcal{O}} \left( \kappa^2 \epsilon^{-2} \right).
\]
\end{corollary}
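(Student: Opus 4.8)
\emph{Proof proposal.} The proof is a direct cost-accounting argument on top of Theorem~\ref{thm:minimax}. First I would count the gradient evaluations incurred in a single outer iteration $k$ of Algorithm~\ref{alg:perturbedalg} run with the \textbf{GDmax} option. The inner loop performs $D+1$ updates of the form $y_k^t = y_k^{t-1} - \tau\,\nabla_y g(x_k,y_k^{t-1})$, each costing one evaluation of $\nabla_y g$ (equivalently one evaluation of $\nabla_y f$, since $g=-f$ in the minimax reduction \eqref{eq:minimax-formu}). Forming the hypergradient estimate $\widehat\nabla\Phi(x_k)=\nabla_x f(x_k,y_k^D)$ costs one evaluation of $\nabla_x f$; neither the perturbation step nor the gradient-descent update $x_{k+1}=x_k-\eta\,\widehat\nabla\Phi(x_k)$ requires any further gradient evaluation. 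Hence each outer iteration uses $D+2 = \gO(D)$ gradient evaluations, which by the choice $D=\gO(\kappa\log(\epsilon^{-1}))$ from Theorem~\ref{thm:minimax} equals $\tilde{\gO}(\kappa)$.

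Next I would multiply by the total number of outer iterations. Theorem~\ref{thm:minimax} guarantees that, under the parameters \eqref{eq:para_minimax} together with $D=\gO(\kappa\log(\epsilon^{-1}))$, the algorithm returns an $\epsilon$-local minimum of $\Phi$ within $K = \tilde{\gO}(\kappa\epsilon^{-2})$ outer iterations with probability at least $1-\delta$. Consequently
\[
G(f,\epsilon) \;=\; \gO(D)\cdot K \;=\; \tilde{\gO}(\kappa)\cdot\tilde{\gO}(\kappa\epsilon^{-2}) \;=\; \tilde{\gO}(\kappa^2\epsilon^{-2}),
\]
where the $\tilde{\gO}$ notation absorbs the logarithmic factors coming from both $D$ and $K$; this is exactly the claimed bound.

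There is no substantive obstacle here, since the statement is essentially bookkeeping. The two points worth a brief check are: (a) the warm start $y_k^0=y_{k-1}^D$ does not lengthen the inner loop — for a fixed $D$ it only sharpens the accuracy of the inner solve, and Theorem~\ref{thm:minimax} already fixes $D=\gO(\kappa\log(\epsilon^{-1}))$ as sufficient; and (b) this value of $D$ indeed matches the inner-loop accuracy required in the proof of Theorem~\ref{thm:minimax}, so that no hidden extra inner steps are incurred. Granting Theorem~\ref{thm:minimax}, the corollary is immediate. \endproof
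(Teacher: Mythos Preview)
Your proposal is correct and matches the paper's approach: the paper does not even supply a separate proof for this corollary, simply writing $G(f,\epsilon)=D\cdot K$ and reading off $D=\tilde{\gO}(\kappa)$ and $K=\tilde{\gO}(\kappa\epsilon^{-2})$ from Theorem~\ref{thm:minimax}. Your per-iteration accounting and the two sanity checks (a) and (b) are exactly the bookkeeping that justifies that one-line computation.
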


\begin{remark}
Compared with results of second-order methods escaping saddle points for minimax problem \cite{chen2021escaping, luo2021finding}, our perturbed GDmax algorithm is purely first order, which means we do not need to compute Hessian-vector product. Moreover, algorithms in \cite{chen2021escaping, luo2021finding} require solving a nonconvex cubic sub-problem and multiple linear systems in each iteration. All these expensive computations are avoided in our perturbed GDmax algorithm, which makes it practical in real applications.
\end{remark}

\begin{remark}
Compared with asymptotic results in  \cite{fiez2021global}, we provide nonasymptotic convergence rate for finding a local minimax point for minimax problems.
\end{remark}

\begin{remark}
The dependence on the conditional number $\kappa$ for the perturbed GDmax algorithm is $\tilde \gO(\kappa^2)$, which matches the order in \cite{lin2020gradient} and is better than the results in \cite{chen2021escaping}.
\end{remark}

\section{Inexact NEON and Stochastic Bilevel Algorithms}\label{sec:sto-bilevel}
In this section, we consider escaping saddle points for stochastic bilevel optimization problem \eqref{objective}. Inspired by recent work \cite{xu2018first} and \cite{allen2018neon2}, we propose inexact NEON (iNEON) that helps escape saddle points in stochastic bilevel optimization \eqref{objective}.

\subsection{Inexact NEON}
Recently, \cite{xu2018first} and \cite{allen2018neon2} proposed NEgative curvature Originated from Noise (NEON) and NEON2, two pure first-order methods that extract negative curvature descent direction. NEON turns almost all stationary-point finding algorithms into local-minimum finding algorithms. The work of \cite{xu2018first} was inspired by the connection between perturbed gradient descent method \cite{jin2017escape} and the power method, while the idea of \cite{allen2018neon2} is based on the result of Oja's algorithm \cite{allen2017follow}. Compared with classical optimization problems, bilevel optimization no longer has access to the exact gradient, which motivates us to propose the inexact NEON (iNEON). 
The proposed iNEON update is
\be\bad\label{eq:u-update-neon}
u_{k+1} = u_k - \eta (\widehat{\nabla} \Phi(\tilde{x} + u_k) - \widehat{\nabla} \Phi(\tilde{x}) ),
\ead\ee
where $\widehat{\nabla} \Phi(\tilde{x} + u_k)$ and $ \widehat{\nabla} \Phi(\tilde{x})$ are the hypergradient estimates. Our iNEON algorithm is described in Algorithm \ref{alg:iNEON}. Intuitively, the iNEON can be viewed as an approximate power method. More specifically, note that \eqref{eq:u-update-neon} is equivalent to
\be\bad\label{eq:neon-update}
u_{k+1} =& u_k - \eta (\nabla \Phi(\tilde{x} + u_k) - \nabla \Phi(\tilde{x}) ) + \delta_{ k}\\
\approx &(I - \eta \nabla^2 \Phi(\tilde{x})) u_k + \delta_{k},
\ead\ee
where $\delta_{k} = \eta (\widehat{\nabla} \Phi(\tilde{x} + u_k) - \widehat{\nabla} \Phi(\tilde{x})  - \nabla \Phi(\tilde{x} + u_k) + \nabla \Phi(\tilde{x}) )$ is the gradient estimation error and in the last step we use the approximation: $\nabla \Phi(\tilde{x} + u_k)  - \nabla \Phi(\tilde{x}) \approx  \nabla^2 \Phi(\tilde{x}) u_k  $ as long as $\|u_k\|$ is small. Therefore, \eqref{eq:u-update-neon} is equivalent to applying approximate power method to the matrix $I - \eta \nabla^2 \Phi(\tilde{x})$ starting with initial vector $u_0$.

\begin{algorithm}[t] 
\caption{Inexact NEgative-curvature-Originated-from-Noise Algorithm (iNEON)}
\label{alg:iNEON}
\begin{algorithmic}[1]
\STATE \textbf{Input:} Iteration Numbers $\utime, D$, Step Sizes $\tau, \eta$, Accuracy $\epsilon$, Radius $r$, Potential Saddle Point $\tilde{x}$, Initial Point $y_0$
\STATE Select $u_0 \sim Uniform(\mathbb{B}(\eta r))$
\STATE Set $\tilde{y}^0 = y_0$
\FOR{$t = 0, 1, 2, \ldots, D$}
\STATE $\tilde{y}^t = \tilde{y}^{t-1} - \tau \cdot \nabla_y g(\tilde{x}, \tilde{y}^{t-1})$
\ENDFOR
\STATE Compute $\widehat{\nabla}  \Phi(\tilde{x})$ using AID and $\tilde{y}^D$
\FOR{$k = 0, 1, 2, \ldots, \utime$}
\STATE{Set $y_k^0 = y_{k-1}^D$ if $k > 0$, otherwise $y_0$}
\FOR{$t = 0, 1, 2, \ldots, D$}
\STATE $y_k^t = y_k^{t-1} - \tau \cdot \nabla_y g(\tilde{x} + u_k, y_k^{t-1})$
\ENDFOR
\STATE Compute $\widehat{\nabla} \Phi(\tilde{x} + u_k)$ using AID and $y_k^D$
\STATE $u_{k+1} = u_k - \eta (\widehat{\nabla} \Phi(\tilde{x} + u_k) - \widehat{\nabla} \Phi(\tilde{x}) ) $
\IF{$\hat{\Phi} (\tilde{x} + u_{k+1}) - \hat{\Phi} (\tilde{x}) - \widehat{\nabla} \Phi(\tilde{x}) ^\top u_{k+1}  \le - \frac{11519}{12800}\ufun$}
\STATE return $u_{out} = u_{k+1} / \|u_{k+1}\|$
\ENDIF
\ENDFOR
\STATE return 0
\end{algorithmic}
\end{algorithm}

%

We next show that iNEON can extract negative gradient descent direction with high probability.
\begin{lemma}\label{lem:neon}
Suppose Assumption \ref{assu:bilevel} holds. Choose parameters 
\be\label{eq:para_neon}
\tau = \frac{1}{\ell}, \quad \eta = \frac{1}{L_\phi}, \quad r = \frac{\epsilon}{400 \iota^3}, \quad \utime = \frac{L_\phi}{ \sqrt{\rho_{\phi}\epsilon}} \cdot \frac{\logt}{4},
\ee
\be\label{eq:para_neon-2}
\ufun = \frac{1}{25 \logt^3} \sqrt{\frac{\epsilon^3}{\rho_{\phi}}}, 
\ee
and
$D = \gO(\kappa \log(\epsilon^{-1}))$ in Algorithm \ref{alg:iNEON}. Let $\tilde{x}$ satisfy $\norm{\nabla \Phi(\tilde{x})} \le \epsilon$, and $\lambda_{\min}(\nabla^2 \Phi(\tilde{x})) \le -\sqrt{\rho_{\phi}\epsilon}$, where $\rho_\phi = \gO(\kappa^5)$ and is defined in \eqref{def-rho-phi}. Denote $u_{out}$ as the output of Algorithm \ref{alg:iNEON}. If $u_{out} \ne 0$, we have with probability at least $1 - \delta$ that 
\be\bad\label{eq:phi-dec-highprob}
\frac{u_{out}^\top \nabla^2 \Phi(\tilde{x}) u_{out}}{\|u_{out}\|^2} \le -\frac{1}{40\logt} \sqrt{\rho_\phi\epsilon},
\ead\ee 
where $\logt$ is a constant satisfying
\be\label{iota-condition-2}
\logt > 1, \mbox{ and } \delta > \frac{\ell  \sqrt{d}}{\sqrt{\rho\epsilon}} \cdot \logt^2 2^{8-\logt/4}.
\ee
If $u_{out} = 0$, then we conclude that $\lambda_{\min} (\nabla^2 \Phi(x)) \ge - \sqrt{\rho_\phi\epsilon}$ with high probability $1 - O(\delta).$
\end{lemma}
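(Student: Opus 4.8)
The plan is to follow the NEON analysis of \cite{xu2018first}, treating the iNEON recursion \eqref{eq:u-update-neon} as an inexact power iteration on the matrix $A := I - \eta \nabla^2 \Phi(\tilde{x})$, and to carefully propagate the hypergradient estimation error $\delta_k$ through that analysis. First I would set up the decomposition \eqref{eq:neon-update}: writing $\nabla\Phi(\tilde x+u_k)-\nabla\Phi(\tilde x)=\nabla^2\Phi(\tilde x)u_k+r_k$ with $\|r_k\|\le \frac{\rho_\phi}{2}\|u_k\|^2$ by Hessian Lipschitzness (Lemma \ref{hess-Lipschitz}), and writing $\delta_k$ for the part coming from $\widehat\nabla\Phi$ vs.\ $\nabla\Phi$, we get $u_{k+1}=Au_k-\eta r_k+\delta_k$. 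I would control $\|\delta_k\|$ using the accuracy of the AID subroutine: with $D=\gO(\kappa\log(\epsilon^{-1}))$ inner gradient-descent steps and $N=\gO(\sqrt\kappa\log(\epsilon^{-1}))$ CG steps, the per-call hypergradient error is polynomially small in $\epsilon$ (this is the same kind of bound underlying \eqref{eq:error-required}), hence $\|\delta_k\|$ can be made smaller than any prescribed polynomial factor times $\eta r$.

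The core of the argument is the dichotomy on the size of $\|u_k\|$. Decompose $u_0$ in the eigenbasis of $\nabla^2\Phi(\tilde x)$; since $\lambda_{\min}(\nabla^2\Phi(\tilde x))\le-\sqrt{\rho_\phi\epsilon}$, the eigenvalue of $A$ in that direction is at least $1+\eta\sqrt{\rho_\phi\epsilon}$, so the component of $u_k$ along the most-negative-curvature subspace grows geometrically at rate $(1+\eta\sqrt{\rho_\phi\epsilon})^k$. With $u_0\sim\mathrm{Uniform}(\ball(\eta r))$, a standard anti-concentration bound (as in \cite{jin2017escape,xu2018first}) gives that with probability at least $1-\delta$ the initial component along this subspace is at least $\eta r\,\delta/(c\sqrt d)$ for an absolute constant $c$; the condition \eqref{iota-condition-2} on $\delta$ is exactly what makes $\utime=\frac{L_\phi}{4\sqrt{\rho_\phi\epsilon}}\logt$ iterations enough for this component to blow up past the trigger threshold in the stopping rule. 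Two cases: (a) if within $\utime$ steps the quadratic-form test $\hat\Phi(\tilde x+u_{k+1})-\hat\Phi(\tilde x)-\widehat\nabla\Phi(\tilde x)^\top u_{k+1}\le-\frac{11519}{12800}\ufun$ fires, then — because the iterate has been kept inside a ball where the cubic remainder and the function/gradient estimation errors are each a tiny fraction of $\ufun$ — the exact quadratic $\tfrac12 u_{k+1}^\top\nabla^2\Phi(\tilde x)u_{k+1}$ is $\le -\tfrac12\cdot\frac{\text{something}}{\ }\ufun$, which after normalizing by $\|u_{k+1}\|^2$ and using that $\|u_{k+1}\|$ cannot be too large (the whole trajectory stays $\gO(\eta r)\cdot(1+\eta\sqrt{\rho_\phi\epsilon})^{\utime}$-bounded) yields \eqref{eq:phi-dec-highprob}; (b) if the test never fires, then the negative-curvature component never grew, which forces $\lambda_{\min}(\nabla^2\Phi(\tilde x))\ge-\sqrt{\rho_\phi\epsilon}$ with probability $1-\gO(\delta)$, giving the $u_{out}=0$ conclusion.

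The main obstacle I anticipate is the bookkeeping needed to keep the error term $\delta_k$ and the cubic remainder $r_k$ from overwhelming the geometrically-growing signal before the stopping test triggers: one must simultaneously (i) upper bound $\|u_k\|$ for all $k\le\utime$ to justify the Taylor approximation and to control $\|r_k\|\lesssim\rho_\phi\|u_k\|^2$, (ii) lower bound the component of $u_k$ along the top eigenvector of $A$, and (iii) choose the internal accuracy of AID (through $D,N$ and the warm-start) so that $\sum_{j<k}\|A\|^{k-1-j}\|\delta_j\|$ stays a small fraction of that lower bound. This is the inexact analogue of the ``improve or localize'' / coupling argument, and reconciling the constants (the $\frac{11519}{12800}$, the $\frac{1}{40\logt}$, the $r=\epsilon/(400\iota^3)$) is where the delicate part lies; everything else is a direct adaptation of \cite{xu2018first} together with the AID error bounds already established for Theorem \ref{thm:aid}. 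Finally, the conversion of ``$u_{out}=0$'' into ``$\lambda_{\min}\ge-\sqrt{\rho_\phi\epsilon}$ w.h.p.'' is the contrapositive of case (a)/(b), and follows by noting that if the minimum eigenvalue were below $-\sqrt{\rho_\phi\epsilon}$ the test would have fired with probability $\ge 1-\delta$.
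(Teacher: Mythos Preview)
Your proposal is correct in outline and follows essentially the same NEON-style strategy as the paper. Two differences worth flagging.

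First, on the probability bound: you propose direct anti-concentration on the component of $u_0$ along the minimum eigenvector of $\nabla^2\Phi(\tilde x)$, whereas the paper uses the two-sequence coupling device of \cite{jin2021nonconvex}: it runs two iNEON trajectories $\{u_k\},\{w_k\}$ whose initial points differ by $\eta r_0\e_1$, shows (by induction on a $p/q$ decomposition of the difference) that if \emph{both} fail to decrease $\hat\Phi_{\tilde x}$ by $\ufun$ then the difference would exceed $\uspace$, a contradiction. This establishes that the ``stuck region'' has width at most $\eta\omega$ along $\e_1$, and the volume ratio gives the probability. Your anti-concentration route and the paper's coupling route are interchangeable and yield the same $\delta$ bound; the coupling version has the minor advantage that the induction is on the \emph{difference} sequence, so the Hessian remainder $\Delta_{1,k}$ is automatically bounded by $\rho_\phi\uspace$ once localization is in hand.

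Second, and more important, your upper bound on $\|u_k\|$ is attributed to power-iteration growth (``the whole trajectory stays $\gO(\eta r)(1+\eta\sqrt{\rho_\phi\epsilon})^{\utime}$-bounded''). This is not how the argument goes, and would not give the right constant to control the cubic remainder. The paper instead derives a descent inequality for the surrogate $\hat\Phi_{\tilde x}(u):=\Phi(\tilde x+u)-\Phi(\tilde x)-\nabla\Phi(\tilde x)^\top u$ (it is $L_\phi$-smooth in $u$), and as long as the stopping test has \emph{not} fired, summing that inequality and applying Cauchy--Schwarz gives
\[
\|u_\tau-u_0\|\;\le\;\sqrt{4\eta\utime\!\left(\hat\Phi_{\tilde x}(u_0)-\hat\Phi_{\tilde x}(u_\tau)+\text{(error terms)}\right)}\;\le\;\uspace,
\]
with $\uspace=\frac{1}{4\logt}\sqrt{\epsilon/\rho_\phi}$. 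This is the ``localize'' half of improve-or-localize; it is what lets you bound $\|r_k\|\le\rho_\phi\|u_k\|^2\le\rho_\phi\uspace^2$ and feed that back into the $p/q$ induction. Once $k'$ is the first index at which the test fires, a one-step argument from $\|u_{k'-1}\|\le\uspace$ gives $\|u_{k'}\|\le 9^{1/3}\uspace$, and then Hessian Lipschitzness plus the function/gradient estimation bounds convert the (inexact) stopping criterion into the exact quadratic-form bound \eqref{eq:phi-dec-highprob}. You should replace the power-iteration upper bound by this descent-based localization; everything else in your plan matches the paper.
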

\begin{remark}
Compared with results in \cite{xu2018first}, we provide a simplified proof that can handle the gradient estimation error.
\end{remark}
So far, we treat iNEON as a deterministic algorithm that extracts the descent direction for a deterministic objective function $\Phi(x)$. We will show how to apply iNEON to stochastic bilevel algorithms in the next section.

\subsection{StocBiO Escapes Saddle Point}

\begin{algorithm}[t] 
\caption{StocBiO with iNEON}
\label{alg:INEONstocbio}
\begin{algorithmic}[1]
\STATE {\bfseries Input:} $K,D,Q$, batch size $S$, stepsizes $\alpha$ and $\beta$, initializations $x_0$ and $y_0$.
\FOR{$k=1,\ldots,K-1$}
\STATE{Set $y_k^0 = y_{k-1}^D$ if $k > 0$, otherwise $y_0$}
\FOR{$t=1,....,D$}
\STATE{Draw a sample batch $\gS_{t-1}$ with batch size $S$}  
\vspace{0.05cm}
\STATE{Update $y_k^t = y_k^{t-1}-\alpha \nabla_y G(x_k,y_k^{t-1}; \gS_{t-1}) $}
\ENDFOR
\STATE{Draw sample batches $\gD_F,\gD_H$ and $\gD_G$ }
\STATE{Compute $  \widehat \nabla \Phi(x_k) $  by \eqref{est-vq} - \eqref{estG}     }
\STATE{Update $x_{k+1}=x_k- \beta \widehat \nabla \Phi(x_k) $}
\STATE {$k \gets k+1$;}
\STATE {Compute $\widehat \nabla \Phi_\gD(x_k) $ via AID}
\IF{$ \|\widehat\nabla \Phi_{\gD}(x_{k})\| \le \frac{4}{5}\epsilon$}
\STATE $u = iNEON (x_{k}, \utime, r, f_{\gD_F}, g_{\gD_G})$
\IF{$u = 0$}
\STATE Return $x_{k}$;
\ELSE
\STATE Select Rademacher variable $\bar{\xi} \in \{1, -1\}$
\STATE $x_{k+1} =  x_{k}  - \frac{\bar{\xi}}{80} \sqrt{\frac{ \epsilon}{\rho_\phi}}u$
\STATE {$k \gets k+1$;}
\ENDIF
\ENDIF
\ENDFOR
\end{algorithmic}
\end{algorithm}

In this section, we apply iNEON to a popular algorithm for stochastic bilevel optimization, StocBiO \cite{ji2021bilevel}. StocBiO is a double-loop batch stochastic algorithm, which has similar structure as AID. In its inner loop, it runs $D$ steps of stochastic gradient descent (SGD) for an estimated solution $y_k^D$. Let $\Pi_{Q+1}^Q (\cdot) = I$. In the outer loop, StocBiO  samples data batches $\gD_F,\gD_H = \{ \gB_j, j = 1, ..., Q \}$ and $\gD_G$ and constructs $v_Q$ as an approximate solution of the linear system \eqref{eq:aid-step1} as follows:
\be\bad\label{est-vq}
v_0 = &  \nabla_y F(x_k,y_k^D;\gD_F), \\
v_Q =& \eta \sum_{q=-1}^{Q-1}\prod_{j=Q-q}^Q (I - \eta \nabla_y^2G(x_k,y_k^D;\gB_j)) v_0. 
\ead\ee

The stochastic hypergradient can be constructed as 
 \be\bad\label{estG}
 \widehat \nabla \Phi(x_k) =&  \nabla_x F(x_k,y_k^D;\gD_F)-\nabla^2_{xy} G(x_k,y_k^D;\gD_G)v_Q.
 \ead\ee

When the norm of the batch gradient is small (see step 12 of Algorithm \ref{alg:INEONstocbio}), we fix sample batches $\gD_F, \gD_G$ and call iNEON. Denote $f_{\gD_F} (x, y) = \frac{1}{D_f} \sum_{i = 1}^{D_f} F(x, y; \xi_i)$, $g_{\gD_G} (x, y) = \frac{1}{D_g} \sum_{i = 1}^{D_g} G(x, y; \zeta_i)$ and $\Phi_\gD(x) = f_{\gD_F} (x, y^*_{\gD_G}(x))$. iNEON finds the descent direction for $\Phi_\gD(x)$ at saddle points with high probability. We list the assumptions for Algorithm \ref{alg:INEONstocbio} as following.

\begin{assumption}\label{ass:stoc}
For the stochastic case, Assumption \ref{assu:bilevel} holds for $F(x, y; \xi)$ and $G(x, y; \zeta)$ for any given $\xi$ and $\zeta$.
\end{assumption}
\begin{assumption}\label{ass:variance-bound}
The variance of gradient $\nabla G(x, y;\zeta)$ is bounded:
\[
\BE_{\zeta} \|\nabla G(x, y;\zeta) - \nabla g(x, y) \|^2 \le \sigma^2.
\]
\end{assumption}

The following theorem provides our main results on stochastic bilevel optimization. 
\begin{theorem}\label{thm:stocneon}
Suppose Assumptions \ref{ass:stoc} and \ref{ass:variance-bound} hold. Set parameters as \eqref{eq:para_neon} and \eqref{eq:para_neon-2}, and let $\alpha = \frac{2}{\ell + \mu}$, $\beta = \frac{1}{4L_{\phi}}, D = \gO\left(\kappa\log (\epsilon^{-1})\right), Q = \gO\left(\kappa \log (\epsilon^{-1})\right), |\gB_{Q + 1 - j}| = BQ (1-\eta\mu)^{j-1}$, for $j = 1, ..., Q,$ where $ B = \gO\left(\kappa^2 \cdot\epsilon^{-2}\right)$, and $ S = \gO\left(\kappa^5 \cdot \epsilon^{-2}\right),  D_f = \tilde{\gO}\left(\kappa^2 \cdot\epsilon^{-2}\right), D_g = \tilde{\gO}\left(\kappa^6 \cdot \epsilon^{-2}\right) $ in Algorithm \ref{alg:INEONstocbio}. With high probability, the total iteration number of the Algorithm \ref{alg:INEONstocbio} for visiting an $\epsilon$-local minimum of \eqref{objective} can bounded by
\[
K=\tilde{\mathcal{O}} \left( \frac{L_\phi (\Phi(x_0) - \Phi(x^*))}{\epsilon^2} \right)= \tilde{\mathcal{O}}(\kappa^3\epsilon^{-2}),
\]
where $L_\phi$ is the Lipschitz constant of $\nabla \Phi(x)$, which is defined in Lemma \ref{grad-Lipschitz}. 
\end{theorem}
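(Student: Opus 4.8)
The plan is to run the same two-regime potential argument that underlies Theorems \ref{thm:minimax} and \ref{thm:aid} (and, beneath them, \cite{jin2021nonconvex}): outer iterations in which the hypergradient is large produce a guaranteed decrease of $\Phi$, while iterations in which it is small either certify approximate second-order stationarity or extract a negative-curvature direction that again decreases $\Phi$. Two genuinely new ingredients must be absorbed: (a) the stochastic error of the StocBiO hypergradient estimate \eqref{est-vq}--\eqref{estG}, a nested estimator built from inner SGD on $y$, a truncated Neumann series for $(\nabla_y^2 g)^{-1}$, and a Jacobian-vector product; and (b) the gap between the fixed-batch surrogate $\Phi_{\gD}(x) = f_{\gD_F}(x, y^*_{\gD_G}(x))$ on which iNEON is actually run (steps 14--21) and the population objective $\Phi$. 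Throughout I would work on a ``good event'' $\gE$ of probability $1-\delta$ on which all stochastic quantities concentrate as below, and reason deterministically on $\gE$.

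The quantitative heart, and what I expect to be the main obstacle, is the uniform stochastic error control. Using the StocBiO error decomposition of \cite{ji2021bilevel}, with the stated batch sizes one gets, at every visited iterate, $\|\widehat\nabla\Phi(x_k) - \nabla\Phi(x_k)\| \le c\epsilon$ and $\|\widehat\nabla\Phi_{\gD}(x_k) - \nabla\Phi_{\gD}(x_k)\| \le c\epsilon$ for a small absolute constant $c$: the inner loop ($D = \tilde{\gO}(\kappa)$ steps) makes the $y^D_k$-error geometrically small, the truncated Neumann expansion ($Q = \tilde{\gO}(\kappa)$ terms with the geometrically shrinking sub-batches $|\gB_{Q+1-j}| = BQ(1-\eta\mu)^{j-1}$, $B = \gO(\kappa^2\epsilon^{-2})$) makes the Hessian-inverse bias and variance $\gO(\epsilon)$, and the outer batch $S = \gO(\kappa^5\epsilon^{-2})$ absorbs the remaining gradient variance; the fixed batches $D_f = \tilde{\gO}(\kappa^2\epsilon^{-2})$, $D_g = \tilde{\gO}(\kappa^6\epsilon^{-2})$ are then chosen precisely so that, uniformly, $\|\nabla\Phi_{\gD}(x) - \nabla\Phi(x)\| \le \gO(\epsilon)$ and $\|\nabla^2\Phi_{\gD}(x) - \nabla^2\Phi(x)\| \le \gO(\sqrt{\rho_\phi\epsilon})$. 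A union bound over the at most $K$ outer iterations costs a $\log(K/\delta)$ factor, which is what converts each $\gO(\cdot)$ into $\tilde{\gO}(\cdot)$; matching all these accuracies at once, along a trajectory of random length and for both the fresh-batch estimate of step 9 and the fixed-batch surrogate, is the delicate part.

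On $\gE$ the two regimes then go through. If the step-13 test fails, then $\|\widehat\nabla\Phi_{\gD}(x_k)\| > \frac{4}{5}\epsilon$ forces $\|\nabla\Phi(x_k)\| \ge \frac{3}{4}\epsilon$, and the inexact descent inequality (the StocBiO analogue of Lemma \ref{lem:descent}) together with the $L_\phi$-smoothness of $\Phi$ (Lemma \ref{grad-Lipschitz}) and $\beta = 1/(4L_\phi)$ gives $\Phi(x_{k+1}) \le \Phi(x_k) - \Omega(\epsilon^2/L_\phi)$. If the test passes, then $\|\nabla\Phi_{\gD}(x_k)\| \le \epsilon$; since $\Phi_{\gD}$ inherits Assumption \ref{assu:bilevel} from Assumption \ref{ass:stoc} and hence is $\rho_\phi$-Hessian Lipschitz (Lemma \ref{hess-Lipschitz}), Lemma \ref{lem:neon} applies to $\Phi_{\gD}$. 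When iNEON returns $0$ we have $\lambda_{\min}(\nabla^2\Phi_{\gD}(x_k)) \ge -\sqrt{\rho_\phi\epsilon}$ on $\gE$, and the gradient/Hessian concentration bounds above promote $x_k$ to an $\gO(\epsilon)$-local minimum of $\Phi$ in the sense of Definition \ref{def:local-min}: the algorithm has visited the desired point. When iNEON returns $u \ne 0$, \eqref{eq:phi-dec-highprob} says $u$ is a unit negative-curvature direction of $\nabla^2\Phi_{\gD}(x_k)$, hence of $\nabla^2\Phi(x_k)$ up to a factor $2$; plugging the explicit step $x_{k+1} = x_k - \frac{\bar\xi}{80}\sqrt{\epsilon/\rho_\phi}\,u$ into the cubic (Hessian-Lipschitz) expansion of $\Phi$ and taking the conditional expectation over the Rademacher sign $\bar\xi$ (which annihilates the first-order term $\frac{\bar\xi}{80}\sqrt{\epsilon/\rho_\phi}\,\nabla\Phi(x_k)^{\top}u$, whose magnitude $\gO(\sqrt{\epsilon^3/\rho_\phi})$ would otherwise dominate the curvature term) yields $\BE[\Phi(x_{k+1}) \mid \gF_k] \le \Phi(x_k) - \Omega(\ufun)$ with $\ufun = \Omega(\sqrt{\epsilon^3/\rho_\phi})$ as in \eqref{eq:para_neon-2}.

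It remains to assemble a potential/optional-stopping argument. Write $N_k$ for the number of ``progress'' steps (gradient steps, plus curvature steps that returned $u \ne 0$) performed up to iteration $k$; then $\Phi(x_k) + c_0 N_k$ with $c_0 = \Omega(\min\{\epsilon^2/L_\phi,\ \ufun\})$ is a supermartingale on $\gE$ (gradient steps decrease $\Phi$ surely, $u\neq 0$ curvature steps decrease it in conditional expectation), and under the standing assumption \eqref{L-phi-big} we have $\ufun = \Omega(\sqrt{\epsilon^3/\rho_\phi}) \ge \Omega(\epsilon^2/L_\phi)$, so $c_0 = \Omega(\epsilon^2/L_\phi)$. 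Since $\Phi$ is bounded below with compact sub-level sets (so all increments are bounded), an Azuma-type tail bound together with optional stopping shows that an $\gO(\epsilon)$-local minimum is visited after at most $\tilde{\gO}(L_\phi(\Phi(x_0) - \Phi(x^*))/\epsilon^2)$ progress steps with probability $1 - \gO(\delta)$. Finally, each outer pass of Algorithm \ref{alg:INEONstocbio} increments the counter $k$ by at most $2$, so $K$ is bounded by a constant times the number of progress steps; substituting the bound on $L_\phi$ from Lemma \ref{grad-Lipschitz} and $\rho_\phi = \gO(\kappa^5)$ from Lemma \ref{hess-Lipschitz} gives $K = \tilde{\gO}(L_\phi(\Phi(x_0) - \Phi(x^*))/\epsilon^2) = \tilde{\gO}(\kappa^3\epsilon^{-2})$.
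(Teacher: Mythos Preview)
Your proposal is correct and follows the same two-regime skeleton as the paper's proof, but organizes the stochastic analysis somewhat differently. The paper splits on the (unobservable) value of $\|\nabla\Phi(x_k)\|$ and uses \emph{expected}-decrease lemmas for both regimes: Lemma~\ref{lem:stocbio-decrease} for the StocBiO step (built from the second-moment bounds Lemmas~\ref{le:first_m}--\ref{le:variancc} of \cite{ji2021bilevel}) and Lemma~\ref{lem:saddle-decrease} for the curvature step (which combines Lemma~\ref{lem:neon} with Lemma~\ref{lem:batch-hess-err} and then cites \cite{xu2018first}[Lemma~1]); the final $K$ is obtained by dividing $\Phi(x_0)-\Phi^*$ by the per-step expected drop in each case and summing. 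You instead split on the observable step-13 test outcome, condition on a uniform good event, and run a supermartingale/Azuma argument. This buys a cleaner high-probability statement (the paper's mixing of expectation and high-probability is somewhat informal), but it requires upgrading the fresh-batch StocBiO error from the second-moment control in Lemmas~\ref{le:first_m}--\ref{le:variancc} to a per-iterate high-probability bound; that is achievable under the stated boundedness/Lipschitz assumptions via standard matrix concentration, but it is extra work the paper does not carry out.

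One bookkeeping point to watch: in Algorithm~\ref{alg:INEONstocbio} the StocBiO step (line~10) executes \emph{before} the step-13 test, so a StocBiO step taken from a small-gradient iterate (e.g., immediately after a curvature step) can \emph{increase} $\Phi$ by $O(\epsilon^2/L_\phi)$ on your good event. Your potential $\Phi(x_k)+c_0N_k$ is therefore not a supermartingale if every StocBiO step is declared ``progress''; the fix is to pair each such harmful StocBiO step with the preceding curvature drop $\Omega(\sqrt{\epsilon^3/\rho_\phi})$, which absorbs it under \eqref{L-phi-big}, or equivalently to run the potential on the subsequence of iterates at which the test is evaluated. The paper's accounting is similarly loose on this point.
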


\begin{corollary}\label{cor:stocneon}
For Algorithm \ref{alg:INEONstocbio}, the complexities of gradient evaluations for finding an $\epsilon$-local minimum of \eqref{objective} are
\[
G(f, \epsilon) = \gO(\kappa^5 \epsilon^{-4}), \quad G(g, \epsilon) = \tilde{\gO}(\kappa^{10} \epsilon^{-4}),
\]
and the Jacobian- and Hessian-vector product complexities are
\[
JV(g, \epsilon) = \tilde{\gO}(\kappa^{9} \epsilon^{-4}), \quad HV(g, \epsilon) = \tilde{\gO}(\kappa^{9.5} \epsilon^{-4}).
\]
\end{corollary}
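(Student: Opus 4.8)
This is a bookkeeping corollary: it follows by tallying the per-iteration oracle cost of Algorithm~\ref{alg:INEONstocbio} against the iteration bound $K=\tilde{\gO}(\kappa^3\epsilon^{-2})$ of Theorem~\ref{thm:stocneon} (which already substitutes $L_\phi=\gO(\kappa^3)$ from Lemma~\ref{grad-Lipschitz} and $\rho_\phi=\gO(\kappa^5)$ from Lemma~\ref{hess-Lipschitz}), together with a separate bound on how often iNEON is invoked. The plan is to (a) bound the cost of one pass through the main loop body, (b) multiply by $K$, (c) bound the total iNEON cost, and (d) add the two.

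For step (a): the inner SGD loop (steps 4--7) does $D$ updates with batch size $S$, i.e. $DS=\tilde{\gO}(\kappa^6\epsilon^{-2})$ gradients of $G$. Forming the stochastic hypergradient via \eqref{est-vq}--\eqref{estG} costs $\gO(D_f)$ gradients of $F$, one Jacobian-vector product with batch $\gD_G$ (so $\gO(D_g)$ evaluations), and $\sum_{j=1}^{Q}|\gB_{Q+1-j}|=BQ\sum_{j=1}^{Q}(1-\eta\mu)^{j-1}\le BQ/(\eta\mu)=\tilde{\gO}(\kappa^4\epsilon^{-2})$ Hessian-vector products of $G$ (one per Neumann term, using $1/(\eta\mu)=\gO(\kappa)$). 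The dominant contribution is the deterministic stationarity check in step~12, which runs AID on the fixed sample functions $f_{\gD_F},g_{\gD_G}$: the $D$ gradient-descent steps on $g_{\gD_G}$ cost $D\,D_g=\tilde{\gO}(\kappa^{7}\epsilon^{-2})$ gradients of $G$; the $N=\gO(\sqrt{\kappa}\log(\epsilon^{-1}))$ CG steps (as in Theorem~\ref{thm:aid}) cost $N\,D_g=\tilde{\gO}(\kappa^{6.5}\epsilon^{-2})$ Hessian-vector products of $G$; the final Jacobian-vector product costs $\gO(D_g)$ evaluations; and the $F$-gradients cost $\gO(D_f)$ evaluations. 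For step (b): each pass increments the loop counter by at least one, so there are at most $K=\tilde{\gO}(\kappa^3\epsilon^{-2})$ passes, and multiplying yields $K\,\gO(D_f)=\gO(\kappa^{5}\epsilon^{-4})$ gradients of $F$ (note this carries no inner-loop log factor because it is not multiplied by $D$, $N$ or $Q$, which is why the bound is $\gO$ rather than $\tilde{\gO}$); $K\cdot\tilde{\gO}(\kappa D_g)=\tilde{\gO}(\kappa^{10}\epsilon^{-4})$ gradients of $G$ (dominating the $K\,DS=\tilde{\gO}(\kappa^{9}\epsilon^{-4})$ inner-loop cost); $K\,\gO(D_g)=\tilde{\gO}(\kappa^{9}\epsilon^{-4})$ Jacobian-vector products of $G$; and $K\cdot\tilde{\gO}(\sqrt{\kappa}D_g)=\tilde{\gO}(\kappa^{9.5}\epsilon^{-4})$ Hessian-vector products of $G$ (dominating the $K\cdot\tilde{\gO}(\kappa^{4}\epsilon^{-2})=\tilde{\gO}(\kappa^{7}\epsilon^{-4})$ Neumann cost).

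For step (c): by Lemma~\ref{lem:neon}, each iNEON call runs $\utime=\gO(L_\phi/\sqrt{\rho_\phi\epsilon})\,\logt=\tilde{\gO}(\sqrt{\kappa}\,\epsilon^{-1/2})$ iterations, each performing $\gO(1)$ AID hypergradient evaluations and $\gO(1)$ evaluations of $\hat{\Phi}$, i.e. $\tilde{\gO}(\kappa D_g)$ gradients of $G$, $\tilde{\gO}(\sqrt{\kappa}D_g)$ Hessian-vector products of $G$, $\gO(D_g)$ Jacobian-vector products of $G$, and $\gO(D_f)$ gradients of $F$. Crucially, the number of iNEON calls is bounded not by $K$ but by the function decrease: by the negative-curvature guarantee \eqref{eq:phi-dec-highprob}, a call returning $u_{out}\ne 0$ is followed (steps~18--19) by a step that decreases $\Phi$ (hence $\Phi_\gD$) by $\Omega(\ufun)=\Omega(\sqrt{\epsilon^3/\rho_\phi})=\tilde{\Omega}(\kappa^{-5/2}\epsilon^{3/2})$ with high probability, while a call returning $0$ terminates the algorithm; since $\Phi$ is bounded below, there are at most $\tilde{\gO}(\kappa^{5/2}\epsilon^{-3/2})+1$ calls. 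Multiplying the per-call cost by this count gives $\tilde{\gO}(\kappa^{5/2}\epsilon^{-3/2})\cdot\tilde{\gO}(\sqrt{\kappa}\,\epsilon^{-1/2})\cdot\tilde{\gO}(\kappa D_g)=\tilde{\gO}(\kappa^{10}\epsilon^{-4})$ gradients of $G$, and analogously $\tilde{\gO}(\kappa^{9.5}\epsilon^{-4})$, $\tilde{\gO}(\kappa^{9}\epsilon^{-4})$, $\gO(\kappa^{5}\epsilon^{-4})$ for the Hessian-vector, Jacobian-vector, and $F$-gradient counts — the same orders as the main loop. Adding steps (b) and (c) gives the claimed complexities.

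The calculations themselves are routine; the point that needs care is step (c): the number of iNEON invocations must be controlled through the potential decrease $\Omega(\ufun)$ rather than through $K$, since $K\cdot\utime$ would inflate every exponent. A secondary subtlety is recognizing, among the two Jacobian/Hessian-vector products incurred in each outer iteration — the exponentially-decaying-batch ones inside the Neumann sum \eqref{est-vq} versus the full-batch-$D_g$ one inside the deterministic AID check of step~12 — that it is the latter that dominates, and likewise that the $\tilde{\gO}$ factors for $G$, $JV(g,\cdot)$ and $HV(g,\cdot)$ come precisely from the inner-loop lengths $D$ and $N$ while $G(f,\cdot)$ sees none of them.
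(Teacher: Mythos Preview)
Your proposal is correct and follows essentially the same bookkeeping as the paper: split the work into the StocBiO-type outer iterations and the iNEON calls, cost each against the batch sizes $S,D_f,D_g,B,Q,D,N$ fixed in Theorem~\ref{thm:stocneon}, and combine using $K=\tilde{\gO}(\kappa^3\epsilon^{-2})$. The paper is terser---it simply asserts the per-part totals \eqref{eq:ineon-sample-com}--\eqref{eq:stoc-sample-com} and takes the maximum---but your explicit potential argument bounding the number of iNEON calls by $\tilde{\gO}(\kappa^{5/2}\epsilon^{-3/2})$ is exactly what underlies the $\utime$-weighted term in the paper's bound on $K$, so the two routes coincide.

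One small correction: your explanation for why $G(f,\epsilon)$ is $\gO$ rather than $\tilde{\gO}$ is not right. Both $K$ (which carries the $\logt$ factor from $\utime$; see Theorem~\ref{thm:stocneon}) and $D_f=\tilde{\gO}(\kappa^2\epsilon^{-2})$ (which carries the $\log(2d/\delta)$ from the concentration Lemmas~\ref{lem:err-hess}--\ref{lem:err-third}) already contain logarithmic factors, so $K\cdot D_f$ is genuinely $\tilde{\gO}(\kappa^5\epsilon^{-4})$. Indeed, the paper's own proof writes $G(f,\epsilon)=\tilde{\gO}(\kappa^5\epsilon^{-4})$ in \eqref{eq:ineon-sample-com} and \eqref{eq:stoc-sample-com}; the plain $\gO$ in the corollary statement appears to be a typo, not something you should try to justify.
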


\begin{remark}
Compared with the StocBiO complexity results in \cite{ji2021bilevel}, our results have a worse dependence on the condition number $\kappa$. This is because we set a larger sample size $D_g$ in order to obtain a high probability result.
\end{remark}

\section{Numerical Experiments}

In this section we present the experimental results to demonstrate the efficiency of our algorithm. We reformulate the problem in \cite{du2017gradient} as a bilevel optimization problem \eqref{eq:objective} and then compare our Algorithm \ref{alg:perturbedalg} with AID-BiO in \cite{ji2021bilevel}. More precisely, we consider the following bilevel optimization problem:
\begin{align}
&\min_{x\in\mathbb{R}^{d}}\;\; \Phi(x):=f_1(x, y^*(x)), \nonumber
\\&\;\;\mbox{s.t.} \quad y^*(x)= \argmin_{y\in\mathbb{R}} f_2(x,y).\label{eq:exp_objective}
\end{align}
Motivated by \cite{du2017gradient}, we construct the following functions.
For the upper level function we have
\begin{equation}\label{eq: tube_f}
    f_1(x, y) = \left\{
    \begin{array}{ll}
       f_{i,1}(x, y) & x_1, ..., x_{i-1}\in [2\tau, 6\tau],\ x_i\in [0,\tau],\ x_{i+1},..., x_d\in [0,\tau],\ 1\leq i\leq d-1\\
       f_{i,2}(x,y) &  x_1, ..., x_{i-1}\in [2\tau, 6\tau],\ x_i\in [\tau, 2\tau],\ x_{i+1},..., x_d\in [0,\tau],\ 1\leq i\leq d-1\\
       f_{d,1}(x,y) & x_1, ..., x_{d-1}\in [2\tau, 6\tau],\ x_d\in [0,\tau] \\
       f_{d,2}(x,y) & x_1, ..., x_{d-1}\in [2\tau, 6\tau],\ x_d\in [\tau, 2\tau] \\
       f_{d+1, 1}(x,y) & x_1, ..., x_{d}\in [2\tau, 6\tau],
    \end{array}
    \right.
\end{equation}
where
\begin{align}
    f_{i,1}(x, y) &= \sum_{j=1}^{i-1}L(x_j - 4\tau)^2 -\gamma x_i^2 + \sum_{j=i+1}^{d}Lx_j^2 - (i-1)\nu,\ 1\leq i\leq d-1, \\
    f_{i,2}(x,y) &= \sum_{j=1}^{i-1}L(x_j - 4\tau)^2 + y + \sum_{j=i+2}^{d}Lx_j^2 - (i-1)\nu,\ 1\leq i \leq d-1, \\
    f_{d,1}(x, y) &= \sum_{j=1}^{d-1}L(x_j - 4\tau)^2 -\gamma x_d^2 - (d-1)\nu, \\
    f_{d,2}(x,y) &= \sum_{j=1}^{d-1}L(x_j - 4\tau)^2 + y - (d-1)\nu, \\
    f_{d+1, 1}(x, y) &= \sum_{j=1}^{d}L(x_j - 4\tau)^2 - d\nu.
\end{align}
The lower level function is defined as
\begin{equation}
    f_2(x, y) = \frac{y^2}{2} - g(x)y,
\end{equation}
where
\begin{align}
    g(x) &= \left\{
    \begin{array}{ll}
      h_1(x_i) + h_2(x_i)x_{i+1}^2 & x_1, ..., x_{i-1}\in [2\tau, 6\tau],\ x_i\in [\tau, 2\tau],\ x_{i+1},..., x_d\in [0,\tau],\\ 
      &1\leq i\leq d-1\\
      h_1(x_d) & x_1, ..., x_{d-1}\in [2\tau, 6\tau],\ x_d\in [\tau, 2\tau]\\
      0 & \text{elsewhere}
    \end{array}
    \right. \\
    h_1(c) &= -\gamma c^2 + \frac{(-14L + 10\gamma)(c-\tau)^3}{3\tau} + \frac{(5L - 3\gamma)(c-\tau)^4}{2\tau^2}\\
    h_2(c) &= -\gamma - \frac{10(L+\gamma)(c-2\tau)^3}{\tau^3} - \frac{15(L+\gamma)(c-2\tau)^4}{\tau^4} - \frac{6(L+\gamma)(c-2\tau)^5}{\tau^5}
\end{align}
The constants satisfy
\[
    L > 0,\ \gamma >0,\ \tau = e,\ \nu = -h_1(2\tau) + 4L\tau^2.
\]
Note that from \eqref{eq: tube_f} we know the function $\Phi(x)$ is only defined on the following domain (see also Eq. (5) in \cite{du2017gradient}):
\begin{equation}
    D_0 = \bigcup_{i=1}^{d+1}\left\{x\in \mathbb{R}^d: 6\tau \geq x_1,...,x_{i-1}\geq 2\tau, 2\tau\geq x_i\geq 0, \tau\geq x_{i+1},...,x_d\geq 0\right\}.
\end{equation}
By Lemma A.3 in \cite{du2017gradient} we know there are $d$ saddle points in $D_0$:
\[
    (0,...,0)^{\top}\ ,(4\tau, 0,..., 0)^{\top}\ ,..., (4\tau, ..., 4\tau,0)^{\top}.
\]
Moreover, the only local optimum is $(4\tau,...,4\tau)^{\top}.$ One can follow Steps 2 and 3 in Section A.1 of \cite{du2017gradient} to extend the domain to $\mathbb{R}^d$. For simplicity we omit the extension here. We refer the interested readers to Section 4 and Appendix of \cite{du2017gradient} for details of the motivation for constructing these functions. In our experiments, we choose the total number of iterations to be 1000 and all stepsizes to be 0.05 in both Algorithm \ref{alg:perturbedalg} and AID-BiO. 
Following \cite{du2017gradient}, we conduct the comparison using different choices of problem parameters. In Figure \ref{Figure: exp} we plot the learning curves of $\Phi(x) - \min \Phi(x)$ vs. Iteration number. Our algorithm is denoted as ``PBO'', and AID-BiO is denoted as ``BO'' in Figure \ref{Figure: exp}. Note that each learning curve is nearly a step function which consists of vertical and horizontal line segments. The horizontal segment indicates that the function value does not change and thus we may deduce that the iterates are stuck at a saddle point. Each vertical segment indicates that a perturbation successfully helps the iterate escape the saddle point. We observe that under different parameter choices our Algorithm \ref{alg:perturbedalg} escapes saddle points more efficiently than standard bilevel optimization algorithm.

\begin{figure*}[ht]
	\centering  
	\subfigure[]{\label{figure: 115}\includegraphics[width=0.24\textwidth]{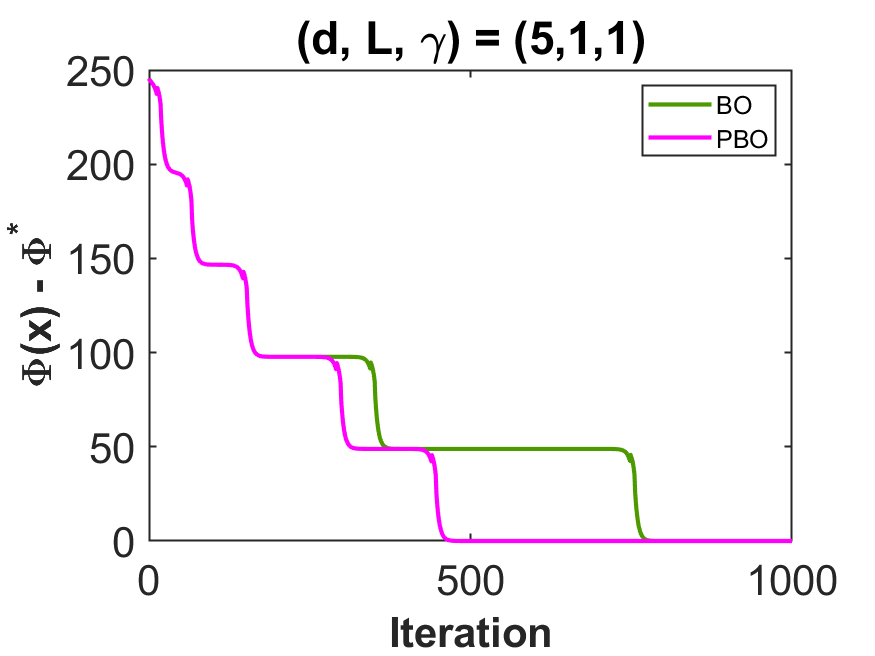}}
	\subfigure[]{\label{figure: 1515}\includegraphics[width=0.24\textwidth]{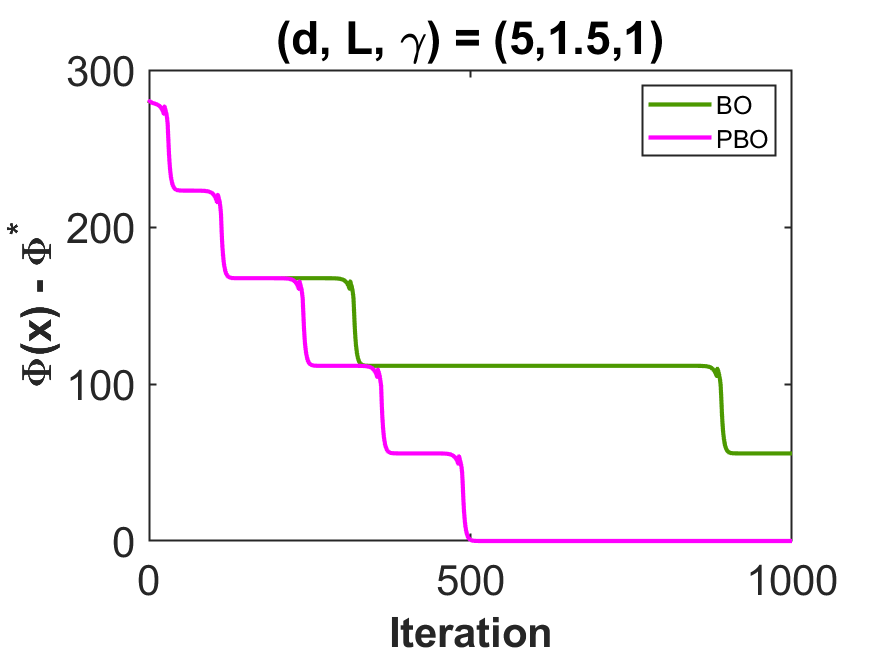}}
	\subfigure[]{\label{figure: 215}\includegraphics[width=0.24\textwidth]{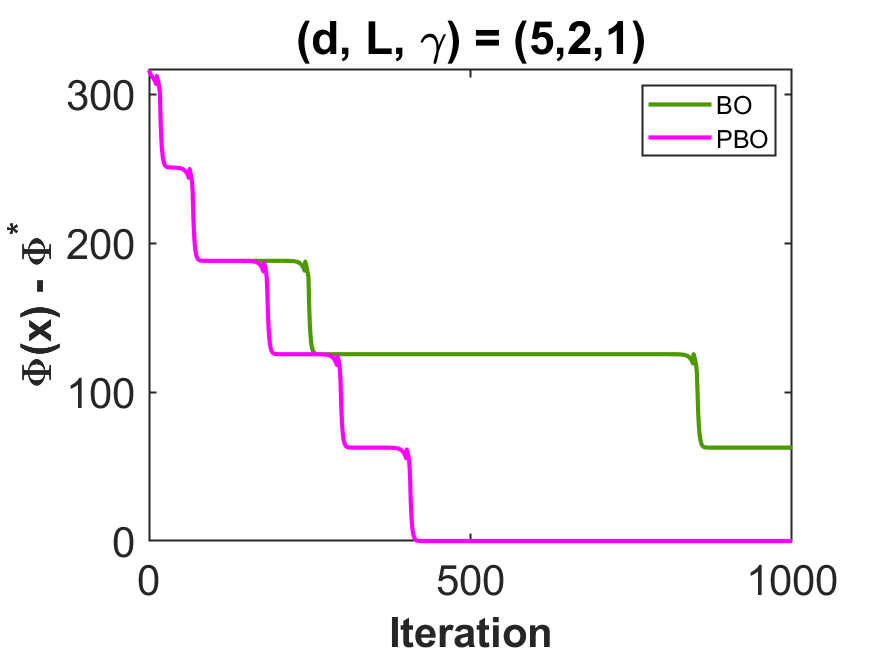}}
    \subfigure[]{\label{figure: 315}\includegraphics[width=0.24\textwidth]{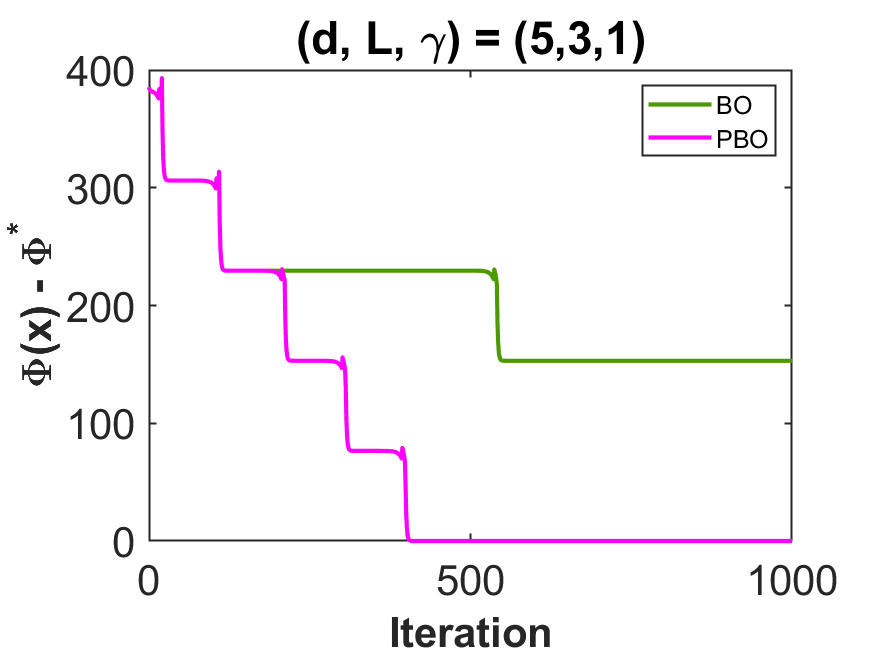}}
	
	\subfigure[]{\label{figure: 1110}\includegraphics[width=0.24\textwidth]{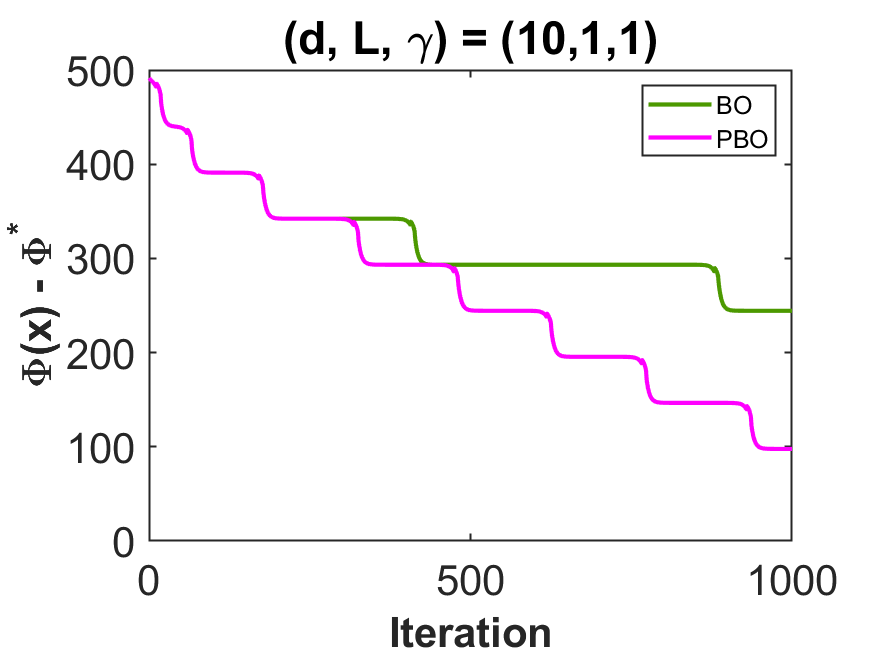}}
	\subfigure[]{\label{figure: 15110}\includegraphics[width=0.24\textwidth]{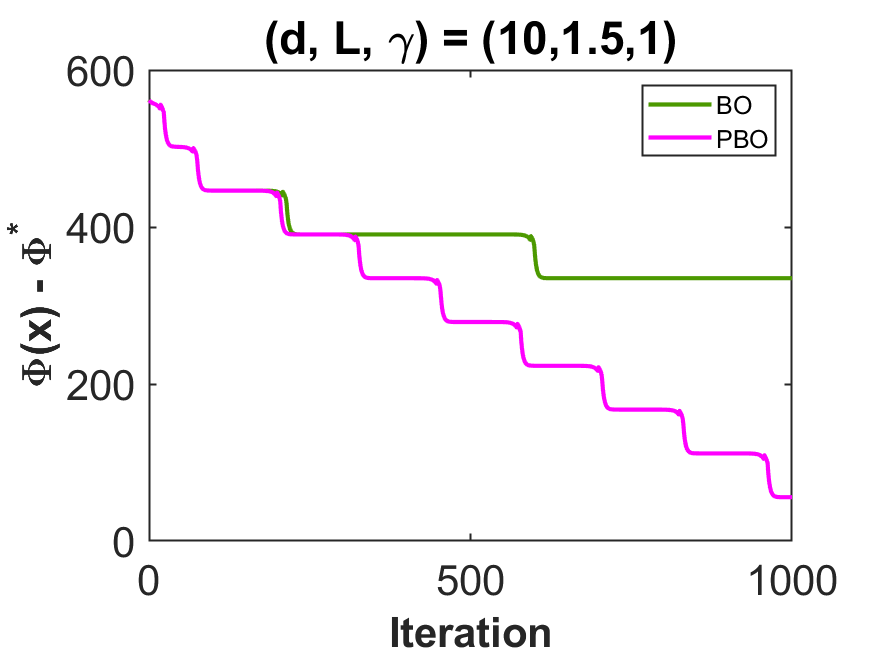}}
	\subfigure[]{\label{figure: 2110}\includegraphics[width=0.24\textwidth]{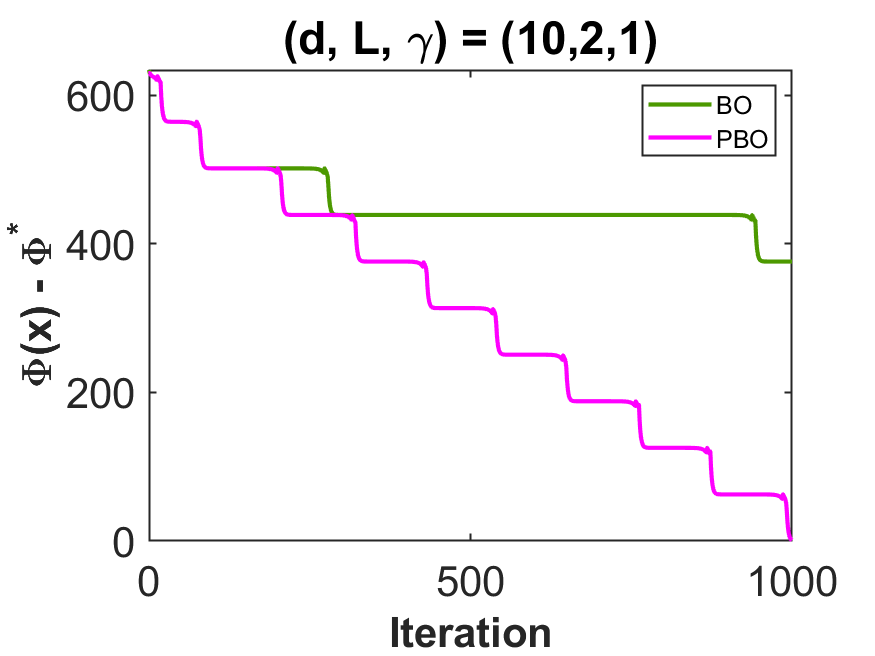}}
    \subfigure[]{\label{figure: 3110}\includegraphics[width=0.24\textwidth]{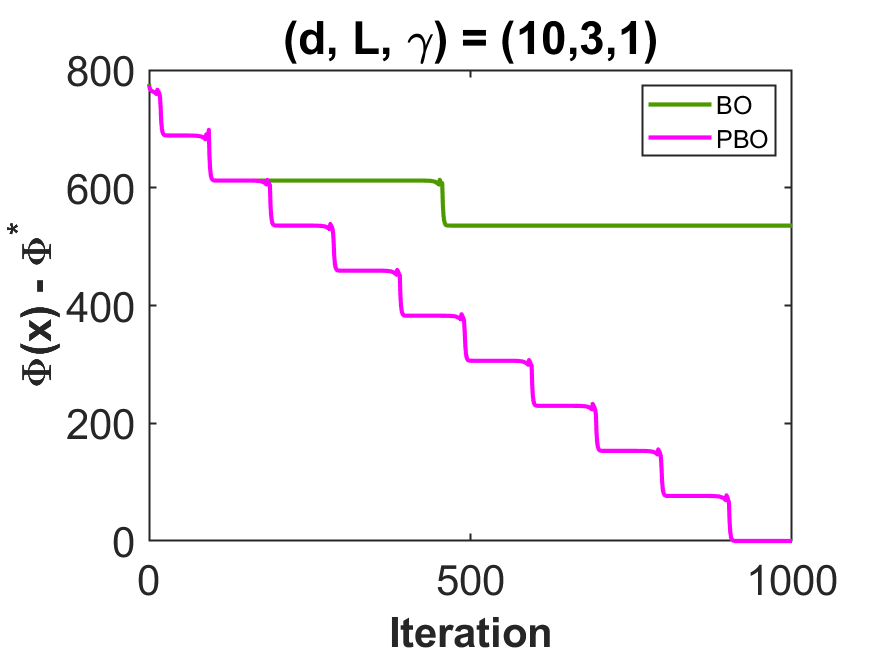}}

    \subfigure[]{\label{figure: 1120}\includegraphics[width=0.24\textwidth]{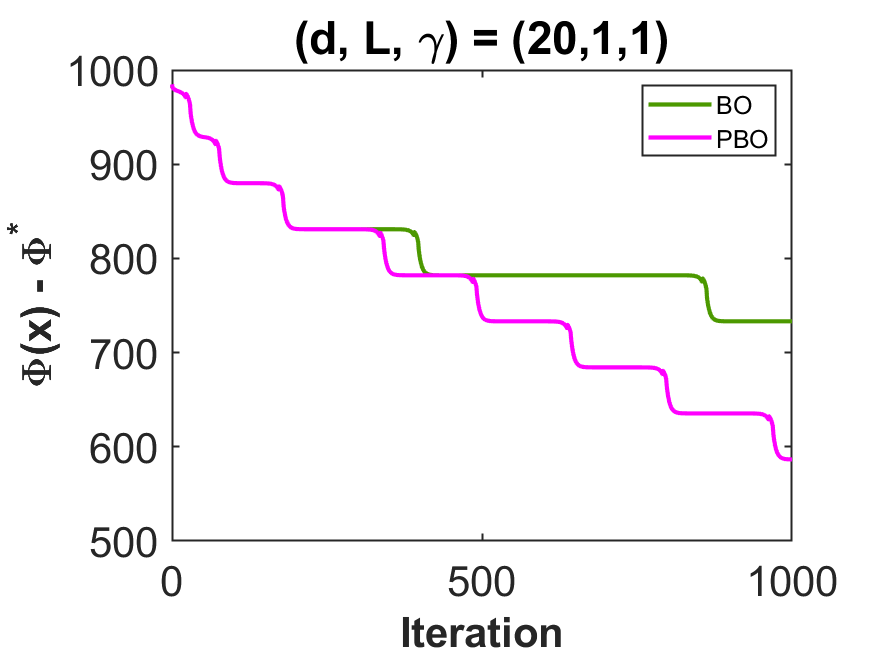}}
	\subfigure[]{\label{figure: 15120}\includegraphics[width=0.24\textwidth]{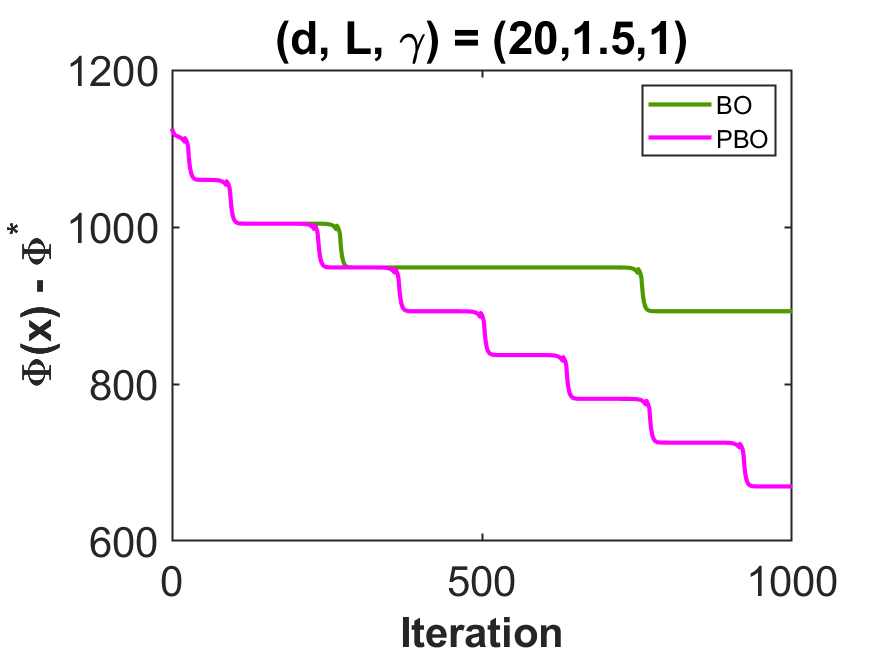}}
	\subfigure[]{\label{figure: 2120}\includegraphics[width=0.24\textwidth]{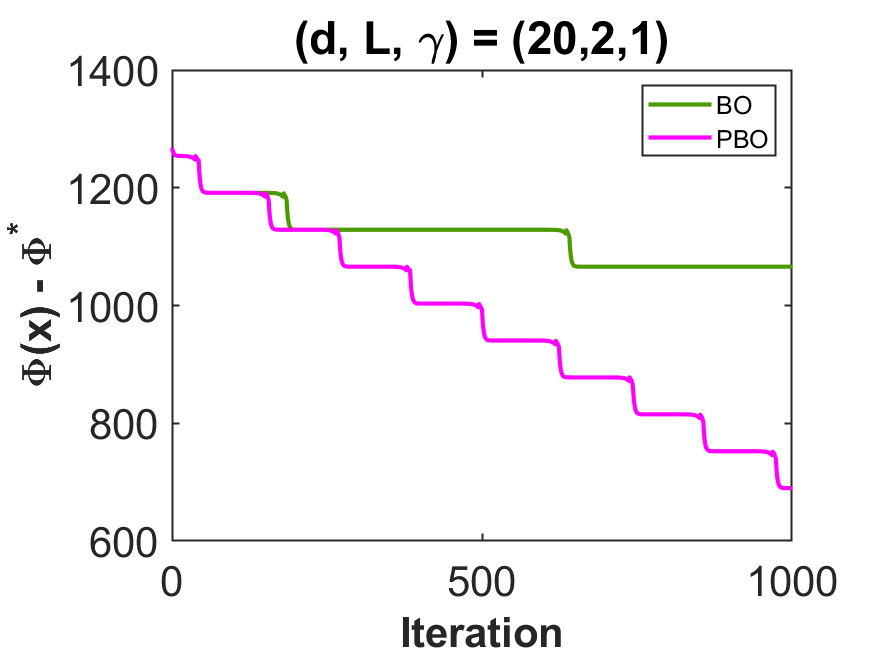}}
    \subfigure[]{\label{figure: 3120}\includegraphics[width=0.24\textwidth]{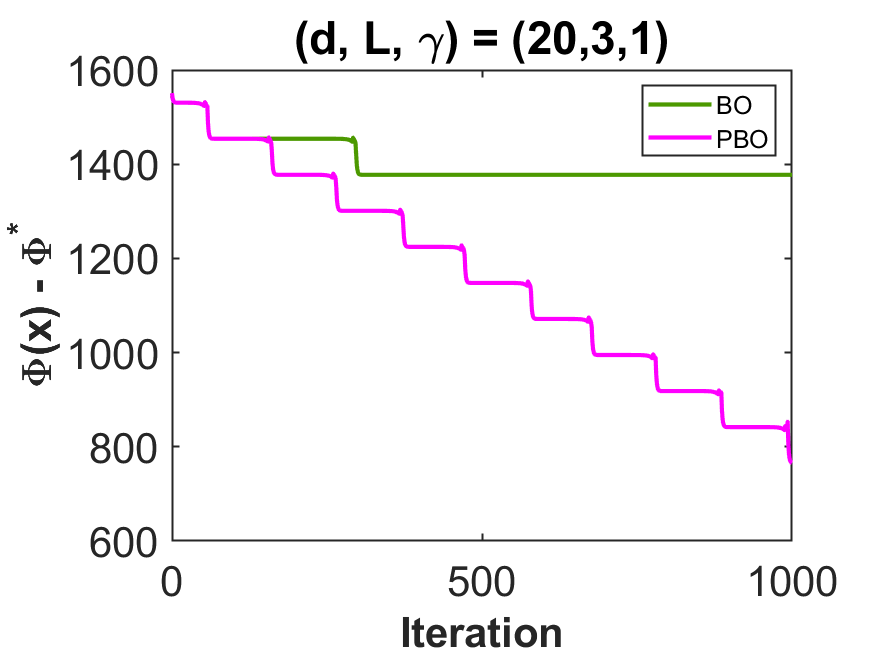}}
    
	\vspace{-0.2cm}
	\caption{Comparison between Algorithm \ref{alg:perturbedalg} and AID-BiO in \cite{ji2021bilevel}. PBO and BO represent Algorithm \ref{alg:perturbedalg} and AID-BiO respectively. $d$ is the dimension of the upper level function in \eqref{eq:objective}. $L$ and $\gamma$ are problem parameters used to generate the functions in both levels.}\label{Figure: exp}
	\vspace{-0.3cm}
\end{figure*}

\section{Conclusion}

In this paper, we have proposed the perturbed AID algorithm that provably converges to an $\epsilon$-local minimum in bilevel optimization. As a byproduct, we have provided the first nonasymptotic convergence rate for minimax problem converging to local minimax point with first-order method. Moreover, we have proposed inexact NEON that can extract negative gradient descent direction at saddle points. By combining the inexact NEON with StocBiO, we have proposed the first algorithm that converges to local minimum for stochastic bilevel optimization.

\bibliography{bilevel.bib}

\begin{thebibliography}{10}

\bibitem{agarwal2017finding}
Naman Agarwal, Zeyuan Allen-Zhu, Brian Bullins, Elad Hazan, and Tengyu Ma.
\newblock Finding approximate local minima faster than gradient descent.
\newblock In {\em Proceedings of the 49th Annual ACM SIGACT Symposium on Theory
  of Computing}, pages 1195--1199, 2017.

\bibitem{allen2017follow}
Zeyuan Allen-Zhu and Yuanzhi Li.
\newblock Follow the compressed leader: Faster online learning of eigenvectors
  and faster {MMWU}.
\newblock In {\em International Conference on Machine Learning}, pages
  116--125. PMLR, 2017.

\bibitem{allen2018neon2}
Zeyuan Allen-Zhu and Yuanzhi Li.
\newblock {NEON2}: finding local minima via first-order oracles.
\newblock In {\em Proceedings of the 32nd International Conference on Neural
  Information Processing Systems}, pages 3720--3730, 2018.

\bibitem{arjovsky2017wasserstein}
Martin Arjovsky, Soumith Chintala, and L{\'e}on Bottou.
\newblock Wasserstein generative adversarial networks.
\newblock In {\em International conference on machine learning}, pages
  214--223. PMLR, 2017.

\bibitem{bracken1973mathematical}
Jerome Bracken and James~T McGill.
\newblock Mathematical programs with optimization problems in the constraints.
\newblock {\em Operations Research}, 21(1):37--44, 1973.

\bibitem{carmon2018accelerated}
Yair Carmon, John~C Duchi, Oliver Hinder, and Aaron Sidford.
\newblock Accelerated methods for nonconvex optimization.
\newblock {\em SIAM Journal on Optimization}, 28(2):1751--1772, 2018.

\bibitem{chen2021single}
Tianyi Chen, Yuejiao Sun, Quan Xiao, and Wotao Yin.
\newblock A single-timescale method for stochastic bilevel optimization.
\newblock In {\em International Conference on Artificial Intelligence and
  Statistics}, pages 2466--2488. PMLR, 2022.

\bibitem{chen2021closing}
Tianyi Chen, Yuejiao Sun, and Wotao Yin.
\newblock Closing the gap: Tighter analysis of alternating stochastic gradient
  methods for bilevel problems.
\newblock {\em Advances in Neural Information Processing Systems},
  34:25294--25307, 2021.

\bibitem{chen2022decentralized}
Xuxing Chen, Minhui Huang, and Shiqian Ma.
\newblock Decentralized bilevel optimization.
\newblock {\em arXiv preprint arXiv:2206.05670}, 2022.

\bibitem{chen2022decentralizeddsbo}
Xuxing Chen, Minhui Huang, Shiqian Ma, and Krishnakumar Balasubramanian.
\newblock Decentralized stochastic bilevel optimization with improved
  per-iteration complexity.
\newblock {\em arXiv preprint arXiv:2210.12839}, 2022.

\bibitem{chen2021escaping}
Ziyi Chen, Qunwei Li, and Yi~Zhou.
\newblock Escaping saddle points in nonconvex minimax optimization via
  cubic-regularized gradient descent-ascent.
\newblock {\em arXiv preprint arXiv:2110.07098}, 2021.

\bibitem{choromanska2015loss}
Anna Choromanska, Mikael Henaff, Michael Mathieu, G{\'e}rard~Ben Arous, and
  Yann LeCun.
\newblock The loss surfaces of multilayer networks.
\newblock In {\em Artificial intelligence and statistics}, pages 192--204.
  PMLR, 2015.

\bibitem{criscitiello2019efficiently}
Christopher Criscitiello and Nicolas Boumal.
\newblock Efficiently escaping saddle points on manifolds.
\newblock {\em Advances in Neural Information Processing Systems},
  32:5987--5997, 2019.

\bibitem{curtis2017trust}
Frank~E Curtis, Daniel~P Robinson, and Mohammadreza Samadi.
\newblock A trust region algorithm with a worst-case iteration complexity of
  $\mathcal{O}(\epsilon^{-3/2})$ for nonconvex optimization.
\newblock {\em Mathematical Programming}, 162(1-2):1--32, 2017.

\bibitem{daneshmand2018escaping}
Hadi Daneshmand, Jonas Kohler, Aurelien Lucchi, and Thomas Hofmann.
\newblock Escaping saddles with stochastic gradients.
\newblock In {\em International Conference on Machine Learning}, pages
  1155--1164. PMLR, 2018.

\bibitem{daskalakis2018limit}
Constantinos Daskalakis and Ioannis Panageas.
\newblock The limit points of (optimistic) gradient descent in min-max
  optimization.
\newblock In {\em 32nd Annual Conference on Neural Information Processing
  Systems (NIPS)}, 2018.

\bibitem{dauphin2014identifying}
Yann~N Dauphin, Razvan Pascanu, Caglar Gulcehre, Kyunghyun Cho, Surya Ganguli,
  and Yoshua Bengio.
\newblock Identifying and attacking the saddle point problem in
  high-dimensional non-convex optimization.
\newblock In {\em Advances in Neural Information Processing Systems}, pages
  2933--2941, 2014.

\bibitem{davis2021escaping}
Damek Davis, Mateo D{\'\i}az, and Dmitriy Drusvyatskiy.
\newblock Escaping strict saddle points of the moreau envelope in nonsmooth
  optimization.
\newblock {\em arXiv preprint arXiv:2106.09815}, 2021.

\bibitem{davis2021proximal}
Damek Davis and Dmitriy Drusvyatskiy.
\newblock Proximal methods avoid active strict saddles of weakly convex
  functions.
\newblock {\em Foundations of Computational Mathematics}, pages 1--46, 2021.

\bibitem{domke2012generic}
Justin Domke.
\newblock Generic methods for optimization-based modeling.
\newblock In {\em Artificial Intelligence and Statistics}, pages 318--326.
  PMLR, 2012.

\bibitem{du2017gradient}
SS~Du, C~Jin, MI~Jordan, B~P{\'o}czos, A~Singh, and JD~Lee.
\newblock Gradient descent can take exponential time to escape saddle points.
\newblock In {\em Advances in Neural Information Processing Systems}, pages
  1068--1078, 2017.

\bibitem{fang2019sharp}
Cong Fang, Zhouchen Lin, and Tong Zhang.
\newblock Sharp analysis for nonconvex {SGD} escaping from saddle points.
\newblock In {\em Conference on Learning Theory}, pages 1192--1234. PMLR, 2019.

\bibitem{feurer2019hyperparameter}
Matthias Feurer and Frank Hutter.
\newblock Hyperparameter optimization.
\newblock In {\em Automated machine learning}, pages 3--33. Springer, Cham,
  2019.

\bibitem{fiez2021global}
Tanner Fiez, Lillian Ratliff, Eric Mazumdar, Evan Faulkner, and Adhyyan Narang.
\newblock Global convergence to local minmax equilibrium in classes of
  nonconvex zero-sum games.
\newblock {\em Advances in Neural Information Processing Systems}, 34, 2021.

\bibitem{franceschi2018bilevel}
Luca Franceschi, Paolo Frasconi, Saverio Salzo, Riccardo Grazzi, and
  Massimiliano Pontil.
\newblock Bilevel programming for hyperparameter optimization and
  meta-learning.
\newblock In {\em International Conference on Machine Learning}, pages
  1568--1577. PMLR, 2018.

\bibitem{ge2015escaping}
Rong Ge, Furong Huang, Chi Jin, and Yang Yuan.
\newblock Escaping from saddle points: online stochastic gradient for tensor
  decomposition.
\newblock In {\em Conference on learning theory}, pages 797--842. PMLR, 2015.

\bibitem{ghadimi2018approximation}
Saeed Ghadimi and Mengdi Wang.
\newblock Approximation methods for bilevel programming.
\newblock {\em arXiv preprint arXiv:1802.02246}, 2018.

\bibitem{goodfellow2014generative}
Ian Goodfellow, Jean Pouget-Abadie, Mehdi Mirza, Bing Xu, David Warde-Farley,
  Sherjil Ozair, Aaron Courville, and Yoshua Bengio.
\newblock Generative adversarial nets.
\newblock {\em Advances in neural information processing systems}, 27, 2014.

\bibitem{goodfellow2014explaining}
Ian~J Goodfellow, Jonathon Shlens, and Christian Szegedy.
\newblock Explaining and harnessing adversarial examples.
\newblock {\em International Conference on Learning Representations}, 2015.

\bibitem{gould2016differentiating}
Stephen Gould, Basura Fernando, Anoop Cherian, Peter Anderson, Rodrigo~Santa
  Cruz, and Edison Guo.
\newblock On differentiating parameterized argmin and argmax problems with
  application to bi-level optimization.
\newblock {\em arXiv preprint arXiv:1607.05447}, 2016.

\bibitem{grazzi2020iteration}
Riccardo Grazzi, Luca Franceschi, Massimiliano Pontil, and Saverio Salzo.
\newblock On the iteration complexity of hypergradient computation.
\newblock In {\em International Conference on Machine Learning}, pages
  3748--3758. PMLR, 2020.

\bibitem{guo2021randomized}
Zhishuai Guo and Tianbao Yang.
\newblock Randomized stochastic variance-reduced methods for stochastic bilevel
  optimization.
\newblock {\em arXiv preprint arXiv:2105.02266}, 2021.

\bibitem{hong2020two}
Mingyi Hong, Hoi-To Wai, Zhaoran Wang, and Zhuoran Yang.
\newblock A two-timescale framework for bilevel optimization: Complexity
  analysis and application to actor-critic.
\newblock {\em arXiv preprint arXiv:2007.05170}, 2020.

\bibitem{huang2021escaping}
Minhui Huang.
\newblock Escaping saddle points for nonsmooth weakly convex functions via
  perturbed proximal algorithms.
\newblock {\em arXiv preprint arXiv:2102.02837}, 2021.

\bibitem{huang2021convergence}
Minhui Huang, Shiqian Ma, and Lifeng Lai.
\newblock On the convergence of projected alternating maximization for
  equitable and optimal transport.
\newblock {\em arXiv preprint arXiv:2109.15030}, 2021.

\bibitem{huang2021projection}
Minhui Huang, Shiqian Ma, and Lifeng Lai.
\newblock Projection robust wasserstein barycenters.
\newblock In {\em International Conference on Machine Learning}, pages
  4456--4465. PMLR, 2021.

\bibitem{huang2021riemannian}
Minhui Huang, Shiqian Ma, and Lifeng Lai.
\newblock A {R}iemannian block coordinate descent method for computing the
  projection robust wasserstein distance.
\newblock In {\em International Conference on Machine Learning}, pages
  4446--4455. PMLR, 2021.

\bibitem{huang2023achieving}
Minhui Huang, Dewei Zhang, and Kaiyi Ji.
\newblock Achieving linear speedup in non-iid federated bilevel learning.
\newblock {\em arXiv preprint arXiv:2302.05412}, 2023.

\bibitem{ji2020convergence}
Kaiyi Ji, Jason~D. Lee, Yingbin Liang, and H.~Vincent Poor.
\newblock Convergence of meta-learning with task-specific adaptation over
  partial parameters.
\newblock {\em Advances in Neural Information Processing Systems}, 2020.

\bibitem{ji2021lower}
Kaiyi Ji and Yingbin Liang.
\newblock Lower bounds and accelerated algorithms for bilevel optimization.
\newblock {\em arXiv preprint arXiv:2102.03926}, 2021.

\bibitem{ji2021bilevel}
Kaiyi Ji, Junjie Yang, and Yingbin Liang.
\newblock Bilevel optimization: Convergence analysis and enhanced design.
\newblock In {\em International Conference on Machine Learning}, pages
  4882--4892. PMLR, 2021.

\bibitem{ji2022theoretical}
Kaiyi Ji, Junjie Yang, and Yingbin Liang.
\newblock Theoretical convergence of multi-step model-agnostic meta-learning.
\newblock {\em Journal of Machine Learning Research}, 23(29):1--41, 2022.

\bibitem{jin2017escape}
Chi Jin, Rong Ge, Praneeth Netrapalli, Sham~M Kakade, and Michael~I Jordan.
\newblock How to escape saddle points efficiently.
\newblock In {\em International Conference on Machine Learning}, pages
  1724--1732. PMLR, 2017.

\bibitem{jin2021nonconvex}
Chi Jin, Praneeth Netrapalli, Rong Ge, Sham~M Kakade, and Michael~I Jordan.
\newblock On nonconvex optimization for machine learning: Gradients,
  stochasticity, and saddle points.
\newblock {\em Journal of the ACM (JACM)}, 68(2):1--29, 2021.

\bibitem{pmlr-v119-jin20e}
Chi Jin, Praneeth Netrapalli, and Michael Jordan.
\newblock What is local optimality in nonconvex-nonconcave minimax
  optimization?
\newblock In Hal~Daum III and Aarti Singh, editors, {\em Proceedings of the
  37th International Conference on Machine Learning}, volume 119 of {\em
  Proceedings of Machine Learning Research}, pages 4880--4889. PMLR, 13--18 Jul
  2020.

\bibitem{jin2018accelerated}
Chi Jin, Praneeth Netrapalli, and Michael~I Jordan.
\newblock Accelerated gradient descent escapes saddle points faster than
  gradient descent.
\newblock In {\em Conference On Learning Theory}, pages 1042--1085. PMLR, 2018.

\bibitem{khanduri2021a}
Prashant Khanduri, Siliang Zeng, Mingyi Hong, Hoi~To Wai, Zhaoran Wang, and
  Zhuoran Yang.
\newblock A near-optimal algorithm for stochastic bilevel optimization via
  double-momentum.
\newblock In A.~Beygelzimer, Y.~Dauphin, P.~Liang, and J.~Wortman Vaughan,
  editors, {\em Advances in Neural Information Processing Systems}, 2021.

\bibitem{kunapuli2008classification}
Gautam Kunapuli, Kristin~P Bennett, Jing Hu, and Jong-Shi Pang.
\newblock Classification model selection via bilevel programming.
\newblock {\em Optimization Methods and Software}, 23(4):475--489, 2008.

\bibitem{lee2016gradient}
Jason~D Lee, Max Simchowitz, Michael~I Jordan, and Benjamin Recht.
\newblock Gradient descent only converges to minimizers.
\newblock In {\em Conference on learning theory}, pages 1246--1257. PMLR, 2016.

\bibitem{lin2020projection}
Tianyi Lin, Chenyou Fan, Nhat Ho, Marco Cuturi, and Michael Jordan.
\newblock Projection robust wasserstein distance and {R}iemannian optimization.
\newblock {\em Advances in Neural Information Processing Systems},
  33:9383--9397, 2020.

\bibitem{lin2020gradient}
Tianyi Lin, Chi Jin, and Michael Jordan.
\newblock On gradient descent ascent for nonconvex-concave minimax problems.
\newblock In {\em International Conference on Machine Learning}, pages
  6083--6093. PMLR, 2020.

\bibitem{lu2020finding}
Songtao Lu, Meisam Razaviyayn, Bo~Yang, Kejun Huang, and Mingyi Hong.
\newblock Finding second-order stationary points efficiently in smooth
  nonconvex linearly constrained optimization problems.
\newblock {\em Advances in Neural Information Processing Systems}, 2020, 2020.

\bibitem{lu2020hybrid}
Songtao Lu, Ioannis Tsaknakis, Mingyi Hong, and Yongxin Chen.
\newblock Hybrid block successive approximation for one-sided non-convex
  min-max problems: algorithms and applications.
\newblock {\em IEEE Transactions on Signal Processing}, 68:3676--3691, 2020.

\bibitem{luo2021finding}
Luo Luo and Cheng Chen.
\newblock Finding second-order stationary point for nonconvex-strongly-concave
  minimax problem.
\newblock {\em arXiv preprint arXiv:2110.04814}, 2021.

\bibitem{luo2020stochastic}
Luo Luo, Haishan Ye, Zhichao Huang, and Tong Zhang.
\newblock Stochastic recursive gradient descent ascent for stochastic
  nonconvex-strongly-concave minimax problems.
\newblock {\em Advances in Neural Information Processing Systems}, 33, 2020.

\bibitem{maclaurin2015gradient}
Dougal Maclaurin, David Duvenaud, and Ryan Adams.
\newblock Gradient-based hyperparameter optimization through reversible
  learning.
\newblock In {\em International conference on machine learning}, pages
  2113--2122. PMLR, 2015.

\bibitem{mazumdar2019finding}
Eric~V Mazumdar, Michael~I Jordan, and S~Shankar Sastry.
\newblock On finding local {N}ash equilibria (and only local {N}ash equilibria)
  in zero-sum games.
\newblock {\em arXiv preprint arXiv:1901.00838}, 2019.

\bibitem{NesterovConvexBook2004}
Y.~E. Nesterov.
\newblock {\em Introductory lectures on convex optimization: A basic course}.
\newblock Applied Optimization. Kluwer Academic Publishers, Boston, MA, 2004.

\bibitem{nesterov2006cubic}
Yurii Nesterov and Boris~T Polyak.
\newblock Cubic regularization of {N}ewton method and its global performance.
\newblock {\em Mathematical Programming}, 108(1):177--205, 2006.

\bibitem{nouiehed2019solving}
Maher Nouiehed, Maziar Sanjabi, Tianjian Huang, Jason~D Lee, and Meisam
  Razaviyayn.
\newblock Solving a class of non-convex min-max games using iterative first
  order methods.
\newblock {\em Advances in Neural Information Processing Systems},
  32:14934--14942, 2019.

\bibitem{pearlmutter1994fast}
Barak~A Pearlmutter.
\newblock Fast exact multiplication by the {H}essian.
\newblock {\em Neural computation}, 6(1):147--160, 1994.

\bibitem{pedregosa2016hyperparameter}
Fabian Pedregosa.
\newblock Hyperparameter optimization with approximate gradient.
\newblock In {\em International conference on machine learning}, pages
  737--746. PMLR, 2016.

\bibitem{rajeswaran2019meta}
Aravind Rajeswaran, Chelsea Finn, Sham Kakade, and Sergey Levine.
\newblock Meta-learning with implicit gradients.
\newblock {\em Advances in neural information processing systems}, 2019.

\bibitem{shaban2019truncated}
Amirreza Shaban, Ching-An Cheng, Nathan Hatch, and Byron Boots.
\newblock Truncated back-propagation for bilevel optimization.
\newblock In {\em The 22nd International Conference on Artificial Intelligence
  and Statistics}, pages 1723--1732. PMLR, 2019.

\bibitem{sinha2018certifying}
Aman Sinha, Hongseok Namkoong, and John Duchi.
\newblock Certifying some distributional robustness with principled adversarial
  training.
\newblock In {\em International Conference on Learning Representations}, 2018.

\bibitem{snell2017prototypical}
Jake Snell, Kevin Swersky, and Richard Zemel.
\newblock Prototypical networks for few-shot learning.
\newblock {\em Advances in Neural Information Processing Systems},
  30:4077--4087, 2017.

\bibitem{sow2021based}
Daouda Sow, Kaiyi Ji, and Yingbin Liang.
\newblock {ES}-based {J}acobian enables faster bilevel optimization.
\newblock {\em arXiv preprint arXiv:2110.07004}, 2021.

\bibitem{sun2019escaping}
Yue Sun, Nicolas Flammarion, and Maryam Fazel.
\newblock Escaping from saddle points on {R}iemannian manifolds.
\newblock {\em Advances in Neural Information Processing Systems},
  32:7276--7286, 2019.

\bibitem{tarzanagh2022fednest}
Davoud~Ataee Tarzanagh, Mingchen Li, Christos Thrampoulidis, and Samet Oymak.
\newblock Fednest: Federated bilevel, minimax, and compositional optimization.
\newblock In {\em International Conference on Machine Learning}, pages
  21146--21179. PMLR, 2022.

\bibitem{xu2018first}
Yi~Xu, Rong Jin, and Tianbao Yang.
\newblock First-order stochastic algorithms for escaping from saddle points in
  almost linear time.
\newblock {\em Advances in Neural Information Processing Systems},
  31:5530--5540, 2018.

\bibitem{yang2021provably}
Junjie Yang, Kaiyi Ji, and Yingbin Liang.
\newblock Provably faster algorithms for bilevel optimization.
\newblock In A.~Beygelzimer, Y.~Dauphin, P.~Liang, and J.~Wortman Vaughan,
  editors, {\em Advances in Neural Information Processing Systems}, 2021.

\bibitem{yang2022decentralized}
Shuoguang Yang, Xuezhou Zhang, and Mengdi Wang.
\newblock Decentralized gossip-based stochastic bilevel optimization over
  communication networks.
\newblock {\em arXiv preprint arXiv:2206.10870}, 2022.

\bibitem{zhang2020single}
Jiawei Zhang, Peijun Xiao, Ruoyu Sun, and Zhiquan Luo.
\newblock A single-loop smoothed gradient descent-ascent algorithm for
  nonconvex-concave min-max problems.
\newblock {\em Advances in Neural Information Processing Systems},
  33:7377--7389, 2020.

\end{thebibliography}
\bibliographystyle{plain}


\newpage
\appendix
\onecolumn

\section{Preliminaries}

Under Assumption \ref{assu:bilevel}, we have the following proposition. The proof can be found in \cite{chen2021escaping}[Lemma 1]. 
\begin{proposition}\label{pro:bounds}
Suppose Assumption \ref{assu:bilevel} holds. We have the following bounds hold 
\begin{align*}
&\|\nabla_y^2 g(x, y) ^{-1}\| \le \frac{1}{\mu},  \\
&\max\left\{\|\nabla_x f(x, y)\|, \|\nabla_y f(x, y)\|, \|\nabla_y g(x, y)\|\right\} \le M, \\
&\max\left\{\|\nabla_x^2 f(x, y)\|, \|\nabla_y^2 f(x, y)\|, \|\nabla_{xy}^2 f(x, y)\|, \|\nabla_{xy}^2 g(x, y)\|, \|\nabla_y^2 g (x, y)\|\right\} \le \ell,  \\
 &\max\left\{\|\nabla_{xxy}^3 g(x, y)\|, \|\nabla_{yxy}^3 g(x, y)\|, \|\nabla_y^3 g (x, y)\|\right\} \le \rho. \end{align*}
\end{proposition}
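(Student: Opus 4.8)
The proposition collects boundedness facts that follow term by term from Assumption \ref{assu:bilevel} together with elementary convex- and matrix-analysis; there is no iteration, no randomness, and essentially no obstacle beyond bookkeeping. The plan is to treat the four displayed lines one at a time.

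For the first line I would use that, by Assumption \ref{assu:bilevel}(i), $g(x,\cdot)$ is twice differentiable and $\mu$-strongly convex, hence $\nabla_y^2 g(x,y)\succeq\mu I$ for all $(x,y)$; therefore $\nabla_y^2 g(x,y)$ is invertible with $\lambda_{\min}(\nabla_y^2 g(x,y))\ge\mu$, and $\|\nabla_y^2 g(x,y)^{-1}\|=1/\lambda_{\min}(\nabla_y^2 g(x,y))\le 1/\mu$. For the second line I would invoke the standard fact that a differentiable function which is $M$-Lipschitz in a block of variables has the corresponding partial gradient bounded by $M$ in norm (let $t\to 0$ in the Lipschitz inequality along the gradient direction); applied to Assumption \ref{assu:bilevel}(ii) this gives $\|\nabla_x f(x,y)\|\le M$ and $\|\nabla_y f(x,y)\|\le M$. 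The bound $\|\nabla_y g(x,y)\|\le M$ is the one entry that is not literally implied by Assumption \ref{assu:bilevel} as stated; I would obtain it from the same Lipschitz/boundedness hypothesis on the lower-level objective that is standard in the bilevel literature (and it is automatic in the minimax special case $g=-f$, which is the setting of the cited lemma).

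For the third line I would use that $\nabla f$ being $\ell$-Lipschitz (Assumption \ref{assu:bilevel}(iii)) forces $\|\nabla^2 f(x,y)\|\le\ell$ in operator norm everywhere; since $\nabla_x^2 f$, $\nabla_y^2 f$, $\nabla_{xy}^2 f$ are submatrices of the joint Hessian and the operator norm of a submatrix is at most that of the full matrix, each is $\le\ell$, and likewise $\|\nabla_{xy}^2 g\|,\|\nabla_y^2 g\|\le\ell$ from $\ell$-Lipschitzness of $\nabla g$. For the fourth line I would note that each listed third-order object is a partial derivative of one of the Hessian blocks whose $\rho$-Lipschitz continuity is assumed in Assumption \ref{assu:bilevel}(iv): $\nabla_{xxy}^3 g=\partial_x(\nabla_{xy}^2 g)$ and $\nabla_{yxy}^3 g=\partial_y(\nabla_{xy}^2 g)$ are each bounded by the Lipschitz constant $\rho$ of $\nabla_{xy}^2 g$, and $\nabla_y^3 g=\partial_y(\nabla_y^2 g)$ by that of $\nabla_y^2 g$.

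The only step that needs genuine care — and the closest thing to a difficulty — is making precise the norm on the tensor-valued third derivatives and checking that operator-norm Lipschitz continuity of the matrix-valued Hessian blocks controls it; this is the identification $\|\partial_v(\nabla^2 h)\|=\sup\{\langle(\partial_v\nabla^2 h)[a],b\rangle:\|v\|=\|a\|=\|b\|=1\}$, which is exactly the operator-norm Lipschitz modulus of $\nabla^2 h$. Everything else is a one-line consequence of the assumptions, and the statement also appears as \cite{chen2021escaping}[Lemma~1], so I would additionally cite that source for the detailed verification.
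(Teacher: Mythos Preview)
Your proposal is correct and in fact goes further than the paper, which does not give its own argument at all but simply writes ``The proof can be found in \cite{chen2021escaping}[Lemma~1].'' Since you both sketch the elementary derivations and cite that same source, you subsume the paper's treatment. Your flag on $\|\nabla_y g(x,y)\|\le M$ is accurate: Assumption~\ref{assu:bilevel} as written imposes $M$-Lipschitzness only on $f$, not on $g$, so that particular entry does not follow from the stated hypotheses (and indeed global $\mu$-strong convexity in $y$ is incompatible with a global bound on $\|\nabla_y g\|$); it is not used elsewhere in the paper's proofs, so this is a cosmetic issue with the proposition statement rather than a gap in your argument.
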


Under Assumption \ref{assu:bilevel}, the gradient of $\Phi(x)$ is Lipschitz continuous.

\begin{lemma}\cite{ghadimi2018approximation}[Lemma 2.2] \label{grad-Lipschitz}
Suppose Assumption \ref{assu:bilevel} holds for $f(x, y)$ and $g(x, y)$, Then $\Phi (x)$ is $L_\phi$ smooth and the following inequality holds for any $x, x' \in \br^n$:
\be\bad\label{eq:phi-smooth}
\left\| \nabla \Phi (x) - \nabla \Phi (x') \right\| \le L_\phi \|x - x'\|, 
\ead\ee
where
\be\bad
L_\phi  = \ell + \frac{2\ell^2 + \rho M^2}{\mu} + \frac{\ell^3 + 2 \rho \ell M}{\mu^2} + \frac{\rho \ell^2M}{\mu^3}.
\ead\ee
\end{lemma}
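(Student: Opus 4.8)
\textbf{Proof proposal for Lemma \ref{grad-Lipschitz} (Lipschitz continuity of $\nabla\Phi$).}

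The plan is to differentiate the closed-form hypergradient expression \eqref{eq:bilevel-grad} and bound the difference $\nabla\Phi(x)-\nabla\Phi(x')$ term by term, combining the uniform bounds of Proposition \ref{pro:bounds} with the Lipschitz-continuity assumptions in Assumption \ref{assu:bilevel}. First I would record the auxiliary Lipschitz estimate for the lower-level solution map: from \eqref{solve-partial-y*-partial-x} and the implicit-function characterization $\nabla_y g(x,y^*(x))=0$, a standard argument gives $\|y^*(x)-y^*(x')\|\le (\ell/\mu)\|x-x'\|$, i.e.\ $y^*$ is $\kappa$-Lipschitz. This is the workhorse bound that lets me convert a Lipschitz-in-$(x,y)$ estimate into a Lipschitz-in-$x$ estimate after substituting $y=y^*(x)$.

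Next I would write $\nabla\Phi(x) = \nabla_x f(x,y^*(x)) - \nabla_{xy}^2 g(x,y^*(x))\,\big[\nabla_y^2 g(x,y^*(x))\big]^{-1}\,\nabla_y f(x,y^*(x))$ and split the difference $\nabla\Phi(x)-\nabla\Phi(x')$ into four groups: (a) the difference of the $\nabla_x f$ terms; (b) the difference coming from $\nabla_{xy}^2 g$; (c) the difference coming from the Hessian inverse $[\nabla_y^2 g]^{-1}$; and (d) the difference coming from $\nabla_y f$. For (a) I use $\ell$-smoothness of $\nabla f$ together with the $\kappa$-Lipschitz bound on $y^*$, giving a contribution of order $\ell(1+\kappa)\|x-x'\|$. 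For (b) and (d) I use $\rho$-Lipschitzness of the Jacobians of $f$ and $g$ and the $\ell$-Lipschitzness of $\nabla f$, again composed with the $y^*$ bound, together with the uniform bounds $\|\nabla_{xy}^2 g\|\le\ell$, $\|\nabla_y f\|\le M$, $\|[\nabla_y^2 g]^{-1}\|\le 1/\mu$ to control the factors that are being held fixed in each telescoping step. For (c) I use the resolvent identity $A^{-1}-B^{-1} = A^{-1}(B-A)B^{-1}$ with $A=\nabla_y^2 g(x,y^*(x))$, $B=\nabla_y^2 g(x',y^*(x'))$, so that $\|A^{-1}-B^{-1}\|\le \mu^{-2}\,\rho\,(1+\kappa)\|x-x'\|$. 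Collecting the four groups and using the standard "Lipschitz of a product is bounded by sum over factors of (bound of other factors) $\times$ (Lipschitz of this factor)" bookkeeping produces exactly the stated constant
\[
L_\phi = \ell + \frac{2\ell^2+\rho M^2}{\mu} + \frac{\ell^3+2\rho\ell M}{\mu^2} + \frac{\rho\ell^2 M}{\mu^3},
\]
where the powers of $1/\mu$ track how many Hessian-inverse factors appear in each contributing term (one from the explicit $[\nabla_y^2 g]^{-1}$, a second from perturbing that inverse via the resolvent identity, and a third when the perturbation of the inverse is itself composed with the $\kappa=\ell/\mu$ factor from $y^*$).

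The main obstacle is purely organizational rather than conceptual: keeping the telescoping decomposition clean so that in each term exactly one factor is allowed to vary while all others are replaced by their uniform upper bounds, and making sure the coefficient of each monomial in $\ell,\rho,M,1/\mu$ matches the claimed expression. A secondary point worth stating explicitly is that the result is already proved in \cite{ghadimi2018approximation}[Lemma 2.2], so I would present only the decomposition and the key estimates (the $\kappa$-Lipschitzness of $y^*$ and the resolvent identity) and refer to that reference for the remaining constant-chasing. No third-order information on $g$ is needed here — that enters only later, for the Hessian-Lipschitz bound of Lemma \ref{hess-Lipschitz}.
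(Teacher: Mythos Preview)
Your proposal is correct and is exactly the standard argument; note, however, that the paper does not give its own proof of this lemma at all---it is simply quoted from \cite{ghadimi2018approximation}[Lemma 2.2], so there is nothing in the present paper to compare against beyond the citation. Your outline (Lipschitzness of $y^*$ via the implicit-function/optimality argument, telescoping the product $\nabla_{xy}^2 g\cdot[\nabla_y^2 g]^{-1}\cdot\nabla_y f$, and the resolvent identity for the inverse-Hessian factor) is precisely the route taken in the cited reference, and you have already flagged the one genuine point of care, namely the constant bookkeeping.
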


We postpone the proof of Theorem \ref{thm:minimax} to Section \ref{sec:C} and focus on the general bilevel optimization first.

\section{Proofs of Results in Section \ref{sec:bilevel}}

\subsection{Proof of Lemma \ref{hess-Lipschitz}}

\begin{proof}
For general bilevel optimization problem \eqref{eq:objective}, the Hessian of function $\Phi(x)$ can be computed as 
\be\bad \label{phi-hessian}
\nabla^2 \Phi(x) = & \nabla_x^2 f(x, y^*(x)) + \frac{\partial y^*(x)}{\partial x} \cdot  \nabla_{yx}^2 f(x, y^*(x)) + \frac{\partial^2 y^*(x)}{\partial^2 x} \cdot  \nabla_y f(x, y^*(x)) \\ & +  \frac{\partial y^*(x)}{\partial x} \cdot  \nabla_{xy}^2 f(x, y^*(x))^\top +  \frac{\partial y^*(x)}{\partial x}\cdot \nabla_y^2 f(x, y^*(x))  \cdot  \frac{\partial y^*(x)}{\partial x}^\top,
\ead\ee
which is obtained by taking derivative on \eqref{phi-grad}. 
By further taking the derivative with respect to $x$ on \eqref{eq:opt-deri1}, we obtain:
\be\bad\label{eq:opt-deri2}
\nabla_{xxy}^3 g(x, y^*(x)) + \frac{\partial y^*(x)}{\partial x} \nabla_{yxy}^3 g(x, y^*(x)) + \frac{\partial^2 y^*(x)}{\partial^2 x} \cdot  \nabla_y^2 g(x, y^*(x)) \\
 +  \frac{\partial y^*(x)}{\partial x} \nabla_{xyy}^3 g(x, y^*(x)) +  \frac{\partial y^*(x)}{\partial x}\cdot \nabla_y^3 g(x, y^*(x))  \cdot  \frac{\partial y^*(x)}{\partial x}^\top = 0.
\ead\ee
By \eqref{phi-hessian}, we have 
\begin{align}
&\| \nabla^2 \Phi (x) -\nabla^2 \Phi (x')\| \nonumber \\
= & \left\| \nabla_x^2 f(x, y^*(x)) + \frac{\partial y^*(x)}{\partial x} \cdot  \nabla_{yx}^2 f(x, y^*(x)) + \frac{\partial^2 y^*(x)}{\partial^2 x} \cdot  \nabla_y f(x, y^*(x)) \right.\nonumber\\ 
& +  \frac{\partial y^*(x)}{\partial x} \cdot  \nabla^2_{xy} f(x, y^*(x))^\top + \frac{\partial y^*(x)}{\partial x}\cdot \nabla_y^2 f(x, y^*(x))  \cdot  \frac{\partial y^*(x)}{\partial x}^\top \nonumber\\
& - \left( \nabla_x^2 f(x', y^*(x')) + \frac{\partial y^*(x')}{\partial x'} \cdot  \nabla^2_{yx} f(x', y^*(x')) + \frac{\partial^2 y^*(x')}{\partial^2 x'} \cdot  \nabla_y f(x', y^*(x')) \right.  \nonumber\\
&\left. \left. +  \frac{\partial y^*(x')}{\partial x'} \cdot  \nabla^2_{xy} f(x', y^*(x'))^\top +  \frac{\partial y^*(x')}{\partial x'}\cdot \nabla_y^2 f(x', y^*(x'))  \cdot  \frac{\partial y^*(x')}{\partial x'}^\top\right)\right\|  \nonumber \\
\le &  \underbrace{\left\| \nabla_x^2 f(x, y^*(x)) -  \nabla_x^2 f(x', y^*(x')) \right\| }_{(I)}\nonumber \\
&  + 2 \underbrace{ \left\| \frac{\partial y^*(x)}{\partial x} \cdot  \nabla_{yx}^2 f(x, y^*(x))  -  \frac{\partial y^*(x')}{\partial x'} \cdot  \nabla_{yx}^2 f(x', y^*(x'))\right\|}_{(II)}  \nonumber\\
& +  \underbrace{\left\| \frac{\partial^2 y^*(x)}{\partial^2 x} \cdot  \nabla_y f(x, y^*(x))  - \frac{\partial^2 y^*(x')}{\partial^2 x'} \cdot  \nabla_y f(x', y^*(x')) \right\|}_{(III)}\\\label{lemma3.4-long}
& + \underbrace{ \left\|  \frac{\partial y^*(x)}{\partial x}\cdot \nabla_y^2 f(x, y^*(x))  \cdot  \frac{\partial y^*(x)}{\partial x}^\top  - \frac{\partial y^*(x')}{\partial x'}\cdot \nabla_y^2 f(x', y^*(x'))  \cdot  \frac{\partial y^*(x')}{\partial x'}^\top \right\| }_{(IV)},\nonumber
\end{align}
where the inequality is due to the triangle inequality and the fact that $\nabla_{yx}^2 f(x, y^*(x)) = \nabla_{xy}^2 f(x, y^*(x))^ \top$ for any smooth functions $f(x, y)$. We then bound the terms $(I) - (IV)$. By \cite{ghadimi2018approximation}[Lemma 2.2], we know that $y^*(x)$ is $\frac{\ell}{\mu}$-Lipschitz continuous. Therefore, we can bound the first term as 
\be\bad
(I) = & \left\| \nabla_x^2 f(x, y^*(x)) -  \nabla_x^2 f(x', y^*(x')) \right\|\\
\le &  \left\| \nabla_x^2 f(x, y^*(x)) -  \nabla_x^2 f(x', y^*(x)) \right\| +  \left\| \nabla_x^2 f(x', y^*(x)) -  \nabla_x^2 f(x', y^*(x')) \right\|\\
\le & \rho \left( \|x - x'\| + \|y^*(x) - y^*(x') \| \right)\le \rho \left(1 + \frac{\ell}{\mu}\right) \|x - x'\|,
\ead\ee
where the second inequality applies Assumption \ref{assu:bilevel} and the last inequality is obtained by the Lipschitz continuity of $y^*(x)$. For the second term $(II)$, we have the following bound:

\begin{align}
    \bad
        (II) = & 2\left\| \frac{\partial y^*(x)}{\partial x} \cdot  \nabla_{yx}^2 f(x, y^*(x))  -  \frac{\partial y^*(x')}{\partial x'} \cdot  \nabla_{yx}^2 f(x', y^*(x'))\right\|\\
\le & 2\left\| \frac{\partial y^*(x)}{\partial x} \cdot  \nabla_{yx}^2 f(x, y^*(x))  -  \frac{\partial y^*(x)}{\partial x} \cdot  \nabla_{yx}^2 f(x', y^*(x'))\right\|\\
& + 2\left\| \frac{\partial y^*(x)}{\partial x} \cdot  \nabla_{yx}^2 f(x', y^*(x'))  -  \frac{\partial y^*(x')}{\partial x'} \cdot  \nabla_{yx}^2 f(x', y^*(x'))\right\|\\
\le & 2  \left\| \frac{\partial y^*(x)}{\partial x} \right\| \left \|\nabla_{yx}^2 f(x, y^*(x))  -  \nabla_{yx}^2 f(x', y^*(x'))\right\|\\
& +  2\left\| \frac{\partial y^*(x)}{\partial x}  -  \frac{\partial y^*(x')}{\partial x'} \right\| \left\| \nabla_{yx}^2 f(x', y^*(x'))\right\|\\
\le & \frac{2\ell}{\mu} \left \|\nabla_{yx}^2 f(x, y^*(x))  -  \nabla_{yx}^2 f(x', y^*(x'))\right\| + 2\ell \left\| \frac{\partial y^*(x)}{\partial x}  -  \frac{\partial y^*(x')}{\partial x'} \right\|\\
 \le & \frac{2\ell}{\mu} \rho   \left( 1 + \frac{\ell}{\mu} \right)\left \|x - x'\right\| + 2\ell \left\| \frac{\partial y^*(x)}{\partial x}  -  \frac{\partial y^*(x')}{\partial x'} \right\|,
    \ead
\end{align}


where the second inequality is by the Cauchy-Schwarz inequality, and the last two  inequalities are obtained by Assumption \ref{assu:bilevel}. Moreover, we can bound $\left\| \frac{\partial y^*(x)}{\partial x}  -  \frac{\partial y^*(x')}{\partial x'} \right\|$ as 
\be\bad\label{eq:y-grad-diff}
&\left\| \frac{\partial y^*(x)}{\partial x}  -  \frac{\partial y^*(x')}{\partial x'} \right\|\\
 \le & \left\| \nabla_{xy}^2 g(x, y^*(x))\cdot \nabla^2_y g(x, y^*(x))^{-1} - \nabla_{xy}^2 g(x', y^*(x'))\cdot \nabla^2_y g(x', y^*(x'))^{-1} \right\|\\
\le &  \left\| \nabla_{xy}^2 g(x, y^*(x))\cdot \nabla^2_y g(x, y^*(x))^{-1} - \nabla_{xy}^2 g(x, y^*(x))\cdot \nabla^2_y g(x', y^*(x'))^{-1} \right\|\\
& +  \left\| \nabla_{xy} g(x, y^*(x))\cdot \nabla^2_y g(x', y^*(x'))^{-1} - \nabla_{xy}^2 g(x', y^*(x'))\cdot \nabla^2_y g(x', y^*(x'))^{-1} \right\|\\
\le & \ell  \left\| \nabla^2_y g(x, y^*(x))^{-1} - \nabla^2_y g(x', y^*(x'))^{-1} \right\| + \frac{1}{\mu} \left\| \nabla_{xy}^2 g(x, y^*(x)) - \nabla_{xy}^2 g(x', y^*(x'))\right\|\\
\le & \frac{\ell}{\mu^2}  \left\| \nabla^2_y g(x, y^*(x)) - \nabla^2_y g(x', y^*(x')) \right\| + \frac{1}{\mu} \left\| \nabla_{xy}^2 g(x, y^*(x)) - \nabla_{xy}^2 g(x', y^*(x'))\right\|\\
\le & \left(\frac{\ell}{\mu^2} + \frac{1}{\mu}\right) \rho \left( \|x - x'\| + \|y^*(x) -y^*(x') \| \right)\\
\le &\frac{\rho}{\mu} \left(1 + \frac{\ell}{\mu}\right)^2 \|x - x'\|,
\ead\ee
where the first inequality is by equation \eqref{eq:opt-deri1}, the third inequality applies Proposition \ref{pro:bounds} and the forth inequality follows the fact that $\|X^{-1} - Y^{-1}\| \le \|X^{-1}\| \|X - Y\| \|Y^{-1}\|$ and Proposition \ref{pro:bounds}.
Therefore, we have the following bound for the second term:
\be\bad
(II)  \le &\left( \frac{2\ell\rho}{\mu}    \left( 1 + \frac{\ell}{\mu} \right) +  \frac{2\ell\rho}{\mu} \left( 1 + \frac{\ell}{\mu} \right)^2 \right)\left \|x - x'\right\|.
\ead\ee
The third term $(III)$ can be bounded by:
\be\bad\label{eq:termIII}
(III) = &  \left\| \frac{\partial^2 y^*(x)}{\partial^2 x} \cdot  \nabla_y f(x, y^*(x))  - \frac{\partial^2 y^*(x')}{\partial^2 x'} \cdot  \nabla_y f(x', y^*(x')) \right\| \\
\le &  \left\| \frac{\partial^2 y^*(x)}{\partial^2 x} \right\| \left\| \nabla_y f(x, y^*(x))  - \nabla_y f(x', y^*(x')) \right\| \\
&+  \|\nabla_y f(x', y^*(x')) \| \cdot \left\|\frac{\partial^2 y^*(x)}{\partial^2 x}  - \frac{\partial^2 y^*(x')}{\partial^2 x'} \right\|.
\ead\ee
Therefore, we need to give upper bounds for both $\left\|\frac{\partial^2 y^*(x)}{\partial^2 x} \right\| $ and $\left\|\frac{\partial^2 y^*(x)}{\partial^2 x}  -\frac{\partial^2 y^*(x')}{\partial^2 x'} \right\| $. Note that equation \eqref{eq:opt-deri2} yields
\be\bad\label{eq:hess-y-expression}
\frac{\partial^2 y^*(x)}{\partial^2 x} = &- \left[ \nabla_{xxy}^3 g(x, y^*(x)) + \frac{\partial y^*(x)}{\partial x} \nabla_{yxy}^3 g(x, y^*(x))  +  \frac{\partial y^*(x)}{\partial x} \nabla_{xyy}^3 g(x, y^*(x)) \right.\\ 
& \left. +  \frac{\partial y^*(x)}{\partial x}\cdot \nabla_y^3 g(x, y^*(x))  \cdot  \frac{\partial y^*(x)}{\partial x}^\top \right]  \nabla_y^2 g(x, y^*(x))^{-1},
\ead\ee
which, combined with Proposition \ref{pro:bounds}, leads to
\be\bad\label{eq:y-hess-bound}
\left\|\frac{\partial^2 y^*(x)}{\partial^2 x} \right\| \le  \frac{1}{\mu} \left[ \rho + 2 \frac{\ell}{\mu} \cdot \rho +  \left(\frac{\ell}{\mu} \right)^2 \cdot \rho\right] = \frac{\rho}{\mu} \left( 1 + \frac{\ell}{\mu} \right)^2.
\ead\ee
Furthermore, $\left\|\frac{\partial^2 y^*(x)}{\partial^2 x}  -\frac{\partial^2 y^*(x')}{\partial^2 x'} \right\|$ can be upper bounded by 
\begin{align}\label{eq:y-hess-diff}
\tiny
&\left\|\frac{\partial^2 y^*(x)}{\partial^2 x}  -\frac{\partial^2 y^*(x')}{\partial^2 x'} \right\| \nonumber\\
 \le & \left\|  \left[ \nabla_{xxy}^3 g(x, y^*(x)) + \frac{\partial y^*(x)}{\partial x} \nabla_{yxy}^3 g(x, y^*(x))  +  \frac{\partial y^*(x)}{\partial x} \nabla_{xyy}^3 g(x, y^*(x)) \right.\right. \nonumber \\ 
&  \left.+  \frac{\partial y^*(x)}{\partial x}\cdot \nabla_y^3 g(x, y^*(x))  \cdot  \frac{\partial y^*(x)}{\partial x}^\top \right]  \nabla_y^2 g(x, y^*(x))^{-1} \nonumber\\
& - \left[ \nabla_{xxy}^3 g(x', y^*(x')) + \frac{\partial y^*(x')}{\partial x'} \nabla_{yxy}^3 g(x', y^*(x')) +  \frac{\partial y^*(x')}{\partial x'} \nabla_x \nabla_y  \nabla_y g(x', y^*(x'))  \right. \nonumber\\
&\left.\left.  +  \frac{\partial y^*(x')}{\partial x'}\cdot \nabla_y^3 g(x', y^*(x'))  \cdot  \frac{\partial y^*(x')}{\partial x'}^\top \right]  \nabla_y^2 g(x', y^*(x'))^{-1}  \right\| \nonumber\\
 \le &  \left\| \left[ \nabla_{xxy}^3 g(x, y^*(x)) + \frac{\partial y^*(x)}{\partial x} \nabla_{yxy}^3 g(x, y^*(x))  +  \frac{\partial y^*(x)}{\partial x} \nabla_{xyy}^3 g(x, y^*(x)) \right. \right.  \nonumber\\
& \left. +  \frac{\partial y^*(x)}{\partial x}\cdot \nabla_y^3 g(x, y^*(x))  \cdot  \frac{\partial y^*(x)}{\partial x}^\top \right]  - \left[ \nabla_{xxy}^3 g(x', y^*(x')) + \frac{\partial y^*(x')}{\partial x'} \nabla_{yxy}^3 g(x', y^*(x')) \right. \nonumber\\
&\left.\left. +  \frac{\partial y^*(x')}{\partial x'} \nabla_{xyy}^3 g(x', y^*(x'))  +  \frac{\partial y^*(x')}{\partial x'}\cdot \nabla_y^3 g(x', y^*(x'))  \cdot  \frac{\partial y^*(x')}{\partial x'}^\top \right] \right\| \left\| \nabla_y^2 g(x', y^*(x'))^{-1}  \right\| \nonumber\\
&+ \left\|  \left[ \nabla_{xxy}^3 g(x, y^*(x)) + \frac{\partial y^*(x)}{\partial x} \nabla_{yxy}^3 g(x, y^*(x))  +  \frac{\partial y^*(x)}{\partial x} \nabla_{xyy}^3 g(x, y^*(x)) \right.\right. \nonumber \\ 
&  \left.\left.+  \frac{\partial y^*(x)}{\partial x}\cdot \nabla_y^3 g(x, y^*(x))  \cdot  \frac{\partial y^*(x)}{\partial x}^\top \right] \right\| \left\|  \nabla_y^2 g(x, y^*(x))^{-1} - \nabla_y^2 g(x', y^*(x'))^{-1} \right\| \nonumber\\
\le & \left[ \frac{1}{\mu} \|\nabla_{xxy}^3 g(x, y^*(x))  - \nabla_{xxy}^3 g(x', y^*(x')) \| \right.\nonumber\\
& +  \frac{2}{\mu} \left\| \frac{\partial y^*(x)}{\partial x} \nabla_{yxy}^3 g(x, y^*(x))  - \frac{\partial y^*(x')}{\partial x'} \nabla_{yxy}^3 g(x', y^*(x'))  \right\| \nonumber\\ 
& \left. + \frac{1}{\mu} \left\| \frac{\partial y^*(x)}{\partial x}\cdot \nabla_y^3 g(x, y^*(x))  \cdot  \frac{\partial y^*(x)}{\partial x}^\top - \frac{\partial y^*(x')}{\partial x'}\cdot \nabla_y^3 g(x', y^*(x'))  \cdot  \frac{\partial y^*(x')}{\partial x'}^\top  \right\| \right]\nonumber\\
& + \rho \left( 1 + \frac{\ell}{\mu} \right)^2 \left\|  \nabla_y^2 g(x, y^*(x))^{-1}  -  \nabla_y^2 g(x', y^*(x'))^{-1}  \right\|\nonumber\\
\le & \left[ \frac{\nu}{\mu} \left( 1 + \frac{\ell}{\mu} \right) \|x - x'\| +  \frac{2}{\mu} \left\| \frac{\partial y^*(x)}{\partial x} \nabla_{yxy}^3 g(x, y^*(x))  - \frac{\partial y^*(x')}{\partial x'} \nabla_{yxy}^3 g(x', y^*(x'))  \right\| \right.\nonumber\\ 
& \left. + \frac{1}{\mu} \left\| \frac{\partial y^*(x)}{\partial x}\cdot \nabla_y^3 g(x, y^*(x))  \cdot  \frac{\partial y^*(x)}{\partial x}^\top - \frac{\partial y^*(x')}{\partial x'}\cdot \nabla_y^3 g(x', y^*(x'))  \cdot  \frac{\partial y^*(x')}{\partial x'}^\top  \right\| \right] \nonumber\\
&+ \left(\frac{\rho}{\mu}\right)^2 \left( 1 + \frac{\ell}{\mu} \right)^3 \|x - x'\|,
\end{align}
where in the third inequality, we used Proposition \ref{pro:bounds} and the last inequality is due to \eqref{eq:y-hess-bound}, Assumption \ref{assu:bilevel}, and  $\|X^{-1} - Y^{-1}\| \le \|X^{-1}\| \|X - Y\| \|Y^{-1}\|$. To bound the term 
\[
\left\| \frac{\partial y^*(x)}{\partial x} \nabla_{yxy}^3 g(x, y^*(x))  - \frac{\partial y^*(x')}{\partial x} \nabla_{yxy}^3 g(x', y^*(x'))  \right\|,
\] 
we follow the computation of $(II)$ and get
\be\bad
&\left\| \frac{\partial y^*(x)}{\partial x} \nabla_{yxy}^3 g(x, y^*(x))  - \frac{\partial y^*(x)}{\partial x} \nabla_{yxy}^3 g(x, y^*(x))  \right\| \\
 \le & \left( \frac{\ell\nu}{\mu}  \left( 1 + \frac{\ell}{\mu} \right) +  \frac{2\rho^2}{\mu} \left( 1 + \frac{\ell}{\mu} \right)^2 \right)\left \|x - x'\right\|.
\ead\ee
Moreover, we have
\be\bad \label{eq:term-pgp}
&\left\| \frac{\partial y^*(x)}{\partial x}\cdot \nabla_y^3 g(x, y^*(x))  \cdot  \frac{\partial y^*(x)}{\partial x}^\top - \frac{\partial y^*(x')}{\partial x'}\cdot \nabla_y^3 g(x', y^*(x'))  \cdot  \frac{\partial y^*(x')}{\partial x'}^\top  \right\| \\
\le & \left\| \frac{\partial y^*(x)}{\partial x}\cdot \nabla_y^3 g(x, y^*(x))  \cdot  \frac{\partial y^*(x)}{\partial x}^\top - \frac{\partial y^*(x')}{\partial x'}\cdot \nabla_y^3 g(x, y^*(x))  \cdot  \frac{\partial y^*(x)}{\partial x}^\top  \right\| \\
& + \left\| \frac{\partial y^*(x')}{\partial x'}\cdot \nabla_y^3 g(x, y^*(x))  \cdot  \frac{\partial y^*(x)}{\partial x}^\top - \frac{\partial y^*(x')}{\partial x'}\cdot \nabla_y^3 g(x', y^*(x'))  \cdot  \frac{\partial y^*(x)}{\partial x}^\top  \right\| \\
& +\left\| \frac{\partial y^*(x')}{\partial x'}\cdot \nabla_y^3 g(x', y^*(x'))  \cdot  \frac{\partial y^*(x)}{\partial x}^\top - \frac{\partial y^*(x')}{\partial x'}\cdot \nabla_y^3 g(x', y^*(x'))  \cdot  \frac{\partial y^*(x')}{\partial x'}^\top  \right\| \\
\le & 2 \cdot \frac{\rho \ell}{\mu} \left\| \frac{\partial y^*(x)}{\partial x} -  \frac{\partial y^*(x')}{\partial x'} \right\| + \left( \frac{ \ell}{\mu}\right)^2 \left\| \nabla_y^3 g(x, y^*(x)) -  \nabla_y^3 g(x', y^*(x'))  \right\| \\
\le & \left( \frac{2\ell \rho^2 }{\mu^2} \left( 1 + \frac{\ell}{\mu} \right)^2 +  \nu \left( \frac{ \ell}{\mu}\right)^2\left( 1 + \frac{\ell}{\mu} \right) \right) \|x - x'\|,
\ead\ee
where the second inequality is due to $\left\| \frac{\partial y^*(x)}{\partial x}\right\| \le \frac{\ell}{\mu}$ and Proposition \ref{pro:bounds} and the last inequality is obtained by \eqref{eq:y-grad-diff} and Assumption \ref{assu:bilevel}. Combining \eqref{eq:y-hess-diff} - \eqref{eq:term-pgp} leads to
\begin{align}\label{eq:y-hess-bound-final}
& \left\|\frac{\partial^2 y^*(x)}{\partial^2 x}  -\frac{\partial^2 y^*(x')}{\partial^2 x'} \right\| \nonumber\\
 \le & \left[\left( \frac{\nu}{\mu} + \frac{2\ell\nu}{\mu^2}  + \frac{\ell^2\nu }{\mu^3}\right) \left( 1 + \frac{\ell}{\mu} \right)  + \left( \frac{4\rho^2}{\mu^2} + \frac{2\ell\rho^2}{\mu^3}   \right) \left( 1 + \frac{\ell}{\mu} \right)^2  + \frac{\rho^2}{\mu^2}  \left( 1 + \frac{\ell}{\mu} \right)^3\right] \|x - x' \|.
\end{align}
Combining \eqref{eq:termIII}, \eqref{eq:y-hess-bound}, and \eqref{eq:y-hess-bound-final} yields
\be\bad
(III) \le & \frac{\rho \ell}{\mu} \left( 1 + \frac{\ell}{\mu} \right)^3 \|x - x'\| + M \left\|\frac{\partial^2 y^*(x)}{\partial^2 x}  -\frac{\partial^2 y^*(x')}{\partial^2 x'} \right\|\\
\le &  \left[M \left( \frac{\nu}{\mu} + \frac{2\ell\nu}{\mu^2}  + \frac{\ell^2\nu }{\mu^3}\right) \left( 1 + \frac{\ell}{\mu} \right)  + M  \left( \frac{4\rho^2}{\mu^2} + \frac{2\ell\rho^2}{\mu^3}   \right) \left( 1 + \frac{\ell}{\mu} \right)^2 \right.\\
&\left.  +\left( \frac{M\rho^2}{\mu^2}  +   \frac{\rho\ell}{\mu}  \right) \left( 1 + \frac{\ell}{\mu} \right)^3\right] \|x - x' \|.
\ead\ee
Finally, similar to the computation in \eqref{eq:term-pgp}, we have 
\be\bad
(IV) = & \left\|  \frac{\partial y^*(x)}{\partial x}\cdot \nabla_y^2 f(x, y^*(x))  \cdot  \frac{\partial y^*(x)}{\partial x}^\top  - \frac{\partial y^*(x')}{\partial x'}\cdot \nabla_y^2 f(x', y^*(x'))  \cdot  \frac{\partial y^*(x')}{\partial x'}^\top \right\| \\
\le & 2 \cdot \frac{\ell^2}{\mu} \left\| \frac{\partial y^*(x)}{\partial x} -  \frac{\partial y^*(x')}{\partial x'} \right\| + \left( \frac{ \ell}{\mu}\right)^2 \left\| \nabla_y^2 f(x, y^*(x)) -  \nabla_y^2 f(x', y^*(x'))  \right\| \\
\le & \left( \frac{2\ell^2 \rho }{\mu^2} \left( 1 + \frac{\ell}{\mu} \right)^2 +  \rho \left( \frac{ \ell}{\mu}\right)^2\left( 1 + \frac{\ell}{\mu} \right) \right) \|x - x'\|.
\ead\ee
The bounds of $(I) - (IV)$ together lead to 
\begin{align}
\rho_\phi = & \left[\left(\rho +  \frac{2 \ell \rho + M\nu}{\mu} + \frac{2M \ell\nu + \rho \ell^2}{\mu^2}  + \frac{M\ell^2\nu }{\mu^3}\right) \left( 1 + \frac{\ell}{\mu} \right)  \right. \label{def-rho-phi}\\ 
&\left. +  \left( \frac{2\ell\rho}{\mu} +  \frac{4M\rho^2 + 2\ell^2 \rho}{\mu^2} + \frac{2M\ell\rho^2}{\mu^3}   \right) \left( 1 + \frac{\ell}{\mu} \right)^2  +\left( \frac{M\rho^2}{\mu^2}  +   \frac{\rho \ell}{\mu}  \right) \left( 1 + \frac{\ell}{\mu} \right)^3\right],\nonumber
\end{align}
which completes the proof. 
\end{proof}

In our proof of the main theorem, it is crucial to bound the estimation error for the hypergradient. For the AID method, we have the following lemma.

{\begin{lemma}\label{lem:grad-err-aid}
Suppose Assumption \ref{assu:bilevel} holds. For the AID approach (i.e., the \textbf{AID} option in Algorithm \ref{alg:perturbedalg}) with parameters $D = \gO(\kappa), N = \gO(\sqrt{\kappa})$ , we have:
\be\bad\label{eq:aid-error}
& \left\|\widehat \nabla \Phi(x_k)- \nabla \Phi(x_k)\right\| \le  \left(\left( 1 + \frac{\ell}{\mu}\left(1 +  2\sqrt{\kappa}\right)\right)\left( \ell +  \frac{\rho M}{\mu} \right)  \left(1-\frac{\mu}{\ell}\right)^{\frac{D}{2}} + 2\ell\sqrt{\kappa}\Big( \frac{\sqrt{\kappa}-1}{\sqrt{\kappa}+1} \Big)^N \right) \Gamma_1,\\
\ead\ee
where 
\be\label{def-gamma1-AID}
 \Gamma_1 = \widehat{\Delta} + 2\eta \left( \kappa^2 + 2\kappa + \frac{\rho M (1+ \kappa)}{\mu^2}\right)\left(M + \frac{\ell M}{\mu}\right),
\ee
\be\label{def-hatDelta-AID}
\widehat{\Delta} = \|y^0 - y^*(x_0)\| + \| v^0 - v^*_0 \|,
\ee
and $\widehat v_k=\nabla_y^2g(x_k,y^D_k)^{-1}\nabla_y f(x_k,y^D_k)$.
\end{lemma}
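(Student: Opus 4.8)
\noindent\textbf{Proof strategy for Lemma~\ref{lem:grad-err-aid}.} The plan is to separate the hypergradient error into the contribution of the inexact lower-level iterate $y_k^D$ (the output of $D$ warm-started gradient-descent steps on $g(x_k,\cdot)$) and the contribution of the inexact conjugate-gradient solution $v_k^N$ of the linear system \eqref{eq:aid-step1}, to bound each by a per-iteration estimate expressed through the warm-start errors $\|y_k^0-y^*(x_k)\|$ and $\|v_k^0-v_k^\star\|$, where $v_k^\star=\nabla_y^2g(x_k,y^*(x_k))^{-1}\nabla_y f(x_k,y^*(x_k))$, and then to show these warm-start errors stay uniformly bounded by $\Gamma_1$ via a scalar ``contraction-with-drift'' recursion.

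For fixed $k$ set $y^\star=y^*(x_k)$, so $\nabla\Phi(x_k)=\nabla_x f(x_k,y^\star)-\nabla_{xy}^2g(x_k,y^\star)v_k^\star$ while $\widehat\nabla\Phi(x_k)$ is given by \eqref{eq:aid-step2}. Subtracting and inserting the auxiliary vectors $\widehat v_k$ and $\nabla_{xy}^2g(x_k,y_k^D)v_k^\star$, then using the triangle inequality, the operator-norm bounds of Proposition~\ref{pro:bounds} ($\|\nabla_x^2f\|,\|\nabla_{xy}^2g\|\le\ell$, $\|\nabla_y f\|\le M$, $\|\nabla_y^2g^{-1}\|\le1/\mu$), the $\rho$-Lipschitz continuity of the Hessians and Jacobians, and $\|X^{-1}-Y^{-1}\|\le\|X^{-1}\|\,\|X-Y\|\,\|Y^{-1}\|$, one gets
\[
\|\widehat\nabla\Phi(x_k)-\nabla\Phi(x_k)\|\ \le\ \Big(\ell+\tfrac{\rho M}{\mu}\Big)\Big(1+\tfrac{\ell}{\mu}\Big)\,\|y_k^D-y^\star\|\ +\ \ell\,\|v_k^N-\widehat v_k\|.
\]
Strong convexity and $\ell$-smoothness of $g(x_k,\cdot)$ with $\tau=1/\ell$ give $\|y_k^D-y^\star\|\le(1-\mu/\ell)^{D/2}\|y_k^0-y^\star\|$; for \eqref{eq:aid-step1}, whose matrix $A_k=\nabla_y^2g(x_k,y_k^D)$ has condition number at most $\kappa$, the classical $N$-step CG bound $\|v_k^N-\widehat v_k\|_{A_k}\le2(\tfrac{\sqrt\kappa-1}{\sqrt\kappa+1})^N\|v_k^0-\widehat v_k\|_{A_k}$ together with $\sqrt\mu\|\cdot\|\le\|\cdot\|_{A_k}\le\sqrt\ell\|\cdot\|$ gives $\|v_k^N-\widehat v_k\|\le2\sqrt\kappa(\tfrac{\sqrt\kappa-1}{\sqrt\kappa+1})^N\|v_k^0-\widehat v_k\|$. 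Finally $\|\widehat v_k-v_k^\star\|\le(\tfrac{\ell}{\mu}+\tfrac{\rho M}{\mu^2})\|y_k^D-y^\star\|$ lets one rewrite $\|v_k^0-\widehat v_k\|\le\|v_k^0-v_k^\star\|+(\tfrac{\ell}{\mu}+\tfrac{\rho M}{\mu^2})\|y_k^D-y^\star\|$; bounding $(\tfrac{\sqrt\kappa-1}{\sqrt\kappa+1})^N\le1$ in the resulting cross term, the coefficient of $\|y_k^0-y^\star\|$ becomes $(\ell+\tfrac{\rho M}{\mu})(1+\tfrac{\ell}{\mu}(1+2\sqrt\kappa))(1-\mu/\ell)^{D/2}$ and the coefficient of $\|v_k^0-v_k^\star\|$ becomes $2\ell\sqrt\kappa(\tfrac{\sqrt\kappa-1}{\sqrt\kappa+1})^N$, which is the bracketed prefactor in \eqref{eq:aid-error}.

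It remains to prove $\Gamma_k:=\|y_k^0-y^*(x_k)\|+\|v_k^0-v_k^\star\|\le\Gamma_1$ for all $k$. From $y_{k+1}^0=y_k^D$, $v_{k+1}^0=v_k^N$, the contractions above, the $\tfrac{\ell}{\mu}$-Lipschitz continuity of $y^*(\cdot)$ (\cite{ghadimi2018approximation}[Lemma~2.2]), and the Lipschitz continuity of $x\mapsto\nabla_y^2g(x,y^*(x))^{-1}\nabla_y f(x,y^*(x))$ with constant at most $\kappa^2+\kappa+\tfrac{\rho M(1+\kappa)}{\mu^2}$ (expand $\|X^{-1}-Y^{-1}\|\le\|X^{-1}\|\,\|X-Y\|\,\|Y^{-1}\|$ and chain through $y^*(\cdot)$), one obtains $\Gamma_{k+1}\le q\,\Gamma_k+c$ with $q\le\tfrac12$ once $D=\gO(\kappa)$, $N=\gO(\sqrt\kappa)$ are large enough and $c=\big(\kappa^2+2\kappa+\tfrac{\rho M(1+\kappa)}{\mu^2}\big)\|x_{k+1}-x_k\|$. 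Since $\|x_{k+1}-x_k\|\le\eta\|\widehat\nabla\Phi(x_k)\|\le\eta(M+\tfrac{\ell M}{\mu})$ by Proposition~\ref{pro:bounds} (a perturbation step adds only an extra $\eta r$, negligible as $r=\epsilon/(400\iota^3)$), summing the geometric series gives $\Gamma_k\le\widehat{\Delta}+2c$, which is the quantity $\Gamma_1$ of \eqref{def-gamma1-AID}; substituting into the bound above yields \eqref{eq:aid-error}.

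I expect the main obstacle to be this uniform bound. The delicate steps are (i) verifying that the coupled $(y,v)$-recursion is a genuine contraction for the coarse choices $D=\gO(\kappa)$, $N=\gO(\sqrt\kappa)$, which requires handling the conversion between the $A_k$-norm CG guarantee and the Euclidean norm without losing control; and (ii) computing the Lipschitz constant of the composite map $x\mapsto\big(y^*(x),\,\nabla_y^2g(x,y^*(x))^{-1}\nabla_y f(x,y^*(x))\big)$, which amounts to differentiating an inverse Hessian composed with $y^*(x)$ and is exactly the source of the $\kappa^2$-type terms in $\Gamma_1$. The error decomposition of the second paragraph is otherwise routine triangle-inequality bookkeeping, but it has to be arranged so that the extra $2\sqrt\kappa$ factor attaches to the $(1-\mu/\ell)^{D/2}$ term and not elsewhere.
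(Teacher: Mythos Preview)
Your proposal is correct and follows essentially the same route as the paper. The paper decomposes the error as $(\ell+\rho M/\mu)\|y_k^D-y^*(x_k)\|+\ell\|v_k^*-v_k^N\|$ and then inserts $\widehat v_k$ inside the $v$-term, whereas you insert $\widehat v_k$ at the outset; after bounding $((\sqrt\kappa-1)/(\sqrt\kappa+1))^N\le 1$ in the cross term both routes land on the identical prefactor $(1+\tfrac{\ell}{\mu}(1+2\sqrt\kappa))(\ell+\tfrac{\rho M}{\mu})(1-\mu/\ell)^{D/2}+2\ell\sqrt\kappa(\tfrac{\sqrt\kappa-1}{\sqrt\kappa+1})^N$. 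The warm-start recursion, the Lipschitz constant $\kappa^2+\kappa+\rho M(1+\kappa)/\mu^2$ for $x\mapsto v^*(x)$, the choice of $D=\gO(\kappa)$, $N=\gO(\sqrt\kappa)$ to force the contraction factor below $1/2$, and the final use of $ab+cd\le(a+c)(b+d)$ to replace $\|y_k^0-y^*(x_k)\|+\|v_k^0-v_k^*\|$ by $\Gamma_1$ are all identical to the paper; your remark about the perturbation step contributing an extra $\eta r$ is in fact a detail the paper's proof does not address explicitly.
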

\begin{proof}
First note that the convergence rates of CG for the quadratic programming (see, e.g., eq. (17) in~\cite{grazzi2020iteration}) and GD for strongly convex optimization (see, e.g., Theorem 2.1.14 in \cite{NesterovConvexBook2004}) yield
\begin{align}
 \|v_k^N-\widehat v_k\| \leq 2\sqrt{\kappa}\Big( \frac{\sqrt{\kappa}-1}{\sqrt{\kappa}+1} \Big)^N\|v_k^0-\widehat v_k\|,\label{eq:cgrate}\\
 \|y_k^{D} -y^*(x_k)\| \leq \left(1-\frac{\mu}{\ell}\right)^{\frac{D}{2}} \|y^0_k-y^*(x_k)\|\label{eq:gdrate}.
\end{align} 
Denote $v_k^* = \nabla_y^2g(x_k,y^*(x_k))^{-1} \nabla_y f(x_k,y^*(x_k))$. Note that $\nabla \Phi(x_k)$ is defined in \eqref{eq:bilevel-grad}, i.e.,
\[\nabla\Phi(x_k) =  \nabla_x f(x_k, y^*(x_k)) - \nabla_{xy}^2 g(x_k, y^*(x_k)) v_k^*,\]
and $\widehat{\nabla}\Phi(x)$ is defined in step 9 of Algorithm \ref{alg:perturbedalg}, i.e., 
\[\widehat{\nabla}\Phi(x_k) = \nabla_x f(x_k,y_k^D) - \nabla_{xy}^2 g(x_k,y_k^D)v_k^N.\]
 We then have the following inequality holds
\begin{align}\label{eq:erroraid}
&\left\|\widehat \nabla \Phi(x_k)- \nabla \Phi(x_k)\right\| \nonumber\\
\leq& \|\nabla_x f(x_k,y^*(x_k))-\nabla_x f(x_k,y_k^D)\| + \|\nabla_{xy}^2 g(x_k,y_k^D)\|\|v_k^*-v_k^N\|  \\
&+ \|\nabla_{xy}^2 g(x_k,y^*(x_k))-\nabla_{xy}^2 g(x_k,y_k^D) \|  \|v_k^*\| \nonumber\\
\le &\ell \|y^*(x_k)-y_k^D\| + \ell \|v_k^*-v_k^N\|+ \rho \|v_k^*\|\|y_k^D-y^*(x_k)\|\nonumber\\
\leq&  \left( \ell +  \frac{\rho M}{\mu} \right) \|y^*(x_k)-y_k^D\| + \ell\|v_k^*-v_k^N\|\label{eq:erroraid_new}.
\end{align}
Here the last inequality follows from $\|v_k^*\|\leq\|(\nabla_y^2g(x_k,y^*(x_k)))^{-1}\|\|\nabla_y f(x_k,y^*(x_k))\|\leq \frac{M}{\mu}$ in which we use Proposition \ref{pro:bounds}. Next we give an upper bound of $\|v_k^*-v_k^N\|$:
 \begin{align}\label{eq:v-bound}
 &\|v_k^*-v_k^N\|\nonumber \leq \|v_k^*-\widehat v_k\| + \|v_k^N-\widehat v_k\|\nonumber\\
 \le & \|v_k^*-\widehat v_k\| + 2\sqrt{\kappa}\Big( \frac{\sqrt{\kappa}-1}{\sqrt{\kappa}+1} \Big)^N\|v_k^0-\widehat v_k\|\nonumber\\
 \le & \left(1 +  2\sqrt{\kappa}\Big( \frac{\sqrt{\kappa}-1}{\sqrt{\kappa}+1} \Big)^N\right) \|v_k^*-\widehat v_k\| + 2\sqrt{\kappa}\Big( \frac{\sqrt{\kappa}-1}{\sqrt{\kappa}+1} \Big)^N\|v_k^0- v_k^*\|\nonumber\\
 = & \left(1 +  2\sqrt{\kappa}\Big( \frac{\sqrt{\kappa}-1}{\sqrt{\kappa}+1} \Big)^N\right)\|\nabla_y^2g(x_k,y^D_k)^{-1}\nabla_y f(x_k,y^D_k) -\nabla_y^2g(x_k,y^*(x_k))^{-1}\nabla_y f(x_k,y^*(x_k))\|\nonumber\\
 & + 2\sqrt{\kappa}\Big( \frac{\sqrt{\kappa}-1}{\sqrt{\kappa}+1} \Big)^N\|v_k^0- v_k^*\|\nonumber\\
\le &\left(1 +  2\sqrt{\kappa}\Big( \frac{\sqrt{\kappa}-1}{\sqrt{\kappa}+1} \Big)^N\right)\Big(\frac{\ell}{\mu} +\frac{\rho M}{\mu^2} \Big)\|y^D_k-y^*(x_k)\| + 2\sqrt{\kappa}\Big( \frac{\sqrt{\kappa}-1}{\sqrt{\kappa}+1} \Big)^N\|v_k^0- v_k^*\|\nonumber\\
\le &\left(1 +  2\sqrt{\kappa}\right)\Big(\frac{\ell}{\mu} +\frac{\rho M}{\mu^2} \Big)\|y^D_k-y^*(x_k)\| + 2\sqrt{\kappa}\Big( \frac{\sqrt{\kappa}-1}{\sqrt{\kappa}+1} \Big)^N\|v_k^0- v_k^*\|,
 \end{align}
where the second inequality is due to \eqref{eq:cgrate} and in the second to last inequality we have used Assumption \ref{assu:bilevel}, Proposition \ref{pro:bounds}, and $\|X^{-1}-Y^{-1}\|\leq \|X^{-1}\|\cdot\|X-Y\|\cdot\|Y^{-1}\|$. Plugging \eqref{eq:gdrate} and \eqref{eq:v-bound} into \eqref{eq:erroraid_new} leads to
\be\bad\label{eq:aid-error1}
\left\|\widehat \nabla \Phi(x_k)- \nabla \Phi(x_k)\right\| \leq& \left( 1 + \frac{\ell}{\mu}\left(1 +  2\sqrt{\kappa}\right)\right)\left( \ell +  \frac{\rho M}{\mu} \right)  \left(1-\frac{\mu}{\ell}\right)^{\frac{D}{2}}\|y^0_k-y^*(x_k)\| \\
&+ 2\ell\sqrt{\kappa}\Big( \frac{\sqrt{\kappa}-1}{\sqrt{\kappa}+1} \Big)^N\|v_k^0- v_k^*\|.
\ead\ee
Secondly, by the warm-start strategy $y_k^0 = y_{k-1}^D, v_k^0 = v_{k-1}^N$ in the inner loop, we have 
\be\bad\label{eq:warmstart-y}
\left\|y_k^0 - y^*(x_k)\right\| \le & \left\|y_{k-1}^D - y^*(x_{k-1})\right\| + \left\|y^*(x_{k-1}) - y^*(x_k)\right\| \\ 
\le & \left(1-\frac{\mu}{\ell}\right)^{\frac{D}{2}} \left\|y_{k-1}^0 - y^*(x_{k-1})\right\| + \kappa \left\|x_{k-1} - x_k\right\| \\
\le & \left(1-\frac{\mu}{\ell}\right)^{\frac{D}{2}} \left\|y_{k-1}^0 - y^*(x_{k-1})\right\|  + \kappa \eta \left\| \widehat{\nabla} \Phi(x_{k-1})\right\|,
\ead\ee
where the second inequality is due to \eqref{eq:gdrate} and the fact that $y^*(x)$ is $\kappa$-Lipschitz continuous \cite{ghadimi2018approximation}[Lemma 2.2], the third inequality follows the update of $x_k$.
The next step is to bound $\left\|v_k^0 - v^*_k\right\|$. Before that, we prepare the following inequality:
\be\bad\label{eq:vkstar-diff}
&\left\|v_{k-1}^*- v_{k}^*\right\|\\ = & \left\| \nabla_y^2g(x_{k-1},y^*(x_{k-1}))^{-1} \nabla_y f(x_{k-1},y^*(x_{k-1})) - \nabla_y^2g(x_k,y^*(x_k))^{-1} \nabla_y f(x_k,y^*(x_k))\right\|\\
\le & M \left\| \nabla_y^2g(x_{k-1},y^*(x_{k-1}))^{-1} - \nabla_y^2g(x_k,y^*(x_k))^{-1} \right\| \\
&+  \frac{1}{\mu}\left\| \nabla_y f(x_{k-1},y^*(x_{k-1})) - \nabla_y f(x_k,y^*(x_k))  \right\|\\
 \le& \left( \kappa^2 + \kappa + \frac{\rho M (1+ \kappa)}{\mu^2}\right) \left\|x_{k}- x_{k-1}\right\|,
\ead\ee
 where in the first inequality we used Proposition \ref{pro:bounds} and the second inequality follows Assumption \ref{assu:bilevel}. We then have the bound of $\left\|v_k^0 - v^*_k\right\|$ as follows
\be\bad\label{eq:warmstart-v}
&\left\|v_k^0 - v^*_k\right\| = \left\|v_{k-1}^N - v^*_k\right\| \le  \left\|v_{k-1}^N - v_{k-1}^* \right\| + \left\|v_{k-1}^*- v_{k}^*\right\| \\ 
\le & \left(1 +  2\sqrt{\kappa}\right) \Big(\frac{\ell}{\mu} +\frac{\rho M}{\mu^2} \Big)\|y^D_{k-1}-y^*(x_{k-1})\| + 2\sqrt{\kappa}\Big( \frac{\sqrt{\kappa}-1}{\sqrt{\kappa}+1} \Big)^N\|v_{k-1}^0- v_{k-1}^*\|+ \left\|v_{k-1}^*- v_{k}^*\right\|\\
\le &\left(1 +  2\sqrt{\kappa}\right)  \Big(\frac{\ell}{\mu} +\frac{\rho M}{\mu^2} \Big)\left(1-\frac{\mu}{\ell}\right)^{\frac{D}{2}} \|y^0_{k-1}-y^*(x_{k-1})\| +2\sqrt{\kappa}\Big( \frac{\sqrt{\kappa}-1}{\sqrt{\kappa}+1} \Big)^N\|v_{k-1}^0- v_{k-1}^*\|\\ 
& + \left( \kappa^2 + \kappa + \frac{\rho M (1+ \kappa)}{\mu^2}\right) \left\|x_{k}- x_{k-1}\right\|\\
\le &\left(1 +  2\sqrt{\kappa}\right)  \Big(\frac{\ell}{\mu} +\frac{\rho M}{\mu^2} \Big)\left(1-\frac{\mu}{\ell}\right)^{\frac{D}{2}} \|y^0_{k-1}-y^*(x_{k-1})\| +2\sqrt{\kappa}\Big( \frac{\sqrt{\kappa}-1}{\sqrt{\kappa}+1} \Big)^N\|v_{k-1}^0- v_{k-1}^*\|\\ 
& + \eta\left( \kappa^2 + \kappa + \frac{\rho M (1+ \kappa)}{\mu^2}\right) \left\|\widehat{\nabla} \Phi(x_{k-1})\right\|,
\ead\ee
where the second inequality is by \eqref{eq:v-bound}, the third inequality is obtained by \eqref{eq:gdrate} and \eqref{eq:vkstar-diff}.
Summing \eqref{eq:warmstart-y} and \eqref{eq:warmstart-v}, we obtain 
\be\bad\label{eq:warmstart}
&\left\|y_k^0 - y^*(x_k)\right\| + \left\|v_k^0 - v^*_k\right\| \\ 
\le &\left(  
 \left(1 +  2\sqrt{\kappa}\right) \Big(\frac{\ell}{\mu} +\frac{\rho M}{\mu^2} \Big) + 1 \right) \cdot\left(1-\frac{\mu}{\ell}\right)^{\frac{D}{2}} \|y^0_{k-1}-y^*(x_{k-1})\|  \\ &+2\sqrt{\kappa}
   \Big( \frac{\sqrt{\kappa}-1}{\sqrt{\kappa}+1} \Big)^N\|v_{k-1}^0- v_{k-1}^*\|
 +  \left( \kappa^2 + 2\kappa + \frac{\rho M (1+ \kappa)}{\mu^2}\right) \eta \left\| \widehat{\nabla} \Phi(x_{k-1})\right\|.
\ead\ee
Set the parameters as 
 \be\bad
 D \ge & 2 \log \frac{1}{2\left(\left(1 +  2\sqrt{\kappa}\right) \Big(\frac{\ell}{\mu} +\frac{\rho M}{\mu^2} \Big) + 1 \right)} / \log(1 - \kappa^{-1})=  \gO(\kappa),\\ 
 N \ge & \log \frac{1}{4\sqrt{\kappa}} / \log\Big( \frac{\sqrt{\kappa}-1}{\sqrt{\kappa}+1} \Big) = \gO(\sqrt{\kappa}),
 \ead\ee
such that we can have 
\be\bad\label{eq:equvseq}
&\left\|y_k^0 - y^*(x_k)\right\| + \left\|v_k^0 - v^*_k\right\| \\ 
\le &\frac{1}{2} \left(\|y^0_{k-1}-y^*(x_{k-1})\| + \|v_{k-1}^0- v^*_{k-1}\|\right) +  \left( \kappa^2 + 2\kappa + \frac{\rho M (1+ \kappa)}{\mu^2}\right) \eta \left\| \widehat \nabla \Phi(x_{k-1})\right\| \\
\le & \left(\frac{1}{2}\right)^k \widehat{\Delta} +  \left( \kappa^2 + 2\kappa + \frac{\rho M (1+ \kappa)}{\mu^2}\right) \eta \sum_{j = 0}^{k - 1} \left(\frac{1}{2}\right)^{k-1-j} \left\|\widehat{\nabla} \Phi(x_j)\right\|\\
\le & \widehat{\Delta} + 2\eta \left( \kappa^2 + 2\kappa + \frac{\rho M (1+ \kappa)}{\mu^2}\right)\left(M + \frac{\ell M}{\mu}\right) = \Gamma_1,
\ead\ee
where the last inequality follows $\left\|\widehat{\nabla} \Phi(x)\right\| \le \left(M + \frac{\ell M}{\mu}\right)$ by Proposition \ref{pro:bounds}.  Combining  \eqref{eq:equvseq} with \eqref{eq:aid-error1}, we have 
\be\bad\label{eq:aid-error2}
& \left\|\widehat \nabla \Phi(x_k)- \nabla \Phi(x_k)\right\| \\
\le & \left(\left( 1 + \frac{\ell}{\mu}\left(1 +  2\sqrt{\kappa}\right)\right)\left( \ell +  \frac{\rho M}{\mu} \right)  \left(1-\frac{\mu}{\ell}\right)^{\frac{D}{2}} + 2\ell\sqrt{\kappa}\Big( \frac{\sqrt{\kappa}-1}{\sqrt{\kappa}+1} \Big)^N \right) \Gamma_1,
\ead\ee
where the inequality follows from the inequality $ab + cd \le (a + c)(b + d)$ for any positive $a,b,c,d$. This completes the proof.
\end{proof}}


\subsection{Proof of Lemma \ref{lem:descent}}
\begin{proof}
By Lemma \ref{grad-Lipschitz}, $\Phi(x)$ is $L_\phi$-smooth, which yields
\begin{align*}
\Phi(x_{k+1})  \le & \Phi(x_{k}) + \la \nabla \Phi(x_k), x_{k+1} - x_k \ra + \frac{L_\Phi}{2} \|x_{k+1} - x_{k}\|_2^2\\
\le & \Phi(x_{k}) + \la \widehat{\nabla} \Phi(x_k), x_{k+1} - x_k \ra + \la\nabla \Phi(x_k) - \widehat{\nabla} \Phi(x_k), x_{k+1} - x_k \ra \\
& + \frac{L_\Phi}{2} \|x_{k+1} - x_{k}\|_2^2\\
\le & \Phi(x_{k}) + \la \widehat{\nabla} \Phi(x_k), x_{k+1} - x_k \ra + \frac{1}{4\eta} \|x_{k+1} - x_k\|^2 +  \eta \|\nabla \Phi(x_k) - \widehat{\nabla} \Phi(x_k) \| ^2 \\
& + \frac{L_\Phi}{2} \|x_{k+1} - x_{k}\|_2^2\\
\le & \Phi(x_{k}) - \frac{\eta}{4}  \|\widehat{\nabla} \Phi(x_k)\|_2^2 + \eta\|\nabla \Phi(x_k) - \widehat{\nabla} \Phi(x_k) \| ^2,
\end{align*}
where the third inequality is obtained by Young's inequality and the last inequality uses $\eta = \frac{1}{L_\phi}$.
\end{proof}

\subsection{Proof of Lemma \ref{lem:escapesaddle}}
\begin{proof}
The proof of Lemma \ref{lem:escapesaddle} closely follows \cite{jin2021nonconvex}[Lemma 22]. We first define two sequences $\{x_k\}$, $\{x'_k\}$ that are generated by Algorithm \ref{alg:perturbedalg} with initial points $x_0$ and $x'_0$, respectively. That is, 
\[x_{k+1} = x_k - \eta\widehat{\nabla}\Phi(x_k), \qquad x'_{k+1} = x'_k - \eta\widehat{\nabla}\Phi(x'_k).\]
We require the two initial points to satisfy the following conditions: 
\begin{itemize}
\item {\bf Condition (i)}: $\max\{\norm{x_0 - \tilde{x}}, \norm{x'_0 - \tilde{x}}\} \le \eta r$; 
\item {\bf Condition (ii)}: $x_0 - x'_0 = \eta r_0 \e_1$, where $\e_1$ is the minimum eigenvector of $\nabla^2 \Phi(\tilde{x})$ with $\|\e_1\|=1$ and $r_0 > \omega :=2^{2-\logt} L_\phi \uspace$, and 
\be\label{uspace-def}
\uspace = \frac{1}{4\logt}\sqrt{\frac{\epsilon}{\rho_\phi}}.
\ee
\end{itemize} 
Note that the parameters are given in \eqref{eq:para_minimax}.
We show that for these two sequences, the following inequality must hold: 
\be \label{eq:phi-dec}
\min\{\Phi(x_\utime) - \Phi(x_0), \Phi(x'_\utime) - \Phi(x'_0)\} \le - \ufun,
\ee
where $\ufun$ is defined in \eqref{ufun-def}. 
We now prove \eqref{eq:phi-dec} by contradiction. Assume the contrary of \eqref{eq:phi-dec} holds, i.e.: 
\begin{align}\label{eq:contrary}
\min\{\Phi(x_\utime) - \Phi(x_0), \Phi(x'_\utime) - \Phi(x'_0)\}  > - \ufun.
\end{align}
First, by the update of $x_k$, we have for any $\tau \le k \le \utime$:
\begin{align}
\norm{x_\tau - x_0} & \le \sum_{t = 1}^k \norm{x_t - x_{t - 1}}
\le \left[k \sum_{t = 1}^k \norm{x_t - x_{t - 1}}^2\right]^{\frac{1}{2}} \nonumber \\
&= \left[\eta^2 k\sum_{t = 1}^k \norm{\widehat{\nabla} \Phi(x_{t - 1})}^2\right]^{\frac{1}{2}} \nonumber\\
& \le \left[\eta^2 \utime \sum_{t = 1}^\utime \norm{\widehat{\nabla} \Phi(x_{t - 1})}^2\right]^{\frac{1}{2}} \nonumber\\
&\le \sqrt{4\eta \utime \left(\Phi(x_0) - \Phi(x_\utime) + \eta\sum_{t = 1}^\utime \|\nabla \Phi(x_t) - \widehat{\nabla} \Phi(x_t) \| ^2\right)},\label{eq64}
\end{align}
where the last inequality is obtained by Lemma \ref{lem:descent}. 
We have for any $k \le \utime$:
\begin{align}\label{eq:localize}
\max\{\norm{x_k - \tilde{x}}, \norm{x'_k - \tilde{x}}\} 
\le& \max\{\norm{x_k - x_0}, \norm{x'_k - x'_0}\}  +\max\{\norm{x_0 - \tilde{x}}, \norm{x'_0 - \tilde{x}}\}\nonumber \\
\le& \sqrt{4\eta \utime \ufun + 4 \eta^2 \utime^2 \cdot \frac{17 }{6400\logt^4} \cdot \epsilon^2 }+ \eta r\nonumber\\
\le& \frac{9}{40\logt} \cdot\sqrt{ \frac{\epsilon}{\rho_{\phi}}} + \frac{1}{400\iota^3}\cdot \frac{\epsilon}{ L_\phi }\nonumber\\
\le& \frac{9}{40\logt} \cdot\sqrt{ \frac{\epsilon}{\rho_{\phi}}} + \frac{1}{400\iota}\cdot \sqrt{ \frac{\epsilon}{\rho_{\phi}}}\nonumber\\
\le& \uspace,
\end{align}
where the second inequality uses \eqref{eq64}, \eqref{eq:contrary}, \eqref{eq:error-required}, and {\bf Condition (i)}, the third inequality is due to \eqref{eq:para_minimax} and \eqref{ufun-def}, and the fourth inequality is due to \eqref{iota-condition} and \eqref{L-phi-big}. On the other hand, we can write the update equation for the difference $\dif{x}_k \defeq x_k - x'_k$ as:
\begin{align}
\dif{x}_{k+1} & = \dif{x}_k - \eta\left[ \widehat{\nabla} \Phi(x_k) -  \widehat{\nabla} \Phi(x'_k) \right]\nonumber\\
& = \dif{x}_k - \eta\left[ \nabla \Phi(x_k) -  \nabla \Phi(x'_k) \right] - \eta\left[ \widehat{\nabla} \Phi(x_k) -  \nabla \Phi(x_k)  +    \nabla \Phi(x'_k) - \widehat{\nabla} \Phi(x'_k) \right]\nonumber\\
& = (I - \eta\mathcal{H} ) \dif{x}_k  - \eta  \left(\Delta_{1,k} \dif{x}_k  + \Delta_{2,k} \right)\nonumber\\
& = \underbrace{(I - \eta\mathcal{H} )^{k+1} \dif{x}_0}_{p(k+1)} - \underbrace{ \eta \sum_{t = 0}^k (I - \eta\mathcal{H} )^{k-t} \left(\Delta_{1,t} \dif{x}_t  + \Delta_{2,t} \right)}_{q(k+1)},
\end{align}
where we denote 
\begin{align*}
&\mathcal{H} = \nabla^2 \Phi(\tilde{x}),\\
&\Delta_{1,k} = \int_0^1\left[ \nabla^2 \Phi(x'_k + \theta(x_k - x_k')) - \mathcal{H} \right] d\theta,\\
&\Delta_{2,k} = \left[ \widehat{\nabla} \Phi(x_k) -  \nabla \Phi(x_k)  +    \nabla \Phi(x'_k) - \widehat{\nabla} \Phi(x'_k) \right].
 \end{align*}
For the two parts $q(k), p(k)$, we show that $p(k)$ is the dominant term by proving 
\be\bad\label{eq:qprelation}
\|q(k)\| \le \|p(k)\| / 2, \quad \forall k \in [\utime].
\ead\ee
We now prove \eqref{eq:qprelation} by induction. First, \eqref{eq:qprelation} holds trivially when $k = 0$ becase $\|q(0)\| = 0$.
Denote $\lambda_{\min}(\nabla^2 \Phi(\tilde{x})) = -\gamma$, which implies $\gamma\geq\sqrt{\rho_\phi \epsilon}$. Assume \eqref{eq:qprelation} holds for any $t\leq k$. Since $\dif{x}_0=\eta r_0 \e_1$, we have for any $t \le k$:
\be\label{xhat_t}
\|\dif{x}_t\| \le \|p(t)\| + \|q(t)\| \le \frac{3}{2} \|p(t)\| = \frac{3}{2} \| (\textbf{I} - \eta \mathcal{H})^t \dif{x}_0 \| = \frac{3}{2} (1 + \eta\gamma)^t \eta r_0. 
\ee
Therefore, at step $k+1$ we have 
\be\bad \label{eq:qkinc1}
\|q(k+1)\|  = & \left\|\eta \sum_{t = 0}^k (I - \eta\mathcal{H} )^{k-t} \left(\Delta_{1,t}  \dif{x}_t  + \Delta_{2,t} \right) \right\|\\
\le & \left\|\eta \sum_{t = 0}^k (I - \eta\mathcal{H} )^{k-t} \Delta_{1,t}  \dif{x}_t \right \| + \left\|\eta \sum_{t = 0}^k (I - \eta\mathcal{H} )^{k-t}  \Delta_{2,t}  \right\|\\
\le & \eta \rho_{\phi} \uspace  \sum_{t = 0}^k \left\| (I - \eta\mathcal{H} )^{k-t}\right\| \left \|  \dif{x}_t \right\| + \eta \sum_{t = 0}^k \left\| (I - \eta\mathcal{H} )^{k-t} \right\| \left\|\Delta_{2,t}\right\|\\
\le & \frac{3}{2} \eta \rho_{\phi} \uspace  \sum_{t = 0}^k  (1 + \eta\gamma )^{k}\eta r_0 + 2 \eta  \sum_{t = 0}^k  (1 + \eta\gamma )^{k}  \|\nabla \Phi(x_k) - \widehat{\nabla} \Phi(x_k) \|\\
\le & \frac{3}{2} \eta \rho_{\phi} \uspace \utime  (1 + \eta\gamma )^{k}\eta r_0 + 2 \eta \utime (1 + \eta\gamma )^{k}  \|\nabla \Phi(x_k) - \widehat{\nabla} \Phi(x_k) \|\\
\le & 2 \eta \rho_{\Phi} \uspace \utime  (1 + \eta\gamma )^{k}\eta r_0\\
\le &   2 \eta \rho_{\Phi} \uspace \utime \|p(k+1)\|.
\ead\ee
Here the second inequality is by $\| \Delta_{1,k} \| \le \rho_\phi \max\{ \|x_k - \modify{x} \|, \|x_k' - \modify{x} \| \} \le \rho_\phi \uspace$ which uses \eqref{eq:localize}. The third inequality is due to \eqref{xhat_t} and the fact that $I - \eta\mathcal{H} \succeq 0$ which is because $\eta = 1/L_\phi$ and $\lambda_{\max}(\mathcal{H}) \leq L_\phi$. The fifth inequality applies \eqref{eq:error-required}, i.e., $\|\nabla \Phi(x_k) - \widehat{\nabla} \Phi(x_k) \| \le \frac{\epsilon}{16\logt^2 2^\logt} \le \frac{1}{4} \rho_\phi \uspace \eta r_0.$ By noting $2 \eta \rho_{\phi} \uspace \utime = 1/2$, we complete the proof of \eqref{eq:qprelation}. Finally, \eqref{eq:qprelation} implies
\begin{align*}
\max\{\norm{x_\utime - \tilde{x}}, \norm{x'_\utime - \tilde{x}}\} \ge& \frac{1}{2}\norm{\dif{x}_\utime} \ge \frac{1}{2}[\norm{p(\utime)}
-\norm{q(\utime)}] \ge \frac{1}{4}\norm{p(\utime)} \\
=& \frac{(1+\eta\gamma)^{\utime} \eta r_0}{4} \ge \frac{(1+\eta\sqrt{\rho_\phi \epsilon}    )^{\utime} \eta r_0}{4} 
{\ge} 2^{\logt-2} \eta r_0 > \uspace,
\end{align*}
where the second to last inequality uses the fact $(1+x)^{1/x} \ge 2$ for any $x \in (0, 1]$. This contradicts with \eqref{eq:localize}, which finishes the proof of \eqref{eq:phi-dec}. We then characterize the probability, which follows the ideas in \cite{jin2021nonconvex}. Recall $x_0 \sim \text{Uniform}(B_{\tilde{x}}(\eta r))$. We refer to $B_{\tilde{x}}(\eta r)$ the \emph{perturbation ball}, and define the \emph{stuck region} within the perturbation ball to be the set of points starting from which GD requires more than $\utime$ steps to escape:
\begin{equation*}
\cXs \defeq \{ x \in B_{\tilde{x}}(\eta r) ~|~ \{x_t\} \text{~is GD sequence with~} x_0 = x, \text{and~} \Phi(x_\utime) - \Phi(x_0) > -\ufun \}.
\end{equation*}
Although the shape of the stuck region can be very complicated, we know that the width of $\cXs$ along the $\e_1$ direction is at most $\eta \omega$. That is, $\text{Vol}(\cXs) \le \text{Vol}(\ball_0^{d-1}(\eta r)) \eta \omega$. Therefore, 
\begin{align*}
\Pr(\x_0 \in \cXs) = & \frac{\text{Vol}(\cXs)}{\text{Vol}(\ball^{d}_{\tilde{\x}}(\eta r))}
\le \frac{\eta \omega \times \text{Vol}(\ball^{d-1}_0(\eta r))}{\text{Vol} (\ball^{d}_0(\eta r))}\\
=& \frac{\omega}{r\sqrt{\pi}}\frac{\Gamma(\frac{d}{2}+1)}{\Gamma(\frac{d}{2}+\frac{1}{2})}
\le \frac{\omega}{r} \cdot \sqrt{\frac{d}{\pi}} \le  \frac{\ell  \sqrt{d}}{\sqrt{\rho\epsilon}} \cdot \logt^2 2^{8-\logt}. 
\end{align*}
On the event $\{x_0 \not \in \cXs\}$, due to our parameter choice in \eqref{eq:para_minimax}, \eqref{iota-condition}, \eqref{ufun-def} and \eqref{uspace-def}, we have:
\begin{equation*}
\Phi(x_\utime) - \Phi(\tilde{x}) = [\Phi(x_\utime) - \Phi(x_0)] + [\Phi(x_0)- \Phi(\tilde{x})]
\le -\ufun + \epsilon \eta r + \frac{L_\phi \eta^2 r^2}{2} \le -\ufun/2,
\end{equation*}
where the first inequality uses the $L_\phi$-smoothness of $\Phi(\cdot)$.
This finishes the proof.
\end{proof}

\subsection{Proof of Theorem \ref{thm:aid}}\label{sec:B.4}
\begin{proof}
For the AID method, we characterize the iteration complexity for $D$ and $N$ so that \eqref{eq:error-required} holds. By Lemma \ref{lem:grad-err-aid}, we require $D$ and $N$ to satisfy
\be\bad\label{eq:aid-bound}
&\Gamma_1\left( 1 + \frac{\ell}{\mu}\left(1 +  2\sqrt{\kappa}\right)\right)\left( \ell +  \frac{\rho M}{\mu} \right)  \left(1-\frac{\mu}{\ell}\right)^{\frac{D}{2}} + 2\ell\sqrt{\kappa}\Gamma_1\Big( \frac{\sqrt{\kappa}-1}{\sqrt{\kappa}+1} \Big)^N \\
 \le & \min \left\{ \frac{\sqrt{17} }{80\logt^2}, \frac{1}{16\logt^2 2^{\logt}} \right\}\cdot \epsilon.
\ead\ee
It is easy to verify that \eqref{eq:aid-bound} holds when 
\be\bad\label{eq:aid-ND}
D =& 2 \log \left(\frac{2 \Gamma_1\left( 1 + \frac{\ell}{\mu}\left(1 +  2\sqrt{\kappa}\right)\right)\left( \ell +  \frac{\rho M}{\mu} \right) }{ \min \left\{ \frac{\sqrt{17} }{80\logt^2}, \frac{1}{16\logt^2 2^{\logt}} \right\} \epsilon} \right) / \log\left(\frac{1}{1 - \kappa^{-1}}\right) = \gO\left(\kappa \log\left(\frac{1}{\epsilon}\right) \right),\\
N =& \log \left(\frac{4 \ell \sqrt{\kappa} \Gamma_1}{\min \left\{ \frac{\sqrt{17} }{80\logt^2}, \frac{1}{16\logt^2 2^{\logt}} \right\} \epsilon} \right) / \log\left(\frac{1 + \sqrt{\kappa}}{1 - \sqrt{\kappa}}\right) = \gO\left(\sqrt{\kappa} \log\left(\frac{1}{\epsilon}\right) \right).
\ead\ee
Moreover, it is easy to verify that the right hand side of \eqref{eq:aid-bound} is smaller than $\epsilon/5$. Therefore, by choosing $D$ and $N$ as in \eqref{eq:aid-ND}, we know that 
\be\label{AID-bound-nablaphi-nablahatphi}
\|\nabla\Phi(x_k) - \widehat{\nabla}\Phi(x_k)\| \leq \frac{\epsilon}{5}, \quad \forall k.
\ee
There are two possible cases to consider. 
\begin{itemize}
\item \textbf{Case 1:} $\left\| \widehat{\nabla} \Phi(x_k) \right\| > \frac{4}{5} \epsilon$ and $k - k_{perturb} > \utime$. In this case, combining \eqref{AID-bound-nablaphi-nablahatphi} and Lemma \ref{lem:descent} leads to 
\[
\Phi(x_{k+1})  \le \Phi(x_{k}) - \frac{4}{25L_\phi} \epsilon^2 + \frac{1}{25L_\phi}  \epsilon^2 =  \Phi(x_{k}) - \frac{3}{25L_\phi} \epsilon^2.
\]
Therefore, the total iteration number of \textbf{Case 1} can be bounded by 
\be\label{thm3.5-bound1}
\frac{25 L_\phi (\Phi(x_{0}) - \Phi^* )}{3\epsilon^2}.
\ee
\item \textbf{Case 2:} $k - k_{perturb} \le \utime$. This case means that we are within $\utime$ iterations of the last perturbation step, i.e., the step 10 in Algorithm \ref{alg:perturbedalg}. Suppose the last perturbation step happened at the $\bar{k}$-th iteration. Therefore, from the step 10 in Algorithm \ref{alg:perturbedalg} we know that $\|\widehat{\nabla}\Phi(x_{\bar{k}})\|\leq\frac{4}{5}\epsilon$. This together with \eqref{AID-bound-nablaphi-nablahatphi} implies $\|\nabla \Phi(x_{\bar{k}})\|\leq\epsilon$. Now there are two cases to further consider. \textbf{Case 2(i).} If $\lambda_{\min}(\nabla^2\Phi(x_{\bar{k}})) \leq -\sqrt{\rho_{\phi}\epsilon}$, then according to Lemma \ref{lem:escapesaddle} we know that with probability at least $1-\delta$ it holds that
\[\Phi(x_{\bar{k}+\utime}) - \Phi(x_{\bar{k}}) \leq -\ufun/2.\]
So the total iteration number in this case is bounded by
\be\label{thm3.5-bound2}
\frac{ (\Phi(x_{0}) - \Phi^* )\utime}{ \ufun / 2}.
\ee
\textbf{Case 2(ii).}  If $\lambda_{\min}(\nabla^2\Phi(x_{\bar{k}})) > -\sqrt{\rho_{\phi}\epsilon}$, then we have already found an $\epsilon$-local minimum of $\Phi(x)$. 
\end{itemize}
Therefore, combining \eqref{thm3.5-bound1} and \eqref{thm3.5-bound2} we know that the total iteration number before we visit an $\epsilon$-local minimum can be bounded by 
\[
K = \frac{ (\Phi(x_{0}) - \Phi^* )\utime}{ \ufun / 2} + \frac{25 L_\phi (\Phi(x_{0}) - \Phi^* )}{3\epsilon^2} = \tilde{\gO} \left( \kappa^3 \epsilon^{-2}\right).
\]
This completes the proof.
\end{proof}


\section{Proofs of Results in Section \ref{subsec:minimax}}\label{sec:C}

\subsection{Proof of Proposition \ref{pro:minimax}}

\begin{proof}
By Danskin's theorem, the gradient of $\Phi(x)$ is $\nabla \Phi(x) = \nabla_x f(x, y^*(x))$. Therefore the Hessian of $\Phi(x)$ is given by 
\be\bad\label{phi}
\nabla^2 \Phi(x) = \nabla^2_{xx}  f(x, y^*(x)) + \nabla^2_{xy}  f(x, y^*(x)) \cdot \frac{\partial y^*(x)}{\partial x}.  
\ead\ee
Note that the optimality condition for the max-player is $\nabla _y f(x, y^*(x)) = 0$, which leads to
\be \bad\label{y-opt-condition}
\nabla^2_{yx}  f(x, y^*(x)) + \nabla^2_{yy}  f(x, y^*(x)) \cdot \frac{\partial y^*(x)}{\partial x} = 0.  
\ead\ee
Combining \eqref{phi} and \eqref{y-opt-condition} yields
\be\bad\label{phi-2}
\nabla^2 \Phi(x) = \nabla^2_{xx}  f(x, y^*(x)) - \nabla^2_{xy}  f(x, y^*(x)) \nabla^2_{yy}  f(x, y^*(x))^{-1} \nabla^2_{yx}  f(x, y^*(x)).  
\ead\ee
Since $f(x,y)$ is $\mu$-strongly concave with respect to $y$, the second term on the right hand side of \eqref{phi-2}, i.e.,  $- \nabla^2_{xy}  f(x, y^*(x)) \nabla^2_{yy}  f(x, y^*(x))^{-1} \nabla^2_{yx}  f(x, y^*(x))$, is always positive definite. Therefore, we have the following conclusions. 
\begin{itemize}
\item A saddle point of $\Phi(x)$ satisfies $\lambda_{\min}(\nabla^2 \Phi(x)) < 0$, which together with \eqref{phi-2}, implies $\lambda_{\min}(\nabla^2_{xx}  f(x, y^*(x)) ) < 0.$ Therefore, it cannot be a strict local Nash equilibrium. 
\item A strict local Nash equilibrium of $f(x, y)$ satisfies $\lambda_{\min}(\nabla^2_{xx}  f(x, y^*(x)) ) > 0$, which yields $\lambda_{\min}(\nabla^2 \Phi(x)) > 0$. So it must be a local minimum of $\Phi(x)$. 
\end{itemize}
\end{proof}

\subsection{Proof of Proposition \ref{lem:equivalence}}
\begin{proof}
A local minimum of $\Phi(x)$ satisfies 
\be\label{proof-prop-2.4-eq1}
\nabla \Phi(x) = \zero, \quad \nabla^2 \Phi(x) \succ \zero.
\ee 
According to \eqref{phi-2}, the inequality in \eqref{proof-prop-2.4-eq1} is equivalent to
\be
\nabla^2_{xx}  f(x, y^*(x)) - \nabla^2_{xy}  f(x, y^*(x)) \nabla^2_{yy}  f(x, y^*(x))^{-1} \nabla^2_{yx}  f(x, y^*(x)) \succ \zero.
\ee
Moreover, for nonconvex-strongly-concave problems, it holds that $\nabla_{yy}^2f(x, y) \prec 0.$ Therefore, we only need to show that $\nabla \Phi(x) = \zero$ is equivalent to $\nabla f(x, y) = \zero$. Notice that for a pair $(x,y)$ satisfying $\nabla_x f(x, y) = \zero, \nabla_y f(x, y) = \zero$, we have $y = y^*(x)$ from the strongly convexity and $\nabla_x f(x, y^*(x)) = \nabla \Phi(x) = \zero$. Further more, when $\nabla \Phi(x) = \zero$, we can always choose $y = y^*(x)$ so that $\nabla_x f(x, y) = \zero, \nabla_y f(x, y) = \zero$. Therefore, these two conditions are equivalent to each other. 
When function $\Phi(x)$ has a strict local minimum, the local minimax point is guaranteed to exist.
\end{proof}

\subsection{Proof of Theorem \ref{thm:minimax}}

To prove Theorem \ref{thm:minimax}, we need the following lemmas. The first lemma shows that under Assumption \ref{assu:minimax}, the function $\Phi(x)$ is smooth and Hessian-Lipschitz continuous.
\begin{lemma}\label{lemC.1}\cite{chen2021escaping}[Proposition 1]
Suppose $f(x, y)$ satisfies Assumption \ref{assu:minimax}, we have
\begin{itemize}
\item $\Phi(x)$ is $L_\phi$-smooth, where $L_\phi = \ell(1+\kappa)$.
\item $\Phi(x)$ is $\rho_\phi$-Hessian Lipschitz continuous, i.e., \eqref{def-rho-phi-intro} holds, where $\rho_\phi = \rho(1+\kappa)^3$.
\end{itemize}
\end{lemma}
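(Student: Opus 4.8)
The plan is to proceed exactly as in the proof of Proposition~\ref{pro:minimax}: by Danskin's theorem $\nabla\Phi(x)=\nabla_x f(x,y^*(x))$ where $y^*(x)=\argmax_y f(x,y)$, and differentiating the optimality condition $\nabla_y f(x,y^*(x))=0$ gives $\tfrac{\partial y^*(x)}{\partial x}=-\nabla^2_{xy}f(x,y^*(x))\nabla^2_{yy}f(x,y^*(x))^{-1}$, hence
\[
\nabla^2\Phi(x)=\nabla^2_{xx}f(x,y^*(x))-\nabla^2_{xy}f(x,y^*(x))\,\nabla^2_{yy}f(x,y^*(x))^{-1}\,\nabla^2_{yx}f(x,y^*(x)).
\]
Crucially, unlike the general bilevel case (Lemma~\ref{hess-Lipschitz}), the term proportional to $\tfrac{\partial^2 y^*}{\partial^2 x}$ drops out because it is multiplied by $\nabla_y f(x,y^*(x))=0$; this is why no third-order information on $f$ is needed.

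First I would record the auxiliary facts: $\|\nabla^2_{yy}f(x,y)^{-1}\|\le 1/\mu$ from $\mu$-strong concavity, $\|\nabla^2 f\|\le\ell$ from $\ell$-smoothness, and that $y^*(x)$ is $\kappa$-Lipschitz with $\kappa=\ell/\mu$, which follows from $\mu\|y^*(x)-y^*(x')\|\le\|\nabla_y f(x',y^*(x))-\nabla_y f(x',y^*(x'))\|=\|\nabla_y f(x',y^*(x))-\nabla_y f(x,y^*(x))\|\le\ell\|x-x'\|$ using the two optimality conditions and $\ell$-smoothness. The smoothness claim is then immediate: $\|\nabla\Phi(x)-\nabla\Phi(x')\|=\|\nabla_x f(x,y^*(x))-\nabla_x f(x',y^*(x'))\|\le\ell(\|x-x'\|+\|y^*(x)-y^*(x')\|)\le\ell(1+\kappa)\|x-x'\|$, giving $L_\phi=\ell(1+\kappa)$.

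For the Hessian-Lipschitz bound I would split $\|\nabla^2\Phi(x)-\nabla^2\Phi(x')\|$ into the $\nabla^2_{xx}f$ difference, bounded by $\rho(\|x-x'\|+\|y^*(x)-y^*(x')\|)\le\rho(1+\kappa)\|x-x'\|$ via $\rho$-Lipschitzness of the Hessians and $\kappa$-Lipschitzness of $y^*$, plus the difference of the triple product $\nabla^2_{xy}f\cdot(\nabla^2_{yy}f)^{-1}\cdot\nabla^2_{yx}f$. The latter I would handle by a telescoping estimate changing one factor at a time, using $\|X^{-1}-Y^{-1}\|\le\|X^{-1}\|\,\|X-Y\|\,\|Y^{-1}\|$ for the inverse factor (which contributes $\le\tfrac{\rho}{\mu^2}(1+\kappa)\|x-x'\|$) and $\rho$-Lipschitzness plus $\kappa$-Lipschitzness of $y^*$ for the other two factors; collecting terms yields a bound of the form $\rho\big(2\kappa+\kappa^2\big)(1+\kappa)\|x-x'\|=\rho\big((1+\kappa)^2-1\big)(1+\kappa)\|x-x'\|$. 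Adding the two pieces gives exactly $\rho_\phi=\rho(1+\kappa)^3$.

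The calculation is routine and is essentially a drastically simplified instance of the argument already carried out in detail for Lemma~\ref{hess-Lipschitz}; since the statement is quoted from \cite{chen2021escaping}[Proposition~1], the only real work is the bookkeeping of constants. The mild subtlety is ensuring that the three powers of $(1+\kappa)$ assemble precisely (one from each of the three one-factor-at-a-time substitutions, with $2\kappa+\kappa^2$ absorbed into $(1+\kappa)^2-1$) so that the clean form $\rho(1+\kappa)^3$ emerges rather than a looser polynomial in $\kappa$.
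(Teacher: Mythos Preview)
Your argument is correct, and the arithmetic assembling $\rho_\phi=\rho(1+\kappa)^3$ checks out exactly as you describe: the $\nabla^2_{xx}f$ block contributes $\rho(1+\kappa)$, and telescoping the triple product $\nabla^2_{xy}f\,(\nabla^2_{yy}f)^{-1}\,\nabla^2_{yx}f$ contributes $\rho\kappa(1+\kappa)+\rho\kappa^2(1+\kappa)+\rho\kappa(1+\kappa)=\rho(2\kappa+\kappa^2)(1+\kappa)$, which sums to $\rho(1+\kappa)^2(1+\kappa)$. Your observation that the $\tfrac{\partial^2 y^*}{\partial x^2}$ term in \eqref{phi-hessian} disappears because it is multiplied by $\nabla_y f(x,y^*(x))=0$ is the key reason the constant is so clean here compared with Lemma~\ref{hess-Lipschitz}.

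As for the comparison: the paper does \emph{not} supply its own proof of this lemma---it simply cites \cite{chen2021escaping}[Proposition~1] and moves on. So there is nothing to compare your proposal against within the paper itself. Your write-up is essentially the minimax specialization of the computation carried out for Lemma~\ref{hess-Lipschitz}, with the simplifications you identified; this is also the approach taken in the cited reference.
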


The second lemma gives an upper bound for the gradient estimation error with the warm start strategy.
\begin{lemma}\label{lem:grad-err-mini}
Suppose Assumption \ref{assu:minimax} holds. For the GDmax algorithm (i.e., the \textbf{GDmax} option in Algorithm \ref{alg:perturbedalg}) with parameters $D = \gO(\kappa)$, we have,
\be\bad\label{eq:gdmax-error}
\left\|\widehat{\nabla}\Phi(x_k) - \nabla \Phi(x_k)\right\| \le & \ell\left(\widehat{\Delta} + 2\eta\kappa \left(M + \frac{\ell M}{\mu}\right)\right) (1-\kappa^{-1})^{\frac{D}{2}} ,
\ead\ee
where $\widehat{\Delta} = \|y^0 - y^*(x_{0})\|.$
\end{lemma}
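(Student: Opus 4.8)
The plan is to follow the same template as the proof of Lemma \ref{lem:grad-err-aid} for the AID case, which becomes considerably simpler here because the GDmax hypergradient involves no conjugate-gradient inner solve. First I would use Danskin's theorem, which gives $\nabla\Phi(x_k)=\nabla_x f(x_k,y^*(x_k))$, together with the definition $\widehat{\nabla}\Phi(x_k)=\nabla_x f(x_k,y_k^D)$ and the $\ell$-smoothness of $f$ from Assumption \ref{assu:minimax}, to obtain
\[
\left\|\widehat{\nabla}\Phi(x_k)-\nabla\Phi(x_k)\right\|\le \ell\left\|y_k^D-y^*(x_k)\right\|.
\]
So everything reduces to controlling the inner-loop accuracy $\|y_k^D-y^*(x_k)\|$.

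Second, since $f(x_k,\cdot)$ is $\mu$-strongly concave and $\ell$-smooth, running $D$ gradient-ascent steps with $\tau=1/\ell$ contracts the error at the linear rate $(1-\mu/\ell)^{1/2}=(1-\kappa^{-1})^{1/2}$ per step (the same estimate as \eqref{eq:gdrate}), so $\|y_k^D-y^*(x_k)\|\le (1-\kappa^{-1})^{D/2}\|y_k^0-y^*(x_k)\|$. It then remains to bound the warm-start error $a_k:=\|y_k^0-y^*(x_k)\|$. Using $y_k^0=y_{k-1}^D$, the triangle inequality, the contraction at step $k-1$, the $\kappa$-Lipschitz continuity of $y^*(\cdot)$ from \cite{ghadimi2018approximation}, and the update $x_k=x_{k-1}-\eta\widehat{\nabla}\Phi(x_{k-1})$, I get the recursion
\[
a_k\le (1-\kappa^{-1})^{D/2}\,a_{k-1}+\kappa\eta\left\|\widehat{\nabla}\Phi(x_{k-1})\right\|.
\]
Choosing $D=\gO(\kappa)$ large enough that $(1-\kappa^{-1})^{D/2}\le 1/2$, and using the uniform bound $\|\widehat{\nabla}\Phi(x_{k-1})\|\le M+\ell M/\mu$, this is a contraction-plus-bounded-forcing recurrence that unrolls, since $a_0=\|y^0-y^*(x_0)\|=\widehat{\Delta}$, to
\[
a_k\le 2^{-k}\widehat{\Delta}+2\kappa\eta\left(M+\frac{\ell M}{\mu}\right)\le \widehat{\Delta}+2\kappa\eta\left(M+\frac{\ell M}{\mu}\right).
\]

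Finally I would chain the three displays: substituting the bound on $a_k$ into $\|y_k^D-y^*(x_k)\|\le(1-\kappa^{-1})^{D/2}a_k$ and then into the smoothness bound yields exactly
\[
\left\|\widehat{\nabla}\Phi(x_k)-\nabla\Phi(x_k)\right\|\le \ell\left(\widehat{\Delta}+2\eta\kappa\left(M+\frac{\ell M}{\mu}\right)\right)(1-\kappa^{-1})^{D/2},
\]
which is the claimed bound. There is no genuine difficulty in this argument; the only points needing a little care are picking $D=\gO(\kappa)$ so that the geometric series in the warm-start recursion closes, and confirming that the iterates stay in a region where $\widehat{\nabla}\Phi$ is uniformly bounded (which follows from the Lipschitz continuity of $f$ and the compact-sublevel-set part of Assumption \ref{assu:minimax}).
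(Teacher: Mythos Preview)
Your proposal is correct and follows essentially the same approach as the paper's proof: bound the hypergradient error by $\ell\|y_k^D-y^*(x_k)\|$ via smoothness, apply the GD contraction rate $(1-\kappa^{-1})^{D/2}$, set up the warm-start recursion for $a_k=\|y_k^0-y^*(x_k)\|$ using the $\kappa$-Lipschitz continuity of $y^*(\cdot)$, pick $D=\gO(\kappa)$ so the contraction factor is at most $1/2$, and unroll using the uniform bound $\|\widehat{\nabla}\Phi\|\le M+\ell M/\mu$. The steps and constants match the paper's argument exactly.
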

\begin{proof}
The gradient estimation error for minimax problem can be bounded by
\be\bad\label{eq:gdmax-error-bound-1}
\left\|\widehat{\nabla}\Phi(x_k) - \nabla \Phi(x_k)\right\|  = &\left\|\nabla_x f(x_k, y_k^D) -\nabla_x f(x_k, y^*(x_k))\right\| \le  \ell \|y_k^D - y^*(x_k)\|\\
 \le & \ell(1-\kappa^{-1})^{\frac{D}{2}} \|y_k^0 - y^*(x_k)\|,
\ead\ee
where the last inequality follows \eqref{eq:gdrate}. By the warm start strategy $y_k^0 = y_{k-1}^D$, we have
\be\bad\label{eq:gdmax-error-bound}
\|y_k^0 - y^*(x_k)\| \le &  \|y_{k-1}^D - y^*(x_{k-1})\| + \|y^*(x_{k-1}) - y^*(x_k)\|\\
\le & (1-\kappa^{-1})^{\frac{D}{2}} \|y_{k-1}^0 - y^*(x_{k-1})\| + \kappa\|x_k - x_{k-1}\|\\
\le & (1-\kappa^{-1})^{\frac{D}{2}} \|y_{k-1}^0 - y^*(x_{k-1})\| + \eta\kappa\|\widehat{\nabla} \Phi(x_{k-1})\|.
\ead\ee
By setting 
\be\bad\label{eq:gdmax-error-D}
D > 2 \log 2 / \log\left(\frac{1}{1-\kappa^{-1}}\right) = \gO(\kappa),
\ead\ee
we have 
\be\bad\label{eq:gdmax-error-bound2}
\|y_k^0 - y^*(x_k)\| \le &  \|y_{k-1}^D - y^*(x_{k-1})\| + \|y^*(x_{k-1}) - y^*(x_k)\|\\
\le & \frac{1}{2}\|y_{k-1}^0 - y^*(x_{k-1})\| + \eta\kappa\|\widehat{\nabla} \Phi(x_{k-1})\|\\
\le & \left(\frac{1}{2}\right)^k \|y^0 - y^*(x_{0})\|  + \eta\kappa \sum_{j=0}^{k-1} \left(\frac{1}{2}\right)^{k - 1 - j}\|\widehat{\nabla} \Phi(x_{k-1})\|\\
\le & \widehat{\Delta} + 2\eta\kappa \left(M + \frac{\ell M}{\mu}\right),
\ead\ee
where the last inequality uses $\|\widehat{\nabla} \Phi(x_{k-1})\| \le  \left(M + \frac{\ell M}{\mu}\right)$ for any $k.$ Combining \eqref{eq:gdmax-error-bound2} and \eqref{eq:gdmax-error-bound-1} yields
\be\bad\label{eq:gdmax-error-bound-3}
\left\|\widehat{\nabla}\Phi(x_k) - \nabla \Phi(x_k)\right\|  \le  \ell\left(\widehat{\Delta} + 2\eta\kappa \left(M + \frac{\ell M}{\mu}\right)\right) (1-\kappa^{-1})^{\frac{D}{2}} ,
\ead\ee
which completes the proof.
\end{proof}

We now give the proof of Theorem \ref{thm:minimax}.

\begin{proof}
By Lemma \ref{lem:grad-err-mini}, we require $D$ to satisfy 
\be\bad\label{eq:gdmax-bound}
 \ell\left(\widehat{\Delta} + 2\eta\kappa \left(M + \frac{\ell M}{\mu}\right)\right) (1-\kappa^{-1})^{\frac{D}{2}} \le  \min \left\{ \frac{\sqrt{17} }{80\logt^2}, \frac{1}{16\logt^2 2^{\logt}} \right\}\cdot \epsilon.
\ead\ee
It is easy to verify that 
\be\bad\label{eq:gdmax-D}
D =&  2 \log \left(\frac{\ell\left(\widehat{\Delta} + 2\eta\kappa \left(M + \frac{\ell M}{\mu}\right)\right) }{\min \left\{ \frac{\sqrt{17} }{80\logt^2}, \frac{1}{16\logt^2 2^{\logt}} \right\} \epsilon} \right) / \log\left(\frac{1}{1 - \kappa^{-1}}\right) = \gO\left(\kappa \log\left(\frac{1}{\epsilon}\right)\right),
\ead\ee
satisfies \eqref{eq:gdmax-bound}. The rest of the proof is the same as the proof of Theorem \ref{thm:aid} in Section \ref{sec:B.4}.
\end{proof}

\section{Proofs of Results in Section \ref{sec:sto-bilevel}}

\subsection{Proof of Lemma \ref{lem:neon}}
\begin{proof}
Define the following function:
\be\bad\label{eq:hatphi}
\hat{\Phi}_x (u) = \Phi (x + u) - \Phi (x) - \nabla \Phi(x) ^\top u.
\ead\ee
We first characterize the required estimation error, which is used in the later proof. Specifically, we choose the inner iteration number $D$ (step 9 of Algorithm \ref{alg:iNEON}) and $N$ (used in steps 6 and 12 of Algorithm \ref{alg:iNEON})  such that for any $k \le \utime$, the following inequalities hold:
\be\bad\label{eq:est-error-neon}
&\| \widehat{\nabla} \Phi(\tilde{x} + u_k) -  \nabla \Phi(\tilde{x} + u_k)  \| \le \min\left\{ \frac{\sqrt{17} }{40\logt^2},  \frac{1}{16\logt^2 2^{\logt/4}}, \frac{9^{1/3} - 2}{8\logt} , \frac{1}{750\logt^2}\right\} \epsilon,\\
& M\left\|  y_k^D(\tilde{x} + u_{k}) - y^*(\tilde{x} + u_{k})) \right\| \le \frac{1}{750\logt^3 L_\phi} \epsilon^2.
\ead\ee 
Note that both inequalities in \eqref{eq:est-error-neon} also imply the following inequality, which corresponds to the case $u_k = 0$:
\be\bad\label{eq:est-error-neon-2}
&\| \widehat{\nabla} \Phi(\tilde{x}) -  \nabla \Phi(\tilde{x})  \| \le \min\left\{ \frac{\sqrt{17} }{40\logt^2},  \frac{1}{16\logt^2 2^{\logt/4}}, \frac{9^{1/3} - 2}{8\logt} , \frac{1}{750\logt^2}\right\} \epsilon,\\
&M\left\|  y_k^D(\tilde{x} ) - y^*(\tilde{x} )) \right\| \le \frac{1}{750\logt^3 L_\phi} \epsilon^2.
\ead\ee
Since the lower level problem is strongly convex, combining with Lemma \ref{lem:grad-err-aid}, we can set $D$ and $N$ in Algorithm \ref{alg:iNEON} as
\be\bad\label{eq:aid-ND-neon}
D =& \max\left\{2 \log \left(\frac{2 \left( 1 + \frac{\ell}{\mu}\left(1 +  2\sqrt{\kappa}\right)\right)\left( \ell +  \frac{\rho M}{\mu} \right)\Gamma_1}{ \min\left\{ \frac{\sqrt{17} }{40\logt^2},  \frac{1}{16\logt^2 2^{\logt/4}}, \frac{9^{1/3} - 2}{8\logt} , \frac{1}{750\logt^2}\right\} \epsilon} \right) , 2\log\left( \frac{750ML_\phi\logt^3\Gamma_1}{\epsilon^2} \right) \right\}/ \log\left(\frac{1}{1 - \kappa^{-1}}\right)\\
 = &\gO\left(\kappa \log\left(\frac{1}{\epsilon}\right) \right),\\
N =& \log \left(\frac{4 \ell \sqrt{\kappa} \Gamma_1}{\min\left\{ \frac{\sqrt{17} }{40\logt^2},  \frac{1}{16\logt^2 2^{\logt/4}}, \frac{9^{1/3} - 2}{8\logt} , \frac{1}{750\logt^2}\right\} \epsilon} \right) / \log\left(\frac{1 + \sqrt{\kappa}}{1 - \sqrt{\kappa}}\right) = \gO\left(\sqrt{\kappa} \log\left(\frac{1}{\epsilon}\right) \right),
\ead\ee
such that \eqref{eq:est-error-neon} holds in each iteration. Next, we show that with probability at least $1-\frac{L_\phi \sqrt{d}}{\sqrt{\rho_{\phi}\epsilon}}\cdot \logt^2 2^{8-\logt/4}$, $\hat{\Phi}_{\tilde{x}}(u_\utime) - \hat{\Phi}_{\tilde{x}}(u_0) \le -\ufun$ holds,
where $\hat{\Phi}_x(u)$ is defined in \eqref{eq:hatphi}. Note that it is easy to verify that $\hat{\Phi}_x(u)$ is $L_\phi$-smooth, and it yields 
\begin{align}\label{eq:phi-smooth-neon}
    \bad
        \hat{\Phi}_{\tilde{x}}(u_{k+1}) \le & \hat{\Phi}_{\tilde{x}}(u_{k}) + \la \nabla \hat{\Phi}_{\tilde{x}}(u_{k}) , u_{k+1} - u_{k} \ra + \frac{L_\phi}{2} \|u_{k+1} - u_{k}\|_2^2\\
= & \hat{\Phi}_{\tilde{x}}(u_{k}) + \la \nabla\Phi (\tilde{x} + u_{k}) - \nabla\Phi (\tilde{x}),   u_{k+1} - u_{k}\ra + \frac{L_\phi}{2} \|u_{k+1} - u_{k}\|_2^2\\
= & \hat{\Phi}_{\tilde{x}}(u_{k}) + \la\widehat{\nabla}\Phi (\tilde{x} + u_{k}) - \widehat{\nabla}\Phi (\tilde{x}), u_{k+1} - u_{k}\ra + \la\nabla\Phi (\tilde{x} + u_{k}) - \widehat{\nabla}\Phi (\tilde{x} + u_{k}), u_{k+1} - u_{k}\ra\\
& + \la \widehat{\nabla}\Phi (\tilde{x}) - \nabla\Phi (\tilde{x}), u_{k+1} - u_{k}\ra + \frac{L_\phi}{2} \|u_{k+1} - u_{k}\|_2^2\\
\le &\hat{\Phi}_{\tilde{x}}(u_{k}) - \frac{1}{\eta} \| u_{k+1} - u_{k}\|^2 + \frac{1}{8\eta}\| u_{k+1} - u_{k}\|^2 +  2\eta \| \nabla\Phi (\tilde{x} + u_{k}) - \widehat{\nabla}\Phi (\tilde{x} + u_{k}) \|^2 \\
& + \frac{1}{8\eta}\| u_{k+1} - u_{k}\|^2 +  2\eta\|\widehat{\nabla}\Phi (\tilde{x}) - \nabla\Phi (\tilde{x})\|^2 + \frac{1}{2\eta} \|u_{k+1} - u_{k}\|_2^2\\
= & \hat{\Phi}_{\tilde{x}}(u_{k}) - \frac{1}{4\eta} \| u_{k+1} - u_{k}\|^2  +  2\eta \| \nabla\Phi (\tilde{x} + u_{k}) - \widehat{\nabla}\Phi (\tilde{x} + u_{k}) \|^2\\
& +  2\eta\|\widehat{\nabla}\Phi (\tilde{x}) - \nabla\Phi (\tilde{x})\|^2,
    \ead
\end{align}
where the first equality is from \eqref{eq:hatphi}, the second inequality is from \eqref{eq:u-update-neon} and Young's inequality. 
We then follow the same ideas as in the proof of Lemma \ref{lem:escapesaddle}. We design two coupling sequences $\{u_t\}$, $\{w_t\}$ generated by iNEON (Algorithm \ref{alg:iNEON}) with initial points $u_0$ and $w_0$, respectively. We require the two sequences to satisfy: \textbf{Condition (i).} $\max\{\norm{u_0}, \norm{w_0}\} \le \eta r$; and \textbf{Condition (ii).} $u_0 - w_0 = \eta r_0 \e_1$, where $\e_1$ is the minimum eigenvector of $\nabla^2 \Phi(\tilde{x})$ with $\|\e_1\|=1$ and $r_0 > \omega \defeq 2^{2-\logt/4} L_\phi \uspace $. The rest is to prove
\be \label{eq:phi-dec-noen2}
\min\{\hat{\Phi}_x(u_\utime) - \hat{\Phi}_x(u_0) , \hat{\Phi}_x(w_\utime) - \hat{\Phi}_x(w_0)\} \le - \ufun.
\ee
We prove \eqref{eq:phi-dec-noen2} by contradiction. Assume the contrary holds:
\begin{align}\label{eq:noen-contrary}
 \min\{\hat{\Phi}_{\tilde{x}}(u_\utime) - \hat{\Phi}_{\tilde{x}}(u_0) , \hat{\Phi}_{\tilde{x}}(w_\utime) - \hat{\Phi}_{\tilde{x}}(w_0)\}  > - \ufun.
\end{align}
First, by the update of $u_k$ (i.e., \eqref{eq:u-update-neon} or step 13 of Algorithm \ref{alg:iNEON}), we have for any $\tau \le k$:
\begin{align}
&\norm{u_\tau - u_0} \nonumber\\& \le \sum_{t = 1}^k \norm{u_t - u_{t - 1}}
{\le} \left[k \sum_{t = 1}^k \norm{u_t - u_{t - 1}}^2\right]^{\frac{1}{2}} \le \left[\utime \sum_{t = 1}^\utime \norm{u_t - u_{t - 1}}^2\right]^{\frac{1}{2}}\nonumber\\
&\le \sqrt{4\eta \utime \left(\hat{\Phi}_{\tilde{x}}(u_0) - \hat{\Phi}_{\tilde{x}}(u_\utime) + 2\eta \left(\sum_{t = 1}^\utime \| \nabla\Phi (\tilde{x} + u_{t}) - \widehat{\nabla}\Phi (\tilde{x} + u_{t}) \|^2 + \|\widehat{\nabla}\Phi (\tilde{x}) - \nabla \Phi (\tilde{x})\|^2\right)\right)} \nonumber\\
& \le \sqrt{4\eta \utime \ufun + 8 \eta^2 \utime^2 \cdot \frac{17 }{800\logt^4} \cdot \epsilon^2 }+ \eta r \le \uspace,\label{eq92}
\end{align}
where the fourth inequality is obtained by \eqref{eq:phi-smooth-neon}, the fifth inequality follows from \eqref{eq:noen-contrary}, \eqref{eq:est-error-neon} and \eqref{eq:est-error-neon-2}, and the last inequality is due to the parameter choice in \eqref{eq:para_neon}, and $\logt \ge 1$, $L_\phi/\sqrt{\rho_{\phi}\epsilon} \ge 1$. 
Therefore, from \eqref{eq92} we have for any $k \le \utime$:
\begin{align}\label{eq:localize-neon}
\max\{\norm{u_k }, \norm{w_k}\} 
\le \max\{\norm{u_k - u_0}, \norm{w_k - w_0}\}  +\max\{\norm{u_0}, \norm{w_0 }\} \le \uspace. 
\end{align}
On the other hand, we can write the update equation for the difference $v_k \defeq u_k - w_k$ as: 
\begin{align}
v_{k+1}  = & v_k - \eta\left[ \widehat{\nabla} \Phi(\tilde{x} + u_k) -  \widehat{\nabla} \Phi(\tilde{x} +w_k) \right]\nonumber\\
 =& v_k - \eta\left[ \nabla \Phi(\tilde{x} +u_k) -  \nabla \Phi(\tilde{x} +w_k) \right] \nonumber\\
& - \eta\left[ \widehat{\nabla} \Phi(\tilde{x} +u_k) -  \nabla \Phi(\tilde{x} +u_k)  +    \nabla \Phi(\tilde{x} +w_k) - \widehat{\nabla} \Phi(\tilde{x} +w_k) \right]\nonumber\\
= & \underbrace{(I - \eta\mathcal{H} )^{k+1} v_0}_{p(k+1)} - \underbrace{ \eta \sum_{t = 0}^k (I - \eta\mathcal{H} )^{k-t} \left(\Delta_{1,t} v_t  + \Delta_{2, t} \right)}_{q(k+1)},
\end{align}
where we denote 
\begin{align}
&\mathcal{H} = \nabla^2 \Phi(\modify{x}),\\
&\Delta_{1, k} = \int_0^1\left[ \nabla^2 \Phi(\tilde{x} + w_k + \theta(u_k - w_k)) - \mathcal{H} \right] d\theta,\\
&\Delta_{2, k} = \left[ \widehat{\nabla} \Phi(\tilde{x} + u_k) -  \nabla \Phi(\tilde{x} + u_k)  +    \nabla \Phi(\tilde{x} + w_k) - \widehat{\nabla} \Phi(\tilde{x} + w_k) \right].
\end{align}
We then prove $\|q(k)\| \le \|p(k)\| / 2, \forall k \in [\utime]$. We prove it by induction. It is easy to check that it holds at $k = 0.$ Assume it holds for any $t\leq k.$ Denote $\lambda_{\min}(\nabla^2 \Phi(\tilde{x})) = -\gamma$. Since $v_0$ lies in the direction of the minimum eigenvector of $\nabla^2 \Phi(\tilde{x}_0)$, we have for any $t \le k$:
\be\bad
\|v_t\| \le \|p(t)\| + \|q(t)\|  \leq \frac{3}{2} \|p(t)\| \le \frac{3}{2} (1 + \eta\gamma)^t \eta r_0. 
\ead\ee
At step $k+1$, similar to \eqref{eq:qkinc1}, we have
\be\bad\label{eq:qkinc-neon}
\|q(k+1)\|  \le   2 \eta \rho_{\Phi} \uspace \utime \|p(k+1)\|,
\ead\ee
where we used \eqref{eq:est-error-neon}. 
Combining \eqref{eq:qkinc-neon} with the choice of parameters in \eqref{eq:para_neon}
finishes the proof of $\|q(k)\| \le \|p(k)\| / 2$. Therefore, we have
\begin{align*}
\max\{\norm{u_\utime }, \norm{w_\utime}\} \ge& \frac{1}{2}\norm{v(\utime)} \ge \frac{1}{2}[\norm{p(\utime)}
-\norm{q(\utime)}] \ge \frac{1}{4}[\norm{p(\utime)} \\
=& \frac{(1+\eta\gamma)^{\utime} \eta r_0}{4}
{\ge} 2^{\logt / 4-2} \eta r_0 > \uspace,
\end{align*}
where the second to last inequality uses the fact $(1+x)^{1/x} \ge 2$ for any $x \in (0, 1]$. This contradicts with \eqref{eq:localize-neon}, which finishes the proof of \eqref{eq:phi-dec-noen2}. To characterize the probability, we define the \emph{stuck region}:
\begin{equation*}
\cXs \defeq \{ u \in \ball_{0}(\eta r) ~|~ \{u_t\} \text{~is the iNEON sequence with~} u_0 = u, \text{and~} \hat{\Phi}_{\tilde{x}}(u_\utime) - \hat{\Phi}_{\tilde{x}}(u_0)  > -\ufun \}.
\end{equation*}
Although the shape of the stuck region can be very complicated, we know that the width of $\cXs$ along the $\e_1$ direction is at most $\eta \omega$. That is, $\text{Vol}(\cXs) \le \text{Vol}(\ball_0^{d-1}(\eta r)) \eta \omega$. Therefore:
\begin{align*}
\Pr(u_0 \in \cXs) \le \frac{\eta \omega \times \text{Vol}(\ball^{d-1}_0(\eta r))}{\text{Vol} (\ball^{d}_0(\eta r))}
= \frac{\omega}{r\sqrt{\pi}}\frac{\Gamma(\frac{d}{2}+1)}{\Gamma(\frac{d}{2}+\frac{1}{2})}
\le \frac{\omega}{r} \cdot \sqrt{\frac{d}{\pi}} \le  \frac{\ell  \sqrt{d}}{\sqrt{\rho\epsilon}} \cdot \logt^2 2^{8-\logt/4}. 
\end{align*}
On the event $\{u_0 \not \in \cXs\}$, due to the choice of the parameters in \eqref{eq:para_neon}, we have with probability at least $1- \delta$, where $\delta > \frac{\ell  \sqrt{d}}{\sqrt{\rho\epsilon}} \cdot \logt^2 2^{8-\logt/4}$, that 
$$\hat{\Phi}_{\tilde{x}}(u_\utime) - \hat{\Phi}_{\tilde{x}}(u_0) < -\ufun.$$
Therefore, there exists some $k' \le \utime$ such that $\hat{\Phi}_{\tilde{x}}(u_{k'}) - \hat{\Phi}_{\tilde{x}}(u_0) < -\ufun$ and $\|u_\tau\| \le \uspace, \forall \tau < k'$. In other words, $k'$ is the first iteration that satisfies 
\be\label{k'-satisfy}
\hat{\Phi}_{\tilde{x}}(u_{k'}) - \hat{\Phi}_{\tilde{x}}(u_0) < -\ufun.
\ee 
By the update $u_{k'} = u_{k'-1} - \eta (\widehat{\nabla} \Phi(\tilde{x} + u_{k'-1}) - \widehat{\nabla} \Phi(\tilde{x}) ),$ we can bound the norm of $u_{k'}$:
\be\bad\label{bound-uk'}
\|u_{k'}\| \le& \| u_{k'-1}\| + \eta \| \nabla \Phi(\tilde{x} + u_{k'-1}) - \nabla \Phi(\tilde{x}) \| + \eta \| \widehat{\nabla} \Phi(\tilde{x} + u_{k'-1}) - \nabla \Phi(\tilde{x}+ u_{k'-1}) \|\\
& + \eta \| \widehat{\nabla} \Phi(\tilde{x} ) - \nabla \Phi(\tilde{x}) \|\\
\le& \| u_{k'-1}\| + \eta L_\phi \|  u_{k'-1} \| + 2 \eta \| \widehat{\nabla} \Phi(\tilde{x} + u_{k'-1}) - \nabla \Phi(\tilde{x}+ u_{k'-1}) \|\\
= & 2 \uspace + 2 \eta \| \widehat{\nabla} \Phi(\tilde{x} + u_{k'-1}) - \nabla \Phi(\tilde{x}+ u_{k'-1}) \|\\
\le&  9^{1/3}\uspace,
\ead\ee
where the second inequality is by the smoothness of $\Phi(x)$ given in Lemma \ref{lemC.1}, the equality is due to the parameter choice in \eqref{eq:para_neon}, and the last inequality is obtained by \eqref{eq:est-error-neon}. 
Moreover, by \eqref{k'-satisfy} and the smoothness of $\Phi(x)$, we have 
\be\bad\label{phixf-ineq}
 \Phi (\tilde{x} + u_{k'}) - \Phi (\tilde{x}) - \nabla \Phi(\tilde{x}) ^\top u_{k'} \le & \Phi (\tilde{x} + u_0) - \Phi (\tilde{x}) - \nabla \Phi(\tilde{x}) ^\top u_0 - \ufun \le  \frac{L_\phi}{2} \|u_0\|^2 - \ufun \\
 \le & \frac{L_\phi}{2} \cdot \frac{\epsilon^2}{160000\logt^6 L_\phi^2} -  \frac{1}{25\logt^3} \sqrt{\frac{\epsilon^3}{\rho_\phi}} \le \left(\frac{1}{320000} -  \frac{1}{25} \right)\frac{1}{\logt^3} \sqrt{\frac{\epsilon^3}{\rho_\phi}}, \\
 \ead\ee
where the third inequality is due to the parameter choice in \eqref{eq:para_neon}, and the last inequality applies $\logt>1$ and $L_\phi / \sqrt{\rho_\phi \epsilon} > 1.$ 
From \eqref{eq:est-error-neon} we can get
\be\bad\label{eq:u-bound}
&\left\| \hat{\Phi} (\tilde{x} + u_{k'}) - \hat{\Phi} (\tilde{x}) - \widehat{\nabla} \Phi(\tilde{x}) ^\top u_{k'}  - \left(  \Phi (\tilde{x} + u_{k'}) - \Phi (\tilde{x}) - \nabla \Phi(\tilde{x}) ^\top u_{k'}\right) \right\|\\
\le &\left\| \hat{\Phi} (\tilde{x} + u_{k'}) - \Phi (\tilde{x} + u_{k'}) \right\| +  \left\|\hat{\Phi} (\tilde{x}) - \Phi (\tilde{x})  \right\| +  \left\| \widehat{\nabla} \Phi(\tilde{x})  - \nabla \Phi(x) \right\| \left\| u_{k'} \right\|\\
\le &\left\| f(\tilde{x} + u_{k'}, y_k^D(\tilde{x} + u_{k'})) - f(\tilde{x} + u_{k'}, y^*(\tilde{x} + u_{k'})) \right\| +  \left\| f(\tilde{x}, y_k^D(\tilde{x})) - f(\tilde{x} , y^*(\tilde{x} ))  \right\|\\
& +  \left\| \widehat{\nabla} \Phi(\tilde{x})  - \nabla \Phi(x) \right\| \left\| u_{k'} \right\|\\
\le & M\left\|  y_k^D(\tilde{x} + u_{k'}) - y^*(\tilde{x} + u_{k'})) \right\| + M \left\|y_k^D(\tilde{x}) - y^*(\tilde{x} )  \right\| +  3\uspace \left\| \widehat{\nabla} \Phi(\tilde{x})  - \nabla \Phi(x) \right\| \\
\le & \frac{2}{750\logt^3 L_\phi} \epsilon^2 + \frac{3}{4\logt} \sqrt{\frac{\epsilon}{\rho_\phi}} \frac{1}{750\logt^2} \epsilon\\
\le & \frac{2}{750\logt^3 L_\phi} \epsilon^2 \cdot \frac{L_\phi}{\sqrt{\rho_\phi\epsilon}} + \frac{1}{750\logt^3} \sqrt{\frac{\epsilon^3}{\rho_\phi}} \\
= & \frac{1}{250\logt^3} \sqrt{\frac{\epsilon^3}{\rho_\phi}},
\ead\ee
where the third inequality is due to Assumption \ref{assu:bilevel} and \eqref{bound-uk'}, the fourth inequality uses \eqref{eq:est-error-neon}, \eqref{eq:est-error-neon-2} and the parameter choice in \eqref{eq:para_neon}, and the fifth inequality is due to $L_\phi > \sqrt{\rho_\phi\epsilon}$. 
Combining \eqref{phixf-ineq} and \eqref{eq:u-bound} we get
\be\bad
\hat{\Phi} (\tilde{x} + u_{k'}) - \hat{\Phi} (\tilde{x}) - \widehat{\nabla} \Phi(\tilde{x}) ^\top u_{k'} \le  \left(\frac{1}{320000} -  \frac{1}{25} + \frac{1}{250}\right) \frac{1}{\logt^3} \sqrt{\frac{\epsilon^3}{\rho_\phi}} = - \frac{11519}{12800} \ufun,
\ead\ee
i.e., the stopping criterion in step 14 of Algorithm \ref{alg:iNEON} is satisfied. Therefore, this shows that with high probability, the Algorithm \ref{alg:iNEON} terminates with the stopping criterion in step 14 being satisfied. When this happens, we have
\begin{align}
&\Phi (x + u_{k'}) - \Phi (x) - \nabla \Phi(x) ^\top u_{k'} \nonumber\\
\le &\hat{\Phi} (\tilde{x} + u_{k'}) - \hat{\Phi} (\tilde{x}) - \hat{\nabla} \Phi(\tilde{x}) ^\top u_{k'}  \nonumber\\&+ 
\left\| \hat{\Phi} (\tilde{x} + u_{k'}) - \hat{\Phi} (\tilde{x}) - \hat{\nabla} \Phi(\tilde{x}) ^\top u_{k'}  - \left(  \Phi (x + u_{k'}) - \Phi (x) - \nabla \Phi(x) ^\top u_{k'}\right) \right\|\nonumber\\
 \le &  \left(\frac{1}{320000} -  \frac{1}{25} + \frac{1}{250} + \frac{1}{250} \right) \frac{1}{\logt^3} \sqrt{\frac{\epsilon^3}{\rho_\phi}},\label{eq105}
\end{align}
where we used \eqref{eq:u-bound}. 
By the Hessian Lipschitz continuity of $\Phi(x)$, we have
\be\bad
\frac{1}{2}  u_{k'} \nabla^2 \Phi (x) u_{k'} \le& \Phi (x + u_{k'}) -  \Phi (x) - \nabla \Phi(x) ^\top u_{k'} + \frac{\rho_\phi}{6} \|u_{k'}\|^3  \\
\le & \left(\frac{1}{320000} -  \frac{1}{25}+ \frac{1}{250} + \frac{1}{250}  \right)\frac{1}{\logt^3} \sqrt{\frac{\epsilon^3}{\rho_\phi}}  + \frac{\rho_\phi}{6} \cdot 9 \uspace^3\\
\le & \left(\frac{1}{320000} -  \frac{1}{25}+ \frac{1}{250} + \frac{1}{250}  \right)\frac{1}{\logt^3} \sqrt{\frac{\epsilon^3}{\rho_\phi}}  + \frac{3}{128\logt^3}\sqrt{\frac{\epsilon^3}{\rho_\phi}}  \\
\le & -\frac{1}{5} \ufun,
\ead\ee
where the second inequality follows from \eqref{eq105} and \eqref{bound-uk'}.
Finally, by \eqref{bound-uk'} we have
\be\bad
 \frac{ u_{k'} \nabla^2 \Phi (x) u_{k'} }{\|u_{k'}\|^2}  \le & -\frac{2/5 \ufun} {9\uspace^2} = - \frac{32}{1125\logt} \sqrt{\rho_\phi \epsilon} \le - \frac{1}{40\logt} \sqrt{\rho_\phi \epsilon}.
\ead\ee
If NEON returns 0, by Bayes theorem, we have $\lambda_{\min} (\nabla^2 \Phi(x) ) \ge - \sqrt{\rho_\phi \epsilon}$ with high probability $1 - O(\delta)$ for a sufficiently small $\delta.$
\end{proof}

\subsection{Proof of Theorem \ref{thm:stocneon}}

We first provide several useful lemmas.

\begin{lemma}\cite{xu2018first}[Lemma 4] \label{lem:err-hess}
Suppose Assumption \ref{ass:stoc} holds. Let $\nabla^2 F_\gD (x) =\frac{1}{n} \sum_{i = 1}^{D_f} \nabla^2 F(x; \xi_i)$. For any $\epsilon, \delta \in (0,1)$, $x \in \br^d$ when $D_f \ge \frac{16L^2\log(2d/\delta)}{\epsilon^2}$, we have with probability at least $1-\delta$:
\be\bad
\|\nabla^2 F_\gD(x)  - \nabla^2 F(x) \| \le \epsilon.
\ead\ee
\end{lemma}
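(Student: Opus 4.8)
The plan is to view $\nabla^2 F_{\gD}(x)$ as an empirical mean of i.i.d.\ symmetric random matrices and apply a matrix concentration inequality. Set $X_i \defeq \nabla^2 F(x;\xi_i) - \nabla^2 F(x)$ for $i=1,\dots,D_f$, where $\nabla^2 F(x) = \BE_\xi \nabla^2 F(x;\xi)$ denotes the population Hessian. The $X_i$ are independent, identically distributed, symmetric, and centered. Under Assumption~\ref{ass:stoc} the stochastic gradient $\nabla F(\cdot;\xi)$ is $L$-Lipschitz for every $\xi$ (this is the Lipschitz-gradient part of Assumption~\ref{assu:bilevel}, with the constant written $L$ in this lemma), so $\|\nabla^2 F(x;\xi)\|\le L$ almost surely; hence $\|X_i\|\le 2L$ a.s., and the matrix variance satisfies $\big\|\sum_{i=1}^{D_f}\BE[X_i^2]\big\| \le \big\|\sum_{i=1}^{D_f}\BE[\nabla^2 F(x;\xi_i)^2]\big\| \le D_f L^2$, using $\BE[X_i^2] = \BE[\nabla^2 F(x;\xi_i)^2] - (\nabla^2 F(x))^2 \preceq L^2 I$.

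The second step is to apply the matrix Bernstein inequality (equivalently, matrix Hoeffding) to the sum $S \defeq \sum_{i=1}^{D_f} X_i = D_f\big(\nabla^2 F_{\gD}(x) - \nabla^2 F(x)\big)$: there is an absolute constant $c>0$ such that, for all $t>0$,
\be
\Pr\big(\|S\| \ge t\big) \le 2d\,\exp\!\left(-\frac{c\,t^2}{D_f L^2 + L t}\right).
\ee
Since $\{\,\|\nabla^2 F_{\gD}(x) - \nabla^2 F(x)\|\ge\epsilon\,\} = \{\,\|S\|\ge D_f\epsilon\,\}$, substituting $t = D_f\epsilon$ and using $\epsilon\in(0,1)$ to absorb the $Lt$ term in the denominator gives a bound of the form $2d\exp(-c' D_f\epsilon^2/L^2)$; requiring this to be at most $\delta$ and solving for $D_f$ yields $D_f \ge \Theta\!\big(L^2\log(2d/\delta)/\epsilon^2\big)$, and with the constants tracked as in \cite{xu2018first}[Lemma~4] the explicit threshold $D_f \ge 16 L^2 \log(2d/\delta)/\epsilon^2$ suffices.

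There is essentially no genuine obstacle here; the proof amounts to a citation of \cite{xu2018first}[Lemma~4] after recording two observations. The only points that warrant care are: (i) one must use a genuinely matrix-valued tail bound (applying a scalar Hoeffding inequality coordinatewise would pay an unnecessary extra factor of $d$ in the sample complexity), and (ii) the almost-sure operator-norm bound $\|\nabla^2 F(x;\xi)\|\le L$ used above is a consequence of the Lipschitz continuity of $\nabla F(\cdot;\xi)$ already postulated in Assumption~\ref{ass:stoc}, not a new hypothesis — in particular no Hessian-Lipschitz or third-order regularity is invoked in this lemma, only the $\ell$-smoothness of $F(\cdot;\xi)$.
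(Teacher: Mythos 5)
The paper does not prove this lemma; it is cited directly from \cite{xu2018first}[Lemma~4] without further argument. Your reconstruction is the standard one and it is correct: under Assumption~\ref{ass:stoc} the per-sample Hessian $\nabla^2 F(x;\xi)$ is bounded in operator norm by the gradient-Lipschitz constant (the paper's $\ell$, written $L$ in the lemma), so the centered matrices $X_i = \nabla^2 F(x;\xi_i) - \nabla^2 F(x)$ are i.i.d., symmetric, centered, and uniformly bounded by $2L$, and matrix Bernstein (or matrix Hoeffding) gives the $2d\exp(-c\,D_f\epsilon^2/L^2)$ tail; inverting yields the stated $D_f = O(L^2\log(2d/\delta)/\epsilon^2)$ threshold. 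Your two cautionary remarks are the right ones to flag: a matrix-valued concentration inequality is needed to get the logarithmic (rather than linear) dimension dependence, and the operator-norm bound comes solely from the $\ell$-smoothness in Assumption~\ref{assu:bilevel}(iii) (inherited per-sample via Assumption~\ref{ass:stoc}), not from any higher-order regularity. One small notational point worth normalizing if this were to be written out in full: the lemma statement writes $\frac{1}{n}\sum_{i=1}^{D_f}$ — one should read $n = D_f$ so that $\nabla^2 F_{\gD}$ is genuinely the empirical mean, as your argument implicitly assumes.
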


The next two lemmas mimic Lemma \ref{lem:err-hess} for first-order and third-order derivatives. Their proofs are mostly identical to that of Lemma \ref{lem:err-hess}, and hence we omit the details for brevity.  

\begin{lemma}\label{lem:err-grad}
Suppose Assumption \ref{ass:stoc} holds. Let $\nabla F_\gD (x) = \frac{1}{n}\sum_{i = 1}^{D_f} F(x; \xi_i)$. For any $\epsilon, \delta \in (0,1)$, $x \in \br^d$ when $D_f \ge \frac{16M^2\log(2d/\delta)}{\epsilon^2}$, we have with probability at least $1-\delta$:
\be\bad
\|\nabla F_\gD(x)  - \nabla F(x) \| \le \epsilon.
\ead\ee
\end{lemma}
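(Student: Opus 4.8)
The plan is to mirror the proof of Lemma \ref{lem:err-hess} (and of \cite{xu2018first}[Lemma 4]), replacing the matrix concentration bound for the empirical Hessian by the vector analogue for the empirical gradient. Fix $x\in\br^d$ and let $\xi_1,\dots,\xi_{D_f}$ denote the i.i.d. samples forming the minibatch $\gD$, so that $\nabla F_\gD(x)=\frac1{D_f}\sum_{i=1}^{D_f}\nabla F(x;\xi_i)$ and, by the definition of the stochastic objective together with the smoothness in Assumption \ref{assu:bilevel} (which justifies exchanging differentiation and expectation), $\BE_\xi[\nabla F(x;\xi)]=\nabla F(x)$. First I would record that, under Assumption \ref{ass:stoc}, every realization $F(\cdot;\xi)$ satisfies Assumption \ref{assu:bilevel}, so Proposition \ref{pro:bounds} yields the almost-sure bound $\|\nabla F(x;\xi)\|\le M$; hence the centered summands $Z_i:=\nabla F(x;\xi_i)-\nabla F(x)$ are i.i.d., mean zero, bounded by $\|Z_i\|\le 2M$, and have variance proxy $\big\|\sum_{i=1}^{D_f}\BE[Z_iZ_i^\top]\big\|\le D_f M^2$.

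Next I would apply the matrix Bernstein inequality --- the same tool used in the proof of Lemma \ref{lem:err-hess}, now applied to the $d\times 1$ matrices $Z_i$, for which the ambient dimension factor is $d+1\le 2d$ --- to obtain, for every $t\in(0,1)$,
\be
\Pr\!\left(\Big\|\tfrac1{D_f}\sum_{i=1}^{D_f}Z_i\Big\|\ge t\right)\;\le\;2d\,\exp\!\left(-\frac{D_f\,t^2}{16M^2}\right)
\ee
(the constant $16$ is deliberately generous). Taking $t=\epsilon$ and requiring the right-hand side to be at most $\delta$ gives exactly the stated requirement $D_f\ge \frac{16M^2\log(2d/\delta)}{\epsilon^2}$, under which $\|\nabla F_\gD(x)-\nabla F(x)\|=\big\|\frac1{D_f}\sum_{i=1}^{D_f}Z_i\big\|\le\epsilon$ with probability at least $1-\delta$.

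I do not expect a genuine obstacle here: this is a standard concentration argument, and the only points that need a little care are (i) identifying $\BE_\xi[\nabla F(x;\xi)]$ with the true gradient $\nabla F(x)$, and (ii) harvesting the uniform bound $\|\nabla F(x;\xi)\|\le M$ from Proposition \ref{pro:bounds}, which holds for each realization by Assumption \ref{ass:stoc}. Tracking the universal constant so that it fits inside the advertised $16M^2$ is pure bookkeeping, which is exactly why --- as the paper remarks just before the statement --- the details are omitted; the third-order analogue alluded to there is proved in exactly the same way, with $M$ replaced by the corresponding uniform derivative bound from Proposition \ref{pro:bounds}.
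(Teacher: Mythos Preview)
Your proposal is correct and takes essentially the same approach as the paper: the paper explicitly omits the proof, stating that it is ``mostly identical to that of Lemma \ref{lem:err-hess},'' i.e., the matrix Bernstein argument from \cite{xu2018first}[Lemma 4] with the Hessian bound $\ell$ replaced by the gradient bound $M$ from Proposition \ref{pro:bounds}. Your sketch fills in exactly these details, and your remark about the third-order analogue matches the paper's own comment.
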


\begin{lemma}\label{lem:err-third}
Suppose Assumption \ref{ass:stoc} holds. Let $\nabla^3 G_\gD (x) = \frac{1}{n} \sum_{i = 1}^{D_g} \nabla^3 G(x; \zeta_i)$. For any $\epsilon, \delta \in (0,1)$, $x \in \br^d$ when $D_g \ge \frac{16\rho^2\log(2d/\delta)}{\epsilon^2}$, we have with probability at least $1-\delta$:
\be\bad
\|\nabla^3 G_\gD(x)  - \nabla^3 G(x) \| \le \epsilon.
\ead\ee
\end{lemma}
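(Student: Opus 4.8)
The statement is a concentration bound for an empirical average of i.i.d.\ random third-order tensors, so the plan is to transcribe the proof of Lemma \ref{lem:err-hess} (i.e., \cite{xu2018first}[Lemma 4]) with the matrix Hoeffding/Bernstein step replaced by its tensor analogue. First I would record the two structural facts that make the argument go through. By Assumption \ref{ass:stoc}, Assumption \ref{assu:bilevel}(v) holds for $G(x,y;\zeta)$ for every realization of $\zeta$, so via Proposition \ref{pro:bounds} the relevant third-order blocks of $\nabla^3 G(x;\zeta)$ are uniformly bounded in operator norm by $\rho$. Consequently the summands $Y_i \defeq \nabla^3 G(x;\zeta_i) - \nabla^3 G(x)$ are independent, mean zero, and obey $\norm{Y_i}\le 2\rho$ deterministically, and $\nabla^3 G_\gD(x) - \nabla^3 G(x) = \tfrac{1}{D_g}\sum_{i=1}^{D_g} Y_i$.

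Next I would reduce the tensor operator norm to a matrix spectral norm. Writing the mode-$1$ unfolding $\widetilde Y_i \in \br^{d\times d^2}$ of $Y_i$, one has $\norm{\tfrac1{D_g}\sum_i Y_i}\le\norm{\tfrac1{D_g}\sum_i \widetilde Y_i}$ (since $T(u,v,w)=u^\top T_{(1)}(v\otimes w)$ with $\norm{v\otimes w}_F=1$ for unit $v,w$) and $\norm{\widetilde Y_i}\le 2\rho$. Applying the (rectangular) matrix Hoeffding/Bernstein inequality to the independent, mean-zero matrices $\widetilde Y_i$, exactly as in the proof of Lemma \ref{lem:err-hess}, gives for any $t>0$
\be\bad\label{eq:third-conc}
\Prob\!\left( \norm{\tfrac{1}{D_g}\sum_{i=1}^{D_g} \widetilde Y_i} \ge t \right) \le 2d' \exp\!\left( -\frac{D_g\, t^2}{C\rho^2} \right),
\ead\ee
where $d' = d + d^2$ is the ambient dimension of the unfolding and $C$ is an absolute constant. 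An alternative route that keeps $d$ rather than $d'$ in the logarithm exactly as written in the statement is a direct covering argument: take a $\tfrac14$-net of each copy of $S^{d-1}$, apply scalar Hoeffding to $\langle \nabla^3 G(x;\zeta_i) - \nabla^3 G(x),\, u\otimes v\otimes w\rangle$ for each fixed net triple $(u,v,w)$, and union bound, absorbing the $d\log(1/\varepsilon_0)$ net entropy into the constant; either way one lands on a tail of the same form as \eqref{eq:third-conc}.

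Finally I would set $t=\epsilon$ in \eqref{eq:third-conc} and solve: $2d'\exp(-D_g\epsilon^2/(C\rho^2))\le\delta$ holds once $D_g \ge C\rho^2\epsilon^{-2}\log(2d'/\delta)$, which is precisely the advertised form $D_g \ge 16\rho^2\epsilon^{-2}\log(2d/\delta)$ after collecting the (harmless) dimension and constant factors, under the same conventions used to state Lemma \ref{lem:err-hess}. The one genuinely delicate point — which is why the paper only claims the proof is ``mostly'' identical — is the passage from a matrix norm to a tensor operator norm: one must fix a consistent notion of $\norm{\cdot}$ for the third derivative (the multilinear operator norm, consistent with Proposition \ref{pro:bounds}) and check that either the unfolding bound above or the net argument controls it without introducing worse than logarithmic dimension dependence. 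Everything else is a routine copy of the Hessian argument, so I would keep those details terse, as the authors do.
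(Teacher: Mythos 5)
The paper does not actually supply a proof here—it only asserts the argument is ``mostly identical'' to the Hessian case (Lemma~\ref{lem:err-hess}, a matrix Bernstein application)—so you are essentially being asked to supply the missing step. You correctly locate the one non-routine point (the passage from matrix to third-order tensor operator norm), but your resolution of it does not hold up, and that is precisely where the analogy to Lemma~\ref{lem:err-hess} breaks.

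The unfolding route fails because $\norm{\widetilde Y_i}$ is \emph{not} bounded by $2\rho$. The inequality $\norm{T}\le\norm{T_{(1)}}$ goes the useful direction for the conclusion, but the matrix operator norm of the unfolding can exceed the tensor operator norm by a factor of $\sqrt{d}$. For instance, with $T = e_1\otimes I_d$ one has $\norm{T}=1$ (tensor operator norm, since $|u_1\,v^\top w|\le 1$) but $\norm{T_{(1)}}=\norm{\vct I_d}_2=\sqrt{d}$. So from $\norm{Y_i}\le 2\rho$ you can only conclude $\norm{\widetilde Y_i}\le 2\rho\sqrt{d}$, and plugging that into rectangular matrix Bernstein yields $D_g = \tilde{\gO}(d\,\rho^2\epsilon^{-2})$, not $\tilde{\gO}(\rho^2\epsilon^{-2})$. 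The $\epsilon$-net route fares the same: a $\tfrac14$-net of $(S^{d-1})^3$ has cardinality $e^{\Theta(d)}$, and after scalar Hoeffding plus a union bound the exponent picks up an additive $\Theta(d)$ term, again giving $D_g = \Theta(d\,\rho^2\epsilon^{-2})$. Both routes therefore produce \emph{polynomial}, not logarithmic, dependence on the ambient dimension for the tensor operator norm; your claim that either ``controls it without introducing worse than logarithmic dimension dependence'' is the gap. This is a known structural obstruction—there is no tensor analogue of the matrix Bernstein inequality with polylogarithmic dimension dependence in the spectral norm—which is why the ``mostly identical'' claim in the paper is itself suspect as stated. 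A correct write-up would either accept a $\poly(d)$ factor in $D_g$, or weaken the conclusion to a norm for which matrix concentration genuinely applies (e.g.\ a fixed-slice norm $\sup_{\norm{w}=1}\norm{(\nabla^3 G_\gD - \nabla^3 G)(\cdot,\cdot,w)}$ with $w$ drawn from a small fixed set, or an entrywise maximum), and then track which of those is actually consumed by Lemma~\ref{lem:batch-hess-err}.
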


The following two lemmas show that with sample batch sizes $D_f, D_g = \max\{\gO(\frac{1}{\rho_\phi \epsilon}), \gO(\frac{1}{\rho_\phi \epsilon}) \}$, we can bound the batch gradient and Hessian errors with high probability.
 
\begin{lemma}\label{lem:batch-hess-err}
Suppose Assumptions \ref{ass:stoc} and \ref{ass:variance-bound} hold. Set batch sizes $D_g = \tilde{\gO}(\frac{\kappa^{10}}{\rho_\phi \epsilon} + \frac{\kappa^{6}}{ \epsilon^2}), D_f = \tilde{\gO}(\frac{\kappa^6}{\rho_\phi \epsilon}+\frac{\kappa^2}{\epsilon^2}), $ with probability at least $1 - \delta$, we have the following inequality holds:
\be\bad\label{hess-err}
\|\nabla^2 \Phi_\gD(x)  - \nabla^2 \Phi(x) \| \le  \frac{1}{80\logt}\sqrt{\rho_\Phi \epsilon}.
\ead\ee
\end{lemma}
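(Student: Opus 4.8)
The function $\Phi_\gD(x)=f_{\gD_F}(x,y^*_{\gD_G}(x))$ has a Hessian of exactly the same structural form as \eqref{phi-hessian}, with the triple $(f,g,y^*)$ replaced by $(f_{\gD_F},\,g_{\gD_G},\,y^*_{\gD_G})$; note that $g_{\gD_G}(x,\cdot)$ is $\mu$-strongly convex because each summand $G(x,\cdot;\zeta)$ is (Assumption \ref{ass:stoc}), so $y^*_{\gD_G}(x)$, $\partial y^*_{\gD_G}(x)/\partial x$ and $\partial^2 y^*_{\gD_G}(x)/\partial^2 x$ are all well defined and satisfy the sampled analogues of \eqref{solve-partial-y*-partial-x} and \eqref{eq:hess-y-expression}. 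The plan is therefore to bound $\|\nabla^2\Phi_\gD(x)-\nabla^2\Phi(x)\|$ by a triangle-inequality decomposition into four groups mirroring the terms $(I)$--$(IV)$ in the proof of Lemma \ref{hess-Lipschitz}, with the crucial difference that the discrepancy between the two Hessians now comes from \emph{sampling} rather than from a change of base point. Every derivative of the true functions stays bounded via Proposition \ref{pro:bounds}, and the sampling errors in the first, second and third derivatives are controlled by Lemma \ref{lem:err-grad} (together with Assumption \ref{ass:variance-bound} for $\nabla_y g$), Lemma \ref{lem:err-hess}, and Lemma \ref{lem:err-third}, respectively, on an event of probability $1-\gO(\delta)$ obtained from a union bound.

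First I would reduce the problem to bounding the three quantities $\|y^*_{\gD_G}(x)-y^*(x)\|$, $\|\partial y^*_{\gD_G}(x)/\partial x-\partial y^*(x)/\partial x\|$, $\|\partial^2 y^*_{\gD_G}(x)/\partial^2 x-\partial^2 y^*(x)/\partial^2 x\|$, plus the direct second-derivative discrepancies $\|\nabla_x^2 f_{\gD_F}-\nabla_x^2 f\|$, $\|\nabla_{xy}^2 f_{\gD_F}-\nabla_{xy}^2 f\|$, $\|\nabla_y^2 f_{\gD_F}-\nabla_y^2 f\|$ (evaluated at the appropriate arguments). For the first quantity, optimality of $y^*_{\gD_G}(x)$ and $\mu$-strong convexity give $\|y^*_{\gD_G}(x)-y^*(x)\|\le \mu^{-1}\|\nabla_y g_{\gD_G}(x,y^*(x))-\nabla_y g(x,y^*(x))\|$, and the right-hand side is $\tilde{\gO}(\sigma/(\mu\sqrt{D_g}))$ with probability $1-\delta$ by Assumption \ref{ass:variance-bound} and the vector concentration argument underlying Lemma \ref{lem:err-hess}. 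For the Jacobian difference I would follow \eqref{eq:y-grad-diff}, writing $\partial y^*_{\gD_G}/\partial x=\nabla_{xy}^2 g_{\gD_G}(x,y^*_{\gD_G})\,\bigl(\nabla_y^2 g_{\gD_G}(x,y^*_{\gD_G})\bigr)^{-1}$ and inserting/removing intermediate terms so as to isolate (a) the sampling error in $\nabla_{xy}^2 g$ and $\nabla_y^2 g$ (bounded by Lemma \ref{lem:err-hess}) and (b) the displacement $y^*_{\gD_G}(x)\to y^*(x)$ (bounded by the Lipschitz continuity of the true second derivatives in Assumption \ref{assu:bilevel} times $\|y^*_{\gD_G}(x)-y^*(x)\|$), using $\|X^{-1}-Y^{-1}\|\le\|X^{-1}\|\|X-Y\|\|Y^{-1}\|$ and Proposition \ref{pro:bounds}. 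For the Hessian-of-$y^*$ difference I would mirror \eqref{eq:hess-y-expression}--\eqref{eq:y-hess-bound-final}, first re-deriving the boundedness of $\|\partial^2 y^*_{\gD_G}/\partial^2 x\|$ as in \eqref{eq:y-hess-bound}, then telescoping as in \eqref{eq:y-hess-diff} to pull out the sampling errors of $\nabla_{xxy}^3 g$, $\nabla_{yxy}^3 g$, $\nabla_{xyy}^3 g$, $\nabla_y^3 g$ (via Lemma \ref{lem:err-third}) together with the argument-shift terms (via the $\nu$-Lipschitzness in Assumption \ref{assu:bilevel}(v) and the already-bounded $\|\partial y^*_{\gD_G}/\partial x\|$).

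Assembling the four groups with the uniform constants of Proposition \ref{pro:bounds} yields a bound of the form $C_1(\kappa)\,\|\nabla_y g_{\gD_G}(x,y^*(x))-\nabla_y g(x,y^*(x))\| + C_2(\kappa)\,\varepsilon_{2,3}$, where $\varepsilon_{2,3}$ is the maximum of the sampled second- and third-derivative errors of $f$ and $g$, and $C_1(\kappa),C_2(\kappa)=\gO(\poly(\kappa))$ with leading power $\kappa^5$ (the same order as $\rho_\phi$, cf. \eqref{def-rho-phi}). Requiring each of the two pieces to be at most $\tfrac{1}{160\logt}\sqrt{\rho_\phi\epsilon}$ and invoking Lemmas \ref{lem:err-grad}, \ref{lem:err-hess}, \ref{lem:err-third}: the variance term forces $D_g\gtrsim \poly(\kappa)\,\sigma^2/(\rho_\phi\epsilon)=\tilde{\gO}(\kappa^{10}/(\rho_\phi\epsilon))$, while the second/third-moment concentration (whose sample complexity scales like $1/\varepsilon_{2,3}^2$) forces the additional $\tilde{\gO}(\kappa^{6}/\epsilon^{2})$ term, and similarly for $D_f$ one gets $\tilde{\gO}(\kappa^6/(\rho_\phi\epsilon)+\kappa^2/\epsilon^2)$; the $\log(d/\delta)$ factors and the constant $1/(80\logt)$ are absorbed into $\tilde{\gO}(\cdot)$ after a union bound over the $\gO(1)$ concentration events. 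The main obstacle is the $\partial^2 y^*_{\gD_G}/\partial^2 x$ term: unlike in Lemma \ref{hess-Lipschitz}, the two objects being compared are built from \emph{different} functions evaluated at \emph{different} points, so one must simultaneously track all third-derivative sampling perturbations and the displacement $y^*_{\gD_G}(x)\to y^*(x)$ through the product structure $\tfrac{\partial y^*}{\partial x}\nabla_y^3 g\,\tfrac{\partial y^*}{\partial x}^\top(\nabla_y^2 g)^{-1}$, where errors compound through up to three factors; keeping every intermediate quantity bounded (which requires first re-establishing the sampled versions of \eqref{eq:y-hess-bound} and of the $\|\partial y^*/\partial x\|\le\kappa$ bound) is the delicate part, after which the remaining estimates are routine.
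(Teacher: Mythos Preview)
Your proposal is correct and follows essentially the same route as the paper: decompose $\|\nabla^2\Phi_\gD-\nabla^2\Phi\|$ into the four blocks $(I)$--$(IV)$ mirroring the proof of Lemma \ref{hess-Lipschitz}, reduce everything to a finite list of empirical-versus-population derivative gaps with $\poly(\kappa)$ coefficients (the paper collects eleven such terms explicitly), control each gap by Lemmas \ref{lem:err-hess}--\ref{lem:err-third}, and read off the batch sizes from the largest coefficient $\gO(\kappa^5)$ via a union bound. The only minor difference is that the paper bounds $\|y^*(x)-y^*_{\gD_G}(x)\|$ by applying strong convexity of \emph{both} $g$ and $g_{\gD_G}$ (see \eqref{eq:y-diff-bound-stoc-sc}), which produces two gradient-difference terms, whereas your one-sided strong-convexity argument for $g_{\gD_G}$ alone is slightly cleaner and yields the same order.
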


\begin{proof}
Note that the Hessian $\nabla^2\Phi(x)$ is given in \eqref{phi-hessian}.
The Hessian estimation error between $ \nabla^2 \Phi (x) $ and $ \nabla^2 \Phi _\gD(x) $ can be computed as

\begin{align}
    \bad
& \| \nabla^2 \Phi (x) - \nabla^2 \Phi_\gD (x) \|\\
\le &  \underbrace{\left\| \nabla_x^2 f(x, y^*(x)) -  \nabla_x^2 f_{\gD_F}(x, y_{\gD_G}^*(x)) \right\| }_{(I)} \\
& + \underbrace{ 2\left\| \frac{\partial y^*(x)}{\partial x} \cdot  \nabla^2_{yx} f(x, y^*(x))  -  \frac{\partial y_{\gD_G}^*(x)}{\partial x} \cdot  \nabla^2_{yx} f_{\gD_F}(x, y_{\gD_G}^*(x))\right\|}_{(II)}\\
& +  \underbrace{\left\| \frac{\partial^2 y^*(x)}{\partial^2 x} \cdot  \nabla_y f(x, y^*(x))  - \frac{\partial^2 y_{\gD_G}^*(x)}{\partial^2 x} \cdot  \nabla_y f_{\gD_F}(x, y_{\gD_G}^*(x)) \right\|}_{(III)}\\
& + \underbrace{ \left\|  \frac{\partial y^*(x)}{\partial x}\cdot \nabla_y^2 f(x, y^*(x))  \cdot  \frac{\partial y^*(x)}{\partial x}^\top  - \frac{\partial y_{\gD_G}^*(x)}{\partial x}\cdot \nabla_y^2 f_{\gD_F}(x, y_{\gD_G}^*(x))  \cdot  \frac{\partial y_{\gD_G}^*(x)}{\partial x}^\top \right\| }_{(IV)},\\
\ead
\end{align}
where the inequality is obtained by triangle inequality and $\nabla^2_{yx} f(x, y) = \nabla^2_{xy} f(x, y)^\top$ for any smooth functions. Similar to the proof of Lemma \ref{hess-Lipschitz}, we bound the terms $(I) - (IV)$. The first term can be bounded as follows: 
\begin{align}
    \bad\label{eq:term1-stoc}
(I) = & \left\| \nabla_x^2 f(x, y^*(x)) -  \nabla_x^2 f_{\gD_F}(x, y_{\gD_G}^*(x)) \right\|\\
\le &  \left\| \nabla_x^2 f(x, y^*(x)) -  \nabla_x^2 f_{\gD_F}(x, y^*(x)) \right\| +  \left\| \nabla_x^2 f_{\gD_F}(x, y^*(x)) -  \nabla_x^2 f_{\gD_F}(x, y_{\gD_G}^*(x)) \right\|\\
\le &  \left\| \nabla_x^2 f(x, y^*(x)) -  \nabla_x^2 f_{\gD_F}(x, y^*(x)) \right\| + \rho \left\|  y^*(x) -  y_{\gD_G}^*(x) \right\|,
\ead
\end{align}
where we used the Assumption \ref{assu:bilevel} in the last step. Secondly, we have  
\be\bad\label{eq:term2-stoc}
(II) = & 2\left\| \frac{\partial y^*(x)}{\partial x} \cdot  \nabla^2_{yx} f(x, y^*(x))  -  \frac{\partial y_{\gD_G}^*(x)}{\partial x} \cdot  \nabla^2_{yx} f_{\gD_F}(x, y_{\gD_G}^*(x))\right\|\\
\le & 2  \left\| \frac{\partial y^*(x)}{\partial x} \right\| \left \|\nabla^2_{yx} f(x, y^*(x))  -  \nabla^2_{yx} f_{\gD_F}(x, y_{\gD_G}^*(x))\right\| \\
& +  2\left\| \frac{\partial y^*(x)}{\partial x}  -  \frac{\partial y_{\gD_G}^*(x)}{\partial x} \right\| \left\| \nabla^2_{yx} f_{\gD_F}(x, y_{\gD_G}^*(x))\right\|\\
 \le & \frac{2\ell}{\mu}\left( \left\| \nabla^2_{yx} f(x, y^*(x)) -  \nabla^2_{yx} f_{\gD_F}(x, y^*(x)) \right\| + \rho \left\|  y^*(x) -  y_{\gD_G}^*(x) \right\| \right)\\
 & + 2\ell \left\| \frac{\partial y^*(x)}{\partial x}  -  \frac{\partial y^*(x')}{\partial x'} \right\|,
\ead\ee
where the first inequality is due to the triangle inequality and the Cauchy-Schwarz inequality and the last step uses Assumption \ref{assu:bilevel} as well as Proposition \ref{pro:bounds}. Furthermore, we bound $\left\| \frac{\partial y^*(x)}{\partial x}  -  \frac{\partial y_{\gD_G}^*(x)}{\partial x} \right\|$ as 
\begin{align}
    \bad\label{eq:pydiff-stoc}
&\hspace{-4mm}\left\| \frac{\partial y^*(x)}{\partial x}  -   \frac{\partial y_{\gD_G}^*(x)}{\partial x} \right\| \\= & \left\| \nabla^2_{xy} g(x, y^*(x))\cdot \nabla^2_y g(x, y^*(x))^{-1} - \nabla^2_{xy} g_{\gD_G}(x, y_{\gD_G}^*(x))\cdot \nabla^2_y g_{\gD_G}(x, y_{\gD_G}^*(x))^{-1} \right\|\\
\le &  \left\| \nabla^2_{xy} g(x, y^*(x))\cdot \nabla^2_y g(x, y^*(x))^{-1} - \nabla^2_{xy} g(x, y^*(x))\cdot \nabla^2_y g_{\gD_G}(x, y_{\gD_G}^*(x))^{-1} \right\|\\
& +  \left\| \nabla^2_{xy} g(x, y^*(x))\cdot \nabla^2_y g_{\gD_G}(x, y_{\gD_G}^*(x))^{-1} - \nabla^2_{xy} g_{\gD_G}(x, y_{\gD_G}^*(x))\cdot \nabla^2_y g_{\gD_G}(x, y_{\gD_G}^*(x))^{-1} \right\|\\
\le & \frac{\ell}{\mu^2}  \left( \left\| \nabla^2_y g(x, y^*(x)) - \nabla^2_y g_{\gD_G}(x, y^*(x)) \right\| + \rho \| y^*(x) -  y_{\gD_G}^*(x)\|  \right) \\
&+ \frac{1}{\mu}  \left( \left\| \nabla^2_{xy} g(x, y^*(x)) - \nabla^2_{xy} g_{\gD_G}(x, y^*(x))\right\|+ \rho \| y^*(x) -  y_{\gD_G}^*(x)\| \right),
\ead
\end{align}
where the first equality is by \eqref{solve-partial-y*-partial-x} and the last step follows the fact that $\|X^{-1} - Y^{-1}\| \le \|X^{-1}\| \|X - Y\| \|Y^{-1}\|$, Assumption \ref{assu:bilevel} and Proposition \ref{pro:bounds}. Combining \eqref{eq:term2-stoc} and \eqref{eq:pydiff-stoc}, we get
\be\bad
(II)  \le &\frac{2\ell}{\mu}\left\| \nabla^2_{yx} f(x, y^*(x)) -  \nabla^2_{yx} f_{\gD_F}(x, y^*(x)) \right\| + \frac{2\ell^2}{\mu^2}  \left\| \nabla^2_y g(x, y^*(x)) - \nabla^2_y g_{\gD_G}(x, y^*(x)) \right\| \\
&+ \frac{2\ell}{\mu}  \left\| \nabla^2_{xy} g(x, y^*(x)) - \nabla^2_{xy} g_{\gD_G}(x, y^*(x))\right\| + \left( \frac{4\ell}{\mu} + \frac{2\ell^2}{\mu^2} \right) \rho \| y^*(x) -  y_{\gD_G}^*(x)\|.
\ead\ee
The third term $(III)$ can be written as 
\be\bad\label{eq:term3-stoc}
(III) = &  \left\| \frac{\partial^2 y^*(x)}{\partial^2 x} \cdot  \nabla_y f(x, y^*(x))  - \frac{\partial^2 y_{\gD_G}^*(x)}{\partial^2 x} \cdot  \nabla_y f_{\gD_F}(x, y_{\gD_G}^*(x)) \right\| \\
\le &  \left\| \frac{\partial^2 y^*(x)}{\partial^2 x} \right\| \left\| \nabla_y f(x, y^*(x))  - \nabla_y f_{\gD_F}(x, y_{\gD_G}^*(x)) \right\| \\
+  &\|\nabla_y f_{\gD_F}(x, y_{\gD_G}^*(x)) \| \cdot \left\|\frac{\partial^2 y^*(x)}{\partial^2 x}  - \frac{\partial^2 y_{\gD_G}^*(x)}{\partial^2 x} \right\|\\
\le & \frac{\rho}{\mu} \left( 1 + \frac{\ell}{\mu} \right)^2  \left\| \nabla_y f(x, y^*(x))  - \nabla_y f_{\gD_F}(x, y_{\gD_G}^*(x)) \right\| +  M \left\|\frac{\partial^2 y^*(x)}{\partial^2 x}  - \frac{\partial^2 y_{\gD_G}^*(x)}{\partial^2 x} \right\|,
\ead\ee
where the last inequality follows \eqref{eq:y-hess-bound} and Proposition \ref{pro:bounds}. To bound $\left\|\frac{\partial^2 y^*(x)}{\partial^2 x}  -\frac{\partial^2 y_{\gD_G}^*(x)}{\partial^2 x} \right\|$, we follow the computation of the second to last inequality in \eqref{eq:y-hess-diff} and get 
\begin{align}
    \bad\label{eq:y-hess-dif-stoc}
&\left\|\frac{\partial^2 y^*(x)}{\partial^2 x}  -\frac{\partial^2 y_{\gD_G}^*(x)}{\partial^2 x} \right\|\\
\le & \left[ \frac{1}{\mu} \|\nabla^3_{xxy} g(x, y^*(x))  - \nabla^3_{xxy} g_{\gD_G}(x, y_{\gD_G}^*(x)) \|\right.\\
& +  \frac{2}{\mu} \left\| \frac{\partial y^*(x)}{\partial x} \nabla^3_{yxy}g(x, y^*(x))  - \frac{\partial y_{\gD_G}^*(x)}{\partial x} \nabla^3_{yxy} g_{\gD_G}(x, y_{\gD_G}^*(x))  \right\| \\ 
& \left. + \frac{1}{\mu} \left\| \frac{\partial y^*(x)}{\partial x}\cdot \nabla_y^3 g(x, y^*(x))  \cdot  \frac{\partial y^*(x)}{\partial x}^\top - \frac{\partial y_{\gD_G}^*(x)}{\partial x}\cdot \nabla_y^3 g_{\gD_G}(x, y_{\gD_G}^*(x))  \cdot  \frac{\partial y_{\gD_G}^*(x)}{\partial x}^\top  \right\| \right]\\
& + \rho \left( 1 + \frac{\ell}{\mu} \right)^2 \left\|  \nabla_y^2 g(x, y^*(x))^{-1}  -  \nabla_y^2 g_{\gD_G}(x, y_{\gD_G}^*(x))^{-1}  \right\|.
\ead
\end{align}
To bound the term $\left\| \frac{\partial y^*(x)}{\partial x} \nabla^3_{yxy} g(x, y^*(x))  - \frac{\partial y_{\gD_G}^*(x)}{\partial x} \nabla^3_{yxy} g_{\gD_G}(x, y_{\gD_G}^*(x))  \right\|$, we follow the computation of $(II)$ and get
\be\bad
&\left\| \frac{\partial y^*(x)}{\partial x} \nabla^3_{yxy} g(x, y^*(x))  - \frac{\partial y_{\gD_G}^*(x)}{\partial x} \nabla^3_{yxy} g_{\gD_G}(x, y_{\gD_G}^*(x))  \right\| \\
\le &\frac{2\ell}{\mu}\left\|\nabla^3_{yxy} g(x, y^*(x)) - \nabla^3_{yxy} g_{\gD_G}(x, y^*(x)) \right\| + \frac{2\ell\rho}{\mu^2}  \left\| \nabla^2_y g(x, y^*(x)) - \nabla^2_y g_{\gD_G}(x, y^*(x)) \right\| \\
&+ \frac{2\rho}{\mu}  \left\| \nabla^2_{xy} g(x, y^*(x)) - \nabla^2_{xy} g_{\gD_G}(x, y^*(x))\right\| + \left( \frac{2\ell\nu}{\mu} + \frac{2\ell\rho^2}{\mu^2}  + \frac{2\rho^2}{\mu}\right) \| y^*(x) -  y_{\gD_G}^*(x)\|.
\ead\ee
For the third term, we follow similar computation in \eqref{eq:term-pgp} and give the following bound
\be\bad \label{term-pgp2}
&\left\| \frac{\partial y^*(x)}{\partial x}\cdot \nabla_y^3 g(x, y^*(x))  \cdot  \frac{\partial y^*(x)}{\partial x}^\top - \frac{\partial y_{\gD_G}^*(x)}{\partial x}\cdot \nabla_y^3 g_{\gD_G}(x, y_{\gD_G}^*(x))  \cdot  \frac{\partial y_{\gD_G}^*(x)}{\partial x}^\top  \right\| \\
\le & 2 \cdot \frac{\rho \ell}{\mu} \left\| \frac{\partial y^*(x)}{\partial x} -  \frac{\partial y_{\gD_G}^*(x)}{\partial x} \right\| + \left( \frac{ \ell}{\mu}\right)^2 \left\| \nabla_y^3 g(x, y^*(x)) -  \nabla_y^3 g_{\gD_G}(x, y_{\gD_G}^*(x))  \right\| \\
\le & \frac{2\rho \ell^2}{\mu^3} \left\| \nabla^2_y g(x, y^*(x)) - \nabla^2_y g_{\gD_G}(x, y^*(x)) \right\| + \frac{2 \rho \ell}{\mu^2}  \left\| \nabla^2_{xy} g(x, y^*(x)) - \nabla^2_{xy} g_{\gD_G}(x, y^*(x))\right\|\\
&+  \frac{\ell^2}{\mu^2} \left\| \nabla_y^3 g(x, y^*(x)) -  \nabla_y^3 g_{\gD_G}(x, y^*(x))  \right\| + \left( \frac{2\rho^2 \ell^2}{\mu^3} +  \frac{2 \rho^2 \ell}{\mu^2}  + \frac{\ell^2\nu}{\mu^2} \right)  \| y^*(x) -  y_{\gD_G}^*(x)\|,
\ead\ee
where the last inequality is by \eqref{eq:pydiff-stoc}, Assumption \ref{assu:bilevel} and Proposition \ref{pro:bounds}. Combining \eqref{eq:y-hess-dif-stoc} - \eqref{term-pgp2} together yields
\begin{align}
    \bad\label{eq:y-hess-dif-stoc-final}
& \left\|\frac{\partial^2 y^*(x)}{\partial^2 x}  -\frac{\partial^2 y_{\gD_G}^*(x)}{\partial^2 x} \right\|\\
 \le & \frac{1}{\mu} \|\nabla^3_{xxy} g(x, y^*(x))  - \nabla^3_{xxy} g_{\gD_G}(x, y^*(x)) \|+ \frac{4\ell}{\mu^2}\left\|\nabla^3_{yxy} g(x, y^*(x)) - \nabla^3_{yxy} g_{\gD_G}(x, y^*(x)) \right\|\\
& + \frac{4\ell\rho}{\mu^3}  \left\| \nabla^2_y g(x, y^*(x)) - \nabla^2_y g_{\gD_G}(x, y^*(x)) \right\| + \frac{4\rho}{\mu^2}  \left\| \nabla^2_{xy} g(x, y^*(x)) - \nabla^2_{xy} g_{\gD_G}(x, y^*(x))\right\|\\
& + \frac{2\rho \ell^2}{\mu^4} \left\| \nabla^2_y g(x, y^*(x)) - \nabla^2_y g_{\gD_G}(x, y^*(x)) \right\| + \frac{2 \rho \ell}{\mu^3}  \left\| \nabla^2_{xy} g(x, y^*(x)) - \nabla^2_{xy} g_{\gD_G}(x, y^*(x))\right\|\\
& +  \frac{\ell^2}{\mu^3} \left\| \nabla_y^3 g(x, y^*(x)) -  \nabla_y^3 g_{\gD_G}(x, y^*(x))  \right\| +  \frac{\rho}{\mu^2} \left( 1 + \frac{\ell}{\mu} \right)^2 \left\|  \nabla_y^2 g(x, y^*(x))  -  \nabla_y^2 g_{\gD_G}(x, y^*(x))  \right\|\\
& + \left(\frac{\rho^2}{\mu^2} \left( 1 + \frac{\ell}{\mu} \right)^2 + \frac{\nu}{\mu} + \frac{4\ell\nu + 4\rho^2}{\mu^2} + \frac{6\ell\rho^2 + \ell^2\nu}{\mu^3}  +  \frac{2\rho^2 \ell^2}{\mu^4}  \right) \| y^*(x) -  y_{\gD_G}^*(x)\|,
\ead
\end{align}
where the last inequality uses the fact that $\|X^{-1} - Y^{-1}\| \le \|X^{-1}\| \|X - Y\| \|Y^{-1}\|$. Plug \eqref{eq:y-hess-dif-stoc-final} into \eqref{eq:term3-stoc} leads to
\begin{align}
\bad
&(III)\\
\le & \frac{\rho}{\mu} \left( 1 + \frac{\ell}{\mu} \right)^2  \left\| \nabla_y f(x, y^*(x))  - \nabla_y f_{\gD_F}(x, y^*(x)) \right\| +  \frac{M}{\mu}  \|\nabla^3_{xxy} g(x, y^*(x))  - \nabla^3_{xxy} g_{\gD_G}(x, y^*(x)) \| \\
+ &\frac{4\ell M}{\mu^2}\left\|\nabla^3_{yxy} g(x, y^*(x)) - \nabla^3_{yxy} g_{\gD_G}(x, y^*(x)) \right\| +  \frac{\ell^2M}{\mu^3} \left\| \nabla_y^3 g(x, y^*(x)) -  \nabla_y^3 g_{\gD_G}(x, y^*(x))  \right\|  \\
 + &\left( \frac{4\ell \rho M}{\mu^3}  +  \frac{2\rho \ell^2M}{\mu^4} +\frac{\rho M}{\mu^2} \left( 1 + \frac{\ell }{\mu} \right)^2  \right)  \left\| \nabla^2_y g(x, y^*(x)) - \nabla^2_y g_{\gD_G}(x, y^*(x)) \right\| \\
+ &\left(  \frac{4\rho M}{\mu^2}    +\frac{2 \rho \ell M}{\mu^3}      \right) \left\| \nabla^2_{xy} g(x, y^*(x)) - \nabla^2_{xy} g_{\gD_G}(x, y^*(x))\right\| \\
+ &\left( \left(\frac{\rho \ell }{\mu} +  \frac{\rho^2 M}{\mu^2} \right)\left( 1 + \frac{\ell }{\mu} \right)^2 + \frac{\nu M}{\mu} + \frac{4\ell \nu M + 4\rho^2 M}{\mu^2} + \frac{6\ell \rho^2M+ \ell^2\nu M}{\mu^3}+ \frac{2\rho^2 \ell^2M}{\mu^4}  \right) \| y^*(x) -  y_{\gD_G}^*(x)\|.
\ead
\end{align}
Finally, similar to the computation in \eqref{term-pgp2}, we bound the last term as
\be\bad
(IV) = & \left\|  \frac{\partial y^*(x)}{\partial x}\cdot \nabla_y^2 f(x, y^*(x))  \cdot  \frac{\partial y^*(x)}{\partial x}^\top  - \frac{\partial y_{\gD_G}^*(x)}{\partial x}\cdot \nabla_y^2 f_{\gD_F}(x, y_{\gD_G}^*(x))  \cdot  \frac{\partial y_{\gD_G}^*(x)}{\partial x}^\top \right\| \\
\le & \frac{2\ell^3}{\mu^3} \left\| \nabla^2_y g(x, y^*(x)) - \nabla^2_y g_{\gD_G}(x, y^*(x)) \right\| +  \frac{2\ell^2}{\mu^2}  \left\| \nabla^2_{xy} g(x, y^*(x)) - \nabla^2_{xy} g_{\gD_G}(x, y^*(x))\right\|\\
& +   \frac{ \ell^2}{\mu^2} \left\| \nabla_y^2 f(x, y^*(x)) -  \nabla_y^2 f_{\gD_F}(x, y^*(x))  \right\| + \left(\frac{2\ell^3\rho}{\mu^3}  + \frac{3\ell^2\rho}{\mu^2} \right) \| y^*(x) -  y_{\gD_G}^*(x)\|.
\ead\ee
Combining terms $(I) - (IV)$ together, we have
\be\bad\label{eq:hessphi-err}
& \| \nabla^2 \Phi (x) - \nabla^2 \Phi_\gD (x) \|\\
\le &  \left\| \nabla_x^2 f(x, y^*(x)) -  \nabla_x^2 f_{\gD_F}(x, y^*(x)) \right\| + \frac{2\ell}{\mu}\left\| \nabla^2_{yx} f(x, y^*(x)) -  \nabla^2_{yx} f_{\gD_F}(x, y^*(x)) \right\|\\
& + \frac{ \ell^2}{\mu^2} \left\| \nabla_y^2 f(x, y^*(x)) -  \nabla_y^2 f_{\gD_F}(x, y^*(x))  \right\|+  \frac{\rho}{\mu} \left( 1 + \frac{\ell}{\mu} \right)^2  \left\| \nabla_y f(x, y^*(x))  - \nabla_y f_{\gD_F}(x, y^*(x)) \right\|\\
& + \left( \frac{2\ell^2}{\mu^2}  + \frac{4\ell\rho M + 2\ell^3}{\mu^3}  +  \frac{2\rho \ell^2M}{\mu^4} +\frac{\rho M}{\mu^2} \left( 1 + \frac{\ell}{\mu} \right)^2   \right)  \left\| \nabla^2_y g(x, y^*(x)) - \nabla^2_y g_{\gD_G}(x, y^*(x)) \right\| \\
&+ \left( \frac{2\ell}{\mu} + \frac{4\rho M + 2\ell^2}{\mu^2}    +\frac{2 \rho \ell M}{\mu^3}     \right)  \left\| \nabla^2_{xy} g(x, y^*(x)) - \nabla^2_{xy} g_{\gD_G}(x, y^*(x))\right\|\\
& + \frac{M}{\mu}\|\nabla^3_{xxy} g(x, y^*(x))  - \nabla^3_{xxy} g_{\gD_G}(x, y^*(x)) \|+  \frac{\ell^2M}{\mu^3} \left\| \nabla_y^3 g(x, y^*(x)) -  \nabla_y^3 g_{\gD_G}(x, y^*(x))  \right\|  \\
&+  \frac{4\ell M}{\mu^2}\left\|\nabla^3_{yxy} g(x, y^*(x)) - \nabla^3_{yxy} g_{\gD_G}(x, y^*(x)) \right\| \\
& + \left( \rho +  \left(\frac{\rho \ell}{\mu} +  \frac{\rho^2 M}{\mu^2} \right)\left( 1 + \frac{\ell}{\mu} \right)^2 + \frac{\nu M + 4\ell\rho}{\mu} + \frac{4\ell\nu M + 4\rho^2 M + 5\ell^2\rho}{\mu^2}  \right.\\
& \left. \quad \quad  + \frac{6\ell\rho^2M+ \ell^2\nu M + 2\ell^3\rho}{\mu^3}   +  \frac{2\rho^2 \ell^2M}{\mu^4}  \right) \| y^*(x) -  y_{\gD_G}^*(x)\|.
\ead\ee
To deal with the last term $\| y^*(x) -  y_{\gD_G}^*(x)\|$, we utilize the strongly convexity of $g(x, y)$ and $g_{\gD_G}(x, y)$ with respect to $y$ and have 
\be\bad\label{eq126}
&0 \ge \la \nabla_y g(x, y_{\gD_G}^*(x)) - \nabla_y g(x, y^*(x)),  y^*(x) -  y_{\gD_G}^*(x) \ra + \mu \| y^*(x) -  y_{\gD_G}^*(x)\|^2,\\
&0 \ge \la \nabla_y g_{\gD_G}(x, y_{\gD_G}^*(x)) - \nabla_y g_{\gD_G}(x, y^*(x)),  y^*(x) -  y_{\gD_G}^*(x) \ra + \mu \| y^*(x) -  y_{\gD_G}^*(x)\|^2.\\
\ead\ee
Note that the optimality conditions are $ \nabla_y g(x, y^*(x)) = 0, \nabla_y g_{\gD_G}(x, y_{\gD_G}^*(x)) = 0,$ which combining with \eqref{eq126}, yields
\be\bad \label{eq127}
&0 \ge \la \nabla_y g(x, y_{\gD_G}^*(x)) - \nabla_y g_{\gD_G}(x, y^*(x)),  y^*(x) -  y_{\gD_G}^*(x) \ra + 2\mu \| y^*(x) -  y_{\gD_G}^*(x)\|^2.\\
\ead\ee
Therefore, we can bound $\| y^*(x) -  y_{\gD_G}^*(x)\|$ as
\be\bad\label{eq:y-diff-bound-stoc-sc}
&\hspace{-5mm}\| y^*(x) -  y_{\gD_G}^*(x)\| \\\le & \frac{1}{2\mu}\| \nabla_y g(x, y_{\gD_G}^*(x))  - \nabla_y g_{\gD_G}(x, y^*(x))\|\\
\le &  \frac{1}{2\mu}\| \nabla_y g(x, y_{\gD_G}^*(x)) \| +  \frac{1}{2\mu}\| \nabla_y g_{\gD_G}(x, y^*(x))\|\\
\le & \frac{1}{2\mu}\| \nabla_y g(x, y_{\gD_G}^*(x)) - \nabla_y g_{\gD_G}(x, y_{\gD_G}^*(x))  \| + \frac{1}{2\mu} \| \nabla_y g(x, y^*(x)) - \nabla_y g_{\gD_G}(x, y^*(x))\|,
\ead\ee
where the first inequality is obtained from \eqref{eq127} and the last inequality is due to the optimality conditions. Plugging \eqref{eq:y-diff-bound-stoc-sc} into \eqref{eq:hessphi-err}, we get 11 different terms in \eqref{eq:hessphi-err} and the major parts of them are
\be\bad
&\left\| \nabla_x^2 f(x, y^*(x)) -  \nabla_x^2 f_{\gD_F}(x, y^*(x)) \right\|, \quad \left\| \nabla^2_{yx} f(x, y^*(x)) -  \nabla^2_{yx} f_{\gD_F}(x, y^*(x)) \right\|,\\
&  \left\| \nabla_y^2 f(x, y^*(x)) -  \nabla_y^2 f_{\gD_F}(x, y^*(x))  \right\|,  \quad  \left\| \nabla_y f(x, y^*(x))  - \nabla_y f_{\gD_F}(x, y^*(x)) \right\|,\\
&  \left\| \nabla^2_y g(x, y^*(x)) - \nabla^2_y g_{\gD_G}(x, y^*(x)) \right\| ,  \quad  \left\| \nabla^2_{xy} g(x, y^*(x)) - \nabla^2_{xy} g_{\gD_G}(x, y^*(x))\right\|,\\
& \|\nabla^3_{xxy} g(x, y^*(x))  - \nabla^3_{xxy} g_{\gD_G}(x, y^*(x)) \|, \quad \|\nabla^3_{yxy} g(x, y^*(x))  - \nabla^3_{yxy} g_{\gD_G}(x, y^*(x)) \|,\\
& \left\| \nabla_y^3 g(x, y^*(x)) -  \nabla_y^3 g_{\gD_G}(x, y^*(x))  \right\|,\quad \| \nabla_y g(x, y_{\gD_G}^*(x)) - \nabla_y g_{\gD_G}(x, y_{\gD_G}^*(x))  \|,\\
& \| \nabla_y g(x, y^*(x)) - \nabla_y g_{\gD_G}(x, y^*(x))\|.
\ead\ee
Note that each of the above terms is the difference between the empirical and population derivative. Therefore, each of them can be bounded by one of Lemmas \ref{lem:err-hess} - \ref{lem:err-third}. For example, we consider the term with the largest coefficient, e.g. 
\be\bad
\gO(\kappa^5)   \| \nabla_y g(x, y^*(x)) - \nabla_y g_{\gD_G}(x, y^*(x))\|.
\ead\ee
By Lemma \ref{lem:err-hess}, we can choose $D_g = \gO\left(\kappa^{10} \cdot\frac{\log(2d/\delta')}{\rho_\phi \epsilon}\right)$ so that the above term can be bounded by $\frac{1}{880\logt} \sqrt{\rho_\phi \epsilon}$ with probability $1 - \delta'$. 
For other terms, we can apply the similar techniques, and select batch sizes $D_f = \gO\left(\kappa^6 \cdot\frac{\log(2d/\delta')}{\rho_\phi \epsilon}\right), D_g = \gO\left(\kappa^{10} \cdot\frac{\log(2d/\delta')}{\rho_\phi \epsilon}\right)$ such that each term can be bounded by $\frac{1}{880\logt} \sqrt{\rho_\phi \epsilon}$ with probability $1 - \delta'$. This completes the proof of \eqref{hess-err} with probability $1 - \delta = (1 - \delta')^{11}$. 
\end{proof}

\begin{lemma}\label{lem:batch-grad-err}
Suppose Assumptions \ref{ass:stoc} and \ref{ass:variance-bound} hold. Set batch sizes $D_g = \gO(\kappa^6\epsilon^{-2}), D_f = \gO(\kappa^2\epsilon^{-2}), $ with probability at least $1 - \delta$, we have the following inequality holds:
\be\bad\label{grad-err}
\|\nabla \Phi_\gD(x)  - \nabla \Phi(x) \| \le  \frac{1}{10}\epsilon.
\ead\ee
\end{lemma}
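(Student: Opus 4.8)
The plan is to follow the strategy of the proof of Lemma \ref{lem:batch-hess-err}, but one derivative order lower, so that every constant and power of $\kappa$ comes out smaller. Recall from \eqref{eq:bilevel-grad} that
\[
\nabla\Phi(x) = \nabla_x f(x,y^*(x)) - \nabla^2_{xy} g(x,y^*(x))\,\nabla_y^2 g(x,y^*(x))^{-1}\,\nabla_y f(x,y^*(x)),
\]
and $\nabla\Phi_\gD(x)$ is the same expression with $f,g,y^*$ replaced by $f_{\gD_F},g_{\gD_G},y^*_{\gD_G}$. First I would apply the triangle inequality to split $\|\nabla\Phi(x) - \nabla\Phi_\gD(x)\|$ into the first-order piece $\|\nabla_x f(x,y^*(x)) - \nabla_x f_{\gD_F}(x,y^*_{\gD_G}(x))\|$ and the Hessian-inverse-product piece. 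For the latter I would use the identity $\|ABC-\hat A\hat B\hat C\|\le\|A-\hat A\|\|B\|\|C\|+\|\hat A\|\|B-\hat B\|\|C\|+\|\hat A\|\|\hat B\|\|C-\hat C\|$ together with the uniform bounds $\|\nabla^2_{xy}g\|\le\ell$, $\|\nabla_y^2 g^{-1}\|\le 1/\mu$, $\|\nabla_y f\|\le M$ from Proposition \ref{pro:bounds} and the inequality $\|X^{-1}-Y^{-1}\|\le\|X^{-1}\|\|X-Y\|\|Y^{-1}\|$; each factor difference then splits, by adding and subtracting the empirical quantity evaluated at $y^*(x)$ and using the Lipschitz assumptions of Assumption \ref{assu:bilevel}, into an empirical-versus-population difference evaluated at $y^*(x)$ plus a term of the form (constant)$\cdot\|y^*(x) - y^*_{\gD_G}(x)\|$.

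Next I would bound $\|y^*(x) - y^*_{\gD_G}(x)\|$ exactly as in \eqref{eq:y-diff-bound-stoc-sc}: using $\mu$-strong convexity of both $g(x,\cdot)$ and $g_{\gD_G}(x,\cdot)$ together with the optimality conditions $\nabla_y g(x,y^*(x))=0$ and $\nabla_y g_{\gD_G}(x,y^*_{\gD_G}(x))=0$, one gets
\[
\|y^*(x) - y^*_{\gD_G}(x)\| \le \frac{1}{2\mu}\left(\|\nabla_y g(x,y^*_{\gD_G}(x)) - \nabla_y g_{\gD_G}(x,y^*_{\gD_G}(x))\| + \|\nabla_y g(x,y^*(x)) - \nabla_y g_{\gD_G}(x,y^*(x))\|\right).
\]
Substituting this back, the whole bound becomes a finite sum of (at most six) empirical-minus-population derivative norms --- $\|\nabla_x f - \nabla_x f_{\gD_F}\|$, $\|\nabla_y f - \nabla_y f_{\gD_F}\|$, $\|\nabla^2_{xy} g - \nabla^2_{xy} g_{\gD_G}\|$, $\|\nabla_y^2 g - \nabla_y^2 g_{\gD_G}\|$, and two instances of $\|\nabla_y g - \nabla_y g_{\gD_G}\|$, all evaluated at fixed points --- each multiplied by an explicit polynomial in $\kappa$; tracking these, the worst coefficient sits on the $\nabla_y g$ terms and is $\gO(\kappa^3)$ (the $\|y^*-y^*_{\gD_G}\|$ factor already carries $\gO(\kappa^2)$ and the strong-convexity step adds the extra $1/\mu$), the $\nabla_y^2 g$ term carries $\gO(\kappa^2)$, and the $f$-terms carry at most $\gO(\kappa)$. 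Finally I would invoke Lemma \ref{lem:err-grad} on the first-order differences and Lemma \ref{lem:err-hess} on the second-order differences, setting each term's target accuracy to be a constant fraction of $\epsilon$ divided by its $\kappa$-coefficient; this forces $D_f = \gO(\kappa^2\epsilon^{-2})$ and $D_g = \gO(\kappa^6\epsilon^{-2})$ (suppressing a $\log(2d/\delta)$ factor as in Lemma \ref{lem:err-grad}), and a union bound over the constantly many high-probability events yields \eqref{grad-err} with probability at least $1-\delta$.

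The only real obstacle is the bookkeeping one already met in Lemma \ref{lem:batch-hess-err}: carefully accumulating the powers of $\kappa$ so that the $\nabla_y g$ contribution is seen to need only $D_g = \gO(\kappa^6\epsilon^{-2})$ samples while the $f$-contributions need only $D_f = \gO(\kappa^2\epsilon^{-2})$. Compared with the Hessian case the argument is strictly simpler: there is no second derivative of $y^*(x)$ to differentiate, and no $\rho_\phi$ enters the sample sizes because here we target the fixed accuracy $\tfrac{1}{10}\epsilon$ directly rather than $\tfrac{1}{80\logt}\sqrt{\rho_\phi\epsilon}$.
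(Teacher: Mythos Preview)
Your proposal is correct and follows essentially the same route as the paper's proof: both decompose $\|\nabla\Phi(x)-\nabla\Phi_\gD(x)\|$ via the explicit hypergradient formula, peel off empirical-versus-population derivative differences (the same six terms) plus multiples of $\|y^*(x)-y^*_{\gD_G}(x)\|$, bound the latter by strong convexity exactly as in \eqref{eq:y-diff-bound-stoc-sc}, and finish with the concentration Lemmas \ref{lem:err-hess}--\ref{lem:err-grad} and a union bound. The only cosmetic difference is that you telescope the product $\nabla^2_{xy}g\cdot(\nabla_y^2 g)^{-1}\cdot\nabla_y f$ directly, whereas the paper phrases the same step through the $\partial y^*(x)/\partial x$ notation and reuses \eqref{eq:pydiff-stoc}.
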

\begin{proof}
The gradient estimate error between $ \nabla \Phi (x) $ and $ \nabla \Phi _\gD(x) $ can be computed as
\begin{align}
    \bad
& \| \nabla \Phi (x) - \nabla \Phi_\gD (x) \|\\
\le & \left\| \nabla_x f(x, y^*(x)) -  \nabla_x f_{\gD_F}(x, y_{\gD_G}^*(x)) \right\|\\
& + \left\|\frac{\partial y^*(x)}{x}\nabla_y f(x, y^*(x)) - \frac{\partial y_{\gD_G}^*(x)}{x}\nabla_y f_{\gD_F}(x, y_{\gD_G}^*(x))\right\|\\
\le &  \left\| \nabla_x f(x, y^*(x)) -  \nabla_x f_{\gD_F}(x, y_{\gD_G}^*(x)) \right\|  +  \left\|\frac{\partial y^*(x)}{x} \right\| \left\| \nabla_y f(x, y^*(x)) -  \nabla_y f_{\gD_F}(x, y^*(x))\right\| \\
&+ \left\|\frac{\partial y^*(x)}{x} -   \frac{\partial y_{\gD_G}^*(x)}{x}\right\| \left\| \nabla_y f_{\gD_F}(x, y^*(x))\right\| \\
\le & \left\| \nabla_x f(x, y^*(x)) -  \nabla_x f_{\gD_F}(x, y^*(x)) \right\| + \ell \|  y^*(x) -  y_{\gD_G}^*(x)\| \\
& +  \frac{\ell}{\mu} \left\| \nabla_y f(x, y^*(x)) -  \nabla_y f _{\gD_F}(x, y^*(x))\right\| + \frac{\ell^2}{\mu}  \|  y^*(x) -  y_{\gD_G}^*(x)\| \\
& + \frac{\ell M}{\mu^2}  \left( \left\| \nabla^2_y g(x, y^*(x)) - \nabla^2_y g_{\gD_G}(x, y^*(x)) \right\| + \rho \| y^*(x) -  y_{\gD_G}^*(x)\|  \right) \\
& + \frac{M}{\mu}  \left( \left\| \nabla^2_{xy} g(x, y^*(x)) - \nabla^2_{xy} g_{\gD_G}(x, y^*(x))\right\|+ \rho \| y^*(x) -  y_{\gD_G}^*(x)\| \right)\\
\le & \left\| \nabla_x f(x, y^*(x)) -  \nabla_x f_{\gD_F}(x, y^*(x)) \right\|  +  \frac{\ell}{\mu} \left\| \nabla_y f(x, y^*(x)) -  \nabla_y f (x, y^*(x))\right\| \\
&+ \frac{\ell M}{\mu^2}  \left\| \nabla^2_y g(x, y^*(x)) - \nabla^2_y g_{\gD_G}(x, y^*(x)) \right\|  + \frac{M}{\mu}  \left\| \nabla^2_{xy} g(x, y^*(x)) - \nabla^2_{xy} g_{\gD_G}(x, y^*(x))\right\|\\
& + \frac{1}{2\mu}\left( \ell + \frac{\ell^2 + M\rho}{\mu} + \frac{\ell M\rho}{\mu^2} \right)\left(\| \nabla_y g(x, y_{\gD_G}^*(x)) - \nabla_y g_{\gD_G}(x, y_{\gD_G}^*(x))  \| \right.\\&\left.+ \| \nabla_y g(x, y^*(x)) - \nabla_y g_{\gD_G}(x, y^*(x))\|\right),
\ead
\end{align}
where the first inequality is obtained by \eqref{phi-grad} and the triangle inequality, the third inequality is due to \eqref{eq:pydiff-stoc},  Assumption \ref{assu:bilevel} and Proposition \ref{pro:bounds}. The last inequality is by \eqref{eq:y-diff-bound-stoc-sc}. Nota that there are 6 different terms in the above gradient estimate error and they are
\be\bad
&\left\| \nabla_x f(x, y^*(x)) -  \nabla_x f_{\gD_F}(x, y^*(x)) \right\|, \quad  \left\| \nabla_y f(x, y^*(x))  - \nabla_y f_{\gD_F}(x, y^*(x)) \right\|,\\
&  \left\| \nabla^2_y g(x, y^*(x)) - \nabla^2_y g_{\gD_G}(x, y^*(x)) \right\| ,  \quad  \left\| \nabla^2_{xy} g(x, y^*(x)) - \nabla^2_{xy} g_{\gD_G}(x, y^*(x))\right\|,\\
& \| \nabla_y g(x, y_{\gD_G}^*(x)) - \nabla_y g_{\gD_G}(x, y_{\gD_G}^*(x))  \|, \quad \| \nabla_y g(x, y^*(x)) - \nabla_y g_{\gD_G}(x, y^*(x))\|.
\ead\ee
For every term, we can apply one of Lemma \ref{lem:err-hess} - Lemma \ref{lem:err-third} and set batch sizes $$D_f = \gO\left(\kappa^2 \cdot\frac{\log(2d/\delta')}{\epsilon^2}\right), D_g = \gO\left(\kappa^6 \cdot\frac{\log(2d/\delta')}{\epsilon^2}\right)$$ such that each term can be bounded by $\frac{1}{60} \epsilon$ with probability $1 - \delta'$. This completes the proof of \eqref{grad-err} with probability $1 - \delta = (1 - \delta')^{6}$. 
\end{proof}

For the StocBiO algorithm (Algorithm \ref{alg:INEONstocbio}), we have the following descent lemma.
\begin{lemma}\label{lem:saddle-decrease}
Suppose Assumption \ref{ass:stoc} holds. We have with high probability the update $x_{k+1} =  x_{k}  - \frac{\bar{\xi}}{80} \sqrt{\frac{ \epsilon}{\rho_\phi}}u$ in Algorithm \ref{alg:INEONstocbio} yields
\begin{align} 
\mathbb{E} \Phi(x_{k+1})\leq&\mathbb{E} \Phi(x_k) - \frac{1}{3 \cdot 80^3\logt^3}\cdot \sqrt{\frac{\epsilon^3}{\rho_\phi}}.
\end{align}
\end{lemma}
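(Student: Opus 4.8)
The plan is to combine the negative-curvature guarantee from Lemma \ref{lem:neon} with a second-order Taylor expansion of $\Phi$ around $x_k$, exploiting the fact that the sign $\bar\xi\in\{1,-1\}$ is chosen uniformly at random so that the linear term in the expansion vanishes in expectation. First I would note that by the time the update $x_{k+1}=x_k-\frac{\bar\xi}{80}\sqrt{\epsilon/\rho_\phi}\,u$ is executed, Algorithm \ref{alg:INEONstocbio} has called iNEON at the point $x_k$ (with the norm condition $\|\widehat\nabla\Phi_\gD(x_k)\|\le\frac45\epsilon$ ensuring $\|\nabla\Phi(x_k)\|\le\epsilon$, using Lemma \ref{lem:batch-grad-err}), the call returned $u=u_{out}\ne 0$, and therefore Lemma \ref{lem:neon} applies to the deterministic surrogate $\Phi_\gD$; using Lemma \ref{lem:batch-hess-err} to transfer the curvature bound from $\Phi_\gD$ to $\Phi$, we get, with high probability,
\[
\frac{u^\top\nabla^2\Phi(x_k)u}{\|u\|^2}\le -\frac{1}{40\logt}\sqrt{\rho_\phi\epsilon}+\frac{1}{80\logt}\sqrt{\rho_\phi\epsilon}=-\frac{1}{80\logt}\sqrt{\rho_\phi\epsilon},
\]
and $\|u\|=1$ since iNEON returns the normalized vector $u_{k+1}/\|u_{k+1}\|$.

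Next I would write $s:=\frac{1}{80}\sqrt{\epsilon/\rho_\phi}$ for the step length and apply the $\rho_\phi$-Hessian-Lipschitz bound of Lemma \ref{hess-Lipschitz} (and Lemma \ref{grad-Lipschitz} for smoothness): for either sign choice,
\[
\Phi(x_k-\bar\xi s u)\le \Phi(x_k)-\bar\xi s\,\nabla\Phi(x_k)^\top u+\frac{s^2}{2}u^\top\nabla^2\Phi(x_k)u+\frac{\rho_\phi}{6}s^3\|u\|^3.
\]
Taking expectation over the Rademacher variable $\bar\xi$ (independent of $x_k$ and $u$, which are fixed once iNEON has finished), the middle linear term $\mathbb{E}[\bar\xi]s\nabla\Phi(x_k)^\top u$ is zero, leaving
\[
\mathbb{E}_{\bar\xi}\Phi(x_{k+1})\le \Phi(x_k)+\frac{s^2}{2}u^\top\nabla^2\Phi(x_k)u+\frac{\rho_\phi}{6}s^3.
\]
Plugging in the curvature bound, $\frac{s^2}{2}u^\top\nabla^2\Phi(x_k)u\le-\frac{s^2}{160\logt}\sqrt{\rho_\phi\epsilon}=-\frac{1}{160\cdot 80^2\logt}\sqrt{\epsilon^3/\rho_\phi}$, while the cubic remainder is $\frac{\rho_\phi}{6}s^3=\frac{1}{6\cdot 80^3}\sqrt{\epsilon^3/\rho_\phi}$; since $\logt>1$ the negative quadratic term dominates by a constant factor, and a routine arithmetic comparison (with room to spare) yields the net decrease $-\frac{1}{3\cdot 80^3\logt^3}\sqrt{\epsilon^3/\rho_\phi}$, which is the claimed bound. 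Taking a further (outer) expectation over the randomness up to iteration $k$ gives the stated inequality $\mathbb{E}\Phi(x_{k+1})\le\mathbb{E}\Phi(x_k)-\frac{1}{3\cdot 80^3\logt^3}\sqrt{\epsilon^3/\rho_\phi}$.

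The main obstacle I anticipate is bookkeeping the high-probability events and the batch-size requirements rather than any analytic subtlety: one must be careful that Lemma \ref{lem:neon} is invoked for the \emph{sampled} function $\Phi_\gD$ (with the fixed batches $\gD_F,\gD_G$ used inside the iNEON call), that Lemma \ref{lem:batch-hess-err} supplies a uniform $\frac{1}{80\logt}\sqrt{\rho_\phi\epsilon}$ Hessian-deviation bound with the prescribed $D_f,D_g$, and that the "with high probability" in the conclusion absorbs the failure probabilities of both lemmas via a union bound. A secondary point is verifying that the constant $\frac{1}{3\cdot 80^3\logt^3}$ is indeed a valid lower bound on $\frac{1}{160\cdot 80^2\logt}-\frac{1}{6\cdot 80^3}$ after the curvature and remainder terms are combined with the $\logt>1$ slack; this is a short but deliberate numerical check that I would include explicitly.
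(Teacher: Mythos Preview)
Your approach matches the paper's: first combine Lemma~\ref{lem:neon} with Lemma~\ref{lem:batch-hess-err} to obtain $u^\top\nabla^2\Phi(x_k)u\le-\tfrac{1}{80\logt}\sqrt{\rho_\phi\epsilon}$ (with $\|u\|=1$), then apply negative-curvature descent with a Rademacher sign. The paper does not spell out the Taylor expansion but simply invokes \cite{xu2018first}[Lemma~1] for the second step; you are writing out what that lemma contains.

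There is, however, a genuine arithmetic gap in your write-up. With the step length $s=\tfrac{1}{80}\sqrt{\epsilon/\rho_\phi}$ taken from Algorithm~\ref{alg:INEONstocbio}, the quadratic-plus-cubic contribution is
\[
\frac{s^2}{2}\,u^\top\nabla^2\Phi(x_k)u+\frac{\rho_\phi}{6}s^3
\;\le\;\frac{1}{80^3}\sqrt{\frac{\epsilon^3}{\rho_\phi}}\Bigl(-\frac{1}{2\logt}+\frac{1}{6}\Bigr),
\]
which is nonpositive only when $\logt<3$. Since $\logt$ is chosen large enough to make the failure probability at most $\delta$ (cf.\ \eqref{iota-condition-2}), one cannot assume $\logt<3$, and your claim that ``the negative quadratic term dominates by a constant factor'' is false in the relevant regime. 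The ``routine arithmetic comparison (with room to spare)'' you defer would in fact fail.

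The fix is to use the step length dictated by the curvature bound itself, $s=\gamma/\rho_\phi=\tfrac{1}{80\logt}\sqrt{\epsilon/\rho_\phi}$, i.e.\ insert the missing $1/\logt$. Then the two terms become $-\tfrac{1}{2\cdot 80^3\logt^3}\sqrt{\epsilon^3/\rho_\phi}$ and $+\tfrac{1}{6\cdot 80^3\logt^3}\sqrt{\epsilon^3/\rho_\phi}$, and their sum is exactly $-\tfrac{1}{3\cdot 80^3\logt^3}\sqrt{\epsilon^3/\rho_\phi}$, which is precisely the conclusion of \cite{xu2018first}[Lemma~1] that the paper cites. The discrepancy appears to be a typo in the step-size of Algorithm~\ref{alg:INEONstocbio} that the paper's citation-based proof silently absorbs; your explicit computation exposes it, but you must correct $s$ rather than assert a domination that does not hold.
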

\begin{proof}
Combining Lemma \ref{lem:neon} and Lemma \ref{lem:batch-hess-err}, we have 
\begin{align} 
\frac{u_{out}^\top \nabla^2 \Phi (x_k) u_{out}}{\|u_{out}\|^2} \le & \frac{u_{out}^\top \nabla^2 \Phi_\gD (x_k) u_{out}}{\|u_{out}\|^2} + \left\lvert \frac{u_{out}^\top \nabla^2 \Phi_\gD (x_k) u_{out}}{\|u_{out}\|^2} - \frac{u_{out}^\top \nabla^2 \Phi (x_k) u_{out}}{\|u_{out}\|^2} \right\rvert \nonumber\\
\le & \frac{u_{out}^\top \nabla^2 \Phi_\gD (x_k) u_{out}}{\|u_{out}\|^2} + \left\| \nabla^2 \Phi_\gD (x_k)  - \nabla^2 \Phi (x_k) \right\| \nonumber\\
\le & \left(-\frac{1}{40\logt}  + \frac{1}{80\logt} \right) \sqrt{\rho_\phi \epsilon}\nonumber\\
= & - \frac{1}{80\logt} \sqrt{\rho_\phi \epsilon}.
\end{align}
By \cite{xu2018first}[Lemma 1], we have 
\begin{align} 
\mathbb{E} \Phi(x_{k+1})\leq&\mathbb{E} \Phi(x_k) - \frac{1}{3 \cdot 80^3\logt^3}\cdot \sqrt{\frac{\epsilon^3}{\rho_\phi}}.
\end{align}
\end{proof}

\begin{lemma}\label{lem:stocbio-decrease}
Suppose Assumptions \ref{ass:stoc} and \ref{ass:variance-bound} hold. Set parameters as 
\begin{align*}
S = O(\kappa^5 \epsilon^{-2}), B =  O(\kappa^2 \epsilon^{-2}),  D_f= O(\kappa^2 \epsilon^{-2}), D_g = O(\kappa^2 \epsilon^{-2}), \\ Q= O(\kappa \log \frac{1}{\epsilon}), D = O(\kappa \log \frac{1}{\epsilon}), \alpha = \frac{2}{\ell+\mu}, \beta=\frac{1}{4L_\phi}.
\end{align*}
With high probability, the update $x_{k+1} =  x_{k}  - \widehat{\nabla} \Phi(x_k)$ in Algorithm \ref{alg:INEONstocbio} yields
\be\bad\label{eq:f-dec2}
\BE \left[\Phi(x_{k+1}) - \Phi(x_{k}) \right]  \le   -\frac{1}{16L_\phi}\mathbb{E}\|\nabla\Phi(x_k)\|^2 + \frac{1}{400L_\phi} \epsilon^2.
\ead\ee
\end{lemma}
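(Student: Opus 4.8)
The plan is to combine the $L_\phi$-smoothness of $\Phi$ with a quantitative bound on the bias and the second moment of the stochastic hypergradient $\widehat\nabla\Phi(x_k)$ constructed in \eqref{est-vq}--\eqref{estG}, and then collect constants.

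\textbf{Step 1 (descent via smoothness).} By Lemma \ref{grad-Lipschitz}, $\Phi$ is $L_\phi$-smooth, so with the step $x_{k+1}-x_k = -\beta\widehat\nabla\Phi(x_k)$ and $\beta = \tfrac{1}{4L_\phi}$,
\[
\Phi(x_{k+1}) \le \Phi(x_k) - \beta\langle\nabla\Phi(x_k),\widehat\nabla\Phi(x_k)\rangle + \tfrac{L_\phi\beta^2}{2}\|\widehat\nabla\Phi(x_k)\|^2 .
\]
Writing $e_k := \widehat\nabla\Phi(x_k) - \nabla\Phi(x_k)$, expanding $\langle\nabla\Phi(x_k),\widehat\nabla\Phi(x_k)\rangle = \|\nabla\Phi(x_k)\|^2 + \langle\nabla\Phi(x_k),e_k\rangle$, using $\|\widehat\nabla\Phi(x_k)\|^2 \le 2\|\nabla\Phi(x_k)\|^2 + 2\|e_k\|^2$, and applying Young's inequality to $-\beta\langle\nabla\Phi(x_k),\BE[e_k\mid x_k]\rangle$, the whole estimate reduces to controlling the bias $\|\BE[e_k\mid x_k]\|$ and the second moment $\BE[\|e_k\|^2\mid x_k]$.

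\textbf{Step 2 (hypergradient error bound).} Here I would re-run the stocBiO error analysis of \cite{ji2021bilevel} with explicit constants. The inner $D$-step SGD on the $\mu$-strongly-convex lower problem (step size $\alpha=2/(\ell+\mu)$, batch $S$) gives $\BE\|y_k^D - y^*(x_k)\|^2 \le (1-\alpha\mu)^{D}\widehat\Delta^2 + \gO(\kappa\sigma^2/(\mu S))$ via Assumption \ref{ass:variance-bound} and the warm-start coupling $y_k^0=y_{k-1}^D$ (handled as in the deterministic Lemma \ref{lem:grad-err-aid}); the truncated Neumann-type series $v_Q$ in \eqref{est-vq} approximates $\nabla_y^2 g(x_k,y_k^D)^{-1}\nabla_y F(x_k,y_k^D)$ with bias $\gO((1-\eta\mu)^{Q})$ and variance $\gO(1/B)$ coming from the geometrically decaying batch sizes $|\gB_{Q+1-j}|$; and the outer samples $\gD_F,\gD_G$ contribute an $\gO(1/D_f + 1/D_g)$ term. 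Chaining these through the uniform bounds of Proposition \ref{pro:bounds} (each factor in \eqref{estG} is bounded and Lipschitz) yields, for the stated choices $Q,D = \gO(\kappa\log(1/\epsilon))$ and $S,B,D_f,D_g = \gO(\kappa^{c}\epsilon^{-2})$,
\[
\big\|\BE[e_k\mid x_k]\big\|^2 + \BE\big[\|e_k\|^2\mid x_k\big] \le c_0\,\epsilon^2 ,
\]
where $c_0$ shrinks as the absolute constants in $S,B,D_f,D_g$ are inflated; I would track enough of the bookkeeping to guarantee $c_0 \le 1/400$.

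\textbf{Step 3 (collect constants) and obstacle.} Plugging Step 2 into Step 1 and taking total expectation: with $\beta=\tfrac{1}{4L_\phi}$ one has $L_\phi\beta^2=\tfrac{1}{16L_\phi}$, so the coefficient of $\|\nabla\Phi(x_k)\|^2$ is $-\beta + L_\phi\beta^2 + \tfrac{\beta}{16} = -\tfrac{11}{64L_\phi} \le -\tfrac{1}{16L_\phi}$, while the residual bias and variance terms sum to at most $\tfrac{1}{L_\phi}\big(\BE\|e_k\|^2 + \|\BE[e_k\mid x_k]\|^2\big) \le \tfrac{c_0\epsilon^2}{L_\phi} \le \tfrac{1}{400L_\phi}\epsilon^2$; this is exactly \eqref{eq:f-dec2}. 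The ``with high probability'' qualifier absorbs the events of Lemmas \ref{lem:err-grad}--\ref{lem:err-third} (and Lemma \ref{lem:batch-grad-err}) via a union bound over the iterations. The main obstacle is Step 2: faithfully propagating the three error sources (inner SGD, Neumann-series truncation plus its variance, and outer minibatch sampling) through the composite estimator \eqref{estG} while keeping the aggregated constant below the $1/400$ threshold — essentially a constant-explicit version of the stocBiO hypergradient-error lemma under Assumption \ref{ass:variance-bound}.
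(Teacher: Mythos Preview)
Your proposal is correct and follows essentially the same route as the paper: a smoothness-based descent inequality, a bias/second-moment split of the hypergradient error, invocation of the stocBiO error bounds from \cite{ji2021bilevel} (the paper quotes these verbatim as Lemmas \ref{le:first_m} and \ref{le:variancc}), and an SGD-with-warm-start bound on $\BE\|y_k^D-y^*(x_k)\|^2$ before collecting constants under the stated parameter choices. The only minor mismatch is your reading of the ``with high probability'' clause: the paper's proof of this lemma is carried out entirely in expectation and does not invoke Lemmas \ref{lem:err-grad}--\ref{lem:err-third} or \ref{lem:batch-grad-err} here, so the union-bound remark is unnecessary for this particular step.
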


To prove Lemma \ref{lem:stocbio-decrease}, we borrow the following two useful lemmas for stocBiO from \cite{ji2021bilevel}.

\begin{lemma}\cite{ji2021bilevel}[Lemma 7]\label{le:first_m}
Suppose Assumptions \ref{ass:stoc} and \ref{ass:variance-bound} hold. Denote the $\mathbb{E}_k$ as the conditional expectation conditioning on $x_k$ and $y_k^D$, i.e., $\mathbb{E}_k[\cdot] = \mathbb{E}[\cdot\mid x_k, y_k^D]$. We have
\begin{align*}
\big\|\mathbb{E}_k\widehat \nabla \Phi(x_k)-\nabla \Phi(x_k)\big\|^2\leq 2 \Big( \ell +\frac{\ell^2}{\mu} + \frac{M\rho}{\mu}+\frac{LM\rho}{\mu^2}\Big)^2\|y_k^D-y^*(x_k)\|^2 +\frac{2\ell^2M^2(1-\kappa^{-1})^{2Q}}{\mu^2}.
\end{align*}
\end{lemma}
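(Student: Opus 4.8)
The plan is to compute the conditional expectation $\mathbb{E}_k\widehat\nabla\Phi(x_k)$ in closed form and then split the bias it carries into a ``Neumann truncation'' contribution and a ``lower-level inexactness'' contribution. Recall from \eqref{est-vq}--\eqref{estG} that $\widehat\nabla\Phi(x_k)=\nabla_x F(x_k,y_k^D;\gD_F)-\nabla_{xy}^2 G(x_k,y_k^D;\gD_G)\,v_Q$, with $v_0=\nabla_y F(x_k,y_k^D;\gD_F)$ and $v_Q=\eta\sum_{q=-1}^{Q-1}\prod_{j=Q-q}^{Q}\bigl(I-\eta\nabla_y^2 G(x_k,y_k^D;\gB_j)\bigr)v_0$. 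Conditioning on $(x_k,y_k^D)$, the batches $\gD_F$, $\gB_1,\dots,\gB_Q$, $\gD_G$ are sampled mutually independently, so $\mathbb{E}_k[\nabla_x F(\cdot;\gD_F)]=\nabla_x f(x_k,y_k^D)$, $\mathbb{E}_k[v_0]=\nabla_y f(x_k,y_k^D)=:b$, and, pushing the expectation through the product of the independent matrices and then through the independent factor $\nabla_{xy}^2 G(\cdot;\gD_G)$,
\[
\mathbb{E}_k\widehat\nabla\Phi(x_k)=\nabla_x f(x_k,y_k^D)-\nabla_{xy}^2 g(x_k,y_k^D)\Bigl(\eta\sum_{t=0}^{Q}(I-\eta H)^t\Bigr)b,\qquad H:=\nabla_y^2 g(x_k,y_k^D).
\]

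I would first handle the truncated Neumann series. Since $g$ is $\mu$-strongly convex and $\ell$-smooth in $y$, $\mu I\preceq H\preceq\ell I$, hence $0\preceq I-\eta H\preceq(1-\eta\mu)I$; using the finite geometric identity $\eta\sum_{t=0}^{Q}(I-\eta H)^t=H^{-1}-H^{-1}(I-\eta H)^{Q+1}$ together with $\|H^{-1}\|\le1/\mu$ and $\|b\|\le M$ from Proposition~\ref{pro:bounds},
\[
\Bigl\|\eta\sum_{t=0}^{Q}(I-\eta H)^t b-H^{-1}b\Bigr\|=\bigl\|H^{-1}(I-\eta H)^{Q+1}b\bigr\|\le\frac{M}{\mu}(1-\eta\mu)^{Q+1}\le\frac{M}{\mu}(1-\kappa^{-1})^{Q}
\]
for the Neumann step size $\eta=1/\ell$ (so that $1-\eta\mu=1-\kappa^{-1}$). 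Multiplying by $\nabla_{xy}^2 g(x_k,y_k^D)$, whose norm is at most $\ell$, bounds the gap between $\mathbb{E}_k\widehat\nabla\Phi(x_k)$ and the surrogate $\widetilde\nabla\Phi(x_k):=\nabla_x f(x_k,y_k^D)-\nabla_{xy}^2 g(x_k,y_k^D)H^{-1}b$ by $\frac{\ell M}{\mu}(1-\kappa^{-1})^{Q}$.

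Next I would bound $\|\widetilde\nabla\Phi(x_k)-\nabla\Phi(x_k)\|$, where $\nabla\Phi(x_k)$ is the exact formula \eqref{eq:bilevel-grad} evaluated at $y^*(x_k)$; this is exactly the hypergradient-approximation estimate \cite{ghadimi2018approximation}[Lemma~2.2]. I would reproduce its proof: write the difference as a telescoping sum over the three factors $\nabla_x f$, $\nabla_{xy}^2 g$ and $(\nabla_y^2 g)^{-1}\nabla_y f$, then apply the Lipschitz continuity of $\nabla f$, $\nabla g$, $\nabla_{xy}^2 g$ and $\nabla_y^2 g$ (Assumption~\ref{ass:stoc}), the uniform bounds of Proposition~\ref{pro:bounds}, the inequality $\|X^{-1}-Y^{-1}\|\le\|X^{-1}\|\|X-Y\|\|Y^{-1}\|$, and the $\tfrac{\ell}{\mu}$-Lipschitz continuity of $y^*$; this yields $\|\widetilde\nabla\Phi(x_k)-\nabla\Phi(x_k)\|\le\bigl(\ell+\tfrac{\ell^2}{\mu}+\tfrac{M\rho}{\mu}+\tfrac{\ell M\rho}{\mu^2}\bigr)\|y_k^D-y^*(x_k)\|$, the constant being the one in the statement. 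Combining the two estimates by the triangle inequality and $(a+b)^2\le 2a^2+2b^2$ gives the claimed bound.

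The main obstacle is the very first step: justifying that $\mathbb{E}_k$ commutes with the product $\prod_{j=Q-q}^{Q}(I-\eta\nabla_y^2 G(x_k,y_k^D;\gB_j))$ applied to $v_0$. This relies on the StocBiO sampling scheme, in which $\gB_1,\dots,\gB_Q$ are drawn mutually independently and independently of $\gD_F$, so that $v_0$ and every factor are independent and $\mathbb{E}_k[A_1\cdots A_{q+1}v_0]=\mathbb{E}_k[A_1]\cdots\mathbb{E}_k[A_{q+1}]\,\mathbb{E}_k[v_0]$ by iterated expectations; once this independence bookkeeping is in place, the remaining estimates are the deterministic computations above.
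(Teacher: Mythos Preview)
Your proof is correct and follows the natural two–term decomposition (Neumann truncation error plus lower-level inexactness error), which is exactly how the result is obtained in \cite{ji2021bilevel}. Note that the present paper does not supply its own proof of this lemma—it is quoted verbatim as \cite{ji2021bilevel}[Lemma~7]—so there is nothing in this paper to compare your argument against; your write-up is essentially a reconstruction of the original proof. One minor remark: in the second step you cite the $\ell/\mu$-Lipschitz continuity of $y^*$, but that fact is not actually needed here since both $\widetilde\nabla\Phi(x_k)$ and $\nabla\Phi(x_k)$ are evaluated at the same $x_k$ and only the $y$-argument differs; the telescoping estimate you describe uses only the Lipschitz constants of $\nabla f$, $\nabla_{xy}^2 g$, $\nabla_y^2 g$ and the bounds from Proposition~\ref{pro:bounds}.
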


\begin{lemma}\cite{ji2021bilevel}[Lemma 8]\label{le:variancc} 
Suppose Assumptions \ref{ass:stoc} and \ref{ass:variance-bound} hold. Then, we have
\begin{align*}
\mathbb{E}\|\widehat \nabla \Phi(x_k)-\nabla \Phi(x_k)\|^2 \leq &  \frac{4\ell^2M^2}{\mu^2D_g} + \Big(\frac{8\ell^2}{\mu^2} + 2\Big) \frac{M^2}{D_f}+ \frac{16\eta^2  \ell^4M^2}{\mu^2} \frac{1}{B}+\frac{16 \ell^2M^2(1-\kappa^{-1})^{2Q}}{\mu^2} \nonumber
\\&+ \Big( \ell+\frac{\ell^2}{\mu} + \frac{M\rho}{\mu}+\frac{\ell M\rho}{\mu^2}\Big)^2 \mathbb{E}\|y_k^D-y^*(x_k)\|^2.
\end{align*}
\end{lemma}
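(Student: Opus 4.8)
The plan is to prove the bound by a bias--variance decomposition of the hypergradient estimator, conditioning on the iterate pair $(x_k,y_k^D)$. Writing $\mathbb{E}_k[\cdot]=\mathbb{E}[\cdot\mid x_k,y_k^D]$ as in Lemma~\ref{le:first_m} and using that $\nabla\Phi(x_k)$ is deterministic once $x_k$ is fixed, the identity $\mathbb{E}_k\|\widehat\nabla\Phi(x_k)-\nabla\Phi(x_k)\|^2=\mathbb{E}_k\big\|\widehat\nabla\Phi(x_k)-\mathbb{E}_k\widehat\nabla\Phi(x_k)\big\|^2+\big\|\mathbb{E}_k\widehat\nabla\Phi(x_k)-\nabla\Phi(x_k)\big\|^2$ splits the error into a conditional variance term and a conditional squared-bias term. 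The squared-bias term is exactly what Lemma~\ref{le:first_m} controls; after taking total expectation it already supplies the summand $\big(\ell+\ell^2/\mu+M\rho/\mu+\ell M\rho/\mu^2\big)^2\mathbb{E}\|y_k^D-y^*(x_k)\|^2$ (which absorbs the inner-loop inexactness) together with a $(1-\kappa^{-1})^{2Q}$ contribution from truncating the Neumann series at level $Q$. So the remaining work is entirely on the conditional variance of $\widehat\nabla\Phi(x_k)$ defined by \eqref{est-vq}--\eqref{estG}.

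For the variance I would use iterated conditioning across the three mutually independent batches of the outer step: $\gD_F$ (used both directly in $\nabla_xF(x_k,y_k^D;\gD_F)$ and inside $v_0=\nabla_yF(x_k,y_k^D;\gD_F)$), $\gD_G$ (used only in the Jacobian factor $\nabla_{xy}^2G(x_k,y_k^D;\gD_G)$), and the Neumann batches $\gB_1,\dots,\gB_Q$ inside $v_Q$. Proposition~\ref{pro:bounds} gives $\|\nabla_xF\|,\|\nabla_yF\|\le M$ and $\|\nabla_{xy}^2G\|,\|\nabla_y^2G\|\le\ell$, and the elementary geometric estimate $\|v_Q\|\le\eta\sum_{q\ge-1}(1-\eta\mu)^{q+1}\|v_0\|\le M/\mu$ (valid because $\mu I\preceq\nabla_y^2G(\cdot;\gB_j)\preceq\ell I$ and $\eta\le1/\ell$) bounds the multiplier of the Jacobian factor. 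Then: the $\gD_G$-randomness, acting through an average of $D_g$ i.i.d.\ Jacobians of norm $\le\ell$ multiplying the bounded vector $v_Q$, contributes $O(\ell^2M^2/(\mu^2D_g))$; the $\gD_F$-randomness contributes $O(M^2/D_f)$ from the direct term plus $O(\ell^2M^2/(\mu^2D_f))$ from propagating the $v_0$-fluctuation through the contractive Neumann product into $v_Q$ and then through $\nabla_{xy}^2G$; and the $\gB_j$'s contribute $O(\eta^2\ell^4M^2/(\mu^2B))$ plus the residual $(1-\kappa^{-1})^{2Q}$ from replacing the sampled Hessians by their mean and truncating. Adding these to the squared-bias bound and applying $\|\sum_{i=1}^m a_i\|^2\le m\sum_i\|a_i\|^2$ with a fixed number $m$ of error sources collects the explicit constants; taking expectation over $(x_k,y_k^D)$ at the end yields the claimed inequality. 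This is the argument of \cite{ji2021bilevel}[Lemma~8], restated in our notation.

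The hard part will be controlling the variance of the truncated Neumann vector $v_Q$ in \eqref{est-vq}, since the matrices $I-\eta\nabla_y^2G(x_k,y_k^D;\gB_j)$ enter multiplicatively rather than additively. The plan is to reorganize $v_Q-\mathbb{E}_k[v_Q\mid v_0]$ into a sum over the $Q$ batches, with the contribution of $\gB_j$ given by the single mean-zero deviation $-\eta\big(\nabla_y^2G(x_k,y_k^D;\gB_j)-\nabla_y^2g(x_k,y_k^D)\big)$ sandwiched between products of contractive factors of operator norm at most $1-\eta\mu<1$; by independence of the batches the cross terms vanish in expectation, so the total variance is a sum of $Q$ termwise contributions. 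By Assumption~\ref{ass:variance-bound} the deviation of $\gB_{Q+1-j}$ has variance of order $\ell^2/|\gB_{Q+1-j}|$, while the surrounding contractive factors damp it by a power of $1-\eta\mu$ that grows with $j$; the geometric batch-size schedule $|\gB_{Q+1-j}|=BQ(1-\eta\mu)^{j-1}$ is chosen precisely so that the $1/|\gB_{Q+1-j}|$ growth is compensated by this damping, leaving a convergent geometric series of total size $O(\eta^2\ell^4M^2/(\mu^2B))$ once the scalars $\|v_0\|^2\le M^2$ and $\|\nabla_{xy}^2G\|^2\le\ell^2$ are pulled out. Carrying out this power-counting with explicit constants, and identifying $1-\eta\mu=(\kappa-1)/(\kappa+1)\le1-\kappa^{-1}$ under the choice $\eta=2/(\ell+\mu)$ so that the residual from truncating the Neumann series at level $Q$ squares to the $(1-\kappa^{-1})^{2Q}$ terms, is the only non-routine part of the proof, and it coincides with the computation in \cite{ji2021bilevel}[Lemma~8].
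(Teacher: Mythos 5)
The paper does not actually prove this lemma: it is imported verbatim from \cite{ji2021bilevel}[Lemma 8] (the text explicitly says ``we borrow the following two useful lemmas for stocBiO from \cite{ji2021bilevel}''), so there is no in-paper proof to compare against. Your reconstruction --- conditional bias--variance decomposition given $(x_k,y_k^D)$, followed by a batchwise conditioning/telescoping argument in which the deviation of each $\gB_j$ is sandwiched between contractive factors so that cross terms vanish by independence and the geometric batch schedule $|\gB_{Q+1-j}|=BQ(1-\eta\mu)^{j-1}$ compensates the damping --- is the standard route and is sound in structure; the norm bounds $\|v_0\|\le M$, $\|v_Q\|\le M/\mu$, and the identification $1-\eta\mu=(\kappa-1)/(\kappa+1)\le 1-\kappa^{-1}$ are all correct under the stated parameter choices.

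One concrete caveat: composing the orthogonal decomposition $\mathbb{E}_k\|\widehat\nabla\Phi-\nabla\Phi\|^2=\mathbb{E}_k\|\widehat\nabla\Phi-\mathbb{E}_k\widehat\nabla\Phi\|^2+\|\mathbb{E}_k\widehat\nabla\Phi-\nabla\Phi\|^2$ with Lemma \ref{le:first_m} does \emph{not} reproduce the stated constants, because Lemma \ref{le:first_m} carries a factor $2$ in front of $\big(\ell+\ell^2/\mu+M\rho/\mu+\ell M\rho/\mu^2\big)^2\|y_k^D-y^*(x_k)\|^2$ whereas the bound you are asked to prove has coefficient $1$ on that term. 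Your route therefore yields a bound that is weaker by a constant on that summand (and correspondingly on the $(1-\kappa^{-1})^{2Q}$ term); the original proof in \cite{ji2021bilevel} obtains coefficient $1$ by decomposing $\widehat\nabla\Phi(x_k)-\nabla\Phi(x_k)$ directly into error sources and applying Young's inequality once, rather than routing the bias through Lemma \ref{le:first_m}. This matters only at the level of absolute constants (downstream, Lemma \ref{lem:stocbio-decrease} uses both lemmas with their exact coefficients to arrive at the factor $\tfrac{5\beta}{4}$), but as written your plan does not establish the inequality with the constants stated.
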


We now prove Lemma \ref{lem:stocbio-decrease}.

\begin{proof}
By Assumptions \ref{ass:stoc}, $\Phi(x)$ is $L_\phi$ smooth, which yields
\begin{align*}
\Phi(x_{k+1}) \leq & \Phi(x_k)  + \langle \nabla \Phi(x_k), x_{k+1}-x_k\rangle + \frac{L_\phi}{2} \|x_{k+1}-x_k\|^2 \nonumber
\\\leq& \Phi(x_k)  - \beta \langle \nabla \Phi(x_k),\widehat \nabla \Phi(x_k)\rangle + \beta^2 L_\phi \|\nabla\Phi(x_k)\|^2+\beta^2 L_\phi\|\nabla\Phi(x_k)-\widehat\nabla\Phi(x_k)\|^2,
\end{align*}
where the second inequality is by Young's inequality. Taking expectation over the above inequality, we have
\begin{align} \label{eq:expect-phi}
&\mathbb{E}\Phi(x_{k+1})\nonumber \\
 \leq &\mathbb{E}\Phi(x_k)  - \beta \mathbb{E}\langle \nabla \Phi(x_k),\mathbb{E}_k\widehat \nabla \Phi(x_k)\rangle + \beta^2 L_\phi \mathbb{E}\|\nabla\Phi(x_k)\|^2 +\beta^2 L_\phi\mathbb{E}\|\nabla\Phi(x_k)-\widehat \nabla\Phi(x_k)\|^2 \nonumber\\
\leq& \mathbb{E}\Phi(x_k)  +\frac{\beta}{2}\mathbb{E}\|\mathbb{E}_k\widehat \nabla \Phi(x_k)-\nabla \Phi(x_k) \|^2 -\frac{\beta}{4} \mathbb{E}\|\nabla\Phi(x_k)\|^2+\frac{\beta}{4}\mathbb{E}\|\nabla\Phi(x_k)-\widehat \nabla\Phi(x_k)\|^2 \nonumber\\
\leq& \mathbb{E}\Phi(x_k) -\frac{\beta}{4}\mathbb{E}\|\nabla\Phi(x_k)\|^2 +\frac{\beta \ell^2M^2(1-\kappa^{-1})^{2Q}}{\mu^2} \nonumber
\\&+\frac{\beta}{4}
\left(   \frac{4\ell^2M^2}{\mu^2D_g} + \Big(\frac{8\ell^2}{\mu^2} + 2\Big) \frac{M^2}{D_f}+ \frac{16\eta^2  \ell^4M^2}{\mu^2} \frac{1}{B}+\frac{16 \ell^2M^2(1-\kappa^{-1})^{2Q}}{\mu^2}\right) \nonumber
\\&+  \frac{5\beta}{4} \Big( \ell +\frac{L^2}{\mu} + \frac{M\tau}{\mu}+\frac{\ell M\rho}{\mu^2}\Big)^2\mathbb{E}\|y_k^D-y^*(x_k)\|^2.
\end{align}
The second inequality is by Young's inequality and $\beta = \frac{1}{4L_\phi}.$ The last inequality is by Lemmas \ref{le:first_m} and \ref{le:variancc}. Further note that for an integer $t\leq D$
\begin{align}\label{eq:initss}
\|y_k^{t+1}-y^*(x_k) \|^2= &\|y_k^{t+1}-y_k^t\|^2 + 2\langle y_k^{t+1}-y_k^t,  y_k^t-y^*(x_k) \rangle + \| y_k^t-y^*(x_k)\|^2 \nonumber
\\=& \alpha^2\| \nabla_y G(x_k,y_k^{t}; \gS_t) \|^2 -  2\alpha\langle  \nabla_y G(x_k,y_k^{t}; \gS_t),  y_k^t-y^*(x_k) \rangle \nonumber \\
& +\| y_k^t-y^*(x_k)\|^2.
\end{align}
Conditioning on $x_k, y_k^t$ and taking expectation in~\eqref{eq:initss}, we have 
\begin{align}\label{eq:traerr}
&\mathbb{E} [\|y_k^{t+1}-y^*(x_k) \|^2 | x_k, y_k^t ] \nonumber \\
 \leq& \alpha^2 \Big(\frac{\sigma^2}{S} + \|\nabla_y g(x_k,y_k^t)\|^2 \Big) - 2\alpha \langle  \nabla_y g(x_k,y_k^{t}),  y_k^t-y^*(x_k) \rangle +\| y_k^t-y^*(x_k)\|^2 \nonumber
\\\leq&\frac{\alpha^2\sigma^2}{S} + \alpha^2\|\nabla_y g(x_k,y_k^t)\|^2 - 2\alpha\left(  \frac{\ell\mu}{\ell+\mu} \|y_k^t-y^*(x_k)\|^2+\frac{\|\nabla_y g(x_k,y_k^t)\|^2}{\ell+\mu} \right) \nonumber
\\&+\| y_k^t-y^*(x_k)\|^2  \nonumber
\\=& \frac{\alpha^2\sigma^2}{S} - \alpha\left(\frac{2}{\ell +\mu}-\alpha\right)\|\nabla_y g(x_k,y_k^t)\|^2 + \left(1-\frac{2\alpha \ell\mu}{\ell+\mu} \right)\|y_k^t-y^*(x_k)\|^2,
\end{align}
where the first inequality is by Assumption \ref{ass:variance-bound} and the second inequality follows from the strong-convexity (with respect to $y$) and smoothness of the function $g$. Since $\alpha=\frac{2}{\ell +\mu}$, we obtain from~\eqref{eq:traerr} that  
\begin{align}\label{eq:eyk}
\mathbb{E}[\|y_k^{t+1}-y^*(x_k) \|^2| x_k, y_k^t]\leq &\left(\frac{\ell-\mu}{\ell+\mu} \right)^2\|y_k^t-y^*(x_k)\|^2 + \frac{4\sigma^2}{(\ell+\mu)^2S}.
\end{align}
Unconditioning on $x_k, y^t_k$ in \eqref{eq:eyk} and telescoping~\eqref{eq:eyk} over $t$ from $0$ to $D-1$ yield
\be\bad\label{eq:yjtt}
\mathbb{E}\|y_k^{D}-y^*(x_k) \|^2 \leq &  \left(\frac{\ell-\mu}{\ell+\mu}\right)^{2D}\mathbb{E}\|y^0_k-y^*(x_k)\|^2 + \frac{\sigma^2}{\ell\mu S}. \\
\ead\ee 
By setting $D > 1/2 \log(1/4) / \log\left(\frac{1-\kappa}{1+\kappa}\right) = \gO(\kappa)$, we get $\left(\frac{\ell-\mu}{\ell+\mu}\right)^{2D} < 1/4$. Therefore, we have 
\be\bad\label{eq:y-opt-error}
\mathbb{E}\|y_k^{0}-y^*(x_k) \|^2 \leq & 2\mathbb{E}\|y^D_{k-1}-y^*(x_{k-1})\|^2 + 2\mathbb{E}\|y^*(x_k)-y^*(x_{k-1}) \|^2\\
 \le & 2\left(\frac{\ell-\mu}{\ell +\mu}\right)^{2D}\mathbb{E}\|y^0_{k-1}-y^*(x_{k-1})\|^2  + 2\kappa^2 \mathbb{E} \| x_k - x_{k-1}\|^2 + \frac{2\sigma^2}{\ell \mu S}\\
\le & \frac{1}{2} \mathbb{E}\|y^0_{k-1}-y^*(x_{k-1})\|^2 + \left(2\kappa^2 \beta^2 \left(M + \frac{\ell M}{\mu}\right)^2 + \frac{2\sigma^2}{\ell \mu S}\right)\\
\le & \left(\frac{1}{2}\right)^k \mathbb{E}\|y^0_{0}-y^*(x_{0})\|^2 + \sum_{j = 0}^{k-1} \left(\frac{1}{2}\right)^j\left(2\kappa^2\beta^2 \left(M + \frac{\ell M}{\mu}\right)^2 + \frac{2\sigma^2}{\ell \mu S}\right)\\
\le &  \mathbb{E}\|y^0_{0}-y^*(x_{0})\|^2 + 4\kappa^2\beta^2 \left(M + \frac{\ell M}{\mu}\right)^2 + \frac{4\sigma^2}{\ell \mu S}\\
= &  \widehat{\Delta}+ 4\kappa^2 \beta^2 \left(M + \frac{\ell M}{\mu}\right)^2 + \frac{4\sigma^2}{\ell \mu S},
\ead\ee 
where we denoted $\widehat{\Delta} = \|y^0_{0}-y^*(x_{0})\|^2$, the second inequality is true as $y^*(x)$ is $\kappa$-Lipschitz continuous and \eqref{eq:yjtt}, the third inequality is by the fact that $\|\widehat{\nabla}\Phi(x)\|$ can be bounded by $M + \frac{\ell M}{\mu}$. Plug \eqref{eq:yjtt} and \eqref{eq:y-opt-error} into \eqref{eq:expect-phi} yields 
\begin{align} 
&\mathbb{E}\Phi(x_{k+1})  \nonumber\\
\leq& \mathbb{E}\Phi(x_k) -\frac{\beta}{4}\mathbb{E}\|\nabla\Phi(x_k)\|^2 +\frac{\beta \ell^2M^2(1-\kappa^{-1})^{2Q}}{\mu^2} 
\nonumber
\\&+\frac{\beta}{4}
\left(   \frac{4\ell^2M^2}{\mu^2D_g} + \Big(\frac{8\ell^2}{\mu^2} + 2\Big) \frac{M^2}{D_f}+ \frac{16\eta^2  \ell^4M^2}{\mu^2} \frac{1}{B}+\frac{16 \ell^2M^2(1-\kappa^{-1})^{2Q}}{\mu^2}\right) \nonumber
\\&+  \frac{5\beta}{4} \Big( \ell+\frac{\ell^2}{\mu} + \frac{M\tau}{\mu}+\frac{\ell M\rho}{\mu^2}\Big)^2\left(\left(\widehat{\Delta}+ 4\kappa^2 \beta^2 \left(M + \frac{\ell M}{\mu}\right)^2 + \frac{4\sigma^2}{\ell \mu S}\right)\left(\frac{\ell-\mu}{\ell+\mu}\right)^{2D} + \frac{\sigma^2}{\ell\mu S}\right).
\end{align}
Therefore, it suffices to choose the parameters as 
\be\bad
S = O(\kappa^5 \epsilon^{-2}),   D_f= O(\kappa^2 \epsilon^{-2}), D_g = O(\kappa^2 \epsilon^{-2}),\\
B =  O(\kappa^2 \epsilon^{-2}), Q= O(\kappa \log \frac{1}{\epsilon}), D = O(\kappa \log \frac{1}{\epsilon}),
\ead\ee
such that the following inequality holds,
\begin{align} 
&\frac{\beta \ell^2M^2(1-\kappa^{-1})^{2Q}}{\mu^2} +\frac{\beta}{4}
\left(   \frac{4\ell^2M^2}{\mu^2D_g} + \Big(\frac{8\ell^2}{\mu^2} + 2\Big) \frac{M^2}{D_f}+ \frac{16\eta^2  \ell^4M^2}{\mu^2} \frac{1}{B}+\frac{16 \ell^2M^2(1-\kappa^{-1})^{2Q}}{\mu^2}\right) \nonumber
\\&+  \frac{5\beta}{4} \Big( \ell+\frac{\ell^2}{\mu} + \frac{M\tau}{\mu}+\frac{\ell M\rho}{\mu^2}\Big)^2\left(\left(\widehat{\Delta}+ 4\kappa^2 \left(M + \frac{\ell M}{\mu}\right)^2 + \frac{6\sigma^2}{\ell \mu S}\right)\left(\frac{\ell-\mu}{\ell+\mu}\right)^{2D} + \frac{\sigma^2}{\ell\mu S}\right)\nonumber \\
& \le  \frac{1}{400L_\phi} \epsilon^2.
\end{align}
Finally, combining the above two inequalities yields
\be\bad\label{eq:f-dec2-new}
\BE \left[\Phi(x_{k+1}) - \Phi(x_{k}) \right] \le -\frac{\beta}{4}\mathbb{E}\|\nabla\Phi(x_k)\|^2 + \frac{1}{400L_\phi} \epsilon^2.
\ead\ee
\end{proof}

\subsection{Proof of Theorem \ref{thm:stocneon} and Corollary \ref{cor:stocneon}}
\begin{proof}
We consider the following two possible cases
\begin{itemize}
\item \textbf{Case 1:} $\mathbb{E}[\|\nabla\Phi(x_k)\| | x_k ]  > \frac{3}{5}  \epsilon$. Unconditioning on $x_k$, we have $\mathbb{E}\|\nabla\Phi(x_k)\| > \frac{3}{5}  \epsilon$. In this case, Lemma \ref{lem:stocbio-decrease} yields
\[
\BE \left[\Phi(x_{k+1}) - \Phi(x_{k}) \right] \le - \frac{9}{400L_\phi} \epsilon^2 + \frac{1}{400L_\phi} \epsilon^2 = - \frac{1}{50L_\phi} \epsilon^2
\]
and the total iteration number of \textbf{Case 1} can be bounded by 
\be\label{thm45-bound1}
\frac{50 L_\phi (\Phi(x_{0}) - \Phi^* )}{\epsilon^2}.
\ee
\item \textbf{Case 2:}  $\mathbb{E}[\|\nabla\Phi(x_k)\| | x_k ]  \le \frac{3}{5}  \epsilon$, which indicates $$\|\nabla\Phi(x_k)\| =\|\mathbb{E}[\nabla\Phi(x_k)|x_k]\| \le \mathbb{E}[\|\nabla\Phi(x_k)\| | x_k]  \le \frac{3}{5} \epsilon.$$ Unconditioning on $x_k$, we have $\mathbb{E}\|\nabla\Phi(x_k)\| \le \frac{3}{5}  \epsilon$. In this case we run AID in line 12 of Algorithm \ref{alg:INEONstocbio} such that $\|\widehat\nabla \Phi_\gD (x_k) - \nabla \Phi_\gD (x_k)\| \le \frac{1}{10}\epsilon$. According to Lemma \ref{lem:grad-err-aid}, it only requires us to set $D = \gO(\kappa \log \epsilon^{-1})$ and $N = \gO(\sqrt{\kappa} \log \epsilon^{-1})$ in the AID. Moreover, combining with Lemma \ref{lem:batch-grad-err}, we have 
\be\bad\label{thm45-bound-case2-b1}
\|\widehat\nabla \Phi_\gD (x_k)\| \le & \| \nabla \Phi_\gD (x_k) \| + \|\widehat\nabla \Phi_\gD (x_k) - \nabla \Phi_\gD (x_k)\|\\
\le &\| \nabla \Phi (x_k) \| + \| \nabla \Phi_\gD (x_k) -  \nabla \Phi (x_k) \| + \|\widehat\nabla \Phi_\gD (x_k) - \nabla \Phi_\gD (x_k)\|\\
\le & \frac{4}{5}\epsilon.
\ead\ee
Therefore, Algorithm \ref{alg:iNEON} is called with high probability. When the if condition in line 13 of Algorithm \ref{alg:INEONstocbio} is satisfied, we can guarantee 
\be\bad\label{thm45-bound-case2-b2}
\| \nabla \Phi (x_k) \| \le  &\| |\widehat\nabla \Phi_\gD (x_k) \| + \| \nabla \Phi_\gD (x_k) -  \nabla \Phi (x_k) \| + \|\widehat\nabla \Phi_\gD (x_k) - \nabla \Phi_\gD (x_k)\| \le & \epsilon.
\ead\ee
Moreover, when Algorithm \ref{alg:iNEON} is called and it returns a nonzero vector $u_{out}$, combining Lemmas \ref{lem:neon} and \ref{lem:saddle-decrease} yields
\[\mathbb{E} \Phi(x_{k+1}) - \mathbb{E} \Phi(x_k) \le - \frac{1}{3 \cdot 80^3\logt^3}\cdot \sqrt{\frac{\epsilon^3}{\rho_\phi}}.\]
So the total iteration number in this case is bounded by
\be\label{thm45-bound2}
\frac{3\cdot 80^3\logt^3(\Phi(x_0) - \Phi(x^*)) \utime}{ \sqrt{\epsilon^3/ \rho_\phi} }.
\ee
If Algorithm \ref{alg:iNEON} returns a zero vector $u_{out}$, then with high probability we find an $\epsilon$-local minimum.
\end{itemize}
Therefore, combining \eqref{thm45-bound1} and \eqref{thm45-bound2} we know that the total iteration number before we visit an $\epsilon$-local minimum can be bounded by 
\[
K = \frac{3\cdot 80^3\logt^3(\Phi(x_0) - \Phi(x^*)) \utime}{ \sqrt{\epsilon^3/ \rho_\phi} } + \frac{50L_\phi(\Phi(x_0) - \Phi(x^*))}{ \epsilon^2} = \tilde{\gO} \left( \kappa^3 \epsilon^{-2}\right).
\]
We then prove Corollary \ref{cor:stocneon}. Note that we require the sample batch sizes to be 
\be\bad\label{eq:stocneon-batch1}
D_f = \tilde{\gO} \left(\kappa^2 \epsilon^{-2}\right), \quad D_g = \tilde{\gO} \left(\kappa^6 \epsilon^{-2}\right),
\ead\ee
so that Lemma \ref{lem:batch-grad-err} and Lemma \ref{lem:batch-hess-err} hold, and 
\be\bad\label{eq:stocneon-batch2}
&S = O(\kappa^5 \epsilon^{-2}), D_f = O(\kappa^2 \epsilon^{-2}), D_g = O(\kappa^2 \epsilon^{-2}),\\
&Q = O(\kappa \log \epsilon^{-1}), D = O(\kappa \log \epsilon^{-1}), B =  O(\kappa^2 \epsilon^{-2}), 
\ead\ee
so that Lemma \ref{lem:stocbio-decrease} holds. Combining \eqref{eq:stocneon-batch1} and \eqref{eq:stocneon-batch2}, we have for the iNEON calls in Algorithm \ref{alg:INEONstocbio} (Line 12 - 21), the gradient, Jacobian- and Hessian-vector product complexities are 
\be\bad\label{eq:ineon-sample-com}
&G(f, \epsilon) = \tilde{\gO}(\kappa^5 \epsilon^{-4}), \quad G(g, \epsilon) = \tilde{\gO}(\kappa^{10} \epsilon^{-4}),\\
 &JV(g, \epsilon) = \tilde{\gO}(\kappa^{9} \epsilon^{-4}), \quad HVc(g, \epsilon) = \tilde{\gO}(\kappa^{9.5} \epsilon^{-4}).
\ead\ee
For those stocBiO iterations in Algorithm \ref{alg:INEONstocbio} (Line 3-11), the gradient, Jacobian- and Hessian-vector product complexities are 
\be\bad\label{eq:stoc-sample-com}
&G(f, \epsilon) = \tilde{\gO}(\kappa^5 \epsilon^{-4}), \quad G(g, \epsilon) = \tilde{\gO}(\kappa^{9} \epsilon^{-4}), \\
&JV(g, \epsilon) = \tilde{\gO}(\kappa^{5} \epsilon^{-4}), \quad HVc(g, \epsilon) = \tilde{\gO}(\kappa^6 \epsilon^{-4}).
\ead\ee
We take the maximum between \eqref{eq:ineon-sample-com} and \eqref{eq:stoc-sample-com}, which completes the proof of Corollary \ref{cor:stocneon}.
\end{proof}

\end{document}